\newcommand{\calc}[1]{\mbox{\textsf{#1}}}
\newcommand{\system}[1]{\textsl{#1}}
\begin{document}

\listoftodos

% Page heads
\markboth{Frank Dylla et al.}{A Survey of Qualitative Spatial and Temporal Calculi}
% TS: AFAIK, surnames beginning with "van" are sorted under "v"

% Title portion
\title{A Survey of Qualitative Spatial and Temporal Calculi --- Algebraic and Computational Properties}
\author{FRANK DYLLA
\affil{University of Bremen}
JAE HEE LEE
\affil{University of Bremen}
TILL MOSSAKOWSKI
\affil{University of Magdeburg}
THOMAS SCHNEIDER
\affil{University of Bremen}
ANDRÉ VAN DELDEN
% TS: AFAIK, surnames beginning with "van" are sorted under "v"
\affil{University of Bremen}
JASPER VAN DE VEN
\affil{University of Bremen}
DIEDRICH WOLTER
\affil{University of Bamberg}}
% NOTE! Affiliations placed here should be for the institution where the
%       BULK of the research was done. If the author has gone to a new
%       institution, before publication, the (above) affiliation should NOT be changed.
%       The authors 'current' address may be given in the "Author's addresses:" block (below).
%       So for example, Mr. Abdelzaher, the bulk of the research was done at UIUC, and he is
%       currently affiliated with NASA.

\begin{abstract}
  Qualitative Spatial and Temporal Reasoning (QSTR) is concerned with symbolic knowledge representation,
  typically over infinite domains.
  The motivations for employing QSTR techniques range from exploiting computational properties that allow efficient
  reasoning to capture human cognitive concepts in a computational framework.
%  As such, QSTR provides a hub connecting domain level information to abstract knowledge, allowing 
  The notion of a qualitative calculus is one of the most prominent QSTR formalisms.
  This article presents the first overview of all qualitative calculi developed to date and their computational properties,
  together with generalized definitions of the fundamental concepts and methods, which now encompass all existing calculi.
  Moreover, we provide a classification of calculi according to their algebraic properties. % as knowledge representation language. 
\end{abstract}

\category{I.2.4}{Artificial Intelligence}{Knowledge Representation Formalisms and Methods}

\terms{Theory, Algorithms}

\keywords{Qualitative Spatial Reasoning, Temporal Reasoning, Knowledge Representation, Relation Algebra}

\acmformat{Frank Dylla, Jae Hee Lee, Till Mossakowski, Thomas Schneider, André van Delden, Jasper van de Ven and Diedrich Wolter. 2015. 
%A Survey of Spatial and Temporal Calculi and Their Algebraic Properties
A Survey of Qualitative Spatial and Temporal Calculi --- Algebraic and Computational Properties
}
% TS: AFAIK, surnames beginning with "van" are sorted under "v"

% At a minimum you need to supply the author names, year and a title.
% IMPORTANT:
% Full first names whenever they are known, surname last, followed by a period.
% In the case of two authors, 'and' is placed between them.
% In the case of three or more authors, the serial comma is used, that is, all author names
% except the last one but including the penultimate author's name are followed by a comma,
% and then 'and' is placed before the final author's name.
% If only first and middle initials are known, then each initial
% is followed by a period and they are separated by a space.
% The remaining information (journal title, volume, article number, date, etc.) is 'auto-generated'.

\begin{bottomstuff}
This work has been supported by the DFG-funded SFB/TR 8 “Spatial Cognition”, projects R3-[QShape] and R4-[LogoSpace]. 

Author names in alphabetic order.

Author's addresses: 
Frank Dylla, Thomas Schneider, André van Delden, {and} Jasper van de Ven, Faculty of Computer Science, University of Bremen, Germany;
Jae Hee Lee, Centre for Quantum Computation \& Intelligent Systems, University of Technology Sydney, Australia;
Till Mossakowski, Faculty of Computer Science, Otto-von-Guericke-University of Magdeburg, Germany;
Diedrich Wolter, Faculty of Information Systems and Applied Computer Sciences, University of Bamberg, Germany.
\end{bottomstuff}

\maketitle

\section{Introduction}
Knowledge about our world is densely interwoven with spatial and temporal facts.
Nearly every knowledge-based system  comprises means for representation of, and 
possibly reasoning about, spatial or temporal knowledge.
Among the different options available to a system designer, ranging from 
domain-level data structures to highly abstract logics,
qualitative approaches stand out for their ability to mediate between the domain level and the conceptual level.
Qualitative representations explicate relational knowledge between (spatial or temporal) domain entities, allowing individual statements to be evaluated by truth values.
%Qualitative relations compare, they do not measure.
% Qualitative representations aim to abstract away from any aspects that would not be essential for a task at hand. % but rather an encumbrance.
The aim of qualitative representations is to focus on the aspects that are essential for a task at hand by abstracting away from other, unimportant aspects.
%It is thus possible to focus on few, conceptually simple aspects that are essential for a task at hand, and to abstract away from other, more cumbersome ones.
As a result, a wide range of representations has been applied, using various kinds of knowledge representation languages.
%As a result, knowledge representation languages provides means for expressing qualitative representations.
The most fundamental principles for representing knowledge qualitatively that are at the heart of virtually every representation language are captured by a construct called \emph{qualitative spatial (or temporal) calculus}.
%The most fundamental and well-established such instrument -- the \emph{qualitative spatio-temporal calculus} --
%is at the heart of virtually every qualitative knowledge representation language.
In the past decades, a great variety of qualitative calculi have %TS: it's 'HAVE' because it is not the variety that has been developed, but the calculi. This is in line with the explanations here: http://crofsblogs.typepad.com/english/2007/07/is-or-are.html or see here: https://www.englishforums.com/English/AVarietySingularPlural/bbdlkg/post.htm
been developed, each tailored to 
specific aspects of spatial or temporal knowledge. 
They share common principles but differ in formal and computational properties.
%From a theoretical perspective, qualitative representation and reasoning can be regarded as constraint-based reasoning over an infinite spatial or temporal domain and in which the set of possible constraints is constituted by qualitative relations that exhibit certain algebraic properties.

This article presents an up-to-date comprehensive overview of \emph{qualitative spatial and temporal reasoning (QSTR)}.
% including the most prominent reasoning problems, and, to the best of our knowledge, a survey of {\em all} existing calculi proposed so far.
%We categorize reasoning problems associated with qualitative calculi. % and review their contribution to applications. 
We provide a general definition of QSTR (Section \ref{sec:what_is}),
give a uniform account of a calculus that is more integrative than existing ones (Section \ref{sec:qualitative_representations}),
identify and differentiate algebraic properties of calculi (Section \ref{sec:relation_algebras}),
and discuss %combinations with 
their role within 
other knowledge representation paradigms (Section \ref{sec:integration+combination}) as well as alternative approaches (Section \ref{sec:alternatives}). % known for existing calculi.
Besides the survey character, the article provides a taxonomy of the most prominent reasoning problems,
a survey of {\em all} existing calculi proposed so far (to the best of our knowledge),
and the first comprehensive overview of their computational properties.

This article is accompanied by an electronic appendix that contains minor technical details such as mathematical proofs of some claims
and detailed experimental results.
%%% TS: Diese Formulierung fand ich zu abwertend (und unvollständig):
% This article is accompanied by an electronic appendix that contains minor technical details not covered by the literature.
%
% as well as extensive examples. 
% %Due to space limitations, we provide technical details in an electronic appendix.
% The respective text passages are marked with ``$\lhd$'', 
% for example, ``$\lhd$~\emph{Ex.\,\ref{app:example_formalisms}.1}'' stands for 
% Example \ref{app:example_formalisms}.1 in Appendix \ref{app:example_formalisms}.

\subsection*{Demarcation of Scope and Contribution}

This article addresses researchers and engineers working with
knowledge about space or time and wishing to employ reasoning on a symbolic level.
We supply a thorough overview of the wealth of qualitative spatial and temporal 
calculi available,
\begin{New}
  many of which have emerged from concrete application scenarios,
  for example, geographical information systems (GIS)
  \cite{DBLP:conf/ssd/Egenhofer91,Frank91};
  see also the overview given in \cite{Ligozat11}.
  Our survey focuses on the calculi themselves (Tables~\ref{tab:calculi_new_1}--\ref{tab:calculi_new_3})
  and their computational and algebraic properties, i.e.,
\end{New}
characteristics relevant for reasoning and symbolic manipulation
(Table~\ref{tab:calculi_reasoning}, Figure~\ref{fig:algebra_notions}).
To this end, we also categorize reasoning tasks involving qualitative representations
(Figure~\ref{fig:tax2}).

We exclusively consider qualitative formalisms for reasoning on the basis of finite sets of relations over an infinite spatial or temporal domain. %, represented symbolically. 
As such, the mere use of symbolic labels is not surveyed.
%We do not consider works that involve qualitative labels but which do not connect to symbolic manipulations or lack precise semantics for the labels. 
We also disregard approaches augmenting qualitative formalisms with an additional interpretation such as fuzzy sets or probability theory.

This article significantly advances from previous publications with a survey character in several regards.
\citeN{Ligozat11} describes in the course of the book ``the main'' qualitative calculi%
%\todo{The comparison with this book has to be improved, because the book also contains an analysis of the calculi based on
%a similar algebraic approach. Which calculi are not considered in the book? Which are the additional contribution of this paper?}%
, describes their relations, complexity issues and selected techniques. 
\begin{New}
Although an algebraic perspective is taken as well, we integrate this in a more general context.
Additionally to mentioning general axioms in context of relation algebras we present a thorough investigation of calculi regarding these axioms.
\end{New}
He also gives references to applications that employ QSTR techniques in a broad sense.
%presents a broad view of applications
%and gives a broad view of applications. %, zooming in to some selected details. 
%In fact, this monograph is the most extensive survey, and the only one with a dedicated list of applications and tools, among the surveys of which we are aware.
Our survey supplements precise definitions of the underlying formal aspects,
which will then be general enough to encompass \emph{all} existing calculi that we are aware of.
%Our application review is centered on the use of reasoning tasks and methods in applications.
%
\citeN{chen_survey_2013} summarize the progress in QSTR by presenting selected key calculi for important spatial aspects.
They give a brief introduction to basic properties of calculi, but neither detail formal properties nor picture the entire variety of formalisms achieved so far as provided by this article.
%The book does neither provide precise definitions of the underlying formal aspects general enough to encompass all works in this field, nor does it provide an up-to-date overview of the qualitative formalisms developed and their properties---these are however the contributions of this article.
%
Algebra-based methods for reasoning with qualitative constraint calculi have been covered by \citeN{Renz_Nebel_2007_Qualitative}.
Their description applies to calculi that satisfy rather strong properties, which we relax. % in Section \ref{sec:qualitative_representations}. 
%Here, 
We present revised definitions and an algebraic closure algorithm that generalizes to all existing calculi, and, to the best of our knowledge, we give the first comprehensive overview on computational properties.
\citeN{Cohn:2008vn} present an introduction to the field which extends the earlier article of \citeN{cohn-hazarika:01} by a more detailed discussion of logic theories for mereotopology and by presenting efficient reasoning algorithms.
%not accounting for the formal underpinning.
%\todo{Not clear. Why presenting logic theories should not include the formal underpinning? (FD) just taken out 'not accounting for the formal underpinning'}
%An account of the underlying representation and its formal properties is not in the scope of that work.

\section{What is Qualitative Spatial and Temporal Reasoning}
\label{sec:what_is}

We characterize QSTR by considering the reasoning problems it is concerned with.
Generally speaking, reasoning is a process to generate new knowledge from existing one.
Knowledge primarily refers to facts given explicitly, possibly implicating implicit ones.
\emph{Sound reasoning} is involved with explicating the implicit, allowing it to be processed further.
Thus sound reasoning is crucial for many applications. % and a key characteristic in QSTR.
In QSTR it is a key characteristic and the applied reasoning methods are largely shaped by the specifics of qualitative knowledge about spatial and temporal domains as provided within the \emph{qualitative domain representation}.

\subsection{A General Definition of QSTR}

Qualitative domain representations employ symbols to represent semantically meaningful properties of a \emph{perceived domain}, abstracting away any details not regarded relevant to the context at hand.
The perceived domain comprises the available raw information about objects.
By \emph{qualitative abstraction}, the perceived domain is mapped to the qualitative domain representation, called domain representation from now on.
Various aims motivate research on qualitative abstractions, most importantly the desire to develop formal models of common sense relying on coarse concepts \cite{Williams:1991wz,Bredeweg:2003vc} and to capture the catalog of concepts and inference patterns in human cognition \cite{Kuipers:1978:SpatialKnowledge,KnauffEtAl:2004:PsychValidityIAI}, which in combination enables intuitive approaches to designing intelligent systems \cite{Davis:1990:RepOfCSK} or human-centered computing \cite{Frank:1992:QSR}.
Within QSTR it is required that qualitative abstraction yields a {\em finite} set of elementary concepts.
The following definition aims to encompass all contexts in which QSTR is studied in the literature.

\begin{definition}
	Qualitative spatial and temporal representation and reasoning (QSTR) is the study of techniques for representing and manipulating spatial and temporal knowledge by means of relational languages that use a finite set of symbols.
	These symbols stand for classes of semantically meaningful properties of the represented domain (positions, directions, etc.).
\end{definition}

Spatial and temporal domains are typically infinite and exhibit complex structures.
Due to their richness and diversity, QSTR is confronted with unique theoretic and computational challenges.
Consequently, there is a high variety of domain representations, each focusing on specific aspects relevant to specific tasks.
In infinite domains, concepts that are meaningful to a wide range of settings are typically relative since there are no universally `important' values.
As a consequence, QSTR is involved with relations, using a relational language to formulate domain representations.
It turns out that binary relations can capture most relevant facets of space and time -- this class also received most attention by the research community.
Expressive power is purely based on these pre-defined relations, no conjuncts or quantifiers are considered.
Thus the associated reasoning methods can be regarded as variants of constraint-based reasoning.
Additionally, constraint-based reasoning techniques can be used to empower other methods,
for example to assess the similarity of represented entities or logic inference.

Finally, to map a domain representation to the perceived domain a \emph{realization} process is applied.
This process instantiates entities in the perceived domain that are based on entities provided in the domain representation.

Figure \ref{fig:tax1} depicts the overall view on knowledge representation and aligns with the well-known view on intelligent agents considered in AI, which connects the environment to the agent and its internal representation by means of perception (which  is an abstraction process as well) and, vice versa, by actions (see, e.g., \cite[Chapter 2]{russell-norvig}). 
\begin{figure}[t]
	\centering
	\includegraphics{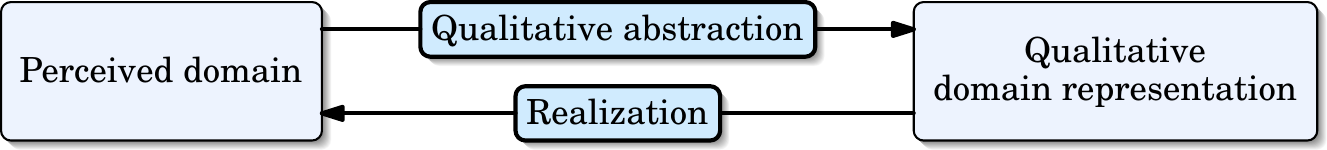}
	\caption{\label{fig:tax1} Relation between perceived domain and domain representation}
\end{figure}

\subsection{Taxonomy of Constraint-Based Reasoning Tasks}\label{ssec:what_is:taxonomy}
Figure \ref{fig:tax2} depicts an overview of constraint-based reasoning tasks in the context of QSTR.
We now briefly describe these tasks and highlight some associated literature.
The description is deliberately provided at an abstract level:
each task may come in different flavors, depending on specific (application) contexts.
Also, applicability of specific algorithms largely depends on the qualitative representation at hand.
The following taxonomy is loosely based on the overview by \citeN{Wallgruen:2013:QR:Taxonomy}.

In the following, we refer to the set of objects received from the perceived domain by applying qualitative abstraction as domain entities.
These are for example geometric entities such as points, lines, or polygons.
\begin{New}
In general domain entities can be of any type regarding spatial or temporal aspects.
\end{New}

We further use the notion of a \emph{qualitative constraint network (QCN)},
which is a special form of abstract representation.
Commonly, a QCN $Q$ is depicted as a directed labeled graph,
with nodes representing abstract domain entities, i.e., with no specific values from the domain assigned,
and edges being labeled with \emph{constraints}: symbols representing relationships  that have to hold between these entities, e.g., see Figure \ref{fig:desc_qual_full}. 
An assignment of concrete domain entities to the nodes in $Q$ is called a \emph{solution} of $Q$ if the assigned entities satisfy all constraints in $Q$.
Section \ref{sec:reasoning} contains precise definitions.

% \paragraph{Constraint network generation \Signature{$D \rightarrow Q$}}
\paragraph{Constraint network generation}
This task determines relational statements that describe given domain entities regarding specific aspects, using a predetermined qualitative language fulfilling certain properties, i.e., in our case provided by a qualitative spatial calculus.
For instance, Figure \ref{fig:desc_qual_full} could be the constraint network derived from the scene shown in Figure \ref{fig:desc_quant}.
%
%In many cases only one solution is required, therefore, further parameters are applied to present a single $q$.
Techniques are given, e.g., by 
\citeN{cohn-GeoI:97}, 
\citeN{Worboys.2004},
\citeN{Forbus2004}, and \citeN{Dylla_Wallgruen_07_Qualitative}.
%\todo{In this domain, I found a bit strange that no reference to GiS is given, where most of GiS allow now for generating qualitative spatial models, may find better/newer ref than Cohn+,1997; may use Worboys.2004}

\begin{figure}[t]
	\centering
	\includegraphics{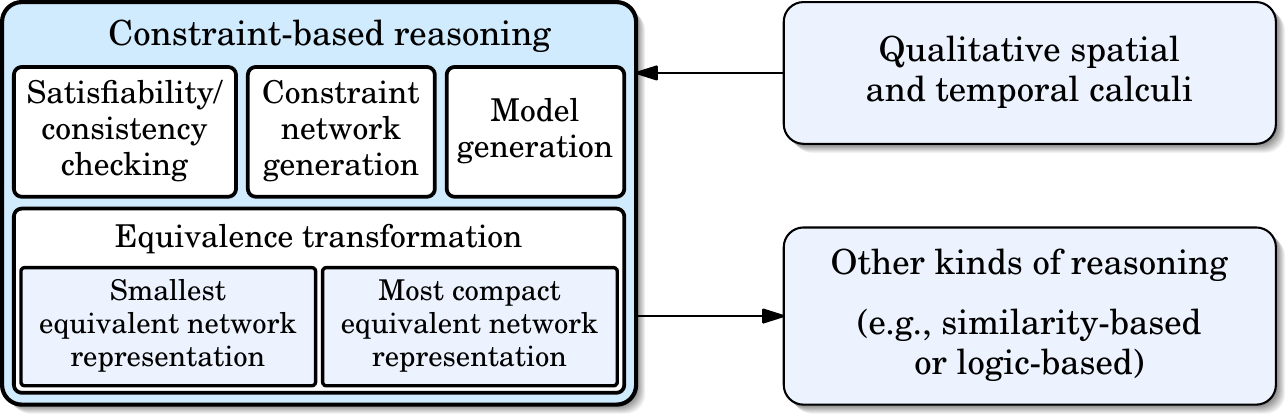}
	\caption{\label{fig:tax2}Classification of fundamental reasoning tasks and representation formalisms}
	%\todo[inline]{Figure \ref{fig:tax2}: Put Similarity-based and Logic-based into one box titled: further approaches, e.g.,}
	%\todo[inline]{Figure \ref{fig:tax2}: Introduce similarity-based reasoning (evolvement, physical stability, ...) and Logic-based reasoning in text }
\end{figure}

\begin{figure}[b]
	\addtolength{\abovecaptionskip}{-1.5ex}
	\centering
	\begin{subfigure}[b]{.32\linewidth}
		\centering
		\includegraphics{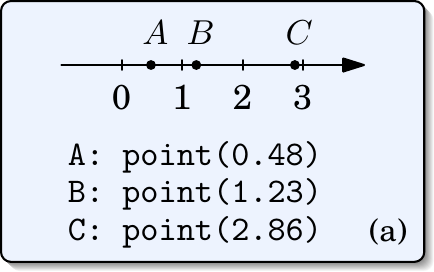}%
		\phantomsubcaption\label{fig:desc_quant}%
	\end{subfigure}%
	\hspace*{\fill}%
	\begin{subfigure}[b]{.32\linewidth}
		\centering
		\includegraphics{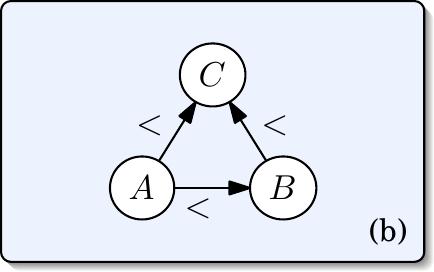}%
		\phantomsubcaption\label{fig:desc_qual_full}%
	\end{subfigure}%
	\hspace*{\fill}%
	\begin{subfigure}[b]{.32\linewidth}
		\centering
		\includegraphics{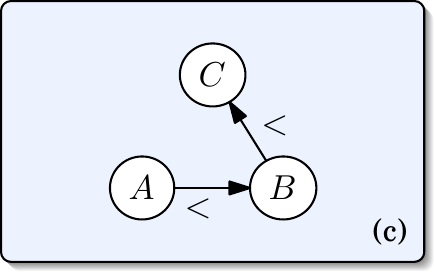}%
		\phantomsubcaption\label{fig:desc_qual_partial}%
	\end{subfigure}%
	\par\smallskip
	%   \caption{(a) A geometric and (b) a qualitative description of a spatial scene. (c) Result of applying algebraic closure.}
	\caption{%
		One geometric (a) and two qualitative descriptions of a spatial scene,
		obtained via complete (b) or incomplete (c) abstraction.
		Furthermore, (b) can be obtained from (c) via constraint-based reasoning.%
	}
	\label{fig:descriptions}
\end{figure}

% --------------- Consistency checking ------------------------
% \paragraph{Consistency checking  \Signature{$Q \rightarrow \{\textnormal{true},\textnormal{false}\}$}}
\paragraph{Consistency checking}
This decision problem is considered the fundamental QSTR task \cite{Renz_Nebel_2007_Qualitative}:
given an input QCN $Q$, decide whether a solution exists.
Applicable algorithms  depend on the kind of constraints that occur in $Q$ and are addressed in Sections \ref{sec:reasoning} and \ref{sec:existing_representations}.
%Usually, the algebraic-closure method \cite{Renz_Nebel_2007_Qualitative} is used as a polynomial time semi-decision procedure for checking an inconsistent QCN before a slower decision procedure is applied. 
%This is, e.g., addressed in 
%\cite{mackworth-AI:77,Beek1991,Wallgruen_12_Exploiting}.
%\cite{Vila1998,Wallgrun2009,Aiello2002,Beek1991,Bennett1996,Egenhofer1997,Ferguson2003,Wolter2008,Ligozat:2011je,Westphal:2011gb,Wallgruen_12_Exploiting,AlSalman:2012to,DBLP:conf/ecai/SchultzB12}.

% --------------- Model generation ------------------------
% \paragraph{Model generation \Signature{$Q \rightarrow D$}}
\paragraph{Model generation}
This task determines a solution for a QCN $Q$, i.e., a concrete assignment of a domain entity for each node in $Q$.
This may be computationally harder than merely deciding the existence of a solution.
For instance, Fig.~\ref{fig:desc_quant} could be the result of the model generation for the QCN shown in Fig.~\ref{fig:desc_qual_partial}.
Typically, a single QCN has infinitely many solutions, due to the abstract nature of qualitative representations.
Implementations of model generation may thus choose to introduce further parameters for controlling the kind of solution determined.
Techniques are described, e.g., in \cite{DBLP:conf/ecai/SchultzB12,KreutzmannW:2014:AndOrLP,SL15}.
% Related recent work by \citeN{SL15} studies subsets of 

%Therefore, model-generation approaches typically involve further parameters to yield a deterministic function.
%Since spatio-temporal domains are typically infinite, model generation may be more challenging than consistency checking.
%This is, e.g., addressed in

%\cite{DBLP:conf/ecai/SchultzB12,Wallgruen_12_Exploiting,WolterL:2010:Realization4Direction,KreutzmannW:2014:AndOrLP}.

% \paragraph{Equivalence transformation \Signature{$Q \rightarrow Q$}}
\paragraph{Equivalence transformation}
Taking a QCN $Q$ as input, equivalence transformation methods determine a QCN $Q'$ that has exactly the same solutions but meets additional criteria.
Two variants are commonly considered.

%into equivalent ones, meaning that they have the same set of solutions, but additionally satisfying other criteria. %\alex{this is somewhat unspecific}. 
%We can distinguish two main variants of this class.

% \begin{itemize}
% \item
{\slshape Smallest equivalent network representation} % \Signature{$Q \rightarrow 2^Q$}}
determines the strongest refinement of the input $Q$ by modifying its constraints in order to remove redundant information.
Figure \ref{fig:desc_qual_full} depicts a refinement of Figure \ref{fig:desc_qual_partial} since in \ref{fig:desc_qual_partial} the relation between $A$ and $C$ is not constrained at all (i.e., being ``$<, =, >$''), whereas \ref{fig:desc_qual_full} involves the tighter constraint ``$<$''.
Thus, the QCN $Q$ in \ref{fig:desc_qual_partial} contains 5 base relations, whereas the QCN $Q'$ in \ref{fig:desc_qual_full} contains only 3. 
Methods are addressed, e.g., by \citeN{Beek1991}, and \citeN{amaneddine-condotta-FLAIRS:13}.

{\slshape Most compact equivalent network representation} % \Signature{$Q \rightarrow 2^Q$}} 
determines a QCN $Q'$ with a minimal number of constraints: % which has the same solutions as $q$,
it removes \emph{whole} constraints that are redundant. % information which is also available implicitly. 
In that sense, Figure \ref{fig:desc_qual_partial} shows a more compact network than Figure \ref{fig:desc_qual_full}.
This task is addressed, e.g., by \citeN{Wallgruen_12_Exploiting}, and \citeN{matt_duckham_redundant_2014}.

\par\medskip\noindent
With this taxonomy in mind, the next section studies properties of qualitative representations and their reasoning operations.

\section{Qualitative Spatial and Temporal Calculi for Domain Representations}
\label{sec:qualitative_representations}

The notion of a qualitative spatial (or temporal) calculus
is a formal construct which, in one form or another,
underlies virtually every language
for qualitative domain representations.
In this section, we survey this fundamental construct,
formulate minimal requirements to a qualitative calculus,
discuss their relevance to spatial and temporal representation and reasoning,
and list calculi described in the literature. 
\begin{New}
As mentioned in Section \ref{ssec:what_is:taxonomy} domain entities can be of any type representing spatial or temporal aspects. 
As specific domain entities are rather impedimental to define a general notion of 'calculus' we omit a list of entities here. \todo{This sentence is a bit hard to understand. I think one could say: The notion of calculus has been devised to deal with any spatial and/or temporal domain entities, for example those in Table~\protect \ref{tab:calculi_new_1}. (Jae)}
Instead we refer to Table \ref{tab:calculi_new_1} listing entities which are covered by existing calculi so far.
\end{New}
%\todo{As mentioned before, for this type of survey contribution, it is fundamental to be more conclusive and definitive about the kind of entities handled by the calculus. If an exhaustive list of types of covered entities exist, it should be introduced as such. If the exhaustiveness of such a list is controversial, a discussion about the topic, with relevant references, should be included.}
% In Appendix \ref{app:example_formalisms} we use the one-dimensional point calculus \calc{PC${}_1$}
% as a running example to illustrate the formal definitions.
% We annotate the corresponding text passages by ``\refappex[]{\ref{exa:PC_symbolic_constituents}}'' etc.
% 
% {\color{red} 
%   Rewrite if necessary.
%   Maybe we should discuss arities before we start discussing the different choices of identity, permutation and composition
%   -- i.e., after Definition \ref{def:JEPD}.
%   The algebraic part of the following discussion needs to go into the RA subsection \ref{sec:relation_algebras}.
%   There, we'll need to be more specific as to why a generalization to arities $\geqslant 2$
%   is unrealistic at the moment (as per the conversations between DW, TM, TS).%
% }
% 
% 
% The central constituent of a spatio-temporal calculus is a set of relation symbols,
% which are used to represent domain-level relationships between spatio-temporal entities,
% such as the location or moving direction of objects.

Existing spatial and temporal calculi 
% (see Tables \ref{tab:calculi_new_1}--\ref{tab:calculi_new_3} in Section \ref{sec:existing_representations})
are  entirely based on binary or ternary relations
between spatial and temporal entities,
which comprise, for example, points, lines, intervals, or regions.
\emph{Binary} relations are used to represent
the location or moving direction of two entities relative to one another
\emph{without} referring to a third entity as a reference object.
Examples of relations are %``is left of'' (for points)
 ``overlaps with'' (for intervals or regions)
or ``move towards each other'' (for dynamic objects).
Additionally, a binary calculus is equipped with
a converse operation acting on single relation symbols
and a binary composition operation acting on pairs of relation symbols,
representing the natural converse and composition operations on the domain relations, respectively.
Converse and composition play a crucial role
for symbolic reasoning: from the knowledge that the pair $(x,y)$
of entities is in relation $r$, a symbolic reasoner can conclude
that $(y,x)$ is in the converse of $r$;
and if it is additionally known that the pair $(y,z)$ is in $s$,
then the reasoner can conclude that $(x,z)$ is in the relation
resulting in composing $r$ and $s$.
In addition, most calculi provide an identity relation
which allows to represent the (explicit or derived)
knowledge that, for example, $x$ and $y$ represent the same entity. %\refappex{\ref{exa:PC_symbolic_constituents}}
\begin{example}%
  \begin{New}%
    \label{exa:PC_symbolic_constituents}%
    The one-dimensional point calculus \PC1 \cite{vilain-kautz-aaai:86}
    symbolically represents the relations $<,=,>$ between points on a line
    (which may model points in time), see Figure~\ref{fig:relations_PC1+RCC5+CYCb}\,(a).
    These three relations are called \emph{base relations} in Def.\ \ref{def:JEPD};
    \PC1 additionally represents all their unions and intersections:
    the empty relation and $\leqslant, \geqslant, \neq, \lesseqqgtr$.
    The calculus provides the relation symbols
  %   $\symrelname{precedes}$, $\symrelname{same}$, and $\symrelname{follows}$.
  %   We abbreviate them with the symbols \texttt{<}, \texttt{=}, and \texttt{>}.
    \texttt{<}, \texttt{=}, and \texttt{>};
    sets of symbols represent unions of base relations,
    e.g., $\{\texttt{<}, \texttt{=}\}$ represents $\leqslant$.
    The symbol $\texttt{=}$ represents the identity $=$\,.

    \PC1 further provides converse and composition.
    For example, the converse of  \texttt{\textless} is~\texttt{>}:
    whenever $x<y$, it follows that $y>x$;
    the composition of \texttt{<} with itself is again \texttt{<}:
    whenever $x<y$ and $y<z$, we have $x<z$.
    \PC1 represents the converse as a list of size 3 (the converses of all relation symbols)
    and the composition as a table of size $3 \times 3$ (one composition result for each pair of relation symbols).
    \Endofexa
  \end{New}%
\end{example}%
  \begin{figure}[ht]
    \begin{New}%
      \begin{centering}
%         \begin{tabular}{c@{\qquad}c@{\qquad}c}
%           \includegraphics{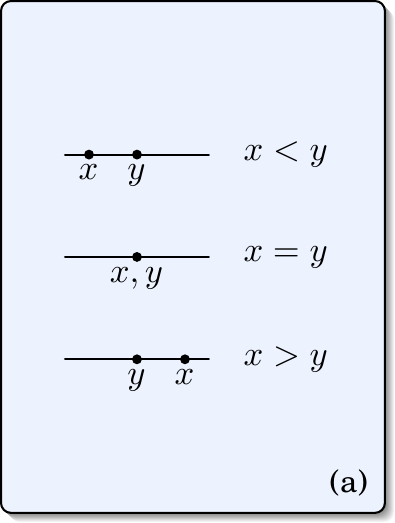} & \includegraphics{images/QSR-survey.31} & \includegraphics{images/QSR-survey.32} \\
%           (a) \PC1                               & (b) \calc{RCC-5}                       & (c) \CYCb
%         \end{tabular}
        \includegraphics{images/QSR-survey-30}%
        \hspace*{\fill}%
        \includegraphics{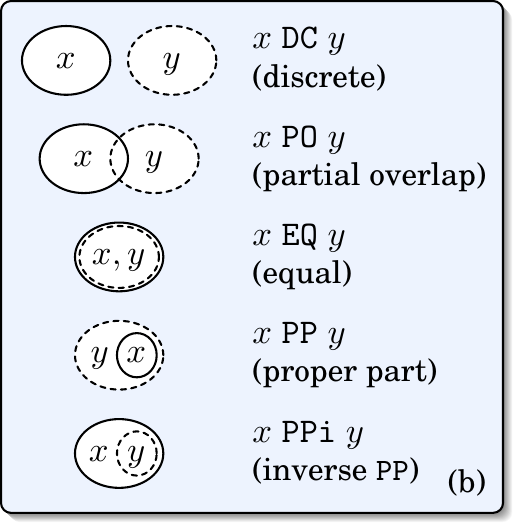}%
        \hspace*{\fill}%
        \includegraphics{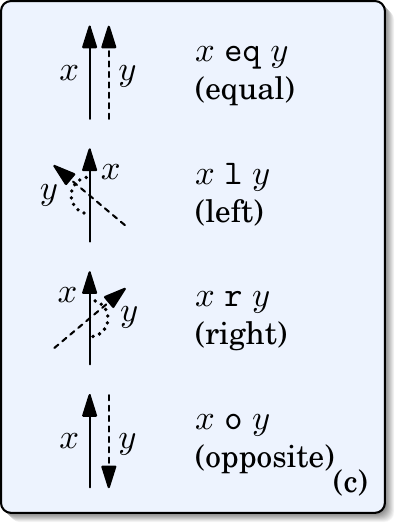}%
      \end{centering}

      \caption{\new{Illustration of the base relations for the calculi (a) \PC1, (b) \calc{RCC-5}, and (c) \CYCb}}
      \label{fig:relations_PC1+RCC5+CYCb}
    \end{New}%
  \end{figure}
\begin{example}%
  \begin{New}%
    \label{exa:RCC5_symbolic_constituents}%
    The calculus \calc{RCC-5} \cite{randell-cui-cohn-KR:92}
    symbolically represents five binary topological relations
    between regions in space (which may model objects):
    ``is discrete from'', ``partially overlaps with'', ``equals'', ``is proper part of'', and ``has proper part'',
    plus their unions and intersections,
    see Figure~\ref{fig:relations_PC1+RCC5+CYCb}\,(b).
    For this purpose, \calc{RCC-5} provides the relation symbols
  %   $\symrelname{precedes}$, $\symrelname{same}$, and $\symrelname{follows}$.
  %   We abbreviate them with the symbols \texttt{<}, \texttt{=}, and \texttt{>}.
    \texttt{DC}, \texttt{PO}, \texttt{EQ}, \texttt{PP}, and \texttt{PPi}.
    The latter two are each other's converses;
    the first three are their own converses.
    The composition of \texttt{DC} and \texttt{PO} is $\{\texttt{DC},\texttt{PO},\texttt{PP}\}$
    because, whenever region $x$ is disconnected from $y$ and $y$ partially overlaps with $z$,
    there are three possible configurations between $x$ and $z$: those represented by $\texttt{DC},\texttt{PO},\texttt{PP}$.
    \Endofexa
  \end{New}%
\end{example}%
\begin{example}%
  \begin{New}%
    \label{exa:CYCb_symbolic_constituents}%
    The calculus \CYCb \cite{DBLP:journals/ai/IsliC00}
    symbolically represents four binary topological relations
    between orientations of points in the plane (which may model observers and their lines of vision):
    ``equals'', ``is opposite to'', and ``is to the left/right of'',
    plus their unions and intersections, see Figure~\ref{fig:relations_PC1+RCC5+CYCb}\,(c).
    For this purpose, \CYCb provides the relation symbols
  %   $\symrelname{precedes}$, $\symrelname{same}$, and $\symrelname{follows}$.
  %   We abbreviate them with the symbols \texttt{<}, \texttt{=}, and \texttt{>}.
    \texttt{e}, \texttt{o}, \texttt{l}, and \texttt{r}.
    The latter two are each other's converses;
    \texttt{e} and \texttt{o} are their own converses.
    The composition of \texttt{l} and \texttt{r} is $\{\texttt{e},\texttt{l},\texttt{r}\}$:
    whenever orientation $x$ is to the left of $y$ and $y$ is to the left of $z$,
    then $x$ can be equal to, to the left of, or to the right of $z$.
    \Endofexa
  \end{New}%
\end{example}%
Depending on the properties postulated
for converse and composition, notions of a calculus of \new{varying strengths} % different strength
exist \cite{DBLP:journals/ki/NebelS02,LigozatR04}.
The algebraic properties of binary calculi are well-understood,
see Section \ref{sec:relation_algebras}.

The main motivation for using \emph{ternary} relations is the requirement of directly
capturing \emph{relative frames of reference} which occur in natural language
semantics \cite{Levinson03:space}. 
In these frames of reference, the location of a target object is
described from the perspective of an observer with respect to a reference object.
For example, a hiker may describe a mountain peak to be to the left of a lake
with respect to her own point of view. \new{Another important motivation is the ability
to express that an object is located between two others.} Thus,
ternary calculi typically contain projective relations for describing relative orientation
\new{and/or betweenness}.
%Among the representations that have been proposed,
%the flip-flop calculus \cite{DBLP:conf/cosit/Ligozat93} stands out, which
%describes the position of the target object with respect to the line connecting
%observer and reference object. The very similar \calc{\LR} calculus,
%featuring relation symbols such as \emph{left of the line} and \emph{right of the line},
%has been identified to be the ``finest of its class'' \cite{SN05}
%in the sense that it is the most refined \emph{practical natural} calculus with
%a certain algebraic structure.
The commitment to ternary (or $n$-ary) relations
complicates matters significantly:
instead of a single converse operation,
there are now five (or $n!-1$) nontrivial permutation operations,
and there is no longer a unique choice for a natural composition operation.
For capturing the algebraic structure of
$n$-ary relations, \citeN{Condotta2006} proposed an algebra
% that involves
%one particular, natural, choice of both composition and permutation operators.
but there are other arguably natural choices,
and they lead to different algebraic properties, as shown
in Section \ref{sec:relation_algebras}. These difficulties may be the main reason
why algebraic properties of ternary calculi are not as deeply studied as for binary calculi.
Fortunately, this will not prevent us from establishing our general notion of a qualitative
spatial (or temporal) calculus with relation symbols of arbitrary arity.
However, we will then restrict our algebraic study to binary calculi;
a unifying algebraic framework for $n$-ary calculi
has yet to be established.

%% I like the following section but it doesn't fit here too well
%%
%% already the most well-behaved
%% \calc{\LR} calculus {\color{red}does not exhibit any helpful algebraic features. *** This is quite vague, but will eventually go anyway.}
%There are at least two ways to reduce the arity of relation symbols in a calculus.
%One is to consider domains of ``composite'' spatio-temporal entities each of which captures
%two aspects at a time, e.g., location and direction of movement.
%This is done, e.g., 
%in the \calc{Dipole} (\calc{\DRA}), \calc{Oriented-Point} (\calc{OPRA}), and \calc{StarVars} (\calc{\SV}) calculi \cite{moratz-renz-wolter-ECAI:00,Moratz06_ECAI,lee-renz-wolter-IJCAI:13},
%see Tables \ref{tab:calculi_new_1}--\ref{tab:calculi_new_3}.
%Another is to use Condotta et al.'s ``standard procedure'' for transforming
%a set of $n$-ary relations to a set of binary relations over the $n$-th power 
%of the original domain \cite{Condotta2006}.

\subsection{Requirements to Qualitative Spatial and Temporal Calculi}
\label{sec:requirements}

We start with minimal requirements used in the literature.
We use the following standard notation.
A \emph{universe} is a non-empty set \Univ.
With $X^n$ we denote the set of all $n$-tuples with elements from $X$.
An \emph{$n$-ary domain relation} is a subset
$r \subseteq \Univ^n$. We use the prefix notation $r(x_1,\dots,x_n)$
to express $(x_1,\dots,x_n) \in r$;
in the binary case we will often use the infix notation 
$x\,r\,y$ instead of $r(x,y)$.% , as we did in Example \ref{exa:PC_initial}. %%% The ref to Exa 1 is misleading because we used infix for _symbolic_ relations there

\paragraph*{Abstract partition schemes}
\citeN{LigozatR04} note that most spatial and temporal calculi
are based on a set of JEPD (jointly exhaustive and pairwise disjoint) domain relations.
%In case of binary calculi JEPD means that for any pair of objects exactly one relation holds. %TM: removed since this is repeated below.
The following definition is predominant in the QSTR literature \cite{LigozatR04,Cohn:2008vn}.
\begin{definition}
  \label{def:JEPD}
  Let $\Univ$ be a universe and $\URel$ a set of non-empty domain relations of the same arity $n$.
  $\URel$ is called a set of \emph{JEPD relations} over $\Univ$ if
%   \begin{Itemize}
%     \item
%       the relations in $\URel$ are pairwise disjoint;
%     \item
%       $\Univ \times \Univ = \bigcup_{r \in \URel} r$.
%   \end{Itemize}
  the relations in $\URel$ are jointly exhaustive, i.e., $\Univ^n = \bigcup_{r \in \URel} r$, and pairwise disjoint.

  An \emph{$n$-ary abstract partition scheme} is a pair $(\Univ,\URel)$
  where $\URel$ is a set of JEPD relations over the universe $\Univ$.
  The relations in $\URel$ are called \emph{base relations}. %\refappex{\ref{exa:PC_underlying_APS}}
\end{definition}
\begin{example}%
  \begin{New}%
    \label{exa:PC_underlying_APS}%
    The calculus \PC1 is based on the binary abstract partition scheme 
    $\PSPC1 := (\mathbb{R},\{<,=,>\})$
    where $\mathbb{R}$ is the set of reals
    and $\{<,=,>\}$ is clearly JEPD.
    For \calc{RCC-5}, the universe is often chosen to be the set of all regular closed subsets
    of the 2- or 3-dimensional space $\mathbb{R}^2$ or $\mathbb{R}^3$.
    The five base relations from Figure~\ref{fig:relations_PC1+RCC5+CYCb}\,(b) are JEPD.
    For \CYCb, the universe is the set of all oriented line segments in the plane $\mathbb{R}^2$,
    given by angles between 0° and 360°.
    The four base relations from Figure~\ref{fig:relations_PC1+RCC5+CYCb}\,(c) are JEPD.
    \Endofexa
  \end{New}%
\end{example}%
In Definition \ref{def:JEPD},
the universe \Univ represents the set of all spatial (or temporal) entities.
The main ingredients of a calculus will be
relation symbols representing the base relations
in the underlying partition scheme.
A constraint linking an $n$-tuple $t$ of entities via a relation symbol
will thus represent complete information (modulo the qualitative abstraction underlying the partition scheme)
about $t$.
Incomplete information is modeled by $t$ being in a \emph{composite relation},
which is a set of relation symbols representing the union of the corresponding base relations.
The set of all relation symbols represents the \emph{universal relation} (the union of all base relations)
and indicates that no information is available. %\refappex{\ref{exa:PC_constraints}}
\begin{example}%
\begin{New}%
  \label{exa:PC_constraints}%
  In \PC1, ``$x \texttt{~<~} y$'' represents the relationship
  $a<b$, which holds complete information because $<$ is atomic in \PSPC1.
  The statement ``$x ~\{\texttt{<},\texttt{=}\}~ y$'' represents the coarser relationship
  $a\leqslant b$ holding the incomplete information ``$a<b$ or $a=b$''.
  Clearly ``$x ~\{\texttt{<},\texttt{=},\texttt{>}\}~ y$''
  holds no information:
  ``$a<b$ or $a=b$ or $a>b$''
%   ``$a \lesseqqgtr b$''
  is always true.
  \Endofexa
\end{New}%
\end{example}%
The requirement that all base relations are JEPD
ensures that %, under complete information, [klingt im Kontext von composite relations verwirrend]
every $n$-tuple of entities belongs to exactly one base relation.
Thanks to PD (pairwise disjointness), there is a unique way to represent
any composite relation using relation symbols and, due to JE (joint exhaustiveness), the empty relation
can never occur in a consistent set of constraints,
which is relevant for
reasoning, see Section \ref{sec:reasoning}. %\refappex{\ref{exa:PC_non-JEPD}}
\begin{example}%
\begin{New}%
  \label{exa:PC_non-JEPD}%
  Consider the modification $\PC1'$ based on the non-PD set $\{\leqslant,\geqslant\}$.
  Then the relationship $a=b$ can be expressed in two ways
  using relation symbols \texttt{<=} and \texttt{>=} representing $\leqslant$ and $\geqslant$:
  ``$x \texttt{~<=~} y$'' and 
  ``$x \texttt{~>=~} y$''.

  Conversely, consider the variant $\PC1''$ based on the non-JE set $\{<,>\}$.
  Then the constraint $a=b$ cannot be expressed.
  Therefore, in any given set of constraints
  where it is known that $x,y$ stand for identical entities,
  we would find the empty relation between $x,y$.
  The standard reasoning procedure described in Section \ref{sec:reasoning}
  would declare such sets of constraints to be inconsistent,
  although they are not -- we have simply not been able to express $x=y$.
  \Endofexa
\end{New}%
\end{example}%
\paragraph*{Partition schemes, identity, and converse}
\citeN{LigozatR04} base their definition of a
(binary) qualitative calculus on the notion of a \emph{partition scheme},
which imposes additional requirements on an abstract partition scheme.
In particular, it requires that the set of base relations
contains the identity relation and is closed under the converse operation.
The analogous definition by \citeN{Condotta2006}
captures relations of arbitrary arity.
Before we define the notion of a partition scheme,
we discuss the generalization of identity and converse
to the $n$-ary case.

The binary identity relation is given as usual by %\refappex{\ref{exa:PC_identity}}
\begin{equation}
  \label{eq:id_binary}
  \id^2 = \{(u,u) \mid u \in \Univ\}.
\end{equation}%

\begin{example}%
\begin{New}%
  \label{exa:PC_identity}
  Clearly, $=$ in \PSPC1 and ``equals'' in \PSRCC5 and \PSCYCb are the identity relation over the respective domain.
  \Endofexa
\end{New}%
\end{example}%
The most inclusive way to generalize \eqref{eq:id_binary} to the $n$-ary case
is to fix a set $M$ of
numbers of all positions where tuples in $\id^n$ are required to agree.
Thus, an \emph{$n$-ary identity relation} is a domain relation $\id^n_M$
with $M \subseteq \{1,\dots,n\}$ and $|M| \geqslant 2$,
which is defined by
\begin{align*}
%   \label{eq:identity_n-ary}
  \id^n_M    & = \{(u_1,\dots,u_n) \in \Univ^n \mid u_i=u_j \text{~for all $i,j \in M$}\}.
\end{align*}
This definition subsumes the ``diagonal elements'' $\Delta_{ij}$ of \citeN{Condotta2006}
for the case $|M|=2$.
\begin{New}
  However, it is not enough to restrict attention to $|M|=2$
  because there are ternary calculi which contain all identities
  $\id^3_{1,2}$, $\id^3_{1,3}$, $\id^3_{2,3}$, and $\id^3_{1,2,3}$,
  an example being the LR calculus, which was described as
  ``the finest of its class'' \cite{SN05}.
  Since the relations in an $n$-ary abstract partition scheme are JEPD,
  all identities $\id^n_M$ are either base relations
  or subsumed by those.
  The stronger notion of a \emph{partition scheme}
  should thus require that all identities be made explicit.
\end{New}

For binary relations, 
$\id^2$ from \eqref{eq:id_binary} is the \emph{unique} identity relation
$\id^2_{\{1,2\}}$.

\par\medskip\noindent
The standard definition for the converse operation $\breve{~}$ on binary relations is %\refappex{\ref{exa:PC_converse}}
\begin{equation}
  \label{eq:converse_binary}
  r\breve{~} = \{(v,u) \mid (u,v) \in r\}.
\end{equation}

\begin{example}%
\begin{New}%
  \label{exa:PC_converse}%
  In \PSPC1 we have that
  $\mathord{<}\breve{~}$ is $\mathord{>}$;~
  $\mathord{=}\breve{~}$ is $\mathord{=}$;~
  $\mathord{>}\breve{~}$ is $\mathord{<}$.
  The converses of the base relations in \PSRCC5 and \PSCYCb
  were named in Examples~\ref{exa:RCC5_symbolic_constituents} and~\ref{exa:CYCb_symbolic_constituents}.
  \Endofexa
\end{New}%
\end{example}%
In order to generalize the reversal of the pairs $(u,v)$ in \eqref{eq:converse_binary}
to $n$-ary tuples,
we consider arbitrary permutations of $n$-tuples.
An \emph{$n$-ary permutation} is a bijection $\pi : \{1,\dots,n\} \to \{1,\dots,n\}$.
We use the notation $\pi : (1,\dots,n) \mapsto (i_1,\dots,i_n)$ as an
abbreviation for ``$\pi(1) = i_1$, \dots, $\pi(n) = i_n$''.
The identity permutation $\iota : (1,\dots,n) \mapsto (1,\dots,n)$
is called \emph{trivial}; all other permutations are \emph{nontrivial}.

A finite set $P$ of $n$-ary permutations is called \emph{generating}
if each $n$-ary permutation is a composition of permutations from $P$.
For example, the following two permutations form a (minimal) generating set:
\begin{align*}
%   \leftrotation & : (1,\dots,n) \mapsto (2,\dots,n,1) \\
%   \finalswap    & : (1,\dots,n) \mapsto (1,\dots,n-2,n,n-1)
  \shortcut & : (1,\dots,n) \mapsto (2,\dots,n,1)       & & \text{(shortcut)} \\
  \homing   & : (1,\dots,n) \mapsto (1,\dots,n-2,n,n-1) & & \text{(homing)}
  \intertext{The names have been introduced in \citeN{FZ92utilization} for ternary permutations,
  together with a name for a third distinguished permutation:}
  \inversion  & : (1,\dots,n) \mapsto (2,1,3\dots,n)      & & \text{(inversion)}
\end{align*}
\citeN{Condotta2006} call
% \todo{These definitions are clear but it would be helpful to see examples of their application to n-ary spatial relations. Examples of the intuitive semantics in the spatial domain of these permutations.}
shortcut ``rotation'' ($r^\curvearrowright$)
and homing ``permutation'' ($r^\looparrowright$).
\begin{New}
\begin{example}
In Figure \ref{fig:DCCperm} we depict the permutations
\shortcut\ (rotation), \homing\ (permutation), and \inversion\
for one relation from the ternary Double Cross Calculus (\calc{2-cross}) \cite{FZ92utilization}. 
The \calc{2-cross} relations specify the location of a point $P_3$
relative to an oriented line segment given by two points $P_1,P_2$.
Figure \ref{fig:DCCperm}\,a shows the relation \texttt{right-front}.
The relations resulting from applying the permutations
are depicted in Figure~\ref{fig:DCCperm}\,b;
e.g., $\shortcut(\texttt{right-front}) = \texttt{right-back}$
because the latter is $P_1$'s position relative to the line segment $\overrightarrow{P_2P_3}$.
Figure~\ref{fig:DCCperm}\,c will be relevant later.
% 
% In case of \shortcut\ (rotation) $P_1$ is the third entity
% which is related to $\overline{P_1P_2}$ (relation \texttt{right-back}; Figure \ref{fig:DCCperm}\,b)
% and in case of \homing\ (permutation) the last two entities are switched
% such that $P_2$ is related to $\overline{P_1P_3}$
% (relation \texttt{left-middle}; Figure \ref{fig:DCCperm}\,c).
% \inversion\ exchanges the first two elements,
% i.e., in case of \calc{2-cross} the orientation of the line segment is inverted
% (\texttt{left-back}; Figure \ref{fig:DCCperm}\,d).
\end{example}
\end{New}

%:tikz
\usetikzlibrary{backgrounds}
\usetikzlibrary{arrows}
\newcommand*{\cwidth}{.2}
\newcommand*{\dy}{.8}
\newcommand*{\dcu}{2.4*\cwidth}
\newcommand*{\dcx}{1.2*\dy}
\newcommand*{\dcy}{.8*\dy}
\newcommand*{\rota}{26.6}
\newcommand*{\db}{.46*\dy}
\pgfmathsetmacro{\cwf}{\cwidth/sqrt(5)}
\pgfmathsetmacro{\dyf}{\dy/sqrt(5)}
% \pgfmathsetmacro{\dcf}{\dc/sqrt(5)}
\pgfmathsetmacro{\rotb}{45+\rota/2}
\tikzstyle{bordered} = [draw,outer sep=0,inner sep=0,minimum size=10]
\tikzstyle{mcs_plain} = [fill=white]
%\tikzstyle{mcs_emph} = [black,draw=black,fill=red]
\tikzstyle{mcs_emph} = [red,draw=red,fill=white]

% ============================================
% =========== SET COORDINATES ================
% ============================================
\newcommand{\setcoordinates}{%
  % ---------- origins for the four pics ----------
  \coordinate (o0) at (0,0);
  \coordinate (o1) at ($ (o0)+(2.7*\dy,0) $);
  \coordinate (o2) at ($ (o1)+(2.9*\dy,0) $);
  \coordinate (o3) at ($ (o2)+(2.9*\dy,0) $);
  \coordinate (o4) at ($ (o3)+(2.8*\dy,0) $);
  \coordinate (o5) at ($ (o4)+(3.3*\dy,0) $);

  % ---------- coordinates for all pics ------------
  \coordinate (p2) at (0,0);
  \coordinate (p1) at (0,-\dy);
  \coordinate (p3) at (\rota:\dy);
  \coordinate (p4) at ($ (p3)+(270+2*\rota:\dy) $);

  % ---------- aux coos for arrow ends ----------
  \coordinate (p21) at ($ (p2)+(0,-.8*\cwidth) $);
  \coordinate (p32) at ($ (p3)+(\rota:-.8*\cwidth) $);
  \coordinate (p31) at ($ (p3)+(\rotb:-.8*\cwidth) $);
  \coordinate (p12) at ($ (p1)+(0,.8*\cwidth) $);

  % ---------- aux coos for double cross in id ----------
  \coordinate (p2m_id) at ($ (p2)+(0,\dcy) $);
  \coordinate (p1m_id) at ($ (p1)+(0,-\dcu) $);
  \coordinate (p2a_id) at ($ (p2)+(-\dcu,0) $);
  \coordinate (p2b_id) at ($ (p2)+(\dcx,0) $);
  \coordinate (p1a_id) at ($ (p1)+(-\dcu,0) $);
  \coordinate (p1b_id) at ($ (p1)+(\dcx,0) $);

  % ---------- aux coos for double cross in sc ----------
  \coordinate (p3m_sc) at ($ (p3)+(\rota:\dcu) $);
  \coordinate (p3a_sc) at ($ (p3)+(\rota+90:\dcu) $);
  \coordinate (p3b_sc) at ($ (p3)+(\rota-90:\dcx) $);
  \coordinate (p2a_sc) at ($ (p2)+(\rota+90:\dcu) $);
  \coordinate (p2m_sc) at ($ (p2)+(\rota+180:\dcy) $);
  \coordinate (p2b_sc) at ($ (p2)+(\rota+270:\dcx) $);

  % ---------- aux coos for double cross in hm ----------
  \coordinate (p3m_hm) at ($ (p3)+(\rotb:\dcu) $);
  \coordinate (p3a_hm) at ($ (p3)+(\rotb+90:\dcy) $);
  \coordinate (p3b_hm) at ($ (p3)+(\rotb-90:\dcu) $);
  \coordinate (p1a_hm) at ($ (p1)+(\rotb+90:\dcy) $);
  \coordinate (p1m_hm) at ($ (p1)+(\rotb+180:\dcu) $);
  \coordinate (p1b_hm) at ($ (p1)+(\rotb+270:\dcu) $);

  % ---------- aux coos for double cross in comp2 ----------
  \coordinate (p2m_c2) at ($ (p2)+(0,\dcu) $);
  \coordinate (p1m_c2) at ($ (p1)+(0,-\dcu) $);
  \coordinate (p2a_c2) at ($ (p2)+(-\dcu,0) $);
  \coordinate (p2b_c2) at ($ (p2)+(\dcu,0) $);
  \coordinate (p1a_c2) at ($ (p1)+(-\dcu,0) $);
  \coordinate (p1b_c2) at ($ (p1)+(\dcu,0) $);

  % ---------- coos for subfigure labels ----------
  \coordinate (sub_id) at ($ (p1)+(-\rota:\dy)+(0,-.12*\dcu) $);
  \coordinate (sub_sc) at ($ (sub_id)+(.6*\dy,0) $);
  \coordinate (sub_hm) at ($ (sub_id)+(.3*\dy,0) $);
  \coordinate (sub_inv) at (sub_id);

  % ---------- coos for bunches of arrows ----------
%   \coordinate (bunch3l) at ($ (p2)+(\rota+\rotd+2*\rotc : \radc) $);
%   \coordinate (bunch3c) at ($ (p2)+(\rota+\rotd+\rotc : \radc) $);
%   \coordinate (bunch3r) at ($ (p2)+(\rota+\rotd : \radc) $);
%   \coordinate (bunch3b) at ($ (bunch3c)+(180+\rota : \db) $);
%   \coordinate (bunch3t) at ($ (bunch3c)+(\rota : \db) $);

  \coordinate (bunch3l) at ($ (p3)+(-\db,0) $);
  \coordinate (bunch3r) at ($ (p3)+( \db,0) $);
  \coordinate (bunch3b) at ($ (p3)+(0,-\db) $);
  \coordinate (bunch3t) at ($ (p3)+(0, \db) $);

  \coordinate (bunch4r) at ($ (p4)+(\rota:\db) $);
  \coordinate (bunch4t) at ($ (p4)+(90+\rota:\db) $);
  \coordinate (bunch4l) at ($ (p4)+(180+\rota:\db) $);
  \coordinate (bunch4b) at ($ (p4)+(270+\rota:\db) $);
  
  % ---------- coos for P4 box ----------
  \coordinate (P4boxA) at ($ (o5)+(p1)+( .05*\dcu,-1.1*\dcu) $);
  \coordinate (P4boxB) at ($ (o5)+(p1)+(1.1*\dcu,-1.1*\dcu) $);
  \coordinate (P4boxC) at ($ (o5)+(p2)+(1.1*\dcu, 1.1*\dcu) $);
  \coordinate (P4boxD) at ($ (o5)+(p2)+( .05*\dcu, 1.1*\dcu) $);

  % ---------- box coos ----------
  \coordinate (boxA1) at ($ (o0)+(p1)+(-\dcu,-1.2*\dcu)+(-1mm,-1mm) $);
  \coordinate (boxB1) at ($ (o0)+(p1)+( \dcx,-1.2*\dcu)+( 1mm,-1mm) $);
  \coordinate (boxC1) at ($ (o0)+(p2)+( \dcx,1.2*\dcy)+( 1mm,1mm) $);
  \coordinate (boxD1) at ($ (o0)+(p2)+(-\dcu,1.2*\dcy)+(-1mm,1mm) $);

  \coordinate (boxA2) at ($ (o1)+(p1)+(-\dcu,-1.2*\dcu)+(-2mm,-1mm) $);
  \coordinate (boxB2) at ($ (o3)+(p1)+( \dcx,-1.2*\dcu)+( 1mm,-1mm) $);
  \coordinate (boxC2) at ($ (o3)+(p2)+( \dcx,1.2*\dcy)+( 1mm,1mm) $);
  \coordinate (boxD2) at ($ (o1)+(p2)+(-\dcu,1.2*\dcy)+(-2mm,1mm) $);

  \coordinate (boxA3) at ($ (o4)+(p1)+(-1.1*\dcu,-1.2*\dcu)+(-2mm,-1mm) $);
  \coordinate (boxB3) at ($ (o5)+(p1)+( 1.2*\dcu,-1.2*\dcu)+( 1mm,-1mm) $);
  \coordinate (boxC3) at ($ (o5)+(p2)+( 1.2*\dcu,1.2*\dcy)+( 1mm,1mm) $);
  \coordinate (boxD3) at ($ (o4)+(p2)+(-1.1*\dcu,1.2*\dcy)+(-2mm,1mm) $);
}

% ============================================
% =========== FIGURE STARTS ==================
% ============================================
\begin{figure}%
\centering
  \includegraphics{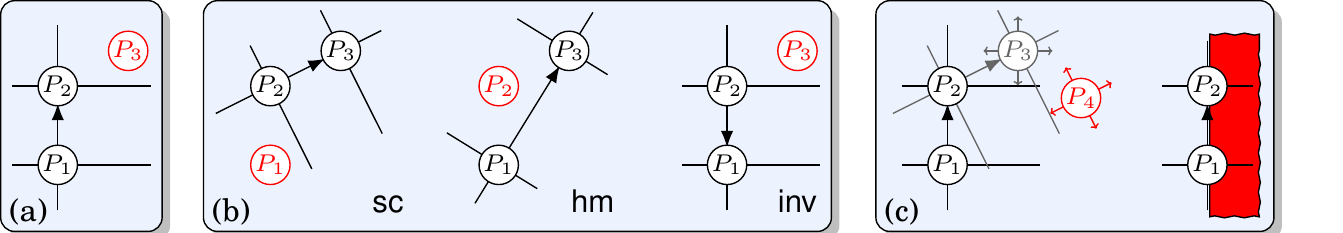}
%   }

%   \caption{Example of operations on the \subref{fig:DCCperm:id}  original relation \texttt{right-front}: \subref{fig:DCCperm:sc} shortcut (\texttt{right-back}), \subref{fig:DCCperm:hm} homing (\texttt{left-middle}), and \subref{fig:DCCperm:inv} inversion (\texttt{left-back}) for the ternary Double Cross Calculus (\calc{2-cross}).}
  \caption{%
%     Example of operations on the
%     (a) original relation \texttt{right-front}:
%     (b) shortcut (\texttt{right-back}),
%     (c) homing (\texttt{left-middle}),
%     and (d) inversion (\texttt{left-back}) for the ternary Double Cross Calculus (\calc{2-cross}).%
    ~(a) The \calc{2-cross} relation \texttt{right-front};
    ~(b) permutations of \texttt{right-front};
    ~(c) the composition $\texttt{right-front} \circ \texttt{right-front}$%
  }
  \label{fig:DCCperm}
\end{figure}

For $n=2$, \shortcut, \homing and \inversion coincide;
indeed, there is a unique minimal generating set, which consists of the single
permutation $\breve{~} : (1,2) \mapsto (2,1)$. For $n \geqslant 3$,
there are several generating sets, e.g., $\{\shortcut,\homing\}$ and $\{\inversion,\homing\}$.

Now an \emph{$n$-ary permutation operation} is a map $\cdot^\pi$
that assigns to each $n$-ary domain relation $r$ an $n$-ary domain relation
denoted by $r^\pi$, where $\pi$ is an $n$-ary permutation and the following holds:
% \[
$
r^\pi = \{(u_{\pi(1)},\dots,u_{\pi(n)}) \mid (u_1,\dots,u_n ) \in r\}
$
% \]
% A finite set $\{\cdot^\pi \mid \pi \in P\}$ of permutation operations is called \emph{generating}
% if the underlying set $P$ of permutations is generating.
% Clearly, in the binary case, the unique generating set consists of the single permutation operation
% $\cdot\breve{~}$ --
% for higher arities there are several generating sets.

\par\medskip\noindent
We are now ready to give our definition of a partition scheme,
lifting Ligozat and Renz's binary version to the $n$-ary case,
and generalizing Condotta et al.'s $n$-ary version to arbitrary generating sets.
\begin{definition}
  \label{def:partition_schemes}
%   A \emph{partition scheme} \cite{LigozatR04} is an abstract partition scheme such that
%   \begin{Itemize}
%     \item
%       the identity relation $\id$ is contained in $\URel$;
%     \item
%       for every $r \in \URel$, there is some $s \in \URel$ such that $r\breve{~} = s$.
%   \end{Itemize}
  An \emph{$n$-ary partition scheme}
  $(\Univ,\URel)$
  is an $n$-ary abstract partition scheme
%   with $n$-ary relations
  with the following two additional properties.
  \begin{enumerate}
    \item
      $\URel$ contains all identity relations $\id^n_M$, $M \subseteq \{1,\dots,n\}$, $|M| \geqslant 2$.
    \item
      There is a generating set $P$ of permutations
      such that, for every $r \in \URel$ and every $\pi \in P$,
      there is some $s \in \URel$ with $r^\pi = s$.
  \end{enumerate}
\end{definition}
In the special case of binary relations, we have the following.
\begin{observation}
  \label{obs:binary_partition_scheme}
  A binary partition scheme
  $(\Univ,\URel)$
  is a binary abstract partition scheme
  with the following two additional properties.
  \begin{enumerate}
    \item
      $\URel$ contains the identity relation $\id^2$.
    \item
      For every $r \in \URel$, there is some $s \in \URel$ such that $r\breve{~} = s$. %\refappex{\ref{exa:PC_is_PS}}
  \end{enumerate}
\end{observation}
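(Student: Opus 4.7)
The plan is to specialize Definition~\ref{def:partition_schemes} to the case $n=2$ and verify that its two closure requirements collapse to the simpler binary versions stated in the observation; since an abstract partition scheme is the same concept in both definitions, only the two extra requirements need attention.

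First I would dispatch condition~(1). The subsets $M \subseteq \{1,2\}$ with $|M| \geqslant 2$ are exhausted by the single choice $M = \{1,2\}$, and the corresponding identity relation $\id^2_{\{1,2\}}$ agrees with the standard binary identity $\id^2$ defined in equation~\eqref{eq:id_binary}. Consequently, ``$\URel$ contains all $\id^n_M$ with $|M| \geqslant 2$'' is, in the binary case, exactly ``$\URel$ contains $\id^2$''.

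Next I would treat condition~(2). There are exactly two binary permutations: the trivial $\iota$ and the converse $\breve{~}$, and they satisfy $\iota \circ \iota = \iota$ and $\breve{~} \circ \breve{~} = \iota$. Hence any generating set $P$ of binary permutations must contain $\breve{~}$ (since $\{\iota\}$ generates only $\{\iota\}$), while $\{\breve{~}\}$ by itself is already generating. From this the equivalence follows in two directions: if the $n$-ary definition holds via some generating $P$, then $\breve{~} \in P$, so the closure clause forces $r\breve{~} \in \URel$ for every $r \in \URel$, which is property~(2) of the observation; conversely, if $r\breve{~} \in \URel$ for every $r \in \URel$, then $P := \{\breve{~}\}$ is a generating set satisfying the closure clause, fulfilling property~(2) of Definition~\ref{def:partition_schemes}.

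The argument is routine and contains no real obstacle; the only point deserving a sentence of justification is the observation that, for $n=2$, every generating set of permutations must contain $\breve{~}$, so quantifying over generating sets is equivalent to requiring closure under the single operation $\breve{~}$.
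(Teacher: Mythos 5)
Your proof is correct and follows exactly the route the paper intends: the paper states this as an unproved observation, relying on the earlier remark that for $n=2$ the converse $\breve{~}:(1,2)\mapsto(2,1)$ is the unique nontrivial permutation and forms the unique minimal generating set, which is precisely the key point you isolate (together with the trivial collapse of the identities $\id^2_M$ to $\id^2_{\{1,2\}}=\id^2$). Nothing is missing.
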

\begin{example}%
\begin{New}%
  \label{exa:PC_is_PS}%
  It follows that \PSPC1, \PSRCC5, and \PSCYCb are even partition schemes.
  In contrast, the abstract partition scheme $(\mathbb{R},\{\leqslant,>\})$
  is not a partition scheme: it violates both conditions of 
  Observation \ref{obs:binary_partition_scheme} (and thus of Definition \ref{def:partition_schemes}).
  \Endofexa
\end{New}%
\end{example}%
\begin{example}%
\begin{New}%
  \label{exa:CDR_is_not_PS}%
  As an example of an intuitive and useful abstract partition scheme that is \emph{not} a partition scheme,
  consider the calculus
  \emph{Cardinal Direction Relations} (\calc{CDR}) \cite{SkiadoK05}.
  CDR describes the placement of regions in a 2D space (e.g., countries on the globe)
  relative to each other, and with respect to a fixed coordinate system.
  The axes of the bounding box of the \emph{reference region} $y$
  divide the space into nine \emph{tiles}, see Fig.~\ref{fig:relations_CDR}\,(a).
  The binary relations in \PSCDR now determine which tiles relative to $y$
  are occupied by a \emph{primary region} $x$: e.g., in Fig.~\ref{fig:relations_CDR}\,(b),
  tiles N, W, and B of $y$ are occupied by $x$; hence we have \mbox{$x$ N:W:B $y$}.
  Simple combinatorics yields $2^9-1=511$ base relations.

  Now \PSCDR is not a partition scheme
  because it violates Condition 2 of
  Observation \ref{obs:binary_partition_scheme} (and thus of Definition \ref{def:partition_schemes}):
  e.g., the converse of the base relation S (south) is not a base relation.
  To justify this claim, assume the contrary. Take two specific regions $x,y$ with \mbox{$x$ S $y$},
  namely two unit squares, where $y$ is exactly above $x$.
  Then we also have \mbox{$y$ N $x$}; therefore the converse of S is N.
  Now stretch the width of $x$ by any factor $>1$. Then we still have \mbox{$y$ N $x$},
  but no longer \mbox{$x$ S $y$}. Hence the converse of S cannot comprise all of N,
  which contradicts the assumption that the converse of S is a base relation.

  The related calculus \calc{RCD} \cite{NavarreteEtAl13}
  abstracts away from the concrete shape of regions and replaces them with their bounding boxes,
  see Fig.~\ref{fig:relations_CDR}\,(c).
  \PSRCD is not a partition scheme, with the same argument from above.
  \Endofexa
\end{New}%
\end{example}
\begin{figure}[ht]
\begin{New}%
  \begin{centering}
%         \begin{tabular}{c@{\qquad}c@{\qquad}c}
%           \includegraphics{images/QSR-survey.30} & \includegraphics{images/QSR-survey.31} & \includegraphics{images/QSR-survey.32} \\
%           (a) \PC1                               & (b) \calc{RCC-5}                       & (c) \CYCb
%         \end{tabular}
    \includegraphics{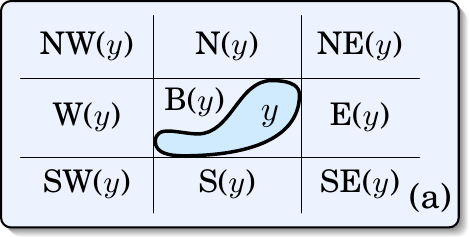}%
    \hspace*{\fill}%
    \includegraphics{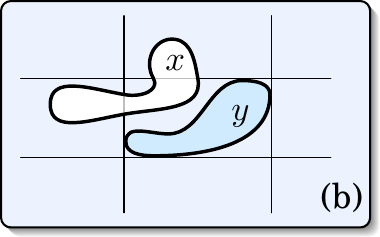}%
    \hspace*{\fill}%
    \includegraphics{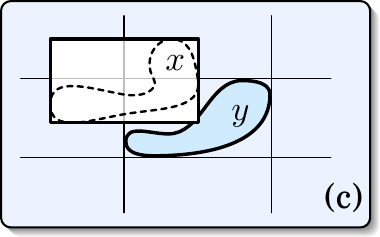}%
  \end{centering}

  \caption{\new{%
    Calculi \calc{CDR} and \calc{RCD}:
    (a) reference tiles;
    (b) the CDR base relation \mbox{$x$ N:W:B $y$;}
    (c) the RCD base relation \mbox{$x$ NW:N:W:B $y$}%
  }}
  \label{fig:relations_CDR}%
\end{New}%
\end{figure}%
\begin{New}%
It is important to note that violations of Definition~\ref{def:partition_schemes}
such as those reported in Example~\ref{exa:CDR_is_not_PS}
are not necessarily bugs in the design of the respective calculi --
in fact they are often a feature of the corresponding representation language,
which is deliberately designed to be just as fine as necessary,
and may thus omit some identity relations or converses/compositions of base relations.
To turn, say, \calc{CDR} into a partition scheme,
one would have to break down the 511 base relations into smaller ones,
resulting in even more, less cognitively plausible ones.
Thus violations of Definition~\ref{def:partition_schemes} are unavoidable,
and we adopt the more general notion of an abstract partition
scheme.%
\end{New}%

\paragraph*{Calculi}
Intuitively, a qualitative spatial (or temporal) calculus
is a symbolic representation of an abstract partition scheme
and additionally represents the composition operation on the relations involved.
As before, we need to discuss the generalization
of binary composition to the $n$-ary case
before we can define it precisely.

For binary domain relations, the standard definition of composition is: %\refappex{\ref{exa:PC_composition}}
\begin{equation}
  r \circ s = \{(u,w) \mid \exists \underline{v} \in \Univ :
  (u,\underline{v}) \in r \text{ and } (\underline{v},w) \in s\}
  \label{eq:binary_composition}
\end{equation}

\begin{example}%
\begin{New}%
  \label{exa:PC_composition}%
  In \PSPC1 we have, e.g., that
  $\mathord{<} \circ \mathord{<}$ equals $\mathord{<}$
  because $a<b$ and $b<c$ imply $a<c$.
  Furthermore, $\mathord{<} \circ \mathord{>}$ yields the universal relation,
  i.e., the union of $<$, $=$, and $>$,
  because ``$a<b$ and $b>c$'' is consistent with each of
  $a<c$, $a=c$, and $a>c$.
%
%   If we restricted the domain of \PSPC1 to $\{0,1\}$,
%   then $\mathord{<} \circ \mathord{>}$ would equal $=$
%   because $a<b$ and $b>c$ imply $a=c=0$ and $b=1$.
  \Endofexa
\end{New}%
\end{example}%
We are aware of three ways to generalize \eqref{eq:binary_composition} to higher arities.
The first is a binary operation on the ternary relations of the calculus \calc{2-cross} \cite{Freksa92,FZ92utilization},
see also Fig.~\ref{fig:DCCperm}:
\begin{equation}
  r \mathbin{\circ^3_{\text{FZ}}} s = \{(u,v,w) \mid \exists \underline{x} :
  (u,v,\underline{x}) \in r \text{ and } (v,\underline{x},w) \in s\}
%   \label{eq:n-ary_composition_FZ}
  \notag
\end{equation}
\begin{New}
%
%   Equation~\eqref{eq:n-ary_composition_FZ} says:
  It says:
  if the location of $x$ relative to $u$ and $v$ is determined by $r$
  and the location of $w$ relative to $v$ and $x$ is determined by $s$,
  then the location of $w$ relative to $u$ and $v$ is determined by $r \mathbin{\circ^3_{\text{FZ}}} s$.
  Fig.~\ref{fig:DCCperm}\,c
  shows the composition of the \calc{2-cross} relation $\texttt{right-front}$
  with itself; i.e., $\texttt{right-front} \mathbin{\circ^3_{\text{FZ}}} \texttt{right-front}$.
  The red area indicates the possible locations of the point $P_4$;
  hence the resulting relation is $\{\texttt{right-front},\texttt{right-middle},\texttt{right-back}\}$.
  A generalization to other calculi and arities $n > 3$ is obvious.
\end{New}

% \input{fig_composition_2cross}

% While the definition of $\mathbin{\circ^3_{\text{FZ}}}$ differs only slightly from that of $\mathbin{{}_3\circ_2^3}$,
% it seems to capture the idea of composition more intuitively
% because the explanation of $\mathbin{{}_3\circ_2^3}$ from above
% turns into the following one for $\mathbin{\circ^3_{\text{FZ}}}$:
% if the position of $x$ relative to $u$ and $v$ is determined by $r$
% and the position of $w$ relative to $v$ and $x$ is determined by $s$,
% then the position of $w$ relative to $u$ and $v$ can be inferred to be determined by $\mathbin{\circ^3_{\text{FZ}}}$.
% This operation can be generalized to arities $n > 3$.

A second alternative results in $n(n-1)$ binary operations $\mathbin{{}_i\circ_j^n}$ \cite{DBLP:journals/ai/IsliC00,SN05}:
the composition of $r$ and $s$ consists of those $n$-tuples
that belong to $r$ (respectively, $s$)
if the $i$-th (respectively, $j$-th) component
is replaced by some uniform element $v$.
\begin{align*}
  r \mathbin{{}_i\circ_j^n} s = \{(u_1,\dots,u_n) \mid \exists \underline{v} :~
  & (u_1,\dots,u_{i-1},\underline{v},u_{i+1},\dots,u_n) \in r \text{ and } \\
  & (u_1,\dots,u_{j-1},\underline{v},u_{j+1},\dots,u_n) \in s\qquad \}
\end{align*}
In the ternary case, this yields, for example:
\begin{equation}
  \label{eq:composition_3_2_3}
  r \mathbin{{}_3\circ_2^3} s = \{(u,v,w) \mid \exists \underline{x} :
  (u,v,\underline{x}) \in r \text{ and } (u,\underline{x},w) \in s\}
\end{equation}
If we assume, for example, that the underlying partition scheme speaks about the relative position of points,
we can consider \eqref{eq:composition_3_2_3} to say:
\new{if the position of $x$ relative to $u$ and $v$ is determined by the relation $r$
(as given by $(u,v,x) \in r$)
% \todo{What is the meaning of "the position of a point x relative to a point u being determined by a point r"? Are you using a relation from an existing calculus? Some figurative example can come to help here.}
and the position of $w$ relative to $u$ and $x$ is determined by the relation $s$
(as given by $(u,x,w) \in s$),}
then the position of $w$ relative to $u$ and $v$ can be inferred to be determined by $r \mathbin{{}_3\circ_2^3} s$.

The third is perhaps the most general,
resulting in an $n$-ary operation \cite{Condotta2006}:
$\circ(r_1,\dots,r_n)$
consists of those $n$-tuples which, for every $i=1,\dots,n$, belong to the relation $r_i$
whenever their $i$-th component is replaced by some uniform $v$.
% \begin{align}
%   \circ(r_1, \dots, r_n) = \{(u_1,\dots,u_n) \mid \exists v \in \Univ :~ & (u_1,\dots,u_{n-1},v) \in r_1,           \notag \\
%                                                                          & (u_1,\dots,u_{n-2},v,u_n) \in r_2,       \notag \\
%                                                                          & \qquad \vdots                            \notag \\
%                                                                          & (v,u_2,\dots,u_n) \in r_n\qquad\quad  \} \label{eq:n-ary_composition}
% \end{align}
\begin{align}
  \circ(r_1, \dots, r_n) = \{ & (u_1,\dots,u_n) \mid \exists \underline{v} \in \Univ : (u_1,\dots,u_{n-1},\underline{v}) \in r_1 ~\text{and}~ \notag \\
                              & (u_1,\dots,u_{n-2},\underline{v},u_n) \in r_2 ~\text{and}~
                                \dots ~\text{and}~
                                (\underline{v},u_2,\dots,u_n) \in r_n\}
  \label{eq:n-ary_composition}
\end{align}
\begin{New}
  As an example,
  consider again $n=3$ %, which yields
%   %
%   \begin{equation}
%     \circ(r_1, r_2, r_3) = \{(u_1,u_2,u_3) \mid \exists v \in \Univ : 
%                              (u_1,u_2,v) \in r_1 ~\&~
%                              (u_1,v,u_3) \in r_2 ~\&~
%                              (v,u_2,u_3) \in r_3\}.
%     \label{eq:n-ary_composition_ternary}
%   \end{equation}
%   %
%   Consider 
  and \calc{2-cross}.
%   Equation~\eqref{eq:n-ary_composition_ternary}
  Equation~\eqref{eq:n-ary_composition}
  says that the composition result
  of the relations \texttt{right-front}, \texttt{right-front}, and \texttt{left-back}
  is the set of all triples $(u_1,u_2,u_3)$ such that there is an element $v$
  with $(u_1,u_2,v) \in \texttt{right-front}$, $(u_1,v,u_3) \in \texttt{right-front}$,
  and $(v,u_2,u_3) \in \texttt{right-back}$.
  That set is exactly the relation \texttt{right-front},
  which can be seen drawing pictures similar to Fig.~\ref{fig:DCCperm}.%
\end{New}

% \todo{An example with a 3-ary spatial relation would be more beneficial here.}
% % This version underlies the work in \cite{Condotta2006}.
% We illustrate the intuitive meaning of \eqref{eq:n-ary_composition} for the ternary case $n=3$.
% \new{If we consider a domain of 4 entities $u_1,u_2,u_3,v$, 
% we can choose 3 from them in 4 ways:}
% (i) $u_1,u_2,u_3$, (ii) $u_1,u_2,v$, (iii) $u_1,u_3,v$, and (iv) $u_2,u_3,v$.
% Now \eqref{eq:n-ary_composition} provides a way to infer the relation between the entities in (i)
% if the relations between the entities in (ii)--(iv) are known.\footnote{%
%   The order of entities within the triples (i)--(iv) is irrelevant
%   because the permutation operations allow us to compute, for example,
%   $r'(v,u_2,u_3)$ from $r(u_2,u_3,v)$.%
% }
% 

For binary domain relations, all these alternative approaches collapse to \eqref{eq:binary_composition}.

In the light of the diverse views on composition,
we define a \emph{composition operation on $n$-ary domain relations} to be an operation of arity $2 \leqslant m \leqslant n$ on $n$-ary domain relations,
without imposing additional requirements.
Those are not necessary for the following definitions, which are independent of the particular choice of composition.

\par\medskip\noindent
We now define our minimal notion of a spatial
calculus, which provides a set of symbols
for the relations in an \emph{abstract} partition scheme ($\Rel$),
% \emph{some} generating set of permutation operations,
and for \emph{some choice of} nontrivial permutation operations ($\breve{~}^1,\dots,\breve{~}^k$)
and \emph{some} composition operation ($\diamond$).
\begin{definition}
  \label{def:qualitative_calculus}
  An \emph{$n$-ary qualitative calculus}
  is a tuple $(\Rel,\Int,\breve{~}^1,\dots,\breve{~}^k,\diamond)$ with $k\geqslant 1$ and the following properties.
  \begin{Itemize}
    \item
      $\Rel$ is a finite, non-empty set of $n$-ary \emph{relation symbols} (denoted $r,s,t,\dots$).
      The subsets of $\Rel$, including singletons, are called \emph{composite relations} (denoted $R,S,T,\dots$).
    \item
      $\Int = (\Univ, \varphi, \cdot^{\pi_1},\dots,\cdot^{\pi_k}, \circ)$ is an \emph{interpretation} with
      the following properties.
      \begin{Itemize}
        \item
          $\Univ$ is a universe.
        \item
          $\varphi : \Rel \to 2^{\Univ^n}$ is an injective map
          assigning an $n$-ary relation over $\Univ$ to each relation symbol, 
          such that $(\Univ, \{\varphi(r) \mid r \in \Rel\})$ is an abstract partition scheme.
          The map $\varphi$ is extended to composite relations $R \subseteq \Rel$ by setting
          $\varphi(R) = \bigcup_{r\in R}\varphi(r)$.
        \item
          $\{\cdot^{\pi_1},\dots,\cdot^{\pi_k}\}$ is a set of $n$-ary nontrivial permutation operations.
        \item
          $\circ$ is a composition operation on $n$-ary domain relations that has arity $2 \leqslant m \leqslant n$.
      \end{Itemize}
    \item
      Every \emph{permutation operation} $\breve{~}^i$ is a map
      $\breve{~}^i : \Rel \to 2^\Rel$ that satisfies
      \begin{equation}
        \varphi(r\breve{~}^i) \supseteq \varphi(r)^{\pi_i}
        \label{eq:abstract_converse}
      \end{equation}
      for every $r \in \Rel$.
      The operation $\breve{~}^i$ is extended to composite relations $R \subseteq \Rel$ by setting
      $R\breve{~}^i = \bigcup_{r\in R} r\breve{~}^i$.
    \item
      The \emph{composition operation} $\diamond$ is a map
      $\diamond : \Rel^m \to 2^\Rel$ that satisfies
      \begin{equation}
        \varphi(\diamond(r_1, \dots, r_m)) \supseteq \circ(\varphi(r_1), \dots, \varphi(r_m))
        \label{eq:abstract_composition}
      \end{equation}
      for all $r_1,\dots,r_m \in \Rel$.
      The operation $\diamond$ is extended to composite relations $R_1,\dots,R_m \subseteq \Rel$ by setting
      $\diamond(R_1, \dots, R_m) = \bigcup_{r_1\in R_1} \dots \bigcup_{r_m\in R_m} \diamond(r_1, \dots, r_m)$.
  \end{Itemize}  
\end{definition}
In the special case of binary relations, the natural converse is the only non-trivial permutation operation.
Hence $k=1$ and we have the following.
\begin{observation}
  A binary qualitative calculus
  is a tuple $(\Rel,\Int,\breve{~},\diamond)$ with the following properties.
  \begin{Itemize}
    \item
      $\Rel$ is as in Definition \ref{def:qualitative_calculus}.
    \item
      $\Int = (\Univ, \varphi, \pi, \circ)$ is an \emph{interpretation} with
      the following properties.
      \begin{Itemize}
        \item
          $\Univ$ is a universe.
        \item
          $\varphi : \Rel \to 2^{\Univ \times \Univ}$ is an injective map
%           such that $(\Univ, \{\varphi(r) \mid r \in \Rel\})$ is an abstract partition scheme.
          as in Definition \ref{def:qualitative_calculus}.
        \item
          $\pi$ is the standard converse operation on binary domain relations from \eqref{eq:converse_binary}.
        \item
          $\circ$ is the standard composition operation on binary domain relations from \eqref{eq:binary_composition}.
      \end{Itemize}
    \item
      The \emph{converse operation} $\breve{~}$ is a map
      $\breve{~} : \Rel \to 2^\Rel$ that satisfies  
      \begin{equation*}
        \forall r \in \Rel : \quad
        \varphi(r\breve{~}) \supseteq \varphi(r)^{\pi}\,.
      \end{equation*}
%       for every $r \in \Rel$.
    \item
      The composition operation $\diamond$ is a map
      $\diamond : \Rel \times \Rel \to 2^\Rel$ that satisfies %\refappex{\ref{exa:PC_is_a_binary_calculus}}
      \begin{equation*}
        \forall r,s \in \Rel : \quad
        \varphi(\diamond(r,s)) \supseteq \circ(\varphi(r), \varphi(s))\,.
      \end{equation*}
%       for all $r,s \in \Rel$.
  \end{Itemize}  
\end{observation}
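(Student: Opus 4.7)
The plan is to derive the observation as a direct specialization of Definition~\ref{def:qualitative_calculus} to $n=2$, verifying clause by clause that each component of the $n$-ary definition collapses to exactly the item listed in the observation. Since the observation is essentially a bookkeeping statement, the proof should have no conceptual obstacles; the work lies in justifying that no genuine choices survive the specialization.

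First, I would enumerate the $2$-ary permutations. The bijections $\{1,2\} \to \{1,2\}$ are only the trivial identity $\iota : (1,2) \mapsto (1,2)$ and the swap $\pi : (1,2) \mapsto (2,1)$. Hence $\{\pi\}$ is the \emph{unique} minimal generating set of non-trivial permutations, as already remarked in the excerpt for the binary case. Consequently in Definition~\ref{def:qualitative_calculus} we necessarily have $k=1$, and the single concrete permutation operation $\cdot^{\pi_1}$ is forced to be the standard converse operation on binary domain relations from \eqref{eq:converse_binary}. The associated symbolic operation $\breve{~}^1$ becomes the single converse operation $\breve{~}$ of the observation, and \eqref{eq:abstract_converse} reduces verbatim to $\varphi(r\breve{~}) \supseteq \varphi(r)^{\pi}$ for all $r \in \Rel$.

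Second, I would address the arity of the composition operation. Definition~\ref{def:qualitative_calculus} requires $2 \leqslant m \leqslant n$, which forces $m = 2$ when $n = 2$; so $\diamond$ is necessarily a binary operation on relation symbols. On the semantic side, the discussion preceding Definition~\ref{def:qualitative_calculus} explicitly notes that the three candidate $n$-ary generalizations of composition (the Freksa--Zimmermann binary operation, the Isli--Cohn family $\mathbin{{}_i\circ_j^n}$, and the Condotta et al.\ $n$-ary operation) all collapse to the standard binary composition \eqref{eq:binary_composition} when $n=2$. Thus $\circ$ is unambiguously \eqref{eq:binary_composition}, and condition \eqref{eq:abstract_composition} reduces to $\varphi(\diamond(r,s)) \supseteq \circ(\varphi(r), \varphi(s))$ for all $r,s \in \Rel$, matching the observation.

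The remaining items — that $\Rel$ is a finite, non-empty set of binary relation symbols; that $\Univ$ is a universe; and that $\varphi$ is an injective map such that $(\Univ, \{\varphi(r) \mid r \in \Rel\})$ is a binary abstract partition scheme — are taken verbatim from Definition~\ref{def:qualitative_calculus} with $n=2$, so no further work is needed. The only mildly delicate point is justifying the uniqueness of the non-trivial $2$-ary permutation, which I would handle by direct enumeration rather than invoking any group-theoretic machinery; everything else is routine substitution.
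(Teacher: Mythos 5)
Your proposal is correct and matches the paper's own justification, which is exactly the remark preceding the observation: for $n=2$ the swap is the unique nontrivial permutation, forcing $k=1$ and the converse to be the standard one, while $2\leqslant m\leqslant n$ forces $m=2$ and the candidate composition notions all collapse to \eqref{eq:binary_composition}. Your version merely spells out by enumeration what the paper states in one sentence, so there is nothing to add.
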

%
% In the special case of binary relations, a calculus is a tuple $(\Rel,\Int,\breve{~},\diamond)$
% where $\breve{~}$ corresponds via \eqref{eq:abstract_converse} to the unique nontrivial permutation operation (aka converse operation)
% based on the permutation $\pi : (1,2) \mapsto (2,1)$,
% and $\diamond$ corresponds via \eqref{eq:abstract_composition} to the unique (binary) composition operation
% $\circ$ from \eqref{eq:binary_composition}.
% 
\new{Due to the last sentence of Definition~\ref{def:qualitative_calculus}},
the composition operation of a calculus is uniquely determined
by the composition of each pair of relation \emph{symbols}.
This information is usually stored in an $m$-dimensional table, the \emph{composition table}.
%Then, computing the composition of two (composite) relations is just a matter of table look-ups which allows a-closure to be enforced efficiently.
%
\begin{example}%
\begin{New}%
  \label{exa:PC_is_a_binary_calculus}%
  We can now observe that \PC1 is indeed a binary calculus with the following components.
  \begin{itemize}
    \item
      The set of relation symbols is $\Rel = \{\texttt{<},\texttt{=},\texttt{>}\}$,
      denoting the relations depicted in Figure~\ref{fig:relations_PC1+RCC5+CYCb}\,a.
      The $2^3=8$ composite relations include, for example,
      $R_1 = \{\texttt{<},\texttt{=}\}$ and
      $R_2 = \{\texttt{<},\texttt{=},\texttt{>}\}$.
    \item
      There are several possible interpretations, depending largely on the chosen universe.
      One of the most natural choices leads to the interpretation $\Int = \{\Univ,\varphi,\pi,\circ\}$
      with the following components.
      \begin{itemize}
        \item 
          The universe \Univ is the set of reals.
        \item
          The map $\varphi$ maps \texttt{<}, \texttt{=}, and \texttt{>}
          to $<$, $=$, and $>$, respectively; see Figure~\ref{fig:relations_PC1+RCC5+CYCb}\,a.
          Its extension to composite relations maps, for example,
          $R_1$ from above to $\geqslant$
          and $R_2$ to the universal relation.
        \item
          The operations $\pi$ and $\circ$ are the standard binary converse and composition operations
          from \eqref{eq:converse_binary} and \eqref{eq:binary_composition}.
      \end{itemize}
    \item
      The converse operation $\breve{~}$ is given by Table~\ref{tab:cc_tables_for_PC1}\,a.
%       \begin{center}
%         \begin{tabular}{ll}
%           \hline
%           $r$        & $r\breve{~}$ \\
%           \hline
%           \texttt{<} & \texttt{>}   \\
%           \texttt{=} & \texttt{=}   \\
%           \texttt{>} & \texttt{<}   \\
%           \hline
%         \end{tabular}
%       \end{center}
      For its extension to composite relations,
      we have, e.g.,
      $R_1\breve{~} = \{\texttt{>},\texttt{=}\}$
      and $R_2\breve{~} = R_2$.
      \begin{table}[ht]
        \begin{center}
          \parbox[t]{.25\textwidth}{%
            \raisebox{-.8\baselineskip}{(a)~~}
            \begin{tabular}[t]{ll}
              \hline
              $r$        & $r\breve{~}$ \\
              \hline
              \texttt{<} & \texttt{>}   \\
              \texttt{=} & \texttt{=}   \\
              \texttt{>} & \texttt{<}   \\
              \hline
            \end{tabular}
          }
          \parbox[t]{.4\textwidth}{%
            \raisebox{-.8\baselineskip}{(b)~~}
            \begin{tabular}[t]{l|lll}
              \hline\rule{0pt}{11pt}%
              r\textbackslash \raisebox{2pt}{s} & ~~\raisebox{2pt}{\texttt{<}}           & ~~\raisebox{2pt}{\texttt{=}}  & ~~\raisebox{2pt}{\texttt{>}}           \\
              \hline\rule{0pt}{10pt}%
              \texttt{<}                        & $\{\texttt{<}\}$                       & $\{\texttt{<}\}$              & $\{\texttt{<},\texttt{=},\texttt{>}\}$ \\[2pt]
              \texttt{=}                        & $\{\texttt{<}\}$                       & $\{\texttt{=}\}$              & $\{\texttt{>}\}$                       \\[2pt]
              \texttt{>}                        & $\{\texttt{<},\texttt{=},\texttt{>}\}$ & $\{\texttt{>}\}$              & $\{\texttt{>}\}$                       \\[2pt]
              \hline
            \end{tabular}
          }
        \end{center}

        \caption{\new{Converse and composition tables for the point calculus \PC1.}}
        \label{tab:cc_tables_for_PC1}
      \end{table}
    \item
      The composition operation $\diamond$ is given by a $3\times3$ table
      where each cell represents $r\diamond s$, see Table~\ref{tab:cc_tables_for_PC1}\,b.
%       \begin{center}
%         \begin{tabular}{l|lll}
%           \hline\rule{0pt}{11pt}%
%           r\textbackslash \raisebox{2pt}{s} & ~~\raisebox{2pt}{\texttt{<}}           & ~~\raisebox{2pt}{\texttt{=}}  & ~~\raisebox{2pt}{\texttt{>}}           \\
%           \hline\rule{0pt}{10pt}%
%           \texttt{<}                        & $\{\texttt{<}\}$                       & $\{\texttt{<}\}$              & $\{\texttt{<},\texttt{=},\texttt{>}\}$ \\[2pt]
%           \texttt{=}                        & $\{\texttt{<}\}$                       & $\{\texttt{=}\}$              & $\{\texttt{>}\}$                       \\[2pt]
%           \texttt{>}                        & $\{\texttt{<},\texttt{=},\texttt{>}\}$ & $\{\texttt{>}\}$              & $\{\texttt{>}\}$                       \\[2pt]
%           \hline
%         \end{tabular}
%       \end{center}
%       representing the 9 equations
%       $\texttt{<} \diamond \texttt{<} = \texttt{<}$,
%       $\texttt{<} \diamond \texttt{=} = \texttt{<}$,
%       \dots,
%       $\texttt{>} \diamond \texttt{>} = \texttt{>}$.
      For its extension to composite relations,
      we have, for example:
      \begin{align*}
        R_1 \diamond R_2
        & = \{\texttt{<},\texttt{=}\} \diamond \{\texttt{<},\texttt{=},\texttt{>}\} \\
        & = \{\texttt{<}\} \diamond \{\texttt{<}\} ~\cup~ \{\texttt{<}\} \diamond \{\texttt{=}\} ~\cup~ \dots ~\cup~ \{\texttt{=}\} \diamond \{\texttt{>}\} \\
        & = \{\texttt{<}\} \cup \{\texttt{<}\} \cup \dots \cup \{\texttt{>}\} \\
        & = R_2
      \end{align*}
      \Endofexa
  \end{itemize}
\end{New}%
\end{example}%
\begin{example}%
\begin{New}%
  \label{exa:RCC5_is_a_binary_calculus}%
  \calc{RCC-5} too is a binary calculus, with the following components.
  \begin{itemize}
    \item
      The set of relation symbols is $\Rel = \{\texttt{EQ},\texttt{DC},\texttt{PO},\texttt{PP},\texttt{PPi}\}$,
      denoting the relations depicted in Figure~\ref{fig:relations_PC1+RCC5+CYCb}\,b.
      The $2^5=32$ composite relations include, for example,
      $R_1 = \{\texttt{DC},\texttt{PO},\texttt{PP},\texttt{PPi}\}$ (``both regions are distinct'') and
      $R_2 = \{\texttt{PP},\texttt{PPi}\}$ (``one region is a proper part of the other'').
    \item
      Similarly to \PC1, there are several possible interpretations,
      a natural choice being $\Int = \{\Univ,\varphi,\pi,\circ\}$
      with the following components.
      \begin{itemize}
        \item 
          The universe \Univ is the set of all regular closed subsets of $\mathbb{R}^2$.
        \item
          The map $\varphi$ maps, for example,
          \texttt{DC} to all pairs of regions that are disconnected or externally connected.
          Figure~\ref{fig:relations_PC1+RCC5+CYCb}\,b
          illustrates $\varphi(r)$ for all relation symbols $r = \texttt{EQ},\texttt{DC},\texttt{PO},\texttt{PP},\texttt{PPi}$.
        \item
          The operations $\pi$ and $\circ$ are the standard binary converse and composition operations
          from \eqref{eq:converse_binary} and \eqref{eq:binary_composition}.
      \end{itemize}
    \item
      The converse operation $\breve{~}$ is given by Table~\ref{tab:cc_tables_for_RCC5}\,a.
      we have, e.g.,
      $R_2\breve{~} = R_2$.
      \begin{table}[ht]
        \begin{small}
        \begin{center}
%           \parbox[t]{.2\textwidth}{%
            \raisebox{-.8\baselineskip}{(a)~~}
            \begin{tabular}[t]{ll}
              \hline
              $r$          & $r\breve{~}$ \\
              \hline
              \texttt{EQ}  & \texttt{EQ}   \\
              \texttt{DC}  & \texttt{DC}   \\
              \texttt{PO}  & \texttt{PO}   \\
              \texttt{PP}  & \texttt{PP}   \\
              \texttt{PPi} & \texttt{PPi}   \\
              \hline
            \end{tabular}
%           }
          \par\smallskip\noindent
%           \parbox[t]{.75\textwidth}{%
            \raisebox{-.8\baselineskip}{(b)~~}
            \begin{tabular}[t]{@{}l|l@{~~}l@{~~}l@{~~}l@{~~}l@{}}
              \hline\rule{0pt}{11pt}%
              r\textbackslash \raisebox{2pt}{s} 
              & ~~\raisebox{2pt}{\texttt{EQ}}
              & ~~\raisebox{2pt}{\texttt{DC}}
              & ~~\raisebox{2pt}{\texttt{PO}}
              & ~~\raisebox{2pt}{\texttt{PP}}
              & ~~\raisebox{2pt}{\texttt{PPi}} \\
              \hline\rule{0pt}{10pt}%
              \texttt{EQ}  
              & $\{\texttt{EQ}\}$
              & $\{\texttt{DC}\}$
              & $\{\texttt{PO}\}$
              & $\{\texttt{PO}\}$
              & $\{\texttt{PPi}\}$ \\[2pt]
              \texttt{DC}
              & $\{\texttt{DC}\}$
              & $\{\texttt{EQ},\texttt{DC},\texttt{PO},\texttt{PP},\texttt{PPi}\}$
              & $\{\texttt{DC},\texttt{PO},\texttt{PP}\}$
              & $\{\texttt{DC},\texttt{PO},\texttt{PP}\}$
              & $\{\texttt{DC}\}$ \\[2pt]
              \texttt{PO}
              & $\{\texttt{PO}\}$
              & $\{\texttt{DC},\texttt{PO},\texttt{PPi}\}$
              & $\{\texttt{EQ},\texttt{DC},\texttt{PO},\texttt{PP},\texttt{PPi}\}$
              & $\{\texttt{PO},\texttt{PP}\}$
              & $\{\texttt{DC},\texttt{PO},\texttt{PPi}\}$ \\[2pt]
              \texttt{PP}
              & $\{\texttt{PP}\}$
              & $\{\texttt{DC}\}$
              & $\{\texttt{DC},\texttt{PO},\texttt{PP}\}$
              & $\{\texttt{PP}\}$
              & $\{\texttt{EQ},\texttt{DC},\texttt{PO},\texttt{PP},\texttt{PPi}\}$\\[2pt]
              \texttt{PPi}
              & $\{\texttt{PPi}\}$
              & $\{\texttt{DC},\texttt{PO},\texttt{PPi}\}$
              & $\{\texttt{PO},\texttt{PPi}\}$
              & $\{\texttt{EQ},\texttt{PO},\texttt{PP},\texttt{PPi}\}$
              & $\{\texttt{PPi}\}$ \\[2pt]
              \hline
            \end{tabular}
%           }
        \end{center}
        \end{small}

        \caption{\new{Converse and composition tables for the point calculus \calc{RCC-5}.}}
        \label{tab:cc_tables_for_RCC5}
      \end{table}
    \item
      The composition operation $\diamond$ is given by a $5\times5$ table
      where each cell represents $r\diamond s$, see Table~\ref{tab:cc_tables_for_RCC5}\,b.
      For its extension to composite relations,
      we have, for example:
      \begin{align*}
        \{\texttt{PP},\texttt{PPi}\} \diamond \{\texttt{DC}\}
        & = \{\texttt{PP}\} \diamond \{\texttt{DC}\} ~\cup~ \{\texttt{PPi}\} \diamond \{\texttt{DC}\} \\
        & = \{\texttt{DC}\} ~\cup~ \{\texttt{DC},\texttt{PO},\texttt{PPi}\} \\
        & = \{\texttt{DC},\texttt{PO},\texttt{PPi}\}
      \end{align*}
      \Endofexa
  \end{itemize}
\end{New}%
\end{example}%
\begin{example}%
\begin{New}%
  \label{exa:CYCb_is_a_binary_calculus}%
  \CYCb too is a binary calculus, with the following components.
  \begin{itemize}
    \item
      The set of relation symbols is $\Rel = \{\texttt{e},\texttt{o},\texttt{l},\texttt{r}\}$,
      denoting the relations depicted in Figure~\ref{fig:relations_PC1+RCC5+CYCb}\,c.
      The $2^4=16$ composite relations include, for example,
      $R_1 = \{\texttt{e},\texttt{l}\}$ (``the orientation $y$ is to the left of, or equal to, $x$'') and
      $R_2 = \{\texttt{e},\texttt{o}\}$ (``both orientations are equal or opposite to each other'').
    \item
      The standard interpretation for \CYCb is $\Int = \{\Univ,\varphi,\pi,\circ\}$
      with the following components.
      \begin{itemize}
        \item 
          The universe \Univ is the set of all \emph{2D-orientations},
          which can equivalently be viewed as either the set of radii of a given fixed circle $C$,
          or the set of points on the periphery of $C$,
          or the set of directed lines through a given fixed point (the origin of $C$).
        \item
          The map $\varphi$ maps, for example,
          \texttt{l} to all pairs $(x,y)$ of directed lines
          where the angle $\alpha$ from $x$ to $y$, in counterclockwise fashion,
          satisfies $0^\circ < \alpha < 180^\circ$.
          Analogously \texttt{o} is mapped to those pairs where that angle is exactly $180^\circ$.
          Figure~\ref{fig:relations_PC1+RCC5+CYCb}\,c
          illustrates $\varphi(r)$ for all relation symbols $r = \texttt{e},\texttt{o},\texttt{l},\texttt{r}$.
        \item
          The operations $\pi$ and $\circ$ are the standard binary converse and composition operations
          from \eqref{eq:converse_binary} and \eqref{eq:binary_composition}.
      \end{itemize}
    \item
      The converse operation $\breve{~}$ is given by Table~\ref{tab:cc_tables_for_CYCb}\,a.
      For its extension to composite relations,
      we have, e.g.,
      $R_1\breve{~} = \{\texttt{e},\texttt{l}\}$.
      \begin{table}[ht]
%         \begin{small}
        \begin{center}
          \parbox[t]{.25\textwidth}{%
            \raisebox{-.8\baselineskip}{(a)~~}
            \begin{tabular}[t]{ll}
              \hline
              $r$        & $r\breve{~}$ \\
              \hline
              \texttt{e} & \texttt{e}   \\
              \texttt{o} & \texttt{o}   \\
              \texttt{l} & \texttt{r}   \\
              \texttt{r} & \texttt{l}   \\
              \hline
            \end{tabular}
          }
%           \par\smallskip\noindent
          \parbox[t]{.55\textwidth}{%
            \raisebox{-.8\baselineskip}{(b)~~}
            \begin{tabular}[t]{l|llll}
              \hline\rule{0pt}{11pt}%
              r\textbackslash \raisebox{2pt}{s} 
              & ~~\raisebox{2pt}{\texttt{e}}
              & ~~\raisebox{2pt}{\texttt{o}}
              & ~~\raisebox{2pt}{\texttt{l}}
              & ~~\raisebox{2pt}{\texttt{r}} \\
              \hline\rule{0pt}{10pt}%
              \texttt{e}  
              & $\{\texttt{e}\}$
              & $\{\texttt{o}\}$
              & $\{\texttt{l}\}$
              & $\{\texttt{r}\}$ \\[2pt]
              \texttt{o}
              & $\{\texttt{o}\}$
              & $\{\texttt{e}\}$
              & $\{\texttt{r}\}$
              & $\{\texttt{l}\}$ \\[2pt]
              \texttt{l}
              & $\{\texttt{l}\}$
              & $\{\texttt{r}\}$
              & $\{\texttt{l},\texttt{o},\texttt{r}\}$
              & $\{\texttt{e},\texttt{l},\texttt{r}\}$ \\[2pt]
              \texttt{r}
              & $\{\texttt{r}\}$
              & $\{\texttt{l}\}$
              & $\{\texttt{e},\texttt{l},\texttt{r}\}$
              & $\{\texttt{l},\texttt{o},\texttt{r}\}$ \\[2pt]
              \hline
            \end{tabular}
          }
        \end{center}
%         \end{small}

        \caption{\new{Converse and composition tables for the point calculus \CYCb.}}
        \label{tab:cc_tables_for_CYCb}
      \end{table}
    \item
      The composition operation $\diamond$ is given by a $4\times4$ table
      where each cell represents $r\diamond s$, see Table~\ref{tab:cc_tables_for_CYCb}\,b.
      For its extension to composite relations,
      we have, for example:
      \begin{align*}
        R_1 \diamond R_1
        & = \{\texttt{e},\texttt{l}\} \diamond \{\texttt{e},\texttt{l}\} \\
        & = \{\texttt{e}\}\diamond\{\texttt{e}\} ~\cup~ \{\texttt{e}\}\diamond\{\texttt{l}\} ~\cup~ \{\texttt{l}\}\diamond\{\texttt{e}\} ~\cup~ \{\texttt{l}\}\diamond\{\texttt{l}\} \\
        & = \{\texttt{e}\} ~\cup~ \{\texttt{l}\} ~\cup~ \{\texttt{l}\} ~\cup~ \{\texttt{e},\texttt{l},\texttt{r}\} \\
        & = \{\texttt{e},\texttt{l},\texttt{r}\}
      \end{align*}
      \Endofexa
  \end{itemize}
\end{New}%
\end{example}%
\paragraph*{Abstract versus weak and strong operations}
We call permutation and composition operations with Properties~\eqref{eq:abstract_converse} and~\eqref{eq:abstract_composition}
\emph{abstract permutation} and \emph{abstract composition}, following Ligozat's naming in the binary case \cite{Lig05}.
\new{For reasons explained further below,}
our notion of a qualitative calculus imposes weaker requirements on the permutation
operation than Ligozat and Renz's notions of a weak (binary) representation \cite{Lig05,LigozatR04}
or the notion of a (binary) constraint algebra \cite{DBLP:journals/ki/NebelS02}.
The following definition specifies those stronger variants, see, e.g., \citeN{LigozatR04}.
\begin{definition}
  \label{def:stronger_versions_of_comp+conv}
  Let $(\Rel,\Int,\breve{~}^1,\dots,\breve{~}^k,\diamond)$ be a qualitative calculus
  based on the interpretation $\Int = (\Univ, \varphi, \cdot^{\pi_1},\dots,\cdot^{\pi_k}, \circ)$.
  \par\smallskip\noindent
%   \begin{Itemize}
%     \item
      The permutation operation $\breve{~}^i$ is a
      \emph{weak permutation} if, for all $r \in \Rel$:
      \begin{equation}
        r\breve{~}^i = \bigcap\{S \subseteq \Rel \mid \varphi(S) \supseteq \varphi(r)^{\pi_i}\}
        \label{eq:weak_converse}
      \end{equation}
%     \item
      The permutation operation $\breve{~}^i$ is a
      \emph{strong permutation} if, for all $r \in \Rel$:
      \begin{equation}
        \varphi(r\breve{~}^i) = \varphi(r)^{\pi_i}
        \label{eq:strong_converse}
      \end{equation}
%     \item
      The composition operation $\diamond$ is a
      \emph{weak composition} if, for all $r_1,\dots,r_m \in \Rel$:
      \begin{equation}
        \diamond(r_1,\dots,r_m) = \bigcap\{S \subseteq \Rel \mid \varphi(S) \supseteq \circ(\varphi(r_1),\dots,\varphi(r_m))\}
        \label{eq:weak_composition}
      \end{equation}
%     \item
      The composition 
%       operation 
      $\diamond$ is a
      \emph{strong composition} if, for all $r_1,\dots,r_m \in \Rel$: %\refappex{\ref{exa:PC_strong_operations}}
      \begin{equation}
        \varphi(\diamond(r_1,\dots,r_m)) = \circ(\varphi(r_1),\dots,\varphi(r_m))
        \label{eq:strong_composition}
      \end{equation}
%   \end{Itemize}
\end{definition}
In the literature, the equivalent variant
$r\breve{~}^i = \{s \in \Rel \mid \varphi(s) \cap \varphi(r)^{\pi_i} \neq \emptyset\}$ of
Equation \eqref{eq:weak_converse} is sometimes found;
analogously for Equation \eqref{eq:weak_composition}.
\begin{example}%
\begin{New}%
  \label{exa:PC_strong_operations}%
  The converse and permutation operation in \PC1 are both strong
  because \eqref{eq:strong_converse} holds for all three relation symbols
  (e.g., $\varphi(\texttt{<}\breve{~}) = \varphi(\texttt{>}) = \mathord{>} = \mathord{<}\breve{~} = \varphi(\texttt{<})\breve{~}$),
  and the binary version of \eqref{eq:strong_composition}, namely
  \[
    \varphi(r_1 \diamond r_2) = \varphi(r_1) \circ \varphi(r_2),
  \]
  holds for all nine pairs of relation symbols
  (e.g., $\varphi(\texttt{>} \diamond \texttt{>}) = \varphi(\texttt{>}) = \mathord{>} = \mathord{>} \circ \mathord{>} = \varphi(\texttt{>}) \circ \varphi(\texttt{>})$).
  \Endofexa
\end{New}%
\end{example}%
\begin{example}%
\begin{New}%
  \label{exa:PC_N_weak_operations}%
  Consider the variant $\PC1^{\mathbb{N}}$ of \PC1 that is interpreted over the universe $\mathbb{N}$.
  It contains the same base relations with the usual interpretation
  and, obviously, the same converse operation, see Example~\ref{exa:PC_is_a_binary_calculus}.
  However, composition is no longer strong because $\mathord{<} \circ \mathord{<} \subsetneq \mathord{<}$ holds:
  for ``$\subseteq$'' observe that, whenever $x < y < z$ for three points $x,y,z \in \mathbb{N}$,
  it follows that $x < z$; and ``$\nsupseteq$'' holds because there are points $x,z$ with $x < z$ for which there is no $y$
  with $x < y < z$, for example, $x=0$, $z=1$.
  More precisely, the result of the composition $\texttt{<} \diamond \texttt{<}$
  should be the relation $<^1 = \{(x,z) \mid x+1 < z\}$.
  Since $<^1$ is not expressible by a union of base relations,
  we cannot endow this calculus with a strong symbolic composition operation.
  Consequently we have a choice as to the composition result in question.
  Regardless of that choice,
  the composition table will incur a \emph{loss of information}
  because it cannot capture that the pair $(x,z)$ is in $<^1$.

  If we opt for weak composition, then Equation~\eqref{eq:weak_composition} requires us to
  generate the result of $\texttt{<} \diamond \texttt{<}$ from the symbols for exactly those relations
  that overlap with the domain-level composition $\mathord{<} \circ \mathord{<}$.
  From the above it is clear that this is exactly $\texttt{<}$.
  One can now easily check that, for the case of weak composition,
  we get precisely Table~\ref{tab:cc_tables_for_RCC5}\,b.

  On the contrary, if we do not care about composition being weak,
  then abstract composition (Inequality~\eqref{eq:abstract_composition}) requires us to
  generate the result of $\texttt{<} \diamond \texttt{<}$ from the symbols for \emph{at least} those relations
  that overlap with $\mathord{<} \circ \mathord{<}$.
  This means that we can postulate $\texttt{<} \diamond \texttt{<} = \{\texttt{<}\}$ as before
  or, for example, $\texttt{<} \diamond \texttt{<} = \{\texttt{<},\texttt{=},\texttt{>}\}$.

  The difference between weak and abstract composition is that
  abstract composition allows us to make the composition result arbitrarily \emph{general},
  whereas weak composition forces us to take exactly those relations into account
  that contain possible pairs of $(x,z)$.
  Weak composition therefore restricts the loss of information to an unavoidable minimum,
  whereas abstract composition does not provide such a guarantee:
  the more base relations are included in the composition result,
  the more information we lose on how $x$ and $z$ are interrelated.
%   \todo{TS: Why is abstract composition still helpful then?} % (It's explained before.)

  In this connection, it becomes clear why we require composition to be \emph{at least} abstract:
  without this requirement, we could omit, for example, $\texttt{<}$ from the above composition result.
  This would result in \emph{adding spurious information} because we would suddenly be able to conclude that
  the constellation $x<y<z$ is impossible, just because $\texttt{<} \diamond \texttt{<} = \emptyset$.
  This insight, in turn, is particularly important for ensuring soundness of
  the most common reasoning algorithm, a-closure, see Section~\ref{sec:reasoning}.
  \Endofexa
\end{New}%
\end{example}%
In terms of composition tables,
abstract composition requires that each cell corresponding to $\diamond(r_1,\dots,r_m)$ contains \emph{at least}
those relation symbols $t$ whose interpretation intersects with $\circ(\varphi(r_1),\dots,\varphi(r_m))$.
Weak composition additionally requires that each cell contains \emph{exactly} those $t$.
\begin{New}
  Strong composition, in contrast, implies a requirement to the underlying \emph{partition scheme:}
  whenever $\varphi(t)$ intersects with $\circ(\varphi(r_1),\dots,\varphi(r_m))$, it has to be \emph{contained} in $\circ(\varphi(r_1),\dots,\varphi(r_m))$.
  Analogously for permutation.%

  These explanations and those in Example~\ref{exa:PC_N_weak_operations} show that abstractness
  as in Properties~\eqref{eq:abstract_converse} and~\eqref{eq:abstract_composition}
  captures \emph{minimal requirements} to the operations in a qualitative calculus:
  it ensures that, whenever the symbolic relations cannot capture the converse or composition of some domain relations exactly,
  the symbolic converse (composition) approximates its domain-level counterpart \emph{from above},
  thus avoiding the introduction of spurious information. \todo{How about "the symbolic converse (composition) approximates its domain-level counterpart by retaining all its information, but potentially introducing redundant information? " (Jae)}
  Weakness (Properties~\eqref{eq:weak_converse} and~\eqref{eq:weak_composition})
  additionally ensures that the loss of information is kept to the unavoidable minimum.
  This last observation is presumably the reason why existing calculi 
  (see Section~\ref{sec:existing_representations}) typically have at least
  weak operations -- we are not aware of any calculus with only abstract operations.
%   However, composition in an early predecessor of the \calc{INDU} calculus \cite{PKS99},
%   which is an unrefined product of two other calculi (\calc{IA} and \PC1, see Tables~\ref{tab:calculi_new_1}--\ref{tab:calculi_new_3}),
%   happens to be only abstract,\todo{TS: This doesn't seem to be the case! $\rightarrow$ Delete sentence?} whereas it is weak for \calc{INDU} itself.

  In Section \ref{sec:reasoning}, we will see that abstract composition
  is a minimal requirement for ensuring soundness of the
  most common reasoning algorithm, a-closure,
  and review the impact
  % \todo{Because this has such an impact, a greater effort to explain the use of superset is needed.}
  of the various strengths of the operations on reasoning algorithms.%
\end{New}

The three notions form a hierarchy:%
\begin{fact}
  \label{fact:abstract_vs_weak_vs_strong}
  Every strong permutation (composition) is weak,
  and every weak permutation (composition) is abstract.
  \hfill
  $\lhd$~\emph{\ref{app:abstract_vs_weak_vs_strong}}
\end{fact}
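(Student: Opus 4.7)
The plan is to prove both implications for permutations first; the composition case will then follow by exactly the same argument with $\circ(\varphi(r_1),\dots,\varphi(r_m))$ in place of $\varphi(r)^{\pi_i}$. The key observation, which will drive both implications, is a JEPD-based characterization of the intersection appearing in Equation~\eqref{eq:weak_converse}: for any $n$-ary domain relation $X \subseteq \Univ^n$, I claim that
\[
  \bigcap\bigl\{S \subseteq \Rel \mid \varphi(S) \supseteq X\bigr\} \;=\; \bigl\{s \in \Rel \mid \varphi(s) \cap X \neq \emptyset\bigr\}.
\]
I would establish this as an explicit auxiliary step. The ``$\supseteq$'' direction holds because if $\varphi(S) \supseteq X$ and $\varphi(s) \cap X \neq \emptyset$, then $\varphi(s)$ intersects $\varphi(S) = \bigcup_{s' \in S}\varphi(s')$, and by pairwise disjointness of the base relations this forces $s \in S$. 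For ``$\subseteq$'', let $T$ denote the right-hand side; then by joint exhaustiveness every tuple in $X$ lies in some $\varphi(s)$, necessarily with $s \in T$, so $\varphi(T) \supseteq X$. Hence $T$ is itself one of the sets being intersected, and the intersection equals $T$.

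With this characterization in hand, the implication \emph{weak implies abstract} is immediate: if $\breve{~}^i$ satisfies~\eqref{eq:weak_converse}, then $r\breve{~}^i = T$ for $X = \varphi(r)^{\pi_i}$, so $\varphi(r\breve{~}^i) = \varphi(T) \supseteq \varphi(r)^{\pi_i}$, which is exactly~\eqref{eq:abstract_converse}. For \emph{strong implies weak}, assume $\varphi(r\breve{~}^i) = \varphi(r)^{\pi_i}$ and set $X := \varphi(r)^{\pi_i}$. The characterization gives the intersection in~\eqref{eq:weak_converse} as $T = \{s \in \Rel \mid \varphi(s) \cap \varphi(r\breve{~}^i) \neq \emptyset\}$. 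Using pairwise disjointness together with the non-emptiness of base relations guaranteed by Definition~\ref{def:JEPD}, one checks that $\varphi(s) \cap \varphi(r\breve{~}^i) \neq \emptyset$ holds if and only if $s \in r\breve{~}^i$: for the non-trivial direction, $\varphi(s)$ meets the union $\bigcup_{s' \in r\breve{~}^i}\varphi(s')$ only if it meets some $\varphi(s')$, and by PD this forces $s = s'$. Therefore $T = r\breve{~}^i$, which is the weak-permutation condition.

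The composition case proceeds line for line the same way, using the abstract-composition inequality~\eqref{eq:abstract_composition}, the weak-composition equality~\eqref{eq:weak_composition}, and the strong-composition equality~\eqref{eq:strong_composition}, with $X := \circ(\varphi(r_1),\dots,\varphi(r_m))$. The main obstacle, if any, is simply being scrupulous about where JE and PD each enter: JE is needed to ensure that $T$ actually lies in the intersected family (so that the greatest-lower-bound is attained), while PD is needed both in the ``$\supseteq$'' direction of the characterization and in converting set intersection back to set membership in the strong-to-weak step. Isolating the JEPD-based characterization as a standalone fact keeps this bookkeeping clean and lets both implications, in both the permutation and composition cases, be read off in one line each.
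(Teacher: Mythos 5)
Your argument is correct, and it organizes the proof around a different key lemma than the paper does. The paper proves strong\,$\Rightarrow$\,weak by establishing the two inclusions of Equation~\eqref{eq:weak_converse} directly --- ``$\subseteq$'' by showing $r\breve{~}\subseteq S$ for every $S$ in the intersected family (invoking strongness and the injectivity of $\varphi$), and ``$\supseteq$'' by observing that $r\breve{~}$ is itself a member of that family --- and proves weak\,$\Rightarrow$\,abstract by pushing $\varphi$ through the intersection. You instead front-load the JEPD characterization $\bigcap\{S\subseteq\Rel\mid\varphi(S)\supseteq X\}=\{s\in\Rel\mid\varphi(s)\cap X\neq\emptyset\}$ and read both implications off it. This buys three things: it is precisely the ``equivalent variant'' of \eqref{eq:weak_converse} that the paper asserts without proof just after Definition~\ref{def:stronger_versions_of_comp+conv}, so you justify that remark as a byproduct; it makes explicit which half of JEPD is responsible for what (JE to show the candidate set is attained as a member of the family so the infimum is achieved, PD plus non-emptiness to turn ``meets the union'' into membership); and it handles the permutation and composition cases, and both implications, uniformly. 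The paper's version is shorter and avoids the auxiliary lemma, though its appeal to ``injectivity of $\varphi$'' in the ``$\subseteq$'' step slightly obscures that what is really doing the work there is pairwise disjointness and non-emptiness of the base relations --- the same ingredients you name. Both proofs are complete; the substitution $X:=\circ(\varphi(r_1),\dots,\varphi(r_m))$ carries your argument over to composition exactly as claimed.
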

%
% Every \emph{strong} permutation or composition operation captures complete information
% of the corresponding permutation or composition in the interpretation $\Int$.
% Weak or abstract operations capture that information only partially.
It suffices to postulate the properties weakness and strongness with respect to relation symbols only:
they carry over to composite relations as shown in Fact~\ref{fact:weak+strong_conv+comp_general}.
\todo{TS: Fact~\ref{fact:weak+strong_conv+comp_general} can go to appendix, too.}
% whose straightforward proof is given in
% Appendix \ref{app:proofs4Req}.
% \cite{DMSW13}.
%
\begin{fact}
  \label{fact:weak+strong_conv+comp_general}
  Given a qualitative calculus $(\Rel,\Int,\breve{~}^1,\dots,\breve{~}^k,\diamond)$
%  based on the interpretation $\Int = (\Univ, \varphi, \cdot^{\pi_1},\dots,\cdot^{\pi_k}, \circ)$,
%   and relations $R,S \subseteq \Rel$,
  the following holds.
  \par\smallskip\noindent
  For all composite relations $R \subseteq \Rel$ and $i=1,\dots,k$:
  \begin{equation}
    \varphi(R\breve{~}^i) \supseteq \varphi(R)^{\pi_i}
    \label{eq:abstract_converse_general}
  \end{equation}
  For all composite relations $R_1,\dots,R_m \subseteq \Rel$:
  \begin{equation}
    \varphi(\diamond(R_1,\dots,R_m))  \supseteq \circ(\varphi(R_1),\dots,\varphi(R_m))
    \label{eq:abstract_composition_general}
  \end{equation}
%   If $C$ has
%   \begin{itemize}
%     \item
      If $\breve{~}^i$ is a
      weak permutation, then, for all $R \subseteq \Rel$:
      \begin{equation}
        R\breve{~}^i = \bigcap\{S \subseteq \Rel \mid \varphi(S) \supseteq \varphi(R)^{\pi_i}\} 
%         \label{eq:weak_converse_general}
        \notag
      \end{equation}
%     \item
      If $\breve{~}^i$ is a
      strong permutation, then, for all $R \subseteq \Rel$:
      \begin{equation}
        \varphi(R\breve{~}^i) = \varphi(R)^{\pi_i}
%         \label{eq:strong_converse_general}
        \notag
      \end{equation}
%     \item
      If $\diamond$ is a
      weak composition, then, for all $R_1,\dots,R_m \subseteq \Rel$:
      \begin{equation}
        \diamond(R_1,\dots,R_m) = \bigcap\{S \subseteq \Rel \mid \varphi(S) \supseteq \circ(\varphi(R_1),\dots,\varphi(R_m)\}
%         \label{eq:weak_composition_general}
        \notag
      \end{equation}
%     \item
      If $\diamond$ is a
      strong composition, then, for all $R_1,\dots,R_m \subseteq \Rel$:
%       \begin{equation}
%         \varphi(\diamond(R_1,\dots,R_m)) = \circ(\varphi(R_1),\dots,\varphi(R_m))
% %         \label{eq:strong_composition_general}
% %         \notag
%       \tag{\lhd~\text{\emph{\ref{app:weak+strong_conv+comp_general}}}}
%       \end{equation}
      \begin{center}
        ~\hspace*{\fill}
        $\varphi(\diamond(R_1,\dots,R_m)) = \circ(\varphi(R_1),\dots,\varphi(R_m))$
%         \label{eq:strong_composition_general}
%         \notag
        \hspace*{\fill}
        $\lhd$~\text{\emph{\ref{app:weak+strong_conv+comp_general}}}
      \end{center}
%   \end{itemize}
\end{fact}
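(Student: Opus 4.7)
The plan is to establish all four parts by reducing each statement about composite relations to the corresponding statement about base relation symbols, which is guaranteed by Definition~\ref{def:qualitative_calculus} or Definition~\ref{def:stronger_versions_of_comp+conv}. The core tool is that each of $\varphi$, $\breve{~}^i$, and $\diamond$ is defined on composite relations as a union over its values on the constituent symbols, and that the semantic operations $(\cdot)^{\pi_i}$ and $\circ$ both distribute over arbitrary unions of relations in each argument. I would begin by explicitly stating and (very briefly) verifying this distributivity: for instance, $(\bigcup_i s_i)^{\pi} = \bigcup_i (s_i)^{\pi}$ is immediate from the definition of $(\cdot)^\pi$, and $\circ(\bigcup_i r_i^{(1)},R_2,\dots,R_m) = \bigcup_i \circ(r_i^{(1)},R_2,\dots,R_m)$ follows by rearranging the existential quantifier over the witness $v$ in~\eqref{eq:n-ary_composition} (analogously for the other argument positions, and likewise for the binary case \eqref{eq:binary_composition}).

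With distributivity in hand, the abstract parts \eqref{eq:abstract_converse_general} and \eqref{eq:abstract_composition_general} are a direct calculation. For permutation:
\begin{align*}
  \varphi(R\breve{~}^i)
   = \varphi\bigl(\textstyle\bigcup_{r\in R} r\breve{~}^i\bigr)
   = \bigcup_{r\in R}\varphi(r\breve{~}^i)
   \supseteq \bigcup_{r\in R}\varphi(r)^{\pi_i}
   = \bigl(\textstyle\bigcup_{r\in R}\varphi(r)\bigr)^{\pi_i}
   = \varphi(R)^{\pi_i},
\end{align*}
where the inclusion uses~\eqref{eq:abstract_converse}. The analogous chain works for $\diamond$, using \eqref{eq:abstract_composition} and distributivity of $\circ$ in each argument. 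The strong variants follow by replacing $\supseteq$ with $=$ in exactly the same chains, appealing to \eqref{eq:strong_converse} and \eqref{eq:strong_composition} instead.

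The weak case is the one that requires an extra step. I would first use the standard equivalent reformulation, valid whenever the base relations are JEPD: a set $S \subseteq \Rel$ satisfies $\varphi(S) \supseteq X$ if and only if $S$ contains every $s$ with $\varphi(s) \cap X \neq \emptyset$; hence
\[
  \bigcap\{S \subseteq \Rel \mid \varphi(S) \supseteq X\} = \{s \in \Rel \mid \varphi(s) \cap X \neq \emptyset\}.
\]
Applied to a base relation $r$ with $X = \varphi(r)^{\pi_i}$, the weakness assumption~\eqref{eq:weak_converse} reads $r\breve{~}^i = \{s \mid \varphi(s)\cap \varphi(r)^{\pi_i}\neq\emptyset\}$. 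Taking the union over $r \in R$ and pulling the existential inside $\cap$ then gives
\[
  R\breve{~}^i = \bigcup_{r \in R}\{s \mid \varphi(s)\cap\varphi(r)^{\pi_i}\neq\emptyset\}
  = \{s \mid \varphi(s)\cap\varphi(R)^{\pi_i}\neq\emptyset\},
\]
which by the reformulation equals $\bigcap\{S \mid \varphi(S) \supseteq \varphi(R)^{\pi_i}\}$, as required. The argument for weak composition is identical in structure: rewrite the weakness hypothesis as a membership characterization for each tuple of symbols, take the indicated multi-union, swap the existentials past the union, and invoke distributivity of $\circ$ in each argument to recognize the result as $\{t \mid \varphi(t) \cap \circ(\varphi(R_1),\dots,\varphi(R_m)) \neq \emptyset\}$.

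The main technical obstacle, and the only step that is not purely mechanical, is the weak case: one has to ensure that the reformulation of the intersection as a membership test is available, which crucially rests on pairwise disjointness of the base relations (Definition~\ref{def:JEPD}). Without PD, a symbol $s$ could intersect $X$ without being forced into every covering $S$, and the identity $R\breve{~}^i = \bigcup_{r\in R} r\breve{~}^i$ would no longer line up with $\bigcap\{S \mid \varphi(S) \supseteq \varphi(R)^{\pi_i}\}$. Everything else is bookkeeping with unions, and since all manipulations preserve both $\supseteq$ and $=$, the abstract and strong versions fall out of the same computation with the appropriate relational symbol swapped in.
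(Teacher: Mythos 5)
Your proof is correct. For the abstract and strong parts your argument is literally the paper's: expand the symbolic operation on a composite relation as a union over base symbols, apply the pointwise property \eqref{eq:abstract_converse}/\eqref{eq:abstract_composition} (resp.\ \eqref{eq:strong_converse}/\eqref{eq:strong_composition}), and use distributivity of $(\cdot)^{\pi_i}$ and $\circ$ over unions to reassemble $\varphi(R)^{\pi_i}$ resp.\ $\circ(\varphi(R_1),\dots,\varphi(R_m))$. Where you genuinely diverge is the weak case. The paper writes each $r_j\breve{~}^i$ as an intersection of covering sets $S_{jh}$, distributes the outer union over these intersections to obtain $\bigcap_{h_1}\dots\bigcap_{h_n}\bigcup_j S_{jh_j}$, and then shows directly that the covering sets of $\varphi(R)^{\pi_i}$ are exactly the unions $S_1\cup\dots\cup S_n$ of covering sets of the individual $\varphi(r_j)^{\pi_i}$ --- invoking JEPD to pick canonical witnesses $S_j$. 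You instead route everything through the identity $\bigcap\{S \mid \varphi(S)\supseteq X\}=\{s\mid\varphi(s)\cap X\neq\emptyset\}$, which the paper itself records (right after Definition~\ref{def:stronger_versions_of_comp+conv}) as the ``equivalent variant'' of \eqref{eq:weak_converse}; once the intersection is a membership test, commuting it with the union over $r\in R$ is immediate. Your version is shorter and avoids the combinatorial bookkeeping with doubly indexed families; the paper's version is more self-contained in that it never has to justify the membership reformulation separately. One small refinement: you credit the reformulation solely to pairwise disjointness, but PD only gives the inclusion $\supseteq$ (every intersecting symbol is forced into every cover); the reverse inclusion $\subseteq$ needs joint exhaustiveness, to guarantee that $\{s\mid\varphi(s)\cap X\neq\emptyset\}$ is itself one of the covering sets $S$ over which the intersection ranges. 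Both halves of JEPD are available from Definition~\ref{def:qualitative_calculus}, so nothing breaks, but the dependence should be stated in full.
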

%
%\paragraph*{Permutation-completeness}
\new{Suppose that we want to achieve that the symbolic permutation operations provided by a calculus $C$
capture all permutations at the domain level.
Then $C$ needs to be permutation-complete
in the sense that at least weak permutation operations for all $n!-1$ nontrivial permutations
can be derived uniquely by composing the ones defined.}
%
%either all strong permutations defined in $C$ already correspond to a generating set
%(then the remaining permutations can be computed by combining the defined ones),
%or $C$ provides weak definitions for \emph{all} $n!-1$ nontrivial permutations.
% This requirement is captured by the following definition.
% \begin{definition}
%   \label{def:permutation-complete}
%   A qualitative calculus $(\Rel,\Int,\breve{~}^1,\dots,\breve{~}^k,\diamond)$
% %  based on the interpretation $\Int = (\Univ, \varphi, \cdot^{\pi_1},\dots,\cdot^{\pi_k}, \circ)$
%   is called \emph{permutation-complete} if
%   \begin{itemize}
%     \item
%       either there are permutations $\breve{~}^{i_1},\dots,\breve{~}^{i_\ell}$
%       which are strong and whose corresponding permutations
%       $\cdot^{\pi_{i_1}},\dots,\cdot^{\pi_{i_\ell}}$ in $\Int$
%       form a generating set,
%     \item
%       or $k=n!-1$ and all permutations are at least weak.
%   \end{itemize}
% \end{definition}
% \its[for Diedrich]{I've added ``weak'' as an additional requirement to the Def.\ and the preceding sentences. Please check and delete this comment.}
%
%\its[for Diedrich]{I've added ``weak'' as an additional requirement and deleted the explicit definition. Please check and delete this comment.}

In the binary case, where the converse is the unique nontrivial (and generating) permutation,
every calculus is permutation-complete. However, as noted above,
the converse is not strong for the binary \calc{CDR} and \calc{RCD} calculi (cf.\ Definition~\ref{def:partition_schemes} ff.).
There are also ternary calculi whose permutations are not strong:
e.g., the shortcut, homing, and inversion operations
in the \calc{single-cross} and \calc{double-cross} calculi \cite{Freksa92,FZ92utilization}
are only weak. Since these calculi provide no further permutation operations,
they are not permutation-complete.
However, it is easy to compute the two missing permutations and thus make both calculi permutation-complete.

% Finally, since base relations are non-empty and JEPD, we have the following.
% %
% \begin{fact}\label{fact:phi-injective}
% For any qualitative calculus, $\varphi$ is injective.
% \end{fact}
% %
\par\medskip\noindent
%Comparing Definitions
%% \ref{def:JEPD}--\ref{def:permutation-complete}
%\ref{def:JEPD}--\ref{def:stronger_versions_of_comp+conv}
%with the basic notions of a \emph{binary} qualitative calculus, 
Ligozat and Renz' \citeyear{LigozatR04} basic notion of a binary qualitative calculus is based on a \emph{weak representation}
which requires an identity relation, abstract composition, and the converse being strong\new{, thus excluding, for example, \calc{CDR} and \calc{RCD}}.
A \emph{representation} is a weak representation with a strong composition
and an injective map $\varphi$.
Our basic notion of a
qualitative calculus is more general than a weak representation by not requiring an identity
relation, and by only requiring abstract permutations and composition\new{, thus including \calc{CDR} and \calc{RCD}}.
On the other hand, it is slightly more restrictive
by requiring the map $\varphi$ to be injective -- however, since base relations are JEPD,
the only way for $\varphi$ to violate injectivity is to give multiple names to the
same relation, which is not really intuitive. It is even problematic because it leads to unintended behavior of the notion of weak composition (or permutation):
if there are two relation symbols for every domain relation, then the intersections in Equations \eqref{eq:weak_converse} and \eqref{eq:weak_composition}
will range over disjoint composite relations $S$ and thus become empty.

Recently, \citeN{WHW14} gave a new definition of a qualitative calculus that does not explicitly use a map -- in our case the interpretation \Int\ -- that connects the symbols with their semantics. Instead, they employ the “notion of consistency”~\cite[p.~211]{WHW14} for generating a weak algebra from the Boolean algebra of relation symbols. As with \cite{LigozatR04} their definition of a qualitative calculus is confined to binary relations only.

\subsection{Spatial and Temporal Reasoning}
\label{sec:reasoning}

%As we have seen in Section \ref{sec:what_is}, constraint-based reasoning plays a central role among the methods used in spatio-temporal reasoning.
As in the area of classical constraint satisfaction problems (CSPs), we are given a set of variables and constraints:
a constraint network or a \emph{qualitative CSP}.\footnote{In the CSP domain, ``CSP'' usually refers to a single instance, not the decision or computation problem.}
The task of constraint satisfaction is to decide whether there exists a valuation of all variables that satisfies the constraints. 
In calculi for spatial and temporal reasoning, all variables range over the entities of the specific spatial (or temporal) domain of a qualitative calculus.
The relation symbols defined by the calculus serve to express constraints between the entities.
More formally, we have:

\begin{definition}[$\text{QCSP}$]
  \label{def:QCSP}
  Let $C = (\Rel,\Int,\breve{~}^1,\dots,\breve{~}^k,\diamond)$ be an $n$-ary qualitative calculus
  with $\Int = (\Univ, \varphi, \cdot^{\pi_1},\dots,\cdot^{\pi_k}, \circ)$,
  and let $X$ be a set of variables ranging over $\Univ$.
  An \emph{$n$-ary qualitative constraint in $C$} is a formula
  $R(x_1,\dots,x_n)$ with variables $x_1,\dots,x_n\in X$ and a relation $R \subseteq \Rel$.
  We say that a valuation $\psi : X \to \Univ$ \emph{satisfies} $R(x_1,\dots,x_n)$
  if $\left(\psi(x_1),\dots,\psi(x_n)\right) \in \varphi(R)$ holds.

  A \emph{qualitative constraint satisfaction problem} (QCSP) is the task to decide whether there is a valuation $\psi$ for a set of variables satisfying a set of constraints.
  %\refappex{\ref{exa:PC_QCSP}}
\end{definition}
\begin{example}%
\begin{New}%
  \label{exa:PC_QCSP}%
  In \PC1 we may have the two constraints
  $c_1 = x_1 \mathbin{\texttt{<}} x_2$ and $c_2 = x_2 ~\{\texttt{<},\texttt{=}\}~ x_3$.
  The valuation $\psi : X \to \mathbb{R}$ with $\psi(x_1) = \sqrt{2}$, $\psi(x_2) = 3.14$ and $\psi(x_3) = 42$
  satisfies both constraints. If we set $\psi(x_3) = 3.14$, then both constraints remain satisfied by $\psi$;
  if we set $\psi(x_3) = 2.718$, then $\psi$ no longer satisfies $c_2$.
  \Endofexa
\end{New}%
\end{example}%
For simplicity and without loss of generality,
we assume that \new{every set of constraints contains
exactly one constraint per set of $n$ variables.
Thus, of binary constraints either $r_{x_{1},x_{2}}$ or $r'_{x_{2},x_{1}}$ is assumed to be given -- the other can be derived using converse; multiple constraints regarding variables $x_{1},x_{2}$ can be integrated via intersection.}
%\todo{TM: but this is a severe restriction}\todo{DW: no, it is not: you can always have a constraint involving the universal relation!}
In the following,
% we use $X$ to refer to the set of variables, and 
$r_{x_1,\dots,x_n}$ stands for the unique constraint between the variables $x_1,\dots,x_n$. %TODO: Align this with previous notation

Several techniques originally developed for finite-domain CSPs can be adapted to spatial and temporal QCSPs.
Since deciding CSP instances is already NP-complete for search problems with finite domains, heuristics are important.
One particularly valuable technique is constraint propagation which aims at making implicit constraints explicit in order to identify variable assignments that would violate some constraint. 
By pruning away these variable assignments, a consistent valuation can be searched more efficiently.
A common approach is to enforce $k$-consistency; the following definition is standard in the CSP literature \cite{dechter}.

\begin{definition}
A QCSP with variables $X$ is \emph{$k$-consistent} if, for all subsets $X'\subsetneq X$
% with $|X'|=k-1$,
of size $k-1$,
we can extend any valuation of $X'$ that satisfies the constraints to a valuation of $X' \cup \{z\}$  also satisfying the constraints, for any additional variable $z \in X\setminus X'$. \label{def:k-consistency}
\end{definition}

QCSPs  are naturally 1-consistent as universes are nonempty and there are no unary constraints.
An $n$-ary QCSP is $n$-consistent if $r\breve{~}^i_{\!\!\!x_1,\dots,x_k} = r_{\pi_i(x_1,\dots,x_k)}$ for all $i$ and $r_{x_1,\dots,x_k} \neq \emptyset$:
domain relations are typically \emph{serial}, that is, for any $r$ and $x_1,\dots,x_{k-1}$, there is some $x_k$ with $r(x_1,\dots,x_k)$.
In the case of binary relations, this means that $2$-consistency is guaranteed in calculi with a strong converse
by $r\breve{~}_{\!\!\!x,y} = r_{y,x}$ and $r_{x,y} \neq \emptyset$,
and seriality of $r$ means that, for every $x$, there is a $y$ with $r(x,y)$.

Already examining $(n+1)$-consistency may provide very useful information.
The following is best explained for binary relations and then generalized to higher arities.
A \mbox{3-consistent} binary QCSP is called {\em path-consistent},
and Definition \ref{def:k-consistency} can 
 be rewritten using binary composition as
\begin{equation}
  \label{eq:3-consistency}
\forall x,y \in X\qquad  r_{x,y} \subseteq \bigcap_{z\in X} r_{x,z} \circ r_{z,y}.
\end{equation}
We can enforce 3-consistency by computing the fixpoint of the refinement operation 
\begin{equation}\label{eq:revise}
  r_{x,y} \gets r_{x,y} \cap \left( r_{x,z} \circ r_{z,y}\right),
\end{equation}
applied to all variables $x,y,z \in X$.
In finite CSPs with variables ranging over finite domains, composition is also finite and the procedure always terminates since the refinement operation is monotone and there can thus only be finitely many steps until reaching the fixpoint.
Such procedures are called path-consistency algorithms and require $\Landau{|X|^3}$ time \cite{dechter}.
%\refappex{\ref{exa:PC_path_consistency}}
%
\begin{example}%
\begin{New}%
  \label{exa:PC_path_consistency}%
  The QCSP in Figure \ref{fig:desc_qual_partial} based on \PC1
  is not path-consistent because $r_{A,C}$ implicitly takes on the universal relation,
  and thus Equation \eqref{eq:3-consistency} is violated for $x=A, y=C, z=B$.
  By contrast, the QCSP in Figure \ref{fig:desc_qual_full} is path-consistent,
  which can be verified by considering each permutation of $A,B,C$ in turn.
  \Endofexa
\end{New}%
\end{example}%
Enforcing path-consistency with QCSPs may not be possible using a symbolic algorithm since Equation (\ref{eq:revise}) may lead to relations not expressible in $2^\Rel$.
This problem occurs when composition in a qualitative calculus is not strong.
It is however straightforward to weaken Equation (\ref{eq:revise}) using weak composition:
\begin{equation}
r_{x,y} \gets r_{x,y} \cap \left( r_{x,z} \diamond r_{z,y}\right)
  \label{eq:revise2}
\end{equation}
%\begin{equation}
%  r_{x_1,\dots,x_n} \gets r_{x_1,\dots,x_n} \cap \diamond(r_{x_1,\dots,x_{n-1},y},~ r_{x_1,\dots,x_{n-2},y,x_n},~ \dots,~ r_{y,x_2,\dots,x_n})
%  \label{eq:rel-refinement2}
%\end{equation}
The resulting procedure is called enforcing {\em algebraic closure} or {\em a-closure} for short.
The QCSP obtained as a fixpoint of the iteration is called \emph{algebraically closed}.
\begin{example}%
\begin{New}%
  Consider the \PC1 QCSP in Figure~\ref{fig:desc_qual_partial}.
  The missing edge between variables $A$ and $C$ indicates an implicit
  constraint via the universal relation $u=\{\texttt{<},\texttt{=},\texttt{>}\}$.
  Enforcing a-closure as per~\eqref{eq:revise2} updates this constraint with
  $u \cup \texttt{<} \diamond \texttt{<}$, which yields $\texttt{<}$,
  resulting in Figure~\ref{fig:desc_qual_full}.
  Further applications of~\eqref{eq:revise2} do not cause any more changes;
  hence the QCSP in Figure~\ref{fig:desc_qual_full} is algebraically closed.
  \Endofexa
\end{New}%
\end{example}%
If composition in a qualitative calculus is strong, a-closure and path-consistency coincide.
Since there are finitely many relations in a qualitative calculus, a-closure shares all computational properties with the finite CSP case.

A natural generalization from binary to $n$-ary relations can be achieved by considering $(n+1)$-consistency (recall that path-consistency is 3-consistency).
In context of symbolic computation with qualitative calculi we thus need to lift Equations \eqref{eq:3-consistency} and \eqref{eq:revise} to the particular composition operation available. 
For composition as defined by \eqref{eq:n-ary_composition} one obtains
\begin{equation}
  \forall x_1, \ldots, x_n \in X\qquad r_{x_1,\dots,x_n} \subseteq \bigcap_{y\in X} \circ(r_{x_1,\dots,x_{n-1},y},~ r_{x_1,\dots,x_{n-2},y,x_n},~ \dots,~ r_{y,x_2,\dots,x_n}),
  \notag
\end{equation}
and the symbolic refinement operation \eqref{eq:revise2} becomes
\begin{equation}
  r_{x_1,\dots,x_n} \gets r_{x_1,\dots,x_n} \cap \diamond(r_{x_1,\dots,x_{n-1},y},~ r_{x_1,\dots,x_{n-2},y,x_n},~ \dots,~ r_{y,x_2,\dots,x_n}).
  \label{eq:rel-refinement}
\end{equation}

The reason why, in Definition \ref{def:qualitative_calculus}, we require composition to be at least abstract
is that Inclusion \eqref{eq:abstract_composition} guarantees that reasoning via a-closure is sound:
enforcing $k$-consistency or a-closure does not change the solutions of a CSP, as only impossible valuations are locally removed.
If application of a-closure results in the empty relation, then the QCSP is known to be inconsistent.
By contrast, an algebraically closed QCSP may not be consistent though. 
However, for several qualitative calculi (or at least sub-algebras thereof) a-closure and consistency coincide, see also Section \ref{sec:existing_representations}.
%\refappex{\ref{exa:PC_alg_closure}}
%
\begin{example}%
\begin{New}%
  \label{exa:PC_alg_closure}%
  Consider the modification $\PC1'''$ based on the binary abstract partition scheme
  $\PSPCPPP1 = (\{0,1,2\},\{<,=,>\})$, i.e., the domain now has 3 elements.
  Then the QCSP containing 4 nodes and the constraints $\{x_0 < x_1,~ x_1 < x_2,~ x_2 < x_3\}$
  has the algebraic closure $\{x_i < x_j \mid 0 \leqslant i < j \leqslant 3\}$,
  which has no solution in the 3-element domain.
  \Endofexa
\end{New}%
\end{example}%
%
%The algorithm terminates for finite CSPs 
%\tbc{show termination ... draft by FD; tbc}
%In order to show the termination of a-closure, we follow the simplest implementation by checking every triple in every iteration \cite{Montanari74}.
%\cfd{do we need that cite as it is probably given in \cite{dechter} as well?}
%One of the three conditions is given after each iteration: 
%a) the intersection is empty, i.e.~the CSP is inconsistent,
%b) no change occurs, i.e.~the fix point is reached, or
%c) at least one change occurred for relation $r_{x,y}$.
%In case of a) and b) the algorithm terminates.
%In case of c) the arbitrary relation $r_{x,y}$ consists of less base relations 
%after the intersection given by (\ref{eq:rel-refinement}). 
%Thus, c) can only occur limited number of times until case a) or b) occurs.
%
Since domain relations are JEPD, deciding QCSPs with arbitrary composite relations can be reduced to deciding QCSPs with only \emph{atomic relations} (i.e., relation symbols) by means of search (cf. \cite{Renz_Nebel_2007_Qualitative}). 
The approach to reason in a full algebra is thus to {\em refine} a composite relation $R \cup S$ to either $R$ or $S$ in 
% an NP-time 
a
backtracking search fashion, until %either only atomic relations remain or all relations are contained within a sub-algebra that allows for an efficient decision procedure.
a dedicated decision procedure becomes applicable.
%Note that deciding QCSP with solely atomic relations is equivalent to deciding QCSP for atomic relations plus the universal relation (which trivially holds between any pair of variables), since QCSP does not require constraints to be defined between all pairs of variables.
Computationally, reasoning with the complete algebra is typically NP-hard due to the exponential number of possible refinements to atomic relations.
For investigating reasoning algorithms, one is thus interested in the complexity of reasoning with atomic relations. 
If they can be handled in polynomial time, maximal tractable sub-algebras that extend the set of atomic relations are of interest too.
%This way, an NP-complete calculus may be preferred over another NP-complete one
%if the former allows for polynomial time reasoning with atomic relations
%and the latter does not.
Efficient reasoning algorithms for atomic relations and the existence of large tractable sub-algebras suggest efficiency in handling practical problems.
The search for maximal tractable sub-algebras can be significantly eased by applying the automated methods proposed by \citeN{renz-efficient}.
These exploit algebraic operations to derive tractable composite relations and, complementary, search for embeddings of NP-hard problems.
Using a-closure plus refinement search has been regarded as the prevailing reasoning method. 
Certainly, a-closure provides an efficient cubic time method for constraint propagation, but
Table \ref{tab:calculi_reasoning}  clearly shows that the majority of calculi require further methods as decision procedures.

%First, the composition-based approach separates spatial aspects into various different calculi and does not allow combination thereof. For example, although the \calc{Dipole Calculus} family and the \calc{OPRA}\ family both describe relative directions, reasoning about a mixed constraint network involving relations from both calculi families simultaneously is not easily possible within the given framework.
%\todo[inline]{Mention bipath-consistency?}
%
%Second, if QCSP for atomic relations is NP-hard (e.g., relative direction calculi~\cite{lee_complexity_2014}), then the composition-based approach, which runs in polynomial-time, cannot detect all inconsistent QCSP instances of the calculus.
%
%Section \ref{sec:alternatives} deals with alternative approaches that partially solve these limitations.

\subsection{Tools to Facilitate Qualitative Reasoning}
There are several tools that facilitate one or more of the reasoning tasks.
The most prominent plain-QSTR tools are \system{GQR} \cite{gqr}, a constraint-based reasoning system for checking consistency using a-closure and refinement search, and the \system{\SparQ} reasoning toolbox \cite{wolter-wallgruen:10},\footnote{available at \url{https://github.com/dwolter/sparq}} which addresses various tasks from constraint- and similarity-based reasoning.
Besides general tools, there are implementations addressing specific aspects (e.g., reasoning with \calc{CDR} \cite{Zhang-etal-AIJ:10}) or tailored to specific problems (e.g., \system{Phalanx} for sparse \calc{RCC-8} QCSPs \cite{sioutis-condotta:SETN}).
%Further implementations exist, for example Python implementations \system{Sarissa/Phalanx} to consistency checking in \calc{RCC8} and 
%and provides services for a considerable section of the huge variety of domain-specific reasoning methods indicated at the begin of Section \ref{sec:reas_pro}.
In the contact area of qualitative and logical reasoning, the DL reasoners \system{Racer} \cite{racer} and \system{PelletSpatial} \cite{pelletSp} offer support for handling a selection of qualitative formalisms.
For logical reasoning about qualitative domain representations, the tools \system{Hets} \cite{MML07}, \system{SPASS} \cite{spass}, and \system{Isabelle} \cite{isabelle} have been applied,
supporting the first-order Common Algebraic Specification Language CASL~\cite{astesiano_casl:_2002} as well as its higher-order variant HasCASL (see \cite{Woelfl06}).

\subsection{Existing Qualitative Spatial and Temporal Calculi}
\label{sec:existing_representations}
% This article is concerned with algebraic properties of binary spatio-temporal calculi
% that are described in the literature.
\begin{New}
  In the following, we present an overview of existing calculi
  obtained from a systematic literature survey,
  covering publications in the relevant conferences and journals
  in the past 25 years, and following their citation graphs.
  To be included in our overview, a qualitative calculus has to
  be based on a spatial and/or temporal domain,
  fall under our general definition of a qualitative calculus
  (Def.\ \ref{def:qualitative_calculus}: provide symbolic relations, the required symbolic operations,
  and semantics based on an abstract partition scheme),
  and be described in the literature
  either with explicit composition/converse tables,
  or with instructions for computing those.
  These selection critera exclude sets of qualitative relations
  that have been axiomatized in the context of logical theories,
  see Section~\ref{sec:combination_with_classical_logics},
  or qualitative calculi designed for other domains,
  such as ontology alignment \cite{IE15}).

  Tables \ref{tab:calculi_new_1}--\ref{tab:calculi_new_3} list, to the best of our knowledge,
  all calculi satisfying these criteria.%
\end{New}
Table \ref{tab:calculi_new_1} lists the names of families of calculi
and their domains. Tables \ref{tab:calculi_new_2} and \ref{tab:calculi_new_3}
list all variants of these families with original references,
arity and number of their base relations \new{(which is an indicator for the level of granularity offered and for the average branching factor to expect in standard reasoning procedures). 
Additionally we indicate which calculi are implemented in \system{\SparQ} and can be obtained from there.}

\begin{New}%
  Representational aspects of calculi are shown in Figures~\ref{fig:calculi-temp-overview} and \ref{fig:calculi-spat-overview}, grouping calculi by the type of their basic entities and the key aspects captured. 
  For all temporal and selected spatial calculi we iconographically show one exemplary base relation to illustrate the kind of statements it permits.
  For a complete understanding of the respective calculus, the interested reader is referred to the original research papers
  cited in Tables~\ref{tab:calculi_new_2} and \ref{tab:calculi_new_3}.
  We sometimes use a more descriptive relation name than the original work.

  Figure~\ref{fig:calculi-expressivity-overview} shows the known relations
  between the expressivity of existing calculi.
  There are several ways to measure these,
  via the existence of faithful translations not only between base relations over the same domain,
  but also between representations of related domains or between representations concerned with a different domain. 
  For example, the dependency calculus \calc{DepCalc} representing dependency between points is isomorphic to \calc{RCC-5} representing topology of regions. 
  Both calculi feature the same algebraic structure representing partial-order relationships in the domain.

  Since expressivity of qualitative representations solely relies on how relations are defined,
  there exist distinct calculi which exhibit the same expressivity when Boolean combinations of constraints are considered.
  These connections are particularly interesting, not only from the perspective of selecting an appropriate representation, but also in view of computational properties.
  For example, deciding consistency of atomic constraint networks over the point calculus \calc{PC} is polynomial. 
  Using Boolean combinations of \calc{PC} relations one can simulate \calc{Allen} interval relations.
  \citeN{nebel-buerckert-ACM:95} have exploited this relationship to lift a tractable subset to \calc{Allen}.
%   hence the \calc{Allen} fragment obtained by Horn clauses\todo{TS: ``Horn formulae'' is jargon. Where do they come from anyway?}\ forms a tractable fragment of \calc{Allen} (cf.~\cite{nebel-buerckert-ACM:95}). 
  In Figure~\ref{fig:calculi-expressivity-overview} we indicate by an arrow $A \to B$ that relations in $A$ can be expressed by Boolean combinations of relations in $B$.
%  Boolean combinations of $A$ constraints yield the same expressivity as $B$.
  For clarity we only show direct relations, not their transitive closure.%
\end{New}

% PC:
\newsavebox{\pcbox}\sbox{\pcbox}{%
%\begin{tikzpicture}[scale=0.7] \draw[->] (0,0) -- (1,0); 
%\draw[fill,color=red] (0.2,0) circle (2pt) node[above] {$A$};
%\draw[fill,color=blue] (0.7,0) circle (2pt) node[above] {$B$};
%\end{tikzpicture}
\includegraphics{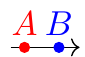}{\footnotesize $A$ before $B$}
}
% DepCalc:
\newsavebox{\depcalcbox}\sbox{\depcalcbox}{%
%\begin{tikzpicture}[scale=0.7] \draw[->] (0,0) -- (1,0); 
%\draw[->] (0.25,0) -- (0.5,0.25) -- (1,0.25);
%\draw[fill,color=red] (0.6,0) circle (2pt) node[yshift=-2mm] {$A$};
%\draw[fill,color=blue] (0.6,0.25) circle (2pt) node[above] {$B$};
%\end{tikzpicture}
\includegraphics{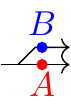}\raisebox{4mm}{\parbox{10mm}{\raggedright {\footnotesize $A$ joint past $B$}}}
}
% Allen:
\newsavebox{\allenbox}\sbox{\allenbox}{%
\includegraphics{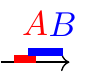}{\footnotesize $A\; \mathrm{overlaps}\; B$}
}
% SIC:
\newsavebox{\sicbox}\sbox{\sicbox}{%
\includegraphics{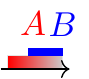}{\footnotesize $A\; \mathrm{older}\; B$}
}
% DIA:
\newsavebox{\diabox}\sbox{\diabox}{%
\includegraphics{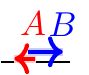}\parbox{2cm}{\raggedright \footnotesize $A$ overlaps from behind $B$}
}
% EIA:
\newsavebox{\eiabox}\sbox{\eiabox}{%
\includegraphics{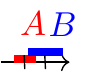}{\footnotesize less part of $A$ overlaps with less part of $B$}
}
% GenInt:
\newsavebox{\genintbox}\sbox{\genintbox}{%
\raisebox{-4mm}{%
\includegraphics{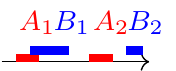}}}
% INDU:
\newsavebox{\indubox}\sbox{\indubox}{%
\raisebox{-2mm}{%
\includegraphics{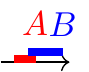}{\footnotesize shorter $A$ overlaps with longer $B$}
}}
% Visibility Relations:
\newsavebox{\visbox}\sbox{\visbox}{%
	\includegraphics{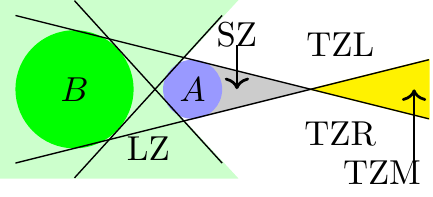}}

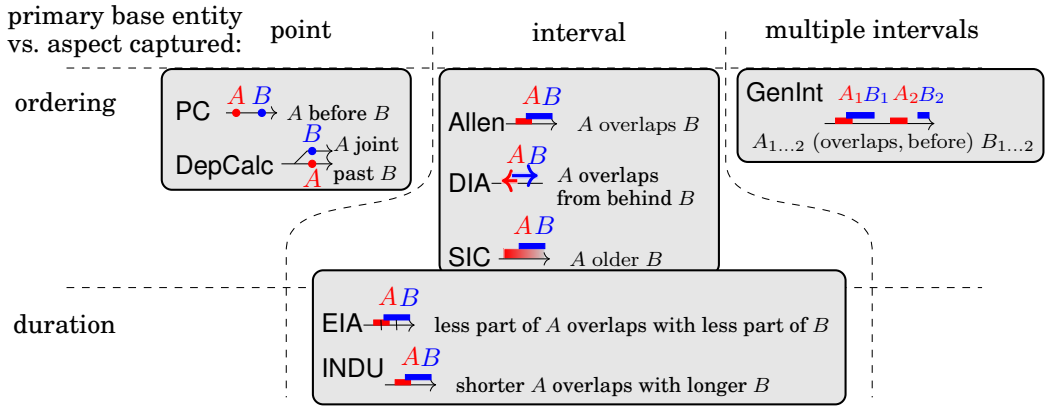
\begin{figure}
\centering
\resizebox{\textwidth}{!}{%
\makebox{%
\begin{tikzpicture}
\tikzstyle{calculi}=[rectangle,rounded corners, thick,draw=black,fill=black!10,align=left,minimum width=34mm]
\tikzstyle{entity}=[]
\tikzstyle{aspect}=[align=flush right,anchor=east,text width=15mm]
\draw (0.8,5) {node [align=left] (entity) {primary base entity\\ vs. aspect captured:}};
\draw (11,5) {node [entity] (mult) {multiple intervals}};
%\draw (entity) -- (mult);
\draw (7,5) {node [entity] (interval) {interval}};
%\draw (entity) -- (interval);
\draw (3.2,5) {node [entity] (point) {point}};
%\draw (entity) -- (point);
%\draw (-1,3) { node[shape=rectangle,rounded corners,draw] (aspect) {aspect} };
\draw (0.8,4) { node[aspect] (ordering) {ordering} };
%\draw (aspect) -- (ordering);
\draw (0.8,1.0) { node[aspect] (duration) {duration} };
%\draw (aspect) -- (duration);
%% Tabellengitter
% horizontal
\draw[dashed] (0,4.5) -- (12.5,4.5);
\draw[dashed] (0,1.5) -- (12.5,1.5);
%\draw[dashed] (0,0.5) -- (12.5,0.5);
% vertikal
%\draw[dashed] (5,5) -- (5,3);
\draw[dashed,out=270, in=90] (5,5) --(5,3.25) to (3,2) -- (3,0.0);
\draw[dashed,out=270, in=90] (9,5) -- (9,3.25) to (11,2) -- (11,0.0);
%% Kalk\"uleintr\"age
%% +- Punktkalk\"ule
%% +- Linien
\draw (7,4.5) node[calculi,anchor=north] (allen) {\calc{Allen}\usebox{\allenbox}\\  \calc{DIA}\usebox{\diabox}\\ \calc{SIC} \usebox{\sicbox}};
%% +- Regionen
\draw (3,4.5) node[calculi,anchor=north] {\calc{PC} \usebox{\pcbox}\\ \raisebox{1mm}{\calc{DepCalc}} \setlength{\unitlength}{1mm}\begin{picture}(10,8)\put(0,-2){\usebox{\depcalcbox}}\end{picture} };
\draw (11.25,4.5) node[calculi,anchor=north] {\calc{GenInt}\usebox{\genintbox}\\ {\footnotesize $A_{1\ldots 2}\; (\mathrm{overlaps,before})\; B_{1\ldots 2}$}};
\draw (7,1.75) node [calculi,anchor=north] {\calc{EIA}\usebox{\eiabox}\\\calc{INDU}\usebox{\indubox}};
\end{tikzpicture}}}
\caption{\label{fig:calculi-temp-overview}Classification of temporal calculi by representable statements and examples.} % $A\;\text{relation}\; B$.}
\end{figure}

\usetikzlibrary{shapes,calc}

% Visibility Relations:
\newsavebox{\visibilitybox}\sbox{\visibilitybox}{%
\includegraphics{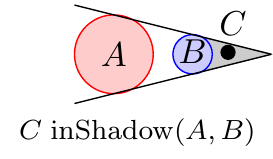}}
\newsavebox{\rccbox}\sbox{\rccbox}{%
\includegraphics{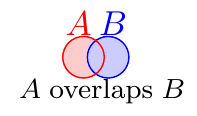}}
\newsavebox{\losbox}\sbox{\losbox}{%
\includegraphics{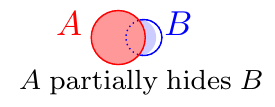}}
\newsavebox{\mcbox}\sbox{\mcbox}{%
\includegraphics{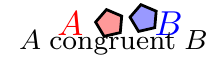}}
\newsavebox{\cdrbox}\sbox{\cdrbox}{%
\includegraphics{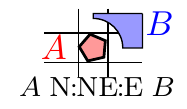}}
\newsavebox{\drabox}\sbox{\drabox}{%
\includegraphics{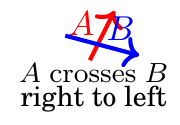}}
\newsavebox{\starbox}\sbox{\starbox}{%
\includegraphics{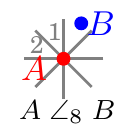}}
\newsavebox{\ffbox}\sbox{\ffbox}{%
\includegraphics{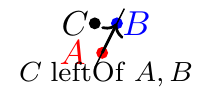}}
\newsavebox{\cibox}\sbox{\cibox}{%
\includegraphics{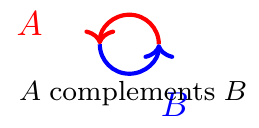}} % ##
\newsavebox{\eprabox}\sbox{\eprabox}{%
\includegraphics{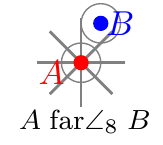}}
\newsavebox{\nineplusbox}\sbox{\nineplusbox}{%
\includegraphics{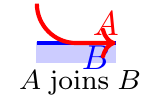}}
\tikzstyle{calculi}=[rectangle,rounded corners, thick,draw=black,fill=black!10,align=left,minimum width=34mm]
\tikzstyle{entity}=[]
\tikzstyle{aspect}=[align=flush right,anchor=east,text width=15mm]
\begin{figure}
\centering
\resizebox{\textwidth}{!}{%
\makebox{%
\begin{tikzpicture}[node distance=1.1cm]
\draw (1,5) {node [align=left] (entity) {primary base entity\\ vs. aspect captured:}};
\draw (11,5) {node [entity] (mult) {region}};
%\draw (entity) -- (mult);
\draw (7,5) {node [entity] (interval) {curve, line}};
%\draw (entity) -- (interval);
\draw (3.5,5) {node [entity] (point) {point}};
%\draw (entity) -- (point);
%\draw (-1,3) { node[shape=rectangle,rounded corners,draw] (aspect) {aspect} };
\draw (1,1.4) { node[aspect] (cardir) {cardinal direction} };
\draw (1,-0.7) { node[aspect] (direction) {relative direction} };
\draw (1,3.5)  { node[aspect] (topology) {topology} };
\draw (1,-3.0) { node[aspect] (distance) {distance} };
\draw (1,-4.75) { node[aspect] (shape) {shape} };
%\draw (0,1.4) { node[shape=rectangle,rounded corners,draw] (cardir) {card.~direction} };
%\draw (0,-0.7) { node[shape=rectangle,rounded corners,draw] (direction) {rel.~direction} };
%\draw (0,3.5)  { node[shape=rectangle,rounded corners,draw] (topology) {topology} };
%\draw (0,-3.0) { node[shape=rectangle,rounded corners,draw] (distance) {distance} };
%\draw (0,-4.75) { node[shape=rectangle,rounded corners,draw] (shape) {shape} };
%\draw (aspect) -- (duration);
%% Tabellengitter
% horizontal
\draw[dashed] (0,4.6) -- (12.5,4.6);
\draw[dashed] (0,2.5) -- (12.5,2.5);
\draw[dashed] (0,0.5) -- (12.5,0.5);
\draw[dashed] (0,-1.7) -- (12.5,-1.7);
\draw[dashed] (0,-4.0) -- (12.5,-4.0);
% vertikal
%\draw[dashed] (5,5) -- (5,3);
\draw[dashed] (5,5) -- (5,-5.3);
\draw[dashed] (9,5) -- (9,-5.3);
%%
%% Kalkuele
%%
\draw (7,4.5) node[calculi, anchor=north] {\small \calc{DRA-con}};
\draw (7,2.65) node[calculi,minimum width=4.5cm, anchor=south] {{\small \calc{CBM}, \calc{CDA}, }, \underline{\calc{9${}^+$-Int}}\raisebox{-4.5mm}{\usebox{\nineplusbox}}};
\draw (7,2.65) node[calculi, anchor=north] {\calc{CI}\raisebox{-5mm}{\usebox{\cibox}}};
\draw (11,4.5) node[calculi,anchor=north] {\underline{\calc{RCC-$n$}}\raisebox{-4mm}{\usebox{\rccbox}}\\  \calc{9-Int}};
\draw (11.1,-0.5) node[calculi] {\calc{VR}\raisebox{-4mm}{\usebox{\visibilitybox}}};
\draw (9.0,-0.5) node[calculi,minimum width=5mm] {\rotatebox{90}{\calc{RfDL-3-12}}};
\draw (6.9,-0.5) node[calculi] {\calc{DRA}\raisebox{-2mm}{\usebox{\drabox}}\\ \calc{ABA$_{23}^8$}, \calc{CYC}};
\draw (3,-1.8) node[calculi] (eopra) {\small \calc{EOPRA},  \calc{QTC}{\footnotesize($\Delta$ dist.)}%{\footnotesize($\Delta$ dist.~over time)}
};
%\draw (eopra.west) -- (direction) ;
%\draw (eopra.west) -- (distance) ;
%
\draw (3,-3.0) node[calculi] (epra) {\parbox{8mm}{\underline{\calc{EPRA}}\newline {\scriptsize (\calc{STAR} + dist.)}}\raisebox{-6mm}{\usebox{\eprabox}}};
%
%\draw (3,-3.8) node[shape=rectangle,draw,align=left] ;
%
\draw (3,1.5) node[calculi] {\underline{\calc{STAR}} \raisebox{-4mm}{\usebox{\starbox}}\\ {\small \calc{CDC},\calc{PC}}};
\draw (3,-0.5) node[calculi] {\underline{\calc{LR}} \raisebox{-4mm}{\usebox{\ffbox}}\\ {\small \calc{OPRA},\calc{TPCC},\calc{SV},}\\ {\small \calc{1-/2-cross}, \calc{OM-3D}}};
\draw (11,1.4) node[calculi] {\underline{\calc{CDR}} \raisebox{-2mm}{\usebox{\cdrbox}}\\ {\small \calc{RCD},\calc{BA}}};
\draw (9,-3.5) node[calculi,anchor=west] {\calc{LOS} \raisebox{-2mm}{\usebox{\losbox}} \\
\calc{ROC}, \calc{OCC}, \calc{(V)RCC-3D(+)}}; 
\draw (11,-4.9) node[calculi,minimum width=3cm] {\calc{MC-4}\raisebox{-2mm}{\usebox{\mcbox}}};
%
%\draw[thick] (11,3.0) -- (11.5,2.5) -- (11,2.0) node[midway] {$^{*})};
%
\end{tikzpicture}}}
\caption{\label{fig:calculi-spat-overview}Classification of spatial calculi by representable statements with selected example relations.} % ${}^{*}) Combination investigated not \cite{}}
\end{figure}
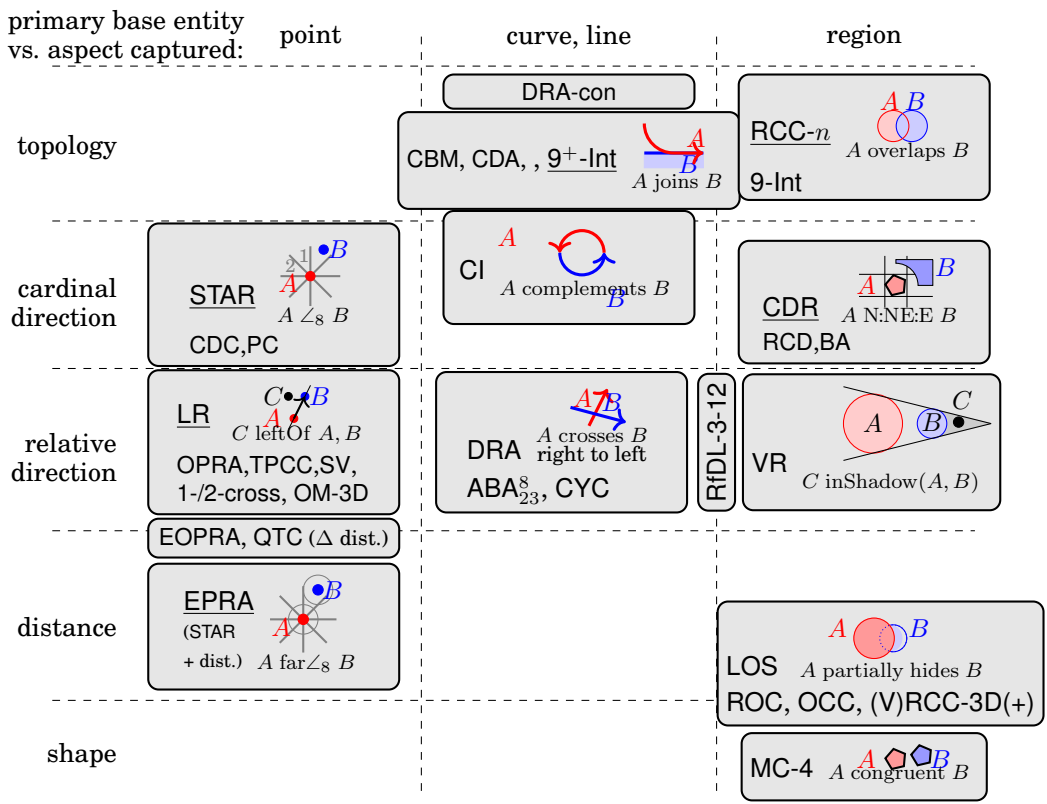

\begin{figure}
\centering
%\includegraphics[width=\textwidth]{calculi_expr}
%\includegraphics[width=\textwidth]{calculi_map.pdf}
%!TEX root = ./QSR-survey_CSUR.tex
%\documentclass{acmsmall}
%\usepackage{QSR-survey}
%\newcommand{\calc}[1]{\mbox{\textsf{#1}}}
%
%%%
%
%\usetikzlibrary{arrows,shapes,snakes,automata,backgrounds,petri,calc}
%\begin{document}
%

\newcommand{\myCalc}[1]{\calc{\small #1}}
\begin{tikzpicture}[node distance=1.5cm,>=stealth',auto,scale=1.0] % bend angle=45, 

%%
%% allgemeine Zeichenstile
%%
  \tikzstyle{tempCalc}=[rectangle,thick,draw=black,fill=blue!20,minimum size=6mm]
  \tikzstyle{spatCalc}=[rectangle,thick,draw=black,minimum size=6mm]
  \tikzstyle{spattempCalc}=[rectangle,thick,draw=black,minimum size=6mm,left color=blue!20, right color=white]
  \tikzstyle{models}=[->, thick]
  \tikzstyle{modeled}=[<-, thick]
  \tikzstyle{equiv}=[<->, very thick]

%% Zeitkalküle
\node[tempCalc] (ia) {\myCalc{PC},\myCalc{IA},\myCalc{SIC},\myCalc{DIA},\myCalc{GenInt}};
\node[tempCalc] (depcalc) [below of=ia] {\myCalc{DepCalc}} edge [modeled] (ia);
\node[tempCalc] (indu) [left of=depcalc] {\myCalc{INDU}} edge [modeled] (ia);
\node[tempCalc] (eia) [left of=ia,node distance=2.5cm] {\myCalc{EIA}} edge [models] (indu);
\begin{pgfonlayer}{background}
	\coordinate (p1) at ($ (ia.north  -| eia.west) + (0,0.5) $);
    \filldraw [line width=4mm,join=round,black!10]
      (p1)  rectangle (depcalc.south  -| ia.east) ;
     \node[anchor=west] at (p1) {temporal and...};
     \node[anchor=west,xshift=4mm] at (p1.north  -| ia.east) {...spatial calculi};
  \end{pgfonlayer}

%% raum-zeitliche Kalküle
\node[spattempCalc] (rfdl) at ($(depcalc.south -| eia.west) - (0,1.5)$) [anchor=west] {\myCalc{RfDL-3-12}};
\node[spattempCalc] (qrpc) [right of=rfdl,node distance=2cm] {\myCalc{QRPC}};
\node[spattempCalc] (qtc) [right of=qrpc] {\myCalc{QTC}};
\begin{pgfonlayer}{background}
	\coordinate (pst) at ($ (rfdl.north  -| rfdl.west) + (0,0.5) $);
    \filldraw [line width=4mm,join=round,black!10]
      (pst)  rectangle (qrpc.south  -| qtc.east) ;
     \node[anchor=west] at (pst) {spatio-temporal calculi};
\end{pgfonlayer}

%% Form, Verdeckung
%\node[spatCalc] (mc4) at ($(ia.east) + (1.0,0)$) [anchor=west] {\myCalc{MC-4}} ;
\node[spatCalc] (los) at ($(ia.east) + (1.0,0)$) [anchor=west] {\myCalc{LOS}};
\node[spatCalc] (roc) [right of=los] {\myCalc{ROC}} edge[modeled] (los);
\node[spatCalc] (vr) [right of=roc] {\myCalc{VR}};
\draw ($ (los.north -| los.west) + (-0.3,0.3) $) rectangle ($ (vr.south -| vr.east) - (-0.8,0.4) $) node [xshift=-5mm,yshift=2mm] {shape} ;

%% topologische Kalküle
\node[spatCalc] (rcc5) [below=of los.west,anchor=west] {\myCalc{RCC-5}} edge [equiv] (depcalc);
\node[spatCalc] (rcc8) [right of=rcc5] {\myCalc{RCC-8}} edge [modeled] (rcc5);
\node[spatCalc] (9int) [right of=rcc8,node distance=1.7cm] {\myCalc{9-Int}};
\draw [equiv,blue] (rcc8) -- (9int) node [midway,yshift=4.5mm,xshift=-14.0mm,color=blue,anchor=west] {\footnotesize for 2D connected regions only} ;
\node[spatCalc] (9intplus) [right of=9int] {\myCalc{9-Int${}^+$}} edge [modeled] (9int);
\draw ($ (rcc5.north -| rcc5.west) + (-0.3,0.3) $) rectangle ($ (9intplus.south -| 9intplus.east) - (-0.3,0.5) $) node [xshift=-10mm,yshift=2mm] {topology} ;
\draw[equiv,blue] (los) -- (rcc8); % node [midway,yshift=-1mm,xshift=-8mm] {\footnotesize 2D connected regions};

%% absolute Richtung
\node[spatCalc] (aba) [below=of rcc5.west,anchor=west] {\myCalc{ABA}${}^8_{23}$};
\node[spatCalc] (rcd) [right of=aba] {\myCalc{RCD}};
\node[spatCalc] (cdr) [right of=rcd] {\myCalc{CDR}};
\node[spatCalc] (epra) [below =of aba.west,anchor=west] {\myCalc{EPRA}};
\node[spatCalc] (star) [right of=epra,] {\myCalc{STAR}};
\node[spatCalc] (cdc) [right of=star,node distance=1.9cm] {\myCalc{CDC}, \myCalc{BA}, \myCalc{CI}} edge [modeled] (rcd);
%\node[spatCalc] (om3d) [right of=cdr] {\myCalc{OM-3D}};
%\node[spatCalc] (tpr) [right of=cdc] {\myCalc{TP-R}};
\draw[models] (cdc) -- (cdr);
\draw[models] (star) -- (epra);
\draw[modeled] (star) -- (cdc);
\draw ($ (aba.north -| aba.west) + (-0.3,0.2) $) rectangle ($ (epra.south -| cdc.east) - (-0.3,0.4) $) node [xshift=-17mm,yshift=2mm] {cardinal directions} ;

%% relative Richtung
\node[spatCalc] (cyc) at ($(rfdl) + (-0.25,-1.5)$) {\myCalc{CYC}};
\node[spatCalc] (lr) [right of=cyc] {\myCalc{LR}};
\node[spatCalc] (sv) [right of=lr] {\myCalc{SV}};
\node[spatCalc] (dra) [below of=cyc] {\myCalc{DRA}${}_f$};
\node[spatCalc] (opra) [right of=dra,anchor=west] {\myCalc{TPCC,OPRA,EOPRA,1-,2-cross}};
\draw[models] (cyc) -- (dra); % node[midway,xshift=-8mm] {\footnotesize for fine variant};
\draw[models] (lr) -- (dra);
%\node[spatCalc] (opra) [right of=tpcc,anchor = west]  {\myCalc{OPRA},\myCalc{EOPRA}}; 
%\node[spatCalc] (cross) [right of=opra,anchor=west] {\myCalc{1-,2-cross}};
%\node[spatCalc] (1cross) [right of=2cross] {\myCalc{1-cross}};
%\draw[models] (tpcc) -- (opra);
%\draw[models] (cross) -- (opra);
%\draw[models] (1cross) -- (2cross);
%\draw[models,out=210,in=340] (cross) to (tpcc);
\draw ($ (cyc.north -| cyc.west) + (-0.3,0.3) $) -- ($ (cyc.north -| epra.west) + (-0.5,0.3) $) -- ($ (epra.south -| epra.west) + (-0.5,-0.7) $) -- ($(epra.south -| opra.east) + (0.3,-0.7) $) -- ($(opra.south -| opra.east) + (0.3,-0.5) $) -- ($ (opra.south -| cyc.west) + (-0.3,-0.5) $) node [near start,above,xshift=0mm]{relative direction} -- ($ (cyc.north -| cyc.west) + (-0.3,0.3) $)  ;
% -- ($ (cyc.north -| epra.south) + (-0.3,0.3) $) -- ($ (cyc.north -| epra.south) + (-0.3,0.3) $);

%node [xshift=-20mm,yshift=2mm] {relative direction} ;

\draw[models] (qtc) to (opra);
%\draw[models,out=315,in=135] (ia.south) to (ci);
\draw[models,out=160,in=20] (star.north) to (sv);
\draw[out=270,in=90,thick] (aba.south) to ($ (epra) - (1.0,0) $);
\draw[models,out=270,in=45] ($ (epra) - (1.0,0) $) to (dra);
\draw[equiv,out=280,in=135] (ia.east) -- ($ (epra) + (-1.0,1.5) $) to (cdc.north west);
%\draw[models] (cdc) -- (opra);
\draw[models,out=330,in=210] (dra) to (opra);
\draw[models] (vr)  -- ($ (vr) + (1.5,0) $) node [spatCalc,dashed,anchor=west]{\myCalc{LR}};
\draw[models] (epra) -- (opra);
\draw[models] (sv) -- (opra);
\end{tikzpicture}

% BA -> CDR NACHTRAGEN !!
% DRA -> OPRRA

%\end{document}
%\end
\caption{Expressivity relations between calculi. A directed arrow $A \to B$ says that configurations expressible as CSPs over relations of calculus $A$ can also be expressed by Boolean formulae of constraints over relations from calculus $B$. Calculi in a joint box are of equivalent expressivity.
Proof sketches that do not directly follow from original calculus definition papers are given in Appendix \ref{app:expressivity}.}
\label{fig:calculi-expressivity-overview}
\todo[inline]{As far as I know \calc{ABA823} is a relative direction calculus. The location of \calc{CI} doesn't seem to be correct. (Jae)}
\end{figure}

% \FloatBarrier

%\begin{table}
%	\centering
%	\begin{tabular}{l|l|l|l|l|l|l|l|}
%				&	1-cross	&	2-cross	&	\DRAc	&	\DRAf	&	\DRAfp	&	LR		&	OPRA	\\	\hline
%		1-cross	&			&			&			&			&			&			&	$O(1)$	\\	\hline
%		2-cross	&			&			&			&			&			&			&	$O(n^2)$\\	\hline
%		\DRAc	&			&	$O(1)$	&			&	$O(1)$	&	$O(1)$	&	$O(1)$	&	$O(1)$	\\	\hline
%		\DRAf	&			&	$O(1)$	&			&			&			&	$O(1)$	&	$O(1)$	\\	\hline
%		\DRAfp	&			&			&			&			&			&			&			\\	\hline
%		LR		&			&	$O(1)$	&			&	$O(n^2)$&	$O(n^2)$&			&	$O(n^2)$\\	\hline
%		OPRA	&			&			&			&			&			&			&		
%	\end{tabular}
%	\caption{Expressive power (relative direction calculi)}
%\end{table}
%
%\begin{table}
%	\centering
%	\begin{tabular}{l|l|l|l|l|l|l|	}
%				&	BA		&	DepCalc	&	DIA		&	IA		&	PC		&	RCC-5	\\	\hline
%		BA		&			&	$O(1)$	&	$O(1)$	&	$O(1)$	&	$O(1)$	&	$O(1)$	\\	\hline
%		DepCalc	&			&			&			&			&			&	$O(1)$	\\	\hline
%		DIA		&			&			&			&			&			&	$O(1)$	\\	\hline
%		IA		&	$O(1)$	&	$O(1)$	&	$O(1)$	&			&	$O(1)$	&	$O(1)$	\\	\hline
%		PC		&	$O(1)$	&	$O(1)$	&	$O(1)$	&	$O(1)$	&			&	$O(1)$	\\	\hline
%		RCC-5	&			&	$O(1)$	&			&			&			&			
%	\end{tabular}
%	\caption{Expressive power (ordering)}
%\end{table}

% Calculi \calc{\ABA} and \calc{\DRAc} slightly deviate from our already loose requirements: the interpretation
% of their relations is not an abstract partition scheme because their definitions exclude certain pairs of entities,
% violating JEPD with respect to the full domain.
% See footnote ``b''.

Computational aspects of calculi are shown in Table \ref{tab:calculi_reasoning}, as far as they have already been identified.
Some fairly straightforward supplements have been made while compiling this table; their proofs are in Appendix \ref{app:complexity_proofs}.
According to the discussion in the previous section, we give the computational complexity \new{for deciding consistency}  with atomic QCSPs
and the best known \new{complete} decision procedure\new{, which is different from a-closure in those cases where a-closure is incomplete}.
We only indicate the type of algorithm applicable (e.g., linear programming), but not its most efficient realization.
We furthermore list tractable subalgebras that cover at least all atomic relations --
these subalgebras allow for reasoning in the full algebra via combining the named decision procedure with a search for a refinement.
The complexity is given as ``P'' (in polynomial time), ``NPc'' (NP-complete), and ``NPh'' (NP-hard, NP-membership unknown).

\begin{table}
  \begin{small}
    \begin{centering}
      \rowcolors*{1}{lightblue}{}%
      \renewcommand{\tabcolsep}{3pt}
      \begin{tabular}{llll}
      	\hline
      	\rowcolor{medblue}\rule{0pt}{10pt}%
        Abbrev.\ & Name                                                                    & Domain                                 & Aspect                \\[1pt] \hline
      	\rule{0pt}{10pt}%
        \eestrut \calc{1-,2-cross} & \calc{Single/Double Cross Calculus}                                     & points in 2-d                          & relative location     \\
      	\eestrut \calc{9-int}                                & \calc{Nine-Intersection Model}                                          & simple $n$-d regions                   & topology              \\
      	%         \eestrut \calc{9${}^{(+)}$-int}            & \calc{9-} and \calc{9${}^+$-Intersection} Calculi                       & 3-d bodies, regions, lines, points     & topology              \\
      	\eestrut \calc{9${}^{\text{(+)}}$-int}               & \calc{9-} and \calc{9${}^\text{+}$-Intersection Calculi}                & \multicolumn{2}{c}{9-int \& bodies, lines, points in 2-d/3-d}  \\
      	\eestrut \calc{\ABA}                                 & Alg.\ of Bipartite Arrangements                                         & 1-d intervals in 2-d                   & rel. loc./orientation \\
      	%       \eestrut \calc{BA}                           & \calc{Block algebra${}^{\text{a}}$}                                     & $n$-dimensional blocks                 &  \\
      	\nrsc \eestrut \calc{BA}                             & \multicolumn{3}{l}{\calc{Block algebra}\quad  {\footnotesize (aka \calc{Rectangle Algebra} or \calc{Rectangle Calculus})}}        \\
      	                                                     &                                                                         & $n$-d blocks                           & order                 \\
      	\eestrut \calc{CBM}                                  & \calc{Calculus Based Method}                                            & 2-d regions, lines, and points         & topology				 \\
      	\eestrut \calc{CDA}                                  & \calc{Closed Disk Algebra}                                              & 2-d closed disks                       & topology				 \\
      	\eestrut \calc{CDC}                                  & \calc{Cardinal Direction Calculus}                                      & points in 2-d                         & cardinal directions   \\
      	\eestrut \calc{CDR}                                  & \calc{Cardinal Direction Relations}                                     & 2-d regions                            & cardinal directions   \\
      	\eestrut \calc{CI}                                   & \calc{Algebra of Cyclic Intervals}                                      & intvls. on closed curves               & cyclic order          \\
      	\nrsc \eestrut \calc{\CYC}                           & \multicolumn{3}{l}{\calc{Cyclic Ordering}\quad {\footnotesize (\calc{\CYCb} aka \calc{Geometric Orientation})}} \\
                                                             &                                                                         & oriented lines in 2-d                  & relative orientation  \\
      	\eestrut \calc{DepCalc}                              & \calc{Dependency Calculus}                                              & partially ordered points               & partial order         \\
      	\eestrut \calc{DIA}                                  & \calc{Directed Intervals Algebra}                                       & directed 1-d intvls. in 1-d            & order/orientation     \\
      	\eestrut \calc{\DRA}                                 & \calc{Dipole Calculus}                                                  & oriented line segms. in $\mathbb{R}^2$ & rel. loc./orientation \\
      	\eestrut \calc{\DRA-conn}                            & \calc{Dipole connectivity}                                              & connectivity of the above              & connectivity          \\
      	\eestrut \calc{EIA}                                  & \calc{Extended Interval Algebra}                                        & 1-d intervals in 1-d                   & order                 \\
        \eestrut \calc{EOPRA}                                & \calc{Elevated Oriented Point Rel.\ Alg.}                               &           \multicolumn{2}{c}{\calc{OPRA} \& local distance}            \\
      	\eestrut \calc{EPRA}                                 & \calc{Elevated Point Relation Algebra}                                  &           \multicolumn{2}{c}{\calc{CDC} \& local distance}            \\
%      	\eestrut \calc{ERA}                                  & \calc{Extended Rectangle Algebra}                                       & 2-d blocks                             & car. dir./support     \\
      	\eestrut \calc{GenInt}                               & \calc{Generalized Intervals}                                            & unions of 1-d intvls.                  & order                 \\
      	\eestrut \calc{IA}                                   & (Allen's) \calc{Interval Algebra}                                       & 1-d intervals in 1-d                   & order                 \\
      	\eestrut \calc{\INDU}                                & \calc{Intvl. and Duration Network}                                      &          \multicolumn{2}{c}{\calc{IA} \& relative duration}    \\
      	\eestrut \calc{LOS}                                  & \calc{Lines of Sight}                                                   & 2-d regions in 3-d                     & obscuration           \\
      	\eestrut \calc{\LR}                                  & \calc{\LR Calculus}\quad {\footnotesize (aka \calc{Flip-Flop})}  & points in 2-d                          & relative location     \\
      	\eestrut \calc{MC-4}                                 & \calc{MC-4}                                                             & regions in 2-d                         & congruence            \\
      	\eestrut \calc{OCC}                                  & \calc{Occlusion Calculus}                                               & 2-d regions in 3-d                     & obscuration           \\
      	\eestrut \calc{OM-3D}                                & \calc{3-D Orientation Model}                                            & points in 3-d                          & relative location     \\
      	\eestrut \calc{OPRA}                                 & \calc{Oriented Point Rel. Algebra}                                      & oriented points in 2-d                 & rel. loc./orientation \\
      	\eestrut \calc{PC}                                   & \calc{Point Calculus} {\footnotesize (aka \calc{Point Algebra})} & points in $n$-d                        & total order           \\
      	\nrsc\eestrut \calc{QRPC}                            & \multicolumn{3}{l}{\calc{Qualitative Rectilinear Projection Calculus}}                                                                          \\
      	                                                     &                                                                         & oriented points in 2-d                 & relative motion       \\
      	\eestrut \calc{QTC}                                  & \calc{Qualitative Trajectory Calculus}                                  & moving points in 1-d/2-d               & relative motion       \\
      	\eestrut \calc{RCC}                                  & \calc{Region Connection Calculus}                                       & general regions                        & topology              \\
      	\eestrut \calc{RCD}                                  & \calc{Rectang.\ Card.\ Dir.\ Calculus}                                  & bounding boxes in 2-d                  & cardinal directions   \\
      	\nrsc\eestrut \calc{RfDL-3-12}                       & \multicolumn{3}{l}{\calc{Region-in-the-frame-of-Directed-Line}}                                                                                 \\
      	                                                     &                                                                         & regions \& paths in 2-d                & relative motion       \\
      	\eestrut \calc{ROC}                                  & \calc{Region Occlusion Calculus}                                        & 2-d regions in 3-d                     & obscuration           \\
      	\eestrut \calc{SIC}                                  & \calc{Semi-Interval Calculus}                                           & 1-d intervals in 1-d                   & order                 \\
      	\eestrut \calc{STAR}                                 & \calc{Star Calculi}                                                     & points in 2-d                          & direction             \\
      	\eestrut \calc{SV}                                   & \calc{StarVars}                                                         & oriented points in 2-d                 & relative direction    \\
      	\eestrut \calc{TPCC}                                 & \calc{Ternary Point Config. Calc.}                                      & points in 2-d                          & relative location     \\
      	\eestrut \calc{TPR}                                  & \calc{Ternary Projective Relations}                                     & points or regions in 2-d               & relative location     \\
      	\eestrut \calc{VR}                                   & \calc{Visibility Relations}                                             & convex regions                         & obscuration           \\ \hline
      \end{tabular}
    \end{centering}
%     \par\smallskip
%     \rowcolors*{1}{}{}%
%     \begin{tabular}{l@{~}l}
% %       Legend          &                                                                          \\[1pt]
% %       \hline\rule{0pt}{10pt}%
%       ${}^{\text{a}}$ & also known as \calc{Rectangle Algebra} or \calc{Rectangle Calculus}      \\
% %       \hline
%     \end{tabular}
    \par
  \end{small}
  \caption{Existing families of spatial and temporal calculi}
  \label{tab:calculi_new_1}
\end{table}

% \newlength{\fnct}%
% \settowidth{\fnct}{${}^{\text{b}}$}%
% \newcommand{\Fnct}{${}^{\text{b}}$\hspace*{-.5\fnct}\xspace}
\begin{table}
  \begin{small}
    \begin{centering}
      \renewcommand{\tabcolsep}{5.5pt}
      \rowcolors*{1}{lightblue}{}%
      \begin{tabular}{lllll}
        \hline\rowcolor{medblue}\rule{0pt}{10pt}%
        Variant                              & Specifics                & Reference(s)                                   & Params                   & St                     \\[1pt]
        \hline
        \rule{0pt}{10pt}%
%         \nrsc \eestrut \calc{1-cross}               & \calc{Single Cross}               & \cite{FZ92utilization}                         & \ter \brels{  8}         & \STCC \STSP            \\
%               \eestrut \calc{2-cross}               & \calc{Double Cross}               & ibid.                                          & \ter \brels{ 15}         & \STCC \STSP            \\
              \eestrut \calc{1-, 2-cross}           &                            & \cite{FZ92utilization}                         & \ter \brels{8,\,15}      & \STCC \STSP            \\
              \eestrut \calc{9-int}                 &                            & \cite{DBLP:conf/ssd/Egenhofer91}               & \bin \brels{  8}         & \STCC \STSP            \\
              \eestrut \calc{9${}^{\text{(+)}}$-int}& 10 variants${}^{\text{a}}$ & \cite{Kur10}                                   & \bin \brels{$\leqslant$\,233} & \STOO{}${}^{\text{c}}$ \\
              \eestrut \calc{\ABA}                  & ${}^{\text{b}}$            & \cite{Got04}                                   & \bin \brels{125}         & \STOO{}${}^{\text{c}}$ \\
%             \eestrut \calc{BA${}_n$}              & $n$-dim.\ \calc{BA}               & \cite{BCF98,BCF99_block_alg}                   & \bin \brels{$13^n$}      & \Yes${}^2$             \\
              \eestrut \calc{BA${}_n$}              & $n$ dimensions             & [\citeNP{BCF98}; \citeyearNP{BCF99_block_alg}] & \bin \brels{$13^n$}      & \STCC \STSP{}${}^{1,2}$\hspace*{-2pt} \\
              \eestrut \calc{CBM}                   &                            & \cite{CDv93}                                   & \bin \brels{  7}         & \STOO                  \\
              \eestrut \calc{CDA}                   &                            & \cite{ES93}                                    & \bin \brels{  8}         & \STCT                  \\
              \eestrut \calc{CDC}                   &                            & \cite{Frank91,ligozat-JVLC:98}                 & \bin \brels{  9}         & \STCC \STSP            \\
        \nrsc \eestrut \calc{CDR}                   & original version           & \cite{SkiadoK04}                               & \bin \brels{511}         & \STCX{}${}^{\text{c}}$ \\
              \eestrut \calc{cCDR}                  & connected variant          & \cite{SkiadoK05}                               & \bin \brels{289}         & \STCC \STSP            \\
              \eestrut \calc{CI}                    &                            & \cite{BO00}                                    & \bin \brels{ 16}         & \STCC                  \\
        \nrsc \eestrut \calc{\CYCb}                 & binary                     & \cite{DBLP:journals/ai/IsliC00}                & \bin \brels{  4}         & \STCT \STSP            \\
%         \nrsc                                & \multicolumn{2}{l}{{\footnotesize \hspace*{-2pt}aka \calc{Geometric Orientation} }} &                &                        \\
              \eestrut \calc{\CYCt}                 & ternary                    & ibid.                                          & \ter \brels{ 24}         & \STCC \STSP            \\
              \eestrut \calc{DepCalc}               &                            & \cite{ragni-scivos-KI:05}                      & \bin \brels{  5}         & \STCC \STSP            \\
              \eestrut \calc{DIA}                   &                            & \cite{Ren01}                                   & \bin \brels{ 26}         & \STOO{}${}^{\text{c}}$ \\
        \nrsc \eestrut \calc{\DRAc}                 & coarse-grained${}^{\text{b}}$ & \cite{moratz-renz-wolter-ECAI:00}           & \bin \brels{ 24}         & \STCT \STSP            \\
        \nrsc \eestrut \calc{\DRAf}                 & fine-grained               & ibid.                                          & \bin \brels{ 72}         & \STCC \STSP            \\
              \eestrut \calc{\DRAfp}                & \emph{f}$+$parallelism     & \cite{MoratzEtAl2011}                          & \bin \brels{ 80}         & \STCC \STSP            \\
              \eestrut \calc{\DRA-conn}             &                            & \cite{Wallgruen_Wolter_Richter_10_Qualita}     & \bin \brels{  7}         & \STCC \STSP            \\
              \eestrut \calc{EIA}                   &                            & \cite{ZhangRenz_2014_AngryBirds}               & \bin \brels{ 27}         & \STCX{}${}^{\text{c}}$ \\
              \eestrut \calc{EOPRA$_n$}             & granularity $n$            & \cite{MoW12}                                   & \bin \brels{$\Oof(n^3)$} & \STOO{}${}^{\text{c}}$ \\
              \eestrut \calc{EPRA$_n$}              & granularity $n$            & \cite{MoW12}                                   & \bin \brels{$\Oof(n^3)$} & \STOO{}${}^{\text{c}}$ \STSP{}${}^2$ \\
%                                                                                                                            $16n^3 - 16n^2+6n-1$
        \nrsc \eestrut \calc{IA}$\times$\calc{EIA}  & coarser variant            & \cite{ZhangRenz_2014_AngryBirds}               & \bin \brels{351}         & \STOO{}${}^{\text{c}}$ \\
              \eestrut \calc{EIA}$\times$\calc{EIA} & finer variant              & ibid.                                          & \bin \brels{729}         & \STOO{}${}^{\text{c}}$ \\
              \eestrut \calc{GenInt}                &                            & \cite{condotta-ECAI:00}                        & \bin \brels{ 13}         & \STCX{}${}^{\text{c}}$ \\
              \eestrut \calc{IA}                    &                            & \cite{allen:83}                                & \bin \brels{ 13}         & \STCC \STSP            \\
              \eestrut \calc{\INDU}                 &                            & \cite{PKS99}                                   & \bin \brels{ 25}         & \STCC \STSP            \\
              \eestrut \calc{LOS-14}                & convex regions             & \cite{Gal94}                                   & \bin \brels{ 14}         & \STOO{}${}^{\text{c}}$ \\
              \eestrut \calc{\LR}                   &                            & \cite{SN05,DBLP:conf/cosit/Ligozat93}          & \ter \brels{  9}         & \STCC \STSP            \\
              \eestrut \calc{MC-4}                  &                            & \cite{Cristani99}                              & \bin \brels{  4}         & \STCC \STSP            \\
              \eestrut \calc{OCC}                   & convex regions             & \cite{Koe02}                                   & \bin \brels{  8}         & \STCT                  \\
              \eestrut \calc{OM-3D}                 &                            & \cite{PET01}                                   & \ter \brels{ 75}         & \STCX{}${}^{\text{c}}$ \\
%       \nrsc \eestrut \calc{OPRA$_n$}              & granularity $n$            & \cite{Moratz06_ECAI,MossakowskiMoratz2011}     & \bin \brels{$\Oof(n^2)$} & \Yes                   \\
        \nrsc \eestrut \calc{OPRA$_n$}              & granularity $n$            & [\citeNP{Moratz06_ECAI}; Mossakowski \& M.\ \citeyearNP{MossakowskiMoratz2011}] & \bin \brels{$\Oof(n^2)$} & \STCC \STSP         \\
              \eestrut \calc{OPRA$_n^*$}            & plus alignment             & \cite{DyL10}                                   & \bin \brels{$\Oof(n^2)$} & \STCC                  \\
        \hline
      \end{tabular}
    \end{centering}
    \par\smallskip
    \rowcolors*{1}{}{}%
    \begin{tabular}{l@{~}l}
    	Legend                          &  \\[1pt] \hline
    	\rule{0pt}{10pt}%
%       Stats & Arity -- (b)inary, (t)ernary -- and number of relation symbols                                        \\
    	%       Sp                      & \Yes (\No):~ Calculus is (not) implemented in SparQ                                                   \\
    	Params                          & Arity -- (b)inary, (t)ernary -- and number of relation symbols                                        \\
    	St                              & Status of availability: \STOO base relations, \STCT composition table, \STCX complexity results       \\
    	                                & \STCC table \emph{and} complexity, \STSP \system{SparQ} implementation, \url{https://github.com/dwolter/sparq} \\
    	${}^{\text{a}}$                 & 2 variants over 5 domains each                                                                        \\
    	${}^{\text{b}}$                 & Not based on abstract partition scheme (violates JEPD over $\Univ\times\Univ$)                        \\
    	${}^{\text{c}}$                 & Original work describes how to compute the composition table                                          \\
    	${}^1$                          & For $n=1$                                                                                             \\
    	${}^2$                          & For $n=2$                                                                                             \\
    	${}^4$                          & For $n=4$, regular version only                                                                       \\ \hline
    \end{tabular}
    \par
  \end{small}
  \caption{Overview of existing spatial and temporal calculi, Part 1}
  \label{tab:calculi_new_2}
\end{table}

\begin{table}
  \begin{small}
    \begin{centering}
      \renewcommand{\tabcolsep}{5pt}
      \rowcolors*{1}{}{lightblue}%
      \begin{tabular}{lllll}
        \hline\rowcolor{medblue}\rule{0pt}{10pt}%
        Variant                              & Description              & Reference(s)                                   & Params                   & St                     \\[1pt]
        \hline
        \rule{0pt}{10pt}%
        \nrsc \eestrut \calc{PC${}_n$}              & $n$ dimensions             & \cite{vilain-kautz-aaai:86}                    & \bin \brels{$3^n$}       & \STCC \STSP{}${}^1$    \\
                                                    &                            & \cite{BC02}                                    &                          &                        \\
              \eestrut \calc{QRPC}             &                          & \cite{GAD13}                                   & \bin \brels{ 48}         & \STOO                  \\
%         \nrsc \eestrut \calc{QTC-B11}               & 1-d variant              & \cite{DBLP:conf/geos/WegheKBM05}               & \bin \brels{  9}         & \STCT \STSP            \\
%         \nrsc \eestrut \calc{QTC-B12}               & ''                       & ibid.                                          & \bin \brels{ 17}         & \STCT \STSP            \\
        \nrsc \eestrut \calc{QTC-B1$x$}, $x\!\!\:=\!\!\:1,2$ & 1-d variants    & \cite{DBLP:conf/geos/WegheKBM05}               & \bin \brels{9,\,27}      & \STCT \STSP            \\
%         \nrsc \eestrut \calc{QTC-B21}               & 2-d variant              & ibid.                                          & \bin \brels{  9}         & \STCT \STSP            \\
%         \nrsc \eestrut \calc{QTC-B22}               & ''                       & ibid.                                          & \bin \brels{ 27}         & \STCT \STSP            \\
%         \nrsc \eestrut \calc{QTC-C21}               & ''                       & ibid.                                          & \bin \brels{ 81}         & \STCT \STSP            \\
%         \nrsc \eestrut \calc{QTC-C22}               & ''                       & ibid.                                          & \bin \brels{305}         & \STOO                  \\
%         \nrsc \eestrut \calc{QTC-C22*}              & w/o angle constraints    & ibid., Sec.\ IV.2.3.2                          & \bin \brels{209}         & \STCT \STSP            \\
        \nrsc \eestrut \calc{QTC-B2$x$}, \calc{-C2$x$}     & 2-d variants             & ibid.                                          & \bin \brels{9--305}      & \STCT \STSP            \\
              \eestrut \calc{QTC-N}                 & network variant          & \cite{DBC+11}                                  & \bin \brels{ 17}         & \STOO{}${}^{\text{c}}$ \\
        \nrsc \eestrut \calc{RCC-5}                 & without tangentiality    & \cite{randell-cui-cohn-KR:92}                  & \bin \brels{  5}         & \STCC \STSP            \\
        \nrsc \eestrut \calc{RCC-8}                 & with tangentiality       & ibid.                                          & \bin \brels{  8}         & \STCC \STSP            \\
%         \nrsc \eestrut \calc{RCC-15}                & concave regions          & \cite{cohn-GeoI:97}                            & \bin \brels{ 15}         & \STOO                  \\
%         \nrsc \eestrut \calc{RCC-23}                & ''                       & ibid.                                          & \bin \brels{ 23}         & \STOO                  \\
        \nrsc \eestrut \calc{RCC-15, -23}           & concave regions          & \cite{cohn-GeoI:97}                            & \bin \brels{15,\,23}     & \STOO                  \\
        \nrsc \eestrut \calc{RCC-62}                & ''                       & \cite{OFL07}                                   & \bin \brels{ 62}         & \STOO                  \\
%         \nrsc \eestrut \calc{RCC*-7} \scalebox{.9}[1]{{\footnotesize aka \calc{CBM*}}} & $+$ lower-dim. features  & \cite{CC14}        & \bin \brels{  7}         & \STCT                  \\
%         \nrsc \eestrut \calc{RCC*-9}                & ''                       & ibid.                                          & \bin \brels{  9}         & \STCT                  \\
        \nrsc \eestrut \calc{RCC*-7, -9}            & $+$ lower-dim. features  & \cite{CC14}                                    & \bin \brels{7,\,9}       & \STCT                  \\
%         \nrsc \eestrut \calc{RCC-3D}                & with occlusion           & \cite{SL14}                                    & \bin \brels{ 13}         & \STOO                  \\
%         \nrsc \eestrut \calc{VRCC-3D}               & ''                       & ibid.                                          & \bin \brels{ 17}         & \STOO{}${}^{\text{c}}$ \\
%               \eestrut \calc{VRCC-3D+}              & ''                       & ibid.                                          & \bin \brels{ 37}         & \STOO{}${}^{\text{c}}$ \\
              \eestrut \calc{(V)RCC-3D(+)}          & with occlusion           & \cite{SL14}                                    & \bin \brels{13--37}      & \STOO{}${}^{\text{c}}$ \\
              \eestrut \calc{RCD}                   &                          & \cite{NavarreteEtAl13}                         & \bin \brels{ 36}         & \STCC \STSP            \\
              \eestrut \calc{RfDL-3-12}             &                          & \cite{KS08}                                    & \bin \brels{1772}        & \STOO                  \\
              \eestrut \calc{ROC-20}                &                          & \cite{Randell:2001ww}                          & \bin \brels{ 20}         & \STOO                  \\
              \eestrut \calc{SIC}                   &                          & \cite{Freksa92b}                               & \bin \brels{ 13}         & \STOO{}${}^{\text{c}}$ \\
        \nrsc \eestrut \calc{STAR$_n$}              & granularity $n$          & \cite{renz-mitra-PRICAI:04}                    & \bin \brels{$\Oof(n)$}   & \STCX{}${}^{\text{c}}$ \\
              \eestrut \calc{STAR$_n^r$}            & revised variants         & ibid.                                          & \bin \brels{$\Oof(n)$}   & \STCC \STSP{}${}^4$    \\
              \eestrut \calc{SV$_n$}                & granularity $n$          & \cite{lee-renz-wolter-IJCAI:13}                & \bin \brels{$\Oof(n)$}   & \STCX                  \\
              \eestrut \calc{TPCC}                  &                          & \cite{MoratzR08}                               & \ter \brels{ 25}         & \STCC \STSP            \\
%       \nrsc \eestrut \calc{TPR-p}                 & for points               & [\citeNP{CB06}; C.\ et al. \citeyearNP{CSBT10}] & \ter \brels{ 7}         & ..                     \\
        \nrsc \eestrut \calc{TPR-p}                 & for points               & [Clementini et al.\ \citeyearNP{CB06,CSBT10}]  & \ter \brels{  7}         & \STCT                  \\
              \eestrut \calc{TPR-r}                 & for regions              & ibid.                                          & \ter \brels{ 34}         & \STOO{}${}^{\text{c}}$ \\
              \eestrut \calc{VR}                    &                          & \cite{TDFC07}                                  & \ter \brels{  7}         & \STCT                  \\
        \hline
      \end{tabular}
    \end{centering}
    \par
  \end{small}
  \caption{Overview of existing spatial and temporal calculi, Part 2. Legend in Tab.~\ref{tab:calculi_new_2}}%
  \label{tab:calculi_new_3}%
\end{table}

\begin{table}
  \begin{small}
%     \begin{adjustbox}{width=\textwidth,totalheight=\textheight,keepaspectratio}
    \begin{adjustbox}{scale=.88}
%     \begin{adjustbox}{width=\textheight,totalheight=\textwidth,keepaspectratio,angle=90}
    \parbox{\textwidth}{%
    \rowcolors*{1}{}{lightblue}%
    \begin{tabular}{llllll}
      \hline\rowcolor{medblue}\rule{0pt}{10pt}%
      Abbrev.\                           & \multicolumn{2}{l}{Complexity${}^1$}             & Decision procedure${}^2$           & Largest known                    & and its                                                  \\
      \rowcolor{medblue}                 & \multicolumn{2}{l}{(atomic QCSP)}                & (atomic QCSP)                      & tractable subalgebra${}^3$       & coverage${}^4$                                           \\[1pt]
      \hline\rule{0pt}{10pt}%
            \estrut \calc{1,2-cross}     & NPh            & [WL10]                          & PS                             &    --                            &    --                                                    \\[1pt]
      \nrsc \estrut \calc{9-int}         & NPc            & [SSD03]                         & recognizing     & --                                &                                                 -- \\
            \estrut                      &                &                                 &  string graphs [SSD03]                              &                                  &                                                          \\[1pt]
      \nrsc \estrut \calc{BA${}_n$}      & $\Landau{n^3}$ & [BCC02]                         & AC                                 & Strongly preconvex               &                                                          \\
                                         &                &                                 &                                    & relations [BCF99]                &                                                          \\[1pt]
%           \estrut \calc{CDA}                  &                &                                 &                                    &                                  &                                                          \\[1pt]
            \estrut \calc{CDC}                  & $\Landau{n^3}$ & [Lig98]                         & AC                                 &  pre-convex relations            &  $\geq$ 25\%                                             \\[1pt]
      \nrsc \estrut \calc{CDR}                  & $\Landau{n^3}$ & [LZLY10]                        & dedicated [LZLY10]                 &                                  &                                                          \\
            \estrut \calc{cCDR}                 & NPc            & [LL11]                          & dedicated [LZLY10]                 & --                               & --                                                       \\[1pt]
            \estrut \calc{CI}                   & $\Landau{n^3}$ & [BO00]                          & AC                                 & nice relations                   & 0.75\textperthousand                                     \\
            \estrut \calc{\CYCt}                & $\Landau{n^4}$ & [IC00]                          & strong 4-consistency               & $\mathcal{CT}_t$                 & 0.01\textperthousand                                     \\
%           \estrut \calc{EPRA$_n$}   & ??             & ??                              & PS\cite{MoW12}                 & ??                               &                                                          \\[1pt]
            \estrut \calc{DepCalc}              & $\Landau{n^3}$ & [RS05]                          & AC                                 &  $\tau_{28}$ [RS05]              & 87.5\% [RS05]                                            \\[1pt]
            \estrut \calc{DIA}                  & $\Landau{n^3}$ & [Ren01]                         & AC                                 & $\mathcal{H}^\pm$ (M) (ORD-Horn) &                                                          \\[1pt]
      \estrut \calc{DRA$_{\textit{c/f/fp}}$}  & NPh            & [WL10]                          & PS                             & --                               &  --                                                      \\[1pt]
%             \estrut \calc{\DRAfp}               & NPh            & [WL10]                          & PS                             & --                               &  --                                                      \\[1pt]
            \estrut \calc{\DRA-conn}            & $\Landau{n^3}$ & \factum{dra-conn}               & AC                                 & \calc{\DRA-conn}                        & 100\%                                                    \\[1pt]
%           \estrut \calc{\EPRA$_m$}            & ??             & ??                              & PS \cite{moratz-wallgruen}     & \calc{\DRA-conn}                        & 100\%                                                    \\[1pt]
            \estrut \calc{EIA}                  & P              & \factum{fact:eia}               & translation to \calc{\INDU}                &                                  &                                                          \\[1pt]
      \nrsc \estrut \calc{GenInt}               & P              & [Con00]                         & AC                                 & strongly pre-convex              & $\ll 1$\textperthousand\ for 3-intvls                    \\
                                         &                &                                 &                                    & general relations                & \factum{GenInt}                                          \\[1pt]
      \nrsc \estrut \calc{IA}            & $\Landau{n^3}$ & [VKvB89]                        & AC                                 & ORD-Horn                         & 10.6\%                                                   \\
                                         &                &                                 &                                    & [NB95,\,KJJ03]                   &                                                          \\[1pt]
      \nrsc \estrut \calc{\INDU}         & P              & [BCL06]                         & translation to                     & strongly pre-convex              & 13.6\%                                                   \\
                                         &                &                                 & Horn-ORD SAT                       & relations                        &                                                          \\[1pt]
            \estrut \calc{\LR}           & NPh            & [WL10]                          & PS                             & --                               &  --                                                      \\[1pt]
            \estrut \calc{MC-4}          & P              &                                 & dedicated [Cri99]                  & M-99                             & 75.0\%                                                   \\[1pt]
            \estrut \calc{OM-3D}         & NPh            & \factum{fact:om3d}              & PS                             & --                               &  --                                                      \\
            \estrut \calc{OPRA$_1^{(*)}$}& NPh            & [WL10]                          & PS                             & --                               &  --                                                      \\[1pt]
%             \estrut \calc{OPRA$_m^*$} & NPh            & [WL10]                          & PS                             & --                               &  --                                                      \\[1pt]
            \estrut \calc{PC$_m$}        & $\Landau{n^2}$ & [vB92]                          & dedicated                          & \calc{PC$_m$}                    & 100\% [VK86]                                             \\[1pt]
      \nrsc     \estrut \calc{RCC-5${}^{\text{a}}$} & $\Landau{n^3}$ & [Ren02]                         & AC [JD97]                          & $R_5^{28}$ [JD97]                & 87.5\% [JD97]                                            \\
            \estrut \calc{RCC-8${}^{\text{a}}$} & $\Landau{n^3}$ & [Ren02]                         & AC [Ren02]                         & $\widehat{\mathcal{H}}_8$ [Ren99] & 62.6\% [Ren99]                                          \\[1pt]
%       \nrsc \estrut \calc{RCC-8${}^{\text{b}}$} & NPh,           & [KPWZ10]                        & dedicated                          & --                               &  --                                                      \\
%                                          & PSpace         &                                 &                                    &                                  &                                                          \\[1pt]
            \estrut \calc{RCD}           & $\Landau{n^3}$ & [NMSC13]                        & translat.\ to \calc{IA}; AC        & convex relations                 & $\lll$ 0.01\textperthousand                              \\[1pt]
      \nrsc \estrut \calc{STAR${}_m$}    & P              & [LRW13]                         & LP                                 & convex relations \factum{star-convex}& $m=4:$ $<$1\%                                        \\
      \nrsc \estrut \calc{STAR${}^r_m$} ${}^{\text{b}}$ % {\small $m\!<\!3$} 
                                         & $\Landau{n^3}$ & [RM04]                          & AC                                 & convex relations                 & $m=3:$ 28\%                                              \\
%             \estrut \calc{STAR${}^r_m$} ${}^{\text{c}}$ % {\small $m\!\geqslant\!3$}
%                                          & $\Landau{n^4}$ & [RM04]                          & 4-consistency                      & convex relations                 & $m=4:$ 12.5\%, $m=8:$ $<$1\%                             \\[1pt]
      \nrsc \estrut \calc{STAR${}^r_m$} ${}^{\text{c}}$ % {\small $m\!\geqslant\!3$}
                                         & $\Landau{n^4}$ & [RM04]                          & 4-consistency                      & convex relations                 & $m=4:$ 12.5\%                                            \\
                                         &                &                                 &                                    &                                  & $m=8:$ $<$1\%                                            \\[1pt]
            \estrut \calc{SV$_{m}$}      & NPc            & [LRW13]                         & LP with search                     &  --                              &  --                                                      \\[1pt]
            \estrut \calc{TPCC}          & NPh            & [WL10]                          & PS                             &  --                              &  --                                                      \\[1pt]
      \hline
    \end{tabular}
    \par\smallskip
    \rowcolors*{1}{}{}%
    \begin{tabular}{l@{~}l}
      ${}^1$ & Complexity of deciding consistency (atomic relations plus universal relation)      \\
      ${}^2$ & Best known algorithm                                                               \\
      ${}^3$ & Name of largest known tractable subalgebra that includes all base relations (LKTS) \\
      ${}^4$ & Percentage of LKTS compared to the complete algebra                                \\[4pt]
      ${}^{\text{a}}$ & For unconstrained regions; connectedness constraints can increase complexity up to PSpace [KPWZ10]              \\
%       ${}^{\text{b}}$ & For connected regions, see \cite{KPWZ10}                                      \\
      ${}^{\text{b}}$ & for $m < 3$                                                               \\
      ${}^{\text{c}}$ & for $m \geqslant 3$%                                                       \\
    \end{tabular}
    \par\smallskip
    }
    \end{adjustbox}

    %% Keine Ergebnisse gefunden für:
    %% \calc{\EPRA}, \calc{RCC-23}, \calc{QTC}, \calc{\ABA}, \calc{VR}, \calc{TPR}
    %% Kann man zeigen: \calc{RfDL-3-12} ist NPh wegen Reduktion auf DCC; jedoch nicht trivial, da \calc{RfDL-3-12} nur Punkt-zu-Region-Relationen definiert
    %% Das Papier zu \calc{OCC} ist Mumpitz: es wird eine echte RA gefordert und dann behauptet, man könne davon stets als orthogonales Produkt
    %% kombinieren. Stimmt natürlich nicht, wenn die RAs voneinander abhängen (z.B. Größe einer Region und Enthaltensein).
    %% Komputational würde ich darauf tippen, dass \calc{OCC} wie \calc{RCC+PC} als Topologie + relative Größe mit bipath-consistency denselben a-tractable
    %% subset hat...

    \par
  \end{small}
  \caption{Overview of the known complexity landscape of deciding consistency for existing spatial and temporal calculi. Legend: see Table \ref{tab:calculi_reasoning_legend}}
  \label{tab:calculi_reasoning}
\end{table}
  
\begin{table}
  \begin{small}
    \hrule
    \begin{adjustbox}{scale=.885}
%     \begin{adjustbox}{width=\textheight,totalheight=\textwidth,keepaspectratio,angle=90}
    \parbox{\textwidth}{%
    \rowcolors*{1}{}{}%
    \begin{tabular}{ll}
%       Legend   &                                                                               \\[1pt]
%       \hline%
      \rule{0pt}{9pt}%
%       Column 2        & Complexity of deciding consistency (atomic relations plus universal relation) \\
%       Column 3        & Best known algorithm                                                          \\
%       Column 4        & Name of largest known tractable subalgebra that includes all base relations (LKTS)\\
%       Column 5        & Percentage of LKTS compared to the complete algebra                           \\[4pt]
      AC              & Algebraic closure                                                             \\
      ACS             & Algebraic closure plus search                                                 \\
      PS          & (Multivariate) polynomial systems solving \cite{basu_algorithms_2006}     \\
      LP              & Reducible to linear programming and thus polynomial                           \\
      NPc; NPh        & NP-complete; NP-hard (NP-membership unknown)                                  \\
      P; PSpace       & In polynomial time; in polynomial space                                       \\
%       PSpace          & In polynomial space                                                           \\
%      RSG             & Recognizing string graphs \cite{string-graphs} \textbf{Reference!}            \\[4pt]
%       ${}^{\text{a}}$ & Convex relations are tractable; 1.0\% of Allen's relations are convex.  \textbf{Why Allen?}     \\
%                & Maximality unknown.                                                           \\[1pt]
%       ${}^{\text{a}}$ & For unconstrained topological spaces only, see \cite{KPWZ10}                  \\
%       ${}^{\text{b}}$ & Connected regions, see \cite{KPWZ10}                                          \\
%       ${}^{\text{c}}$ & for $m < 3$                                                                   \\
%       ${}^{\text{d}}$ & for $m \geqslant 3$                                                           \\[4pt]
%       \textit{E.1--E.5} & refer to facts stated and proven in the electronic appendix                              \\
%       \hline
    \end{tabular}
    \par\medskip
    \rowcolors*{1}{}{}%
    \begin{tabular}{llll}
%       Key        & Reference                       & Key      & Reference                       \\[4pt]
      {}[BCC02]  & \cite{BalbianiCC02}             & [LZLY10] & \cite{Zhang-etal-AIJ:10}       \\
      {}[BCF99]  & \cite{BCF99_block_alg}          & [NB95]   & \cite{nebel-buerckert-ACM:95}  \\
      {}[BCL06]  & \cite{BCL06}                    & [NMSC13] & \cite{NavarreteEtAl13}         \\
      {}[BO00]   & \cite{BO00}                     & [Ren99]  & \cite{renz-IJCAI:99}           \\
      {}[Con00]  & \cite{condotta-ECAI:00}         & [Ren01]  & \cite{Ren01}                   \\
      {}[Cri99]  & \cite{Cristani99}               & [Ren02]  & \cite{renz:02}                 \\
      {}[GPP95]  & \cite{GPP95}                    & [RM04]   & \cite{renz-mitra-PRICAI:04}    \\
      {}[IC00]   & \cite{DBLP:journals/ai/IsliC00} & [RS05]   & \cite{ragni-scivos-KI:05}      \\
      {}[JD97]   & \cite{JD97}                     & [SSD03]  & \cite{Schaefer03NP}            \\ % Schaefer03Decidable
      {}[KJJ03]  & \cite{krokhin-etal-ACM:03}      & [vB92]   & \cite{vanBeek-AI:92}           \\
      {}[KPWZ10] & \cite{KPWZ10}                   & [VK86]   & \cite{vilain-kautz-aaai:86}    \\
      {}[Lig98]  & \cite{ligozat-JVLC:98}          & [VKvB89] & \cite{Vilain:1989qv}           \\
      {}[LL11]   & \cite{Liu-Li-AIJ:11}            & [WL10]   & \cite{WolterL:2010:Realization4Direction} \\
      {}[LRW13]  & \cite{lee-renz-wolter-IJCAI:13} \\
%       \hline
    \end{tabular}%
    }
    \end{adjustbox}
    \hrule
    \par
  \end{small}
  \caption{Legend for Table \ref{tab:calculi_reasoning}}
  \label{tab:calculi_reasoning_legend}
\end{table}

\section{Algebraic Properties of Spatial and Temporal Calculi}
\label{sec:relation_algebras}

Algebraic properties have been recognized as a formal tool for measuring
the information preservation properties of a calculus
and for providing the theoretical underpinnings for vital optimizations
to reasoning procedures \cite{DBLP:journals/ai/IsliC00,LigozatR04,Due05,DMSW13}.

To start with information preservation,
it is important to distinguish two sources for a loss of information:
one is qualitative abstraction, which maps the perceived, continuous domain
to a symbolic, discrete representation using $n$-ary domain relations and operations
on them (such as composition and permutation operations).
The loss of information associated with this mapping is mostly intended.
To understand the other,
we recall that a spatial (or temporal) calculus consists of
\emph{symbolic} relations and operations, representing
the domain relations and operations.
While the domain operations are known
to satisfy strong algebraic properties, those do not necessarily carry over
to the symbolic operations -- for example, if the operation $\cdot^\homing$ representing 
homing (Section \ref{sec:requirements}) is only abstract or weak,
then there will be symbolic relations $r$ with $(r^\homing)^\homing \neq r$
although, at the domain level, $(R^\homing)^\homing = R$ holds for any $n$-ary relation $R$,
including the interpretation $\varphi(r)$ of $r$. 
This loss of information indicates an unintended structural misalignment between the domain level
and the symbolic level.
Having its roots in the abstraction step, where the set of domain relations and operations is determined,
the information loss becomes noticeable only with the symbolic representation.

If we want to measure how well the
symbolic operations in a calculus preserve information,
we can compare their algebraic properties
with those of their domain-level counterparts.
If they share all algebraic properties,
this indicates that they maximally preserve information.
In addition, algebraic properties seem to supply a finer-grained measure than the mere distinction between
abstract, weak, and strong operations:
there are 14 axioms for binary relation algebras and variants, 
each containing two inclusions or implications
that may or may not hold independently.
% ;
% moreover, the number of axioms grows at least linearly with increasing arity.

Several algebraic properties can be exploited
to justify and implement optimizations in constraint reasoners.
For example, associativity of the composition operation $\diamond$ for binary symbolic relations ensures that,
if the reasoner encounters a path $ArBsCtD$ of length 3,
then the relationship between $A$ and $D$ can be computed ``from left to right''.
Without associativity, it may be necessary to compute $(r \diamond s) \diamond t$ as well as $r \diamond (s \diamond t)$.

In order to study the algebraic properties of spatial and temporal calculi,
the classical notion of a \emph{relation algebra (RA)} \cite{Mad06}
plays a central role \cite{DBLP:journals/ai/IsliC00,LigozatR04,Due05,Mos07}.
The axioms in the definition of an RA reflect the algebraic properties
of the relevant operations on \emph{binary} domain relations -- 
the operations are union, intersection, complement, converse, and binary compositions;
the properties are commutativity, several variants of associativity and distributivity,
and others. These properties have been postulated for binary calculi
\cite{LigozatR04,Due05}, but it has been shown that not all existing calculi satisfy these strong
properties \cite{Mos07}. It is the main aim of this subsection to study the algebraic
properties of existing binary calculi and derive from the results a taxonomy of calculus algebras.

Unfortunately, it is far from straightforward to extend this study to arity 3 or higher:
while algebraic properties of ternary and $n$-ary calculi have been studied \cite{DBLP:journals/ai/IsliC00,SN05,Condotta2006},
we are aware of only one axiomatization for a ternary RA \cite{DBLP:journals/ai/IsliC00},
based on one particular choice of permutation (homing and shortcut)
and composition (the \emph{binary} variant \eqref{eq:composition_3_2_3}).
However, existing calculi are based on different choices of these operations,
and each choice comes with different algebraic properties at the domain level,
for example:
\begin{New}%
  \begin{itemize}
    \item
      Not all permutations are involutive:
      e.g., in the ternary case, 
      we do \emph{not} have $(R^\shortcut)^\shortcut = R$ for all domain relations $R$,
      but rather $((R^\shortcut)^\shortcut)^\shortcut = R$.
%       (however, $\cdot^\homing$ is involutive: $(R^\homing)^\homing = R$ for all $R$).
  \item
%     Each variant of the composition operation has its own neutral element,
%     e.g., already for binary compositions in the ternary case, we have
%     $\id^3_{23}$ (shortcut for $\id^3_{\{2,3\}}$) \todo{If you compositions of ternary relations would be supported by an example with a figure (as suggested in previous comments), now it would be easier to follow these examples.} is the neutral element for $\mathbin{{}_3\circ_2^3}$;
%     that is, $R \mathbin{{}_3\circ_2^3} \id^3_{23} = \id^3_{23} \mathbin{{}_3\circ_2^3} R = R$
%     for all $R$.
%     However, the left-neutral element for $\mathbin{\circ_{\text{FZ}}^3}$ is $\id^3_{12}$;
%     that is, $\id^3_{12} \mathbin{\circ_{\text{FZ}}^3} R = R$.
      Each variant of the composition operation has its own neutral element,
      that is, a relation $E$ such that $R \circ E = E \circ R = R$ for all relations $R$:
      e.g., in the ternary case, $\mathbin{{}_3\circ_2^3}$ (Section~\ref{sec:requirements})
      has $\id^3_{\{2,3\}}$ as the neutral element while $\mathbin{\circ_{\text{FZ}}^3}$ has $\id^3_{\{1,2\}}$.
    \item
      Some variants of the composition operation have stronger properties than others:
      e.g., $\mathbin{{}_3\circ_2^3}$ is associative while $\mathbin{\circ_{\text{FZ}}^3}$ is not.
  \end{itemize}
\end{New}%
Establishing a unifying algebraic framework for $n$-ary spatial and temporal calculi
and determining the algebraic properties of existing calculi
would require a whole new research program. In this survey article,
we will therefore restrict our attention to the binary case for the remainder of this section.

\subsection{The Notion of a Relation Algebra}
\label{sec:RA_def}

The notion of an (abstract) RA is defined in \cite{Mad06}
and makes use of the axioms listed in Table  \ref{tab:relation_algebra_axioms}.
\begin{table}[b]
%   \addtolength{\belowcaptionskip}{-16pt}
  \rowcolors{1}{lightblue}{}%
  \centering
  \begin{small}
  \begin{tabular}{lrcll}
    \hline
    \eestrut \RA1       & $r \cup s$                                                    & $=$ & $s \cup r$                           & $\cup$-commutativity                  \\%[1pt]
    \eestrut \RA2       & $r \cup (s \cup t)$                                           & $=$ & $(r \cup s) \cup t$                  & $\cup$-associativity                  \\%[1pt]
    \eestrut \RA3       & $\overline{\bar r \cup \bar s} \cup \overline{\bar r \cup s}$ & $=$ & $r$                                  & Huntington's axiom                    \\%[1pt]
    \eestrut \RA4       & $r \diamond (s \diamond t)$                                   & $=$ & $(r \diamond s) \diamond t$          & $\diamond$-associativity              \\%[1pt]
    \eestrut \RA5       & $(r \cup s) \diamond t$                                       & $=$ & $(r \diamond t) \cup (s \diamond t)$ & $\diamond$-distributivity             \\%[1pt]
    \eestrut \RA6       & $r \diamond \id$                                              & $=$ & $r$                                  & identity law                          \\%[1pt]
    \eestrut \RA7       & $(r\breve{~})\breve{~}$                                       & $=$ & $r$                                  & $\breve{~}$-involution                \\%[1pt]
    \eestrut \RA8       & $(r \cup s)\breve{~}$                                         & $=$ & $r\breve{~} \cup s\breve{~}$         & $\breve{~}$-distributivity            \\%[1pt]
    \eestrut \RA9       & $(r \diamond s)\breve{~}$                                     & $=$ & $s\breve{~} \diamond r\breve{~}$     & $\breve{~}$-involutive distributivity \\%[1pt]
    \eestrut \RA{10}    & $r\breve{~} \diamond \overline{r \diamond s} \cup \bar s$     & $=$ & $\bar s$                             & Tarski/de Morgan axiom                \\%[1pt]
    \hline
    \eestrut \WA        & $((r \cap \id) \diamond 1) \diamond 1$                        & $=$ & $(r \cap \id) \diamond 1$            & weak $\diamond$-associativity         \\%[1pt]
    \eestrut \SA        & $(r \diamond 1) \diamond 1$                                   & $=$ & $r \diamond 1$                       & $\diamond$ semi-associativity         \\%[1pt]
%   \eestrut \RA{5l}    & $r \diamond (s \cup t)$                                       & $=$ & $(r \diamond s) \cup (r \diamond t)$ & $\diamond$ left-distributivity        \\%[1pt]
    \eestrut \RA{6l}    & $\id \diamond r$                                              & $=$ & $r$                                  & left-identity law                     \\%[1pt]
%   \eestrut & \RA{7w}  & $(r\breve{~})\breve{~}$                                       & \supseteq & $r$                            & weak $\breve{~}$-involution           \\%[1pt]
    \eestrut \PL        & $(r \diamond s) \cap t\breve{~} = \emptyset$                  & $\Leftrightarrow$ & $(s \diamond t) \cap r\breve{~} = \emptyset$  & Peircean law   \\%[1pt]
    \hline
  \end{tabular}
  \par
  \end{small}
  \par\smallskip
 \caption{%
   Axioms for relation algebras and weaker variants \protect\cite{Mad06}.%\protect\\
%    Each line is implicitly preceded by the quantifier prefix $\forall r\,\forall s\,\forall t$.%
 }
  \label{tab:relation_algebra_axioms}
\end{table}
\begin{definition}
  Let $\Rel$ be a set of relation symbols containing $\id$ and $1$ (the symbols for the identity and universal relation),
  and let $\cup$, $\diamond$ be binary and $\bar{~}$, $\breve{~}$ unary operations on $\Rel$.
  The tuple $(\Rel,\cup,\bar{~},1,\diamond,\breve{~},\id)$ is a
  \begin{Itemize}
    \item
      \emph{non-associative relation algebra (NA)} if it satisfies Axioms \RA1--\RA3, \RA5--\RA{10};
    \item
      \emph{semi-associative relation algebra (SA)} if it is an NA and satisfies
      Axiom \SA,
    \item
      \emph{weakly associative relation algebra (WA)} if it is an NA and satisfies
      \WA,
    \item 
      \emph{relation algebra (RA)} if it satisfies \RA1--\RA{10},
  \end{Itemize}
  for all $r,s,t \in \Rel$.
\end{definition}
Clearly, every RA is a WA; every WA is an SA; every SA is an NA. 

In the literature, a different axiomatization is sometimes used, for example in \cite{LigozatR04}.
The most prominent difference is that \RA{10} is replaced by \PL,
``a more intuitive and useful form, known as the Peircean law or De Morgan's Theorem K'' \cite{HH02}.
It is shown in \cite[Section 3.3.2]{HH02} that, given \RA1--\RA3, \RA5, \RA7--\RA9,
the axioms \RA{10} and \PL are equivalent. The implication $\PL \Rightarrow \RA{10}$ does not need \RA5 and \RA8.

All axioms except \PL can be weakened to only one of two inclusions,
which we denote by a superscript ${}^\supseteq$ or ${}^\subseteq$.
For example, $\RA[sup]{7}$ denotes $(r\breve{~})\breve{~} \supseteq r$.
Likewise, we use \PL[right] and \PL[left].
Furthermore, Table \ref{tab:relation_algebra_axioms} contains the redundant axiom \RA{6l}
because it may be satisfied when some of the other axioms are violated. It is straightforward to establish
% that \RA5 and \RA{5l} are equivalent given \RA1, \RA4 and \RA6--\RA9,
that \RA6 and \RA{6l} are equivalent given \RA7 and \RA9. \hfill$\lhd$~\ref{app:R6_R6l_equiv}
% \cite{DMSW13}.

Thanks to Def.\ \ref{def:qualitative_calculus},
certain axioms are satisfied by every calculus:
\begin{fact}
  \label{fact:minimal_axiom_set_implied_by_calculus_def}
  Every qualitative calculus (Def.\ \ref{def:qualitative_calculus}) satisfies
  \RA1--\RA3, \RA5, \RA[sup]7, \RA8, \WA[sup], \SA[sup]
  for all (atomic and composite) relations.
  This axiom set is maximal: each of the remaining axioms
  in Table \ref{tab:relation_algebra_axioms} is not satisfied by some
  qualitative calculus.
  \hfill$\lhd$~\ref{app:minimal_axiom_set_implied_by_calculus_def}
\end{fact}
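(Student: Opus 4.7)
The plan is to split the claim into two parts: (i) verify that the listed axioms hold in every qualitative calculus, and (ii) for each remaining axiom, exhibit a qualitative calculus that violates it.

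For (i), I would group the axioms into three kinds. The Boolean axioms \RA1--\RA3 hold trivially since composite relations form the power-set Boolean algebra on $\Rel$. The distributivity laws \RA5 and \RA8 fall out directly from the union-based extension of $\diamond$ and $\breve{~}$ to composite relations in Definition~\ref{def:qualitative_calculus}; for example, $\diamond(R \cup S, T) = \bigcup_{r \in R,\, t \in T}\diamond(r,t) \cup \bigcup_{s \in S,\, t \in T}\diamond(s,t) = \diamond(R,T) \cup \diamond(S,T)$. The one-sided inclusions \RA[sup]{7}, \WA[sup] and \SA[sup] instead rely on the abstractness properties \eqref{eq:abstract_converse} and \eqref{eq:abstract_composition}: for \RA[sup]{7}, applying \eqref{eq:abstract_converse} twice and using involutivity of the domain-level converse yields $\varphi((r\breve{~})\breve{~}) \supseteq (\varphi(r)^\pi)^\pi = \varphi(r)$, and then JEPD of the base relations together with injectivity of $\varphi$ promote this set-theoretic inclusion to the symbolic inclusion $r \subseteq (r\breve{~})\breve{~}$. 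For \SA[sup], the analogous recipe starts from the easy domain-level fact $\varphi(s) \subseteq \varphi(s) \circ \Univ^2$ (using $v = w$ as intermediate witness for any $(u,w) \in \varphi(s)$), delivering $s \in \diamond(s,1)$ for every $s \in r \diamond 1$, hence $(r \diamond 1) \diamond 1 \supseteq r \diamond 1$. \WA[sup] is obtained in the same way with $r \cap \id$ in place of $r$.

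For (ii), I would supply one counterexample calculus per remaining axiom, drawing on Tables~\ref{tab:calculi_new_2}--\ref{tab:calculi_new_3}. The calculi \calc{CDR} and \calc{RCD} (Example~\ref{exa:CDR_is_not_PS}), whose converse is only abstract, are natural candidates for refuting \RA[sub]{7}: if $\varphi(r\breve{~}) \supsetneq \varphi(r)^\pi$ for some base relation $r$, then taking converse again strictly enlarges $\varphi(r)$, which by JEPD forces $(r\breve{~})\breve{~}$ to contain base relations other than $r$. The same calculi also refute \RA9, \RA{10} and \PL, all of which encode a group-like symmetry of converse that abstract converse breaks. Axioms \RA6 and \RA{6l} fail whenever the base relations do not refine the identity, since then composition with $\id$ unavoidably produces spurious extra base relations; \calc{CDR} again serves as a witness. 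Finally, \RA4 and the full two-sided \WA, \SA fail in calculi whose composition is not strong and whose over-approximation ``grows'' asymmetrically under rebracketing; concrete witnesses can be read off the published composition tables for, e.g., \calc{OPRA$_n$} or \calc{DRA$_f$}.

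The main obstacle lies in the bookkeeping for (ii): each counterexample requires locating a specific relation triple in a specific calculus's composition/converse tables where the inclusion is strict rather than accidentally tight, and one must verify that the chosen calculus really falls under Definition~\ref{def:qualitative_calculus}. A secondary subtlety in (i) is that axioms involving $\id$ (notably \WA[sup]) presuppose that the identity is available as at least a composite relation; for calculi based on an abstract partition scheme one has to read $\id$ as the minimal union of base relations containing the domain-level identity, after which \eqref{eq:abstract_composition} still yields the required $\supseteq$-direction without invoking the full partition-scheme axioms.
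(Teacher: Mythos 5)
Your part (i) is essentially the paper's own argument: \RA1--\RA3 from the power-set Boolean algebra on $\Rel$, \RA5/\RA8 from the union-based extension of the operations, and the one-sided inclusions \RA[sup]7, \WA[sup], \SA[sup] from abstractness (\eqref{eq:abstract_converse}, \eqref{eq:abstract_composition}) lifted to composite relations plus JEPD to pass from $\varphi$-inclusions to symbolic inclusions. Your element-wise derivation of \SA[sup] ($s \in \diamond(s,1)$ via the witness $v=w$) is an equivalent rearrangement of the paper's chain $\varphi((r\diamond 1)\diamond 1) \supseteq \varphi(r\diamond 1)\circ(\Univ\times\Univ) \supseteq \varphi(r\diamond 1)$, and your remark that \WA[sup] is just \SA[sup] applied to the composite relation $r\cap\id$ is sound.

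Part (ii) is where you genuinely diverge, and where there is a gap. The paper does \emph{not} hunt for counterexamples among the surveyed calculi; it constructs two tiny artificial calculi over $\Univ=\{0,1\}$ with explicitly listed converse and composition tables, and checks each remaining \emph{directed half} (\RA[sub]6, \RA[sup]6, \RA[sub]{6l}, \RA[sup]{6l}, \RA[sub]7, \RA[sub]4, \RA[sup]4, \RA[sub]9, \RA[sup]9, \RA[sub]{10}, \RA[sup]{10}, \WA[sub], \SA[sub], \PL[left], \PL[right]) by direct computation. Since the statement already singles out the satisfied halves (\RA[sup]7 etc.), maximality has to be established half by half, and your plan of reading witnesses off real calculi cannot deliver all of them. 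Concretely: for every calculus in the survey whose designated identity symbol satisfies $\varphi(\id)\supseteq\id^2$, abstract composition forces $\varphi(r\diamond\id)\supseteq\varphi(r)\circ\id^2=\varphi(r)$ and hence $r\diamond\id\supseteq r$ by JEPD, so \RA[sup]6 (and likewise \RA[sup]{6l}) \emph{holds} there; the reported \RA6 failures of \calc{cCDR}, \calc{RCD} and \calc{QTC} are all failures of the $\subseteq$-direction. To refute \RA[sup]6 one needs a deliberately pathological calculus whose $\id$ symbol does not even cover the domain identity --- exactly what the paper's second artificial example (with $\varphi(\id)=\{(0,0)\}$ over $\Univ=\{0,1\}$) provides. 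A secondary, practical weakness is that resting the maximality claim on large published tables (\calc{OPRA$_n$}, \calc{DRA$_f$}, \calc{cCDR}) makes it hostage to errors in those tables, a caveat the paper itself raises in Section \ref{sec:properties}; the self-contained two-element constructions avoid this entirely. The rest of your counterexample assignments (weak-but-not-strong converse of \calc{cCDR}/\calc{RCD} for \RA[sub]7 via Lemma \ref{lem:R7_and_strong_converse}, \calc{INDU}/\calc{DRA$_f$}/\calc{OPRA$_n$} for both halves of \RA4, \calc{cCDR} for \SA[sub]) are workable.
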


\subsection{Discussion of the Axioms}
\label{sec:axiom_discussion}

%\todo{Shorten this subsection}

We will now discuss the relevance of the above axioms for spatial and temporal representation and reasoning.
Due to Fact \ref{fact:minimal_axiom_set_implied_by_calculus_def}, we only need to consider
axioms \RA4, \RA6, \RA7, \RA9, \RA{10} (or \PL) and 
% -- in case some of these axioms are not satisfied --
their weakenings \RA{6l}, \SA, \WA.

\myparagraph{\RA4 (and \SA, \WA).}
Axiom \RA4 is helpful for modeling since it allows parentheses in chains of compositions to be omitted. 
%for writing chains of compositions without parentheses,
%which have an unambiguous meaning.
For example, consider the following statement in natural language about
the relative length and location of two intervals $A$ and $D$.
% \begin{quote}
  \emph{
  Interval $A$ is before some equally long interval that is contained in some longer interval
  that meets the shorter interval $D$.%
  }
% \end{quote}
This statement is just a conjunction of relations between $A$, the unnamed intermediary intervals $B,C$,
and $D$. 
Although it intuitively does not matter whether we give priority to the composition of the relations between $A,B$ and $B,C$
or to the composition of the relations between $B,C$ and $C,D$, 
\new{there are calculi such as \calc{\INDU} which do not satisfy Axiom \RA4 -- then
the example statement needs to be interpreted as a Boolean formula consisting of a conjunction over both alternatives.}

%This statement is just a conjunction of relations between $A$, the unnamed intermediary intervals $B,C$,
%and $D$. When we evaluate it,
%it intuitively does not matter whether we give priority to the composition of the relations between $A,B$ and $B,C$
%or to the composition of the relations between $B,C$ and $C,D$.

%However, \calc{\INDU} does not satisfy Axiom \RA4 and, therefore, here the two ways of parenthesizing the above statement
%lead to different relations between $A$ and $D$. 
%This behavior is sometimes attributed to the absence of strong composition, which we will refute in Section \ref{sec:properties}.
\new{We note that violation of \RA4 is independent of composition not being strong,} as shown in Section \ref{sec:properties}. 
Presence of strong composition however implies \RA4 since composition of binary domain relations over \Univ is associative:
\begin{fact}
%   Let $C = (\Rel,\Int,\breve{},\diamond)$ be a qualitative calculus with strong composition.
%   Then $C$ satisfies \RA4.
  Every qualitative calculus where composition is strong satisfies \RA4.
\end{fact}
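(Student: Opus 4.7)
The plan is to lift associativity of the domain-level composition $\circ$ to the symbolic composition $\diamond$ via the interpretation map $\varphi$. Concretely, for arbitrary composite relations $R,S,T \subseteq \Rel$, I would apply $\varphi$ to both sides of the desired equality $R \diamond (S \diamond T) = (R \diamond S) \diamond T$, translate the symbolic compositions to the domain level using strong composition, invoke associativity of $\circ$ on binary domain relations, and then descend back to symbols via injectivity of $\varphi$.

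For the technical execution, I would first invoke Fact~\ref{fact:weak+strong_conv+comp_general}, which lifts the strong-composition identity from atomic relation symbols to all composite relations, giving $\varphi(P \diamond Q) = \varphi(P) \circ \varphi(Q)$ for all $P,Q \subseteq \Rel$. Applied twice, this yields
\[
\varphi(R \diamond (S \diamond T)) = \varphi(R) \circ \bigl(\varphi(S) \circ \varphi(T)\bigr)
\quad\text{and}\quad
\varphi((R \diamond S) \diamond T) = \bigl(\varphi(R) \circ \varphi(S)\bigr) \circ \varphi(T).
\]
Associativity of $\circ$, which follows from Equation~\eqref{eq:binary_composition} by chasing existential witnesses on a middle element, equates the two right-hand sides, so both composite relations have the same image under $\varphi$.

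The last step is to descend back to the symbolic level. Although Definition~\ref{def:qualitative_calculus} demands injectivity of $\varphi$ only on atomic relation symbols, the extension of $\varphi$ to composite relations inherits injectivity: if $R_1 \neq R_2$ are composite, pick some $r \in \Rel$ belonging to one but not the other; the JEPD property of base relations ensures that $\varphi(r)$ is a nonempty subset of the image of one composite relation that is disjoint from the image of the other, so $\varphi(R_1) \neq \varphi(R_2)$. This yields $R \diamond (S \diamond T) = (R \diamond S) \diamond T$ at the symbolic level, which is exactly Axiom~\RA4.

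The main obstacle I anticipate is bookkeeping rather than conceptual: one must make explicit that both the strong-composition property and injectivity of $\varphi$ pass from atomic symbols to composite relations. Once those extensions are in place, the proof collapses to a clean transport-of-structure argument, reflecting that strong composition is precisely what is needed to preserve the associative algebraic structure of $\circ$ faithfully in the symbolic calculus.
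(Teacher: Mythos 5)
Your proposal is correct and follows exactly the route the paper intends: the paper only remarks that the fact holds ``since composition of binary domain relations over $\Univ$ is associative,'' and your argument fills in precisely that sketch by lifting strong composition to composite relations via Fact~\ref{fact:weak+strong_conv+comp_general}, applying associativity of $\circ$, and descending through $\varphi$, whose injectivity on composite relations follows from the JEPD property just as the paper itself uses in the proof of Lemma~\ref{lem:R7_and_strong_converse}. No gaps.
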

%Note that \calc{\INDU} still satisfies the weakenings \SA and \WA of \RA4,
%and we already know from Fact \ref{fact:minimal_axiom_set_implied_by_calculus_def}
%that the inequalities \SA[sup] and \WA[sup] are always satisfied. % by a qualitative calculus.

%\smallskip
Furthermore, already a weakening  \RA[sup]4 or \RA[sub]4 is useful for optimizing reasoning algorithms, allowing 
the ``finer'' composition -- say, $r \diamond (s \diamond t)$ in case of \RA[sub]4 -- to be computed when a chain of compositions needs to be evaluated.

\myparagraph{\RA6 and \RA{6l}.}
%Axioms \RA6 and \RA{6l} do not seem to play a significant role in (optimizing) satisfiability checking,
Presence of an \id relation allows the standard reduction from the correspondence problem to satisfiability:
to test whether a constraint system admits the equality of two variables $x,y$,
one can add an \id-constraint between $x,y$ and test the extended system for satisfiability.

%Furthermore, the absence of an \id relation may lead to an earlier loss of precision.
%For example, assume two variants of the 1D Point Calculus \cite{vilain-kautz-aaai:86}:
%\calc{PC$_{=}$} with the relation symbols \emph{less than} (representing $<$), \emph{equal} ($=$), and \emph{greater than} ($>$),
%interpreted as the natural relations $<,=,>$ over the domain of the reals,
%and its approximation \calc{PC$_{\approx}$} with the relation symbols \emph{less than} ($<$),
%\emph{approximately equal} ($\approx$), and \emph{greater than} ($>$),
%where $\approx$ is interpreted as the set of pairs of points whose distance is below a certain threshold.
%Then, $=$ is the \id-relation of \calc{PC$_{=}$} and $=\diamond=$ results in $\{=\}$,
%whereas \calc{PC$_{\approx}$} has no \id-relation and $\approx\diamond\approx$ results in the universal relation.

\myparagraph{\RA7 and \RA9.}
These axioms allow for certain optimizations in symbolic reasoning, in particular algebraic closure.
%decision procedures for satisfiability based
%on algebraic operations like algebraic closure.
\new{If a relation $r$ satisfies \RA7, then reasoning systems} do not need to store both constraints $A\,r\,B$ and $B\,r'\,A$,
since $r'$ can be reconstructed as $r\breve{~}$ if needed. 
Similarly, \RA9 grants that, when enforcing algebraic closure by using Equation (\ref{eq:revise2})
to refine constraints between variable $A$ and $B$, it is sufficient to compute composition
once and, after applying the converse, reuse it to refine the constraint between $B$ and $A$ too. 
% 
% an alternative way of computing the composition for enforcing algebraic closure
%that are implemented in reasoners such as \system{\SparQ} \cite{WFW+06,Wol09}.
%For example, consider the constraint network
%% \[
%$
%    \{ApB, BqC, CrD, DsA, AtC\}
%$
%% \]
%with nodes $A,B,C,D$ and relations $p,q,r,s,t$.
%To test its consistency, it suffices to check whether the pair $(A,C)$ can be in relation $p \diamond q$, $(r \diamond s)\breve{~}$, and $t$
%at the same time, i.e., whether the intersection of the two composite relations contains $t$.
%To compute 
%% $p \diamond q$, one lookup in the composition table suffices.
%% For
%$(r \diamond s)\breve{~}$, a na\"ive procedure would perform two lookups in the composition and converse table.
%However, exploiting \RA9, an alternative approach pre-computes $r\breve{~}$ and $s\breve{~}$
%and then looks up their composition in the table. This pays off if the network is dense
%and more inverses of compositions with $r$ and/or $s$ need to be computed in the same process.
%%
%(A similar argument goes through for \RA7 if, in the above example, the edge $CrD$ is replaced by $Cr'E$ and $Dr''E$.)
% 
Current reasoning algorithms and their implementations use the described optimizations;
they produce incorrect results for calculi violating \RA7 or \RA9.

\myparagraph{\RA{10} and \PL.}
These axioms reflect that the relation symbols of a calculus
indeed represent binary domain relations, i.e., pairs of elements of a universe.
This can be explained from two different points of view.
\begin{enumerate}
  \item
    If binary domain relations are considered as sets, \RA{10} is equivalent to 
%     \[
    $
      r\breve{~} \diamond \overline{r \diamond s} \subseteq \bar s.
    $
%     \]
    If we further assume the usual set-theoretic interpretation of the composition of two domain relations,
    the above inclusion reads as:
%     \begin{quote}
    \emph{%
      For any $X,Y$,
      if $Z\,r\,X$ for some $Z$ and, $Z\,r\,U$ implies not $U\,s\,Y$ for any $U$,
      then not $X\,s\,Y$.}
%    \end{quote}
    This is certainly true because $X$ is one such $U$.
  \item
    Under the same assumptions,
    each side of \PL says (in a different order) that there can be no triangle $X\,r\,Y, Y\,s\,Z, Z\,t\,X$.
    The equality then means that the ``reading direction'' does not matter, see also \cite{Due05}.
    This allows for reducing nondeterminism in the a-closure procedure,
    as well as for efficient refinement and enumeration of consistent scenarios.
\end{enumerate}
%     \todo{partition scheme $+$ weak converse $\Rightarrow$ \PL?}        

\subsection{Prerequisites for Being a Relation Algebra}
The following correspondence between properties of a calculus
and notions of a relation algebra is due to \citeN{LigozatR04}:
% \begin{proposition}
  every calculus $C$ based on a partition scheme is an NA.
  If, in addition, the interpretations of the relation symbols are \emph{serial} base relations,
  then $C$ is an SA.
% \end{proposition}
% 
% 
Furthermore, \RA7 is equivalent to the requirement that the converse operation is strong.
This is captured by the following lemma.

\begin{lemma}
  \label{lem:R7_and_strong_converse}
  Let $C = (\Rel,\Int,\breve{},\diamond)$ be a qualitative calculus.
  Then the following properties are equivalent.
%   \begin{Enumerate}
%     \item
%       If $C$ has a weak converse, then Axiom \RA{7w} is satisfied for all relations $R \subseteq \Rel$.
%     \item
%       The following are equivalent.
      \begin{Enumerate}
        \item
          $C$ has a strong converse.
        \item
          Axiom \RA{7} is satisfied for all relation symbols $r \in \Rel$.
        \item
          Axiom \RA{7} is satisfied for all composite relations $R \subseteq \Rel$.
      \end{Enumerate}
%   \end{Enumerate}
\end{lemma}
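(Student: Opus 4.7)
\noindent
The plan is to prove the cycle $(3) \Rightarrow (2) \Rightarrow (1) \Rightarrow (3)$, exploiting the abstractness of the converse operation guaranteed by Definition~\ref{def:qualitative_calculus}, the involutivity of the domain-level converse $(\cdot)^\pi$ on $\Univ \times \Univ$, and the injectivity of $\varphi$ extended to composite relations (which itself follows from the JEPD property of the underlying abstract partition scheme).

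The implication $(3) \Rightarrow (2)$ is immediate, since every atomic relation symbol $r$ is a singleton composite relation $\{r\}$.

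For $(2) \Rightarrow (1)$, I would fix an atomic $r \in \Rel$, let $S := r\breve{~} \subseteq \Rel$, and unfold \RA7 via the definition of the extension of $\breve{~}$ to composite relations: $\{r\} = (r\breve{~})\breve{~} = S\breve{~} = \bigcup_{s \in S} s\breve{~}$. Since $\varphi(s)$ is a nonempty base relation and $\varphi(s)^\pi$ is therefore nonempty, abstractness \eqref{eq:abstract_converse} forces $s\breve{~} \neq \emptyset$ for every $s \in S$; combined with $s\breve{~} \subseteq \{r\}$ this yields $s\breve{~} = \{r\}$. Abstractness applied once more gives $\varphi(r) = \varphi(s\breve{~}) \supseteq \varphi(s)^\pi$, and applying $(\cdot)^\pi$ together with the domain-level involution $((\cdot)^\pi)^\pi = \mathrm{id}$ yields $\varphi(r)^\pi \supseteq \varphi(s)$ for each $s \in S$. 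Taking the union over $s \in S$ gives $\varphi(r)^\pi \supseteq \varphi(r\breve{~})$, and combining with the reverse inclusion from \eqref{eq:abstract_converse} establishes \eqref{eq:strong_converse}.

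For $(1) \Rightarrow (3)$, I would first invoke Fact~\ref{fact:weak+strong_conv+comp_general} to lift strong converse from atomic to composite relations, i.e.\ $\varphi(R\breve{~}) = \varphi(R)^\pi$ for every $R \subseteq \Rel$. Then
\[
  \varphi\bigl((R\breve{~})\breve{~}\bigr) \;=\; \varphi(R\breve{~})^\pi \;=\; \bigl(\varphi(R)^\pi\bigr)^\pi \;=\; \varphi(R),
\]
where the last step uses domain-level involution. The conclusion $(R\breve{~})\breve{~} = R$ then follows from the injectivity of the extended $\varphi$, which is a short auxiliary observation: for distinct $R, S \subseteq \Rel$, any $r \in R \triangle S$ contributes the nonempty base relation $\varphi(r)$, and pairwise disjointness of base relations (JEPD) ensures $\varphi(r)$ lies in exactly one of $\varphi(R), \varphi(S)$.

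The main obstacle is the careful bookkeeping in step $(2) \Rightarrow (1)$, in particular justifying that each $s\breve{~}$ for $s \in r\breve{~}$ is nonempty before concluding $s\breve{~} = \{r\}$; everything else reduces to straightforward manipulation of unions and the domain-level identities $(X^\pi)^\pi = X$ and $(\bigcup_i X_i)^\pi = \bigcup_i X_i^\pi$.
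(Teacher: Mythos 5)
Your proposal is correct, and its core mechanism is the same as the paper's: the chain $\varphi(r\breve{~}\breve{~}) \supseteq \varphi(r\breve{~})^{\pi} \supseteq \varphi(r)^{\pi\pi} = \varphi(r)$ combined with injectivity of $\varphi$ and the JEPD property. The decomposition differs slightly --- you prove the cycle $(3)\Rightarrow(2)\Rightarrow(1)\Rightarrow(3)$, while the paper dispatches $(2)\Leftrightarrow(3)$ in one line via distributivity of $\breve{~}$ over $\cup$ and then proves $(1)\Leftrightarrow(2)$ --- but the only substantive divergence is in the direction $(2)\Rightarrow(1)$. There the paper argues by contraposition: if the converse is not strong, then $\varphi(r\breve{~}) \supsetneq \varphi(r)^{\pi}$ for some $r$, and since both the domain-level permutation and the abstract symbolic converse preserve (strict) inclusions, the strictness propagates to $\varphi(r\breve{~}\breve{~}) \supsetneq \varphi(r)$, refuting \RA{7}. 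You instead argue directly: writing $S = r\breve{~}$ and unfolding $S\breve{~} = \bigcup_{s\in S} s\breve{~} = \{r\}$, you force $s\breve{~} = \{r\}$ for each $s \in S$, extract $\varphi(r) \supseteq \varphi(s)^{\pi}$ from abstractness, and recover the missing inclusion $\varphi(r)^{\pi} \supseteq \varphi(r\breve{~})$ by applying the involution and taking the union. Your version is somewhat more laborious (and the nonemptiness bookkeeping for $s\breve{~}$ is actually dispensable, since $\varphi(s)^{\pi} \subseteq \varphi(s\breve{~}) \subseteq \varphi(r)$ already follows from $s\breve{~} \subseteq \{r\}$ by monotonicity of the extended $\varphi$), but it is constructive and exhibits explicitly which inclusion \RA{7} supplies on top of abstractness; the paper's contrapositive is shorter but less informative about where the equality comes from. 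Both are valid.
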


\begin{proof}
      Items (2) and (3) are equivalent due to distributivity of $\breve{~}$ over $\cup$,
      which is introduced with the cases for composite relations in Definition \ref{def:qualitative_calculus}.
      
      For ``(1) $\Rightarrow$ (2)'',
      the following chain of equalities, for any $r \in \Rel$, is due to $C$ having a strong converse:
%       \[
      $
          \varphi(r\breve{~}\breve{~}) = \varphi(r\breve{~})\breve{~} = \varphi(r)\breve{~}\breve{~} = \varphi(r).
      $
%       \]
      Since \Rel is based on JEPD relations and $\varphi$ is injective, this implies that $r\breve{~}\breve{~} = r$.
      
      For ``(2) $\Rightarrow$ (1)'',
      we show the contrapositive.
      Assume that $C$ does not have a strong converse.
      Then $\varphi(r\breve{~}) \supsetneq \varphi(r)\breve{~}$, for some $r \in \Rel$;
      hence $\varphi(r\breve{~})\breve{~} \supsetneq \varphi(r)\breve{~}\breve{~}$.
      We can now modify the above chain of equalities replacing the first two equalities
      with inequalities, the first of which is
      due to Requirement \eqref{eq:abstract_converse} in the definition of the converse (Def.\ \ref{def:qualitative_calculus}):
%       \[
      $
          \varphi(r\breve{~}\breve{~}) \supseteq \varphi(r\breve{~})\breve{~} \supsetneq \varphi(r)\breve{~}\breve{~} = \varphi(r).
      $
%       \]
      Since $\varphi(r\breve{~}\breve{~}) \neq \varphi(r)$,
      we have that $r\breve{~}\breve{~} \neq r$.
%       \qed
%   \end{enumerate}
\end{proof}

% \subsection{Zoo of Qualitative Spatio-Temporal Representations}
% In the following we give a list of qualitative representations proposed and their respective properties. Some of the properties should be backed up by references or our own Lemmata, respectively. 
% 
% The aspects listed in this table are just a first selection---please comment!
% 
% % \include{overview_table}
% 
% \subsection{Table of Calculi}
% 
\subsection{Algebraic Properties of Existing Spatial and Temporal Calculi}
\label{sec:properties}

We differentiate the algebraic properties of individual calculi, aiming to identify those which are abstract relation algebras, and identifying relevant weaker algebraic properties.
%
%In this section, we report on tests for algebraic properties we have performed on spatio-temporal calculi.
%We want to answer the following questions.
%\emph{(1) Which existing calculi correspond to relation algebras?}
%\emph{(2) Which weaker notions of relation algebras correspond to calculi that do not fall under (1)?}
To this end, 
%We examined the corpus of 
we analyzed the calculi
listed in Tables \ref{tab:calculi_new_2}--\ref{tab:calculi_new_3}.
We restrict our selection to the 31 calculi\footnote{For the parametrized calculi \calc{\DRA}, \calc{OPRA}, \calc{QTC}, we count every variant separately.} that
(a) have binary relations -- because the notion of a relation algebra is best understood for binary relations --
and (b) where digital versions of the operation tables are available.
%existing implementation in \system{\SparQ}.

We have written a CASL specification of the axioms listed in Table \ref{tab:relation_algebra_axioms} along with weakenings thereof.
These have been used with \system{\Hets} to determine congruence of calculus and axioms.
Additionally, \system{\SparQ} and its built-in analysis tools have been employed to double-check results.
%then using
%Additionally, \system{\SparQ}  
%These have been used with 
%To answer Questions (1) and (2), we use the axioms for relation algebras listed in Table \ref{tab:relation_algebra_axioms}
%using both the heterogeneous tool set \system{\Hets} \cite{MML07}, for which we have written a CASL specification of the axioms to be tested and \system{\SparQ}.
Due to Fact \ref{fact:minimal_axiom_set_implied_by_calculus_def},
it suffices to test Axioms \RA4, \RA6, \RA7, \RA9, \RA{10} (or \PL)
and, if necessary, the weakenings \SA, \WA, and \RA{6l}.
% \ieditorial[after submission]{TS}{We might be able to show that \WA holds for every calculus $\Rightarrow$ then explain and omit \WA. Check the same for \SA.}

%Weakenings are relevant 
%to capture  notions such as semi-associative or weakly associative algebras,
%or algebras that violate either \RA6 or some of the axioms that imply the equivalence
%of \RA6 and \RA{6l}.
%Because all axioms except \RA{10} contain only operations that distribute over the union $\cup$,
%it suffices to test them for atomic relations only.
%Therefore, we have written a CASL specification of \RA4, \RA6, \RA7, \RA9, \PL, \SA, \WA, and \RA{6l},
%and used a \system{\Hets} parser that reads the definitions of the above listed calculi in \system{\SparQ}
%to test them against our CASL specification.
%In addition, we have tested all definitions against \RA4, \RA6, \RA7, \RA9, \PL, and \RA{6l}
%using \system{\SparQ}'s built-in function \texttt{analyze-calculus}.

%A part of the calculi have already been tested
%by \citeN{Mos07}, 
%%\todo{redundancy in citation, use \protect\texttt{\protect\textbackslash citeN} (throughout the text)}\ ,
%using a different CASL specification based on an equivalent axiomatization from \cite{LigozatR04}.
%He comprehensively reports on the outcome of these tests, and on errors discovered in published composition tables.

The results of our tests are depicted in Figure \ref{fig:algebra_notions}, further details are provided in Appendix \ref{app:descrOfProperties}.
The figure arranges the analyzed calculi as hierarchy, the strongest notion (relation algebra) residing at the top
and the weakest (weakly associative Boolean algebra) at the bottom.
Arrows represent the \emph{is-a} relation; i.e.,
every relation algebra is an ``RA minus id law'' as well as a semi-associative relation algebra
and, via transitivity, a weakly associative Boolean algebra.

With the exceptions of \calc{RCD}, \calc{cCDR} and all \calc{QTC} variants,
all tested calculi are at least semi-associative relation algebras;
most of them are even relation algebras.
Hence, only these calculi enjoy all advantages for representation and reasoning optimizations discussed in Section \ref{sec:axiom_discussion}.
%In particular, current reasoning procedures, which already implement the optimizations described for \RA7 and \RA9, yield correct results for these calculi, and they could be optimized further by implementing the optimizations described for \RA4, \RA{10}, and \PL.
For other groups of calculi, special care in implementations of reasoning procedures need to be taken.
In Section~\ref{subsec:alg} we present a revised algorithm to compute algebraic closure that respects all eventualities.

The three groups of calculi that are SAs but not RAs are the Dipole Calculus variant \calc{\DRAf} (refined \calc{\DRAfp} and coarsened \calc{\DRA-conn} are even RAs!), as well as \calc{\INDU} and \calc{OPRA$_m$} for at least $m=1,\dots,8$.
These calculi do not even satisfy one of the inclusions $\RA[sup]4$ and $\RA[sub]4$, which implies that the reasoning optimizations described in Section \ref{sec:axiom_discussion} for Axiom \RA4 cannot be applied. %, but this is the only disadvantage of these calculi over the others.
As a side note, 
our observations suggest that the meaning of the letter combination ``RA'' in the abbreviations ``\calc{\DRA}'' and ``\calc{OPRA}'' should stand for ``Reasoning Algebra'', not for ``Relation Algebra''.

%The three groups of calculi that are SAs but not RAs are the \calc{Dipole Calculus} variant \calc{\DRAf} (variants \calc{\DRAfp} and \calc{\DRA-conn} are even RAs!), as well as \calc{\INDU} and \calc{OPRA$_m$} for at least $m=1,\dots,8$.

In principle, it cannot be completely ruled out that associativity is reported to be violated due to errors in either
the operation tables published or the experimental setup. This even applies to non-violations,
although it is much more likely that errors cause sporadic violations than systematic non-violations.
In the case of \calc{\DRAf}, \calc{\INDU} and \calc{OPRA$_m$}, $m=1,\dots,8$,
the relatively high percentage of violations make implementation errors seem unlikely to be the cause.
However, to obtain certainty that these calculi indeed violate \RA4,
one has to find concrete counterexamples and verify them using the original definition of the respective calculus.
For \calc{\DRAf} and \calc{\INDU}, this has been done in the literature \cite{MoratzEtAl2011,BCL06}.
Interestingly, the violation of associativity has been attributed to the converse or composition not being strong.
We remark, however, that composition cannot be the culprit
because, for example, \calc{\DRAfp} has an associative, but only weak, composition operation. While \calc{\DRAfp} has been proven to be associative
due to strong composition in \cite{MoratzEtAl2011},
for \calc{OPRA$_m$}, it can be shown that \emph{none} of the variants for any $m$ are associative (see \cite{MossakowskiMoratzLuecke}).

The \calc{B}-variants of \calc{QTC} violate only the identity laws \RA6, \RA{6l}.
As observed in \cite{Mos07}, it is possible to add a new \id relation symbol,
modify the interpretation of the remaining relation symbols such that they become JEPD,
and adapt the converse and composition tables accordingly, thus obtaining relation algebras.

The \calc{C}-variants of \calc{QTC} additionally violate \RA4, \RA9, \RA{10}, and \PL.
Consequently, most of the reasoning optimizations described in Section \ref{sec:axiom_discussion}
cannot be applied to the \calc{C}-variants of \calc{QTC}. %; hence, reasoning with these calculi is expected to be less efficient
%than with the calculi described so far.
The remarkably few violations of \RA9, \RA{10}, and \PL might be due to errors in the composition table,
but the non-trivial verification is part of future work.
% \todo{TS: and I still find the difference between the c- and b-versions suspicious -- but Frank is checking their definitions. I'm also suspicious about \calc{CDC}.}

\calc{cCDR} and \calc{RCD} are the only calculi with a weak converse in our tests.
\calc{cCDR} satisfies only \WA in addition to the axioms that are always satisfied by a Boolean algebra with distributivity.
% We call the corresponding notion ``weakly associative Boolean algebra''.
Hence, \calc{cCDR} enjoys none of the advantages for representation and reasoning discussed before. %in Section \ref{sec:axiom_discussion}.
Similarly to the \calc{C}-variants of \calc{QTC}, the relatively small number of violations of \PL may be due to errors in the tables published.
\calc{RCD} additionally satisfies \RA4. % and therefore corresponds to what we call an associative Boolean algebra.
Since both calculi satisfy neither \RA7 nor \RA9, current reasoning algorithms and their implementations
yield incorrect results for them, as seen in Section \ref{sec:axiom_discussion}.

% \begin{figure}[t]
% \centering
%   \includegraphics[width=0.9\textwidth,trim=37 264 37 0,clip]{}
%   \caption{Old overview of algebra notions and calculi tested}
%   \label{fig:algebra_notions_old}
% \end{figure}
% 
\begin{figure}[t]
\centering
  \includegraphics{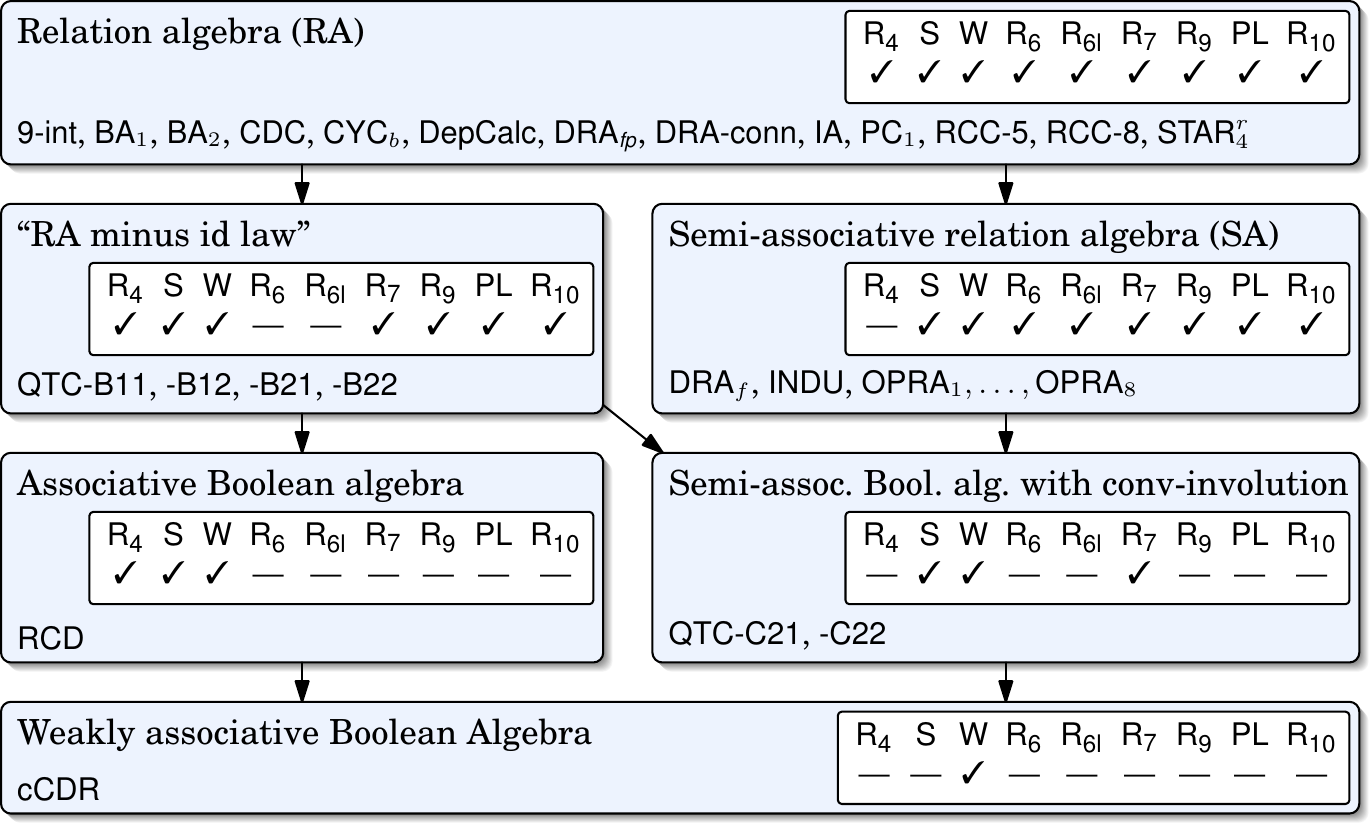}
  \caption{Overview of algebra notions and calculi tested}
  \label{fig:algebra_notions}
\end{figure}

\begin{algorithm}
\DontPrintSemicolon
\SetKwComment{Comment}{\hspace*{\fill}~}{}
\SetKw{assert}{assert}
\SetKwProg{Fn}{Function}{ :}{}
\SetKwFunction{lookup}{LOOKUP}
\SetKwFunction{revise}{REVISE}
\SetKwFunction{enqueue}{ENQUEUENEW}
\SetKwFunction{dequeue}{DEQUEUE}
\SetKwFunction{makequeue}{MAKEQUEUE}
\SetKwFunction{aclosure}{A-CLOSURE}
\SetKwRepeat{DoForEach}{do}{foreach}
\SetKwData{Update}{update}
\Fn(\Comment*[f]{---------- RETRIEVE RELATION FROM CONSTRAINT MATRIX ----------}){\lookup$(C,i,j,s)$%\begin{picture}(0,0)\put(60,0){\Comment*[r]{retrieve relation from constraint matrix}}\end{picture}
}{
  \eIf{$s \vee (i<j)$}{
    \Return $C_{i,j}$\Comment*[r]{complete matrix stored?}
  }{
    \Return $(C_{j,i})\breve{~}$
  }
}

\vspace*{\baselineskip}
\Fn(\Comment*[f]{---------- REVISE RELATION $r_{i,j}$ ACCORDING TO EQ.~\eqref{eq:revise} ----------}){\revise $(C,i,j,k,s)$%\begin{picture}(0,0)\put(40,0){\Comment*[r]{revise relation $r_{i,j}$ according to Eq.~\ref{eq:revise}}}\end{picture}
}{%\Comment*[r]{revise relation $r_{i,j}$ according to Eq.~\ref{eq:revise}}
  $u \gets \texttt{false}$ \Comment*[r]{update flag to signal whether relation was updated}
  $r \gets C_{i,j} \cap \lookup(C,i,k,s) \diamond \lookup(C,k,j,s)$\;
  \If{$\text{\RA{9} does not hold} \vee s$}{\label{alg:ra9start}
    $r' \gets \lookup(C,j,i,s) \cap (\lookup(C,j,k,s) \diamond \lookup(C,k,i,s))$\;
    $r \gets r \cap r'\breve{~}$\;
    $r' \gets r' \cap r\breve{~}$\;
    \If{$r'\neq C_{j,i}$}{
      \assert $r'\neq\emptyset$ \Comment*[r]{stop if inconsistency is detected}
      $u \gets \texttt{true}$\;
      $C_{j,i} \gets r'$\;
    }
  } \label{alg:ra9end}
  \If{$r\neq C_{i,j}$}{
    \assert $r\neq\emptyset$ \Comment*[r]{stop if inconsistency is detected}
    $u \gets \texttt{true}$\;
    $C_{i,j} \gets r$\;
  }
  \Return $(C,u)$\;
}

\vspace*{\baselineskip}
\Fn(\Comment*[f]{---------- MAIN ALGORITHM ----------}){\aclosure $(\mathcal{V}, C=\{C_{i,j}| i,j \in \mathcal{V}\})$}{
 \For(\Comment*[f]{Enforce strong 2-consistency}){$i,j\in\mathcal{V}$}{$C_{i,j}\gets C_{i,j} \cap C_{j,i}^\smile$\;}

%  \vspace*{.2\baselineskip}
 \eIf(\Comment*[f]{full $|\mathcal{V}|\times|\mathcal{V}|$ matrix must be stored}){$R_7$ does not hold}{
   $s \gets \mathrm{\bf True}$\;
   %$Q \gets \makequeue(\{(i,j)| i,j \in \mathcal{V} \})$\;
   $Q \gets \text{ queue with elements } \{(i,j)| i,j \in \mathcal{V}\})$\;
 }(\Comment*[f]{only triangular matrix is stored}){
   $s \gets \mathrm{\bf False}$\;
   %$Q \gets \makequeue(\{(i,j)| i,j \in \mathcal{V}, i<j \})$\;
   $Q \gets \text{ queue with elements } \{(i,j)| i,j \in \mathcal{V}, i<j \})$\;
 }

%  \vspace*{.2\baselineskip}
 \While{$Q$ not empty}{
%   $(i,j) \gets \dequeue(Q)$\;
    dequeue $(i,j)$ from $Q$\;
   \For{$k \in \mathcal{V}, k\neq i, k\neq j$}{
   $(C,u) \gets \revise(C,i,k,j,s)$\;
   \If{$u$}{
   \eIf{$s$}{
      %$\enqueue(Q,i,k)$\Comment*{enqueue unless already in queue}
      enqueue $(i,k)$ in $Q$ unless already in queue
      }(\Comment*[f]{$R_7 \Rightarrow$ only one of $(i,k)$ and $(k,i)$ is required}){
      %$\enqueue(Q,\min\{i,k\},\max\{i,k\})$\;      
      enqueue $(\min\{i,k\},\max\{i,k\}))$ in $Q$ unless already in queue
      }
   }
%     \If{$s \vee (i<k)$}{
%       $C^* \gets C_{i,k} \cap (C_{i,j} \diamond C_{j,k})$ \Comment*[r]{$(i,j)$ as first rel}\;
%       \If{$C^* \neq C_{i,k}$ \Comment*[r]{does it get refined?}} {
%       	  \revise(C,i)\;
%          %$C_{i,k} \gets C^*_{i,k}$\;
%          $Q \gets \textsc{EnqueueNew}(Q,(i,k))$ \Comment*[r]{do not enqueue twice}\; 
%       }
%     }
   $(C,u) \gets \revise(C,k,j,i,s)$\;
   \If{$u$}{
   \eIf{$s$}{
      %$\enqueue(Q,k,j)$\Comment*[r]{enqueue unless already in queue}
      enqueue $(k,j)$ in $Q$ unless already in queue
      }(\Comment*[f]{$R_7 \Rightarrow$ only one of $(i,k)$ and $(k,i)$ is required}){
      %$\enqueue(Q,\min\{k,j\},\max\{k,j\})$\Comment*[r]{enqueue unless already in queue}
      enqueue $(\min\{k,j\},\max\{k,j\}))$ in $Q$ unless already in queue
      }
      }

%     \If{$s\vee (k<j)$}{
%       $C^*_{k,j} \gets C_{k,j} \cap (C_{k,i} \diamond C_{i,j})$ \Comment*[r]{$(i,j)$ as second rel}\;
%       \If{$C^*_{k,j} \neq C_{k,j}$ \Comment*[r]{does it get refined?}} {
%         $C_{k,j} \gets C^*_{k,j}$\;
%         $Q \gets \textsc{EnqueueNew}(Q,(k,j))$ \Comment*[r]{do not enqueue twice}\;
%       }
%     }
   }
 }
 \Return $C$\;
}

\caption{Universal algebraic closure algorithm {\tt A-CLOSURE} \label{alg:aclosure}}
\end{algorithm}

\subsection{Universal Procedure for Algebraic Closure}\label{subsec:alg}
We noted in Section \ref{sec:axiom_discussion}
that existing descriptions %\cts{can we cite something?}{}\cdw{let's not cite since we could only pick some arbitrary papers}{}
and implementations of a-closure (e.g., in \system{GQR} and \system{SparQ})
use optimizations based on certain relation algebra axioms.
Our analysis in Section \ref{sec:properties} reveals that 
there are calculi which violate some of these axioms, e.g., \RA{9};
hence those optimizations lead to incorrect results.
In Algorithm \ref{alg:aclosure} we present a universal algorithm that computes a-closure correctly for all calculi
and uses optimizations only when they are justified.
Its input is a graph $(\mathcal{V},C)$ representing a constraint network,
and $C_{i,j}$ denotes the relation between the $i$-th and $j$-th node ($r_{x,y}$ in Eq.~\eqref{eq:revise}).
Its main control structure
is that of the popular path-consistency algorithm PC-2 \cite{mackworth-AI:77}.
Algorithm \ref{alg:aclosure}
 enforces 2- and 3-consistency
and relies on its input being 1-consistent by implicitly assuming %the identity constraint
all $C_{i,i}$ to cover identity.

Algorithm \ref{alg:aclosure}'s main function is {\tt A-CLOSURE}, 
which employs a queue to store constraint relations that may give rise to
an application of the refinement operation according to Eq.~\eqref{eq:revise}.
The function {\tt REVISE} implements Eq.~\eqref{eq:revise}.
If \RA9 is violated (the converse is not distributive over composition), two steps are necessary
to refine $C_{i,j}$ -- one via computing $C_{j,i}\breve{~}$ independently.
In addition, both {\tt A-CLOSURE} and {\tt REVISE} exploit conformance of a calculus with \RA7 (strong converse)
to halve the space for storing the constraints. % by around 50\%.
Flag $s$ indicates whether full storage is required.
If \RA7 is satisfied ($s$ is false), then $C_{i,j}$ can be obtained by computing $C_{j,i}\breve{~}$;
this is done in the auxiliary function {\tt LOOKUP}.

%It is possible to incorporate the observations about axioms \RA4 and \RA{10}
%in Section \ref{sec:axiom_discussion} to accelerate the computation of a-closure.
%Such optimizations have to be applied to variants of PC-2 with more sophisticated control structures;
%a detailed analysis of optimizations and their impact
%is beyond the scope of this article.

%For clarity and brevity of the presentation we stick to the well-known but simple control structure of PC-1. 
%A real implementation would use an advanced control structure to avoid unnecessary invocations of the \textsc{revise} function, i.e., to use at least PC-2 \cite{mackworth-AI:77}.
%Conformance with \RA{7} allows CSP storage to be restricted (flag $s$ in the algorithm), while  violation of \RA{9} requires two computations for the refinement operation Eq. \ref{eq:rel-refinement2}, namely $C_{i,j} \diamond C_{j,k}$ and $(C_{j,k}\breve{~} \diamond C_{i,j}\breve{~})\breve{~}$ (lines \ref{alg:ra9start}--\ref{alg:ra9end}). \RA{4} and \RA{10} are not used by the algorithm, since
%this would complicate the algorithm unduly.

% \input{backup_of_quantitative}

\section{Combination and Integration}
\label{sec:integration+combination}

Although qualitative calculi and constraint-based reasoning are predominant features of
qualitative knowledge representation languages,
they are rarely used by themselves in applications.
For example, many applications involve several aspects of spatial and temporal knowledge simultaneously,
e.g., topology and orientation of spatial objects.
Others require additional forms of symbolic reasoning,
such as logical reasoning. These requirements can best be solved
by combining calculi
or integrating them with other symbolic formalisms.
In this section we review the interaction of qualitative calculi with other components
of knowledge representation languages. %important approaches to combination and integration.

% \subsection{Combination of Qualitative Calculi}
\subsection{Qualitative Calculi in Constraint-Based Knowledge Representation Languages}
\label{sec:combination_of_calculi}

The simplest case of a qualitative knowledge representation language is a single qualitative calculus.
Sometimes further elements of constraint languages are used in addition,
for example, constants and difference operators as in the case of PIDN \cite{pujari-sattar-IJCAI:99},
or a restricted form of disjunction \cite{LLW13}.

\begin{New}
  If several aspects of spatial and temporal knowledge are to be modeled,
  then combinations of calculi are relevant
  that make the interdependencies between the involved domains accessible.
  \citeN{WW09} identify two general approaches to such combinations and reasoning therein:
  \emph{loose integration} 
  is based on the simple cross product of the base relations
  plus \emph{interdependency constraints} \cite{GR02,westphal-woelfl-ECAI:08},
  and \emph{tight integration} designs a new calculus internalizing the semantic interdependencies \cite{WW09}.
  For example, \calc{\INDU} combines
  % Allen's Interval Algebra and the Point Calculus.
  \calc{IA} and \calc{PC${}_{1}$} in a tight way, reducing the $\text{13} \times \text{3}$ pairs of relations to the 25 semantically possible.
  A combination of \calc{RCC-8} with \calc{IA} was introduced in \cite{Gerevini02_RCC_IA};
  several combinations of \calc{RCC-8} with direction calculi
  have been analyzed \cite{liu-etal-IJCAI:09,Cohn14_TopCardDirRel}.
  In general, combinations do not inherit algebraic and reasoning properties from their
  constituent calculi (cf.\ Fig.~\ref{fig:calculi-expressivity-overview} for \calc{INDU}, \calc{PC${}_1$}, \calc{IA}).
\end{New}

% \citeN{Hernandez94:qualitative} describes the use of toplogical and orientation relations
% to describe the arrangement of 2-d objects as seen from some point of view in a third dimension.
% The qualitative relations are taken from \calc{RCC-8} and \calc{CDC} for the two aspects.
% Although Hern\'andez's work does not result in a dedicated calculus,
% it thoroughly examines the effects of constraining one aspect on reasoning in the other.
\citeN{Hernandez94:qualitative} describes the use of topological and orientation relations,
which does not result in a dedicated calculus,
but reveals the effects of constraining one aspect on reasoning in the other.

% Other approaches to combining abstractions involve methods not integrated with symbolic manipulation of qualitative knowledge.
Alternative ways to solve the combination problem include
formalizing the domain and qualitative relations in an abstract logic
-- which typically are computationally more expensive --
or applying the efficient paradigm of linear programming
to qualitative calculi over real-valued domains \cite{KreutzmannW:2014:AndOrLP}.
% The approach can handle qualitative abstractions that can be approximated with step-wise linear functions and calculi which involve relations that can be approximated step-wise by linear inequalities. 

\subsection{Qualitative Relations and Classical Logics: Spatial Logics}
\label{sec:combination_with_classical_logics}
%\todo[inline]{Kurz propositional closure erwaehnen, dann auf Axiomatisierung in FOL eingehen. Was ist mit Modallogik a la Kontchakov? TS: Da gibt's noch mehr Arbeiten, die ML und QSTR zusammenbringen, z.B. Bennett, Cohn, Wolter, Zakharyaschev}

% We point out that 
There have been several developments to enrich qualitative representation with concepts found in classical logics or to combine the two strands.
Domain representations purely based on qualitative relations can be viewed as quantifier-free formulae with variables ranging over a certain spatial or temporal domain.
QCSP instances can be posed as satisfiability problems of conjunctive constraint formulae in which variables are existentially quantified.
% original references to S4 <-> Topo: Stone 1937, Tarski 1938, Tsao-Chen 1938, McKinsey 1941
Adopting this logic view for QCSPs leads to the field of spatial logics~\cite{Aiello_Pratt-Hartmann_Benthem_07_Handbook}, which is involved with combinations of qualitative calculi and logics.
Already in the 1930s topological statements as those expressible in \calc{RCC}
were found to constitute a fragment of the modal logic S4 plus the universal modality (S4${}_{\text{u}}$),
comprehensively described by \citeN{Bennett:PhD97}.
\begin{New}
  The cartesian product of S4${}_{\text{u}}$ with linear temporal logic
  captures topological relationships that change over time \cite{Bennett02_PTL_S4u}.
  Qualitative relations and their interrelations can also be described by axiomatic systems,
  this approach has been argued to comprise the composition-table approach
  and to support the construction of composition tables \cite{eschenbach-FOIS:01}.
  Axiomatic systems can be found, e.g., in \cite{eschenbach-kulik-KI:97,Got96,HG11}.
\end{New}%
The field of spatial logics can thus be viewed as a continuum between purely qualitative knowledge representation languages and logics.
Current work is involved with understanding the computational complexity of increasing expressivity of qualitative relations, e.g., by introducing Boolean expressions of spatial variables $\textsf{PO}(A \cap B, C)$ \cite{wolter-zakharyaschev:BRCC},
introducing a temporal modality \cite{SpatialLogic+TemporalLogic},
\new{or even combining spatial and temporal logics \cite{GKK+05}}.

%Most elementary, the 
%
%Due to the diversity of both qualitative relations and classical logics, the 

\subsection{Qualitative Calculi and Description Logics}
\label{sec:combination_with_DLs}

\begin{New}
Description logics (DLs) are a successful family of knowledge representation languages
tailored to capturing conceptual knowledge in ontologies
and reasoning over it;
see, e.g., \cite{DLHB-2007}.
The most prominent DL-based ontology language is the W3C standard OWL.\footnote{\url{http://www.w3.org/TR/owl2-overview}}
Several approaches to combining DLs 
and qualitative calculi have evolved,
aiming at describing spatial and temporal qualities of application domains.
%A systematic such approach was developed by 
A principal approach developed by \citeN{LM07} allows
%, allowing to 
adding qualitative calculi that satisfy certain admissibility
conditions to \ALC, the basic DL, incorporating spatial/temporal reasoning into a
standard DL reasoning procedure.
According to the authors, a practical implementation would be challenging.
% So far, \calc{RCC-8}, \calc{PC${}_n$}, and \calc{IA} are known to be $\omega$-admissible \cite{LM07}, 
% %(as shown by the authors)
% but, e.g., it is fairly straightforward to show that \calc{\INDU} is not.
% 
\citeN{pelletSp} describe \system{PelletSpatial},
an extension of the DL reasoner \system{Pellet} \cite{SPC07} for query answering
over non-spatial (DL) and spatial (\calc{RCC-8}) knowledge.
Unlike the previous approach,
\system{PelletSpatial} separates the two kinds of reasoning.
% implementing a refined variant of a-closure for \calc{RCC-8}.
% 
\citeN{BP11} describe SOWL, an OWL ontology capturing %that allows to represent
static, spatial, and temporal information, using a DL axiomatization of
spatial relations from the calculi \calc{CDC} and \calc{RCC-8}.
Temporal and spatial reasoning are separated (a-closure and Pellet, resp.).
\end{New}
\citeN{HCBN12} 
sketch an implementation of logic programming that combines
\calc{9-int}
% the \calc{9-Intersection} model
with OWL ontologies and constructive solid geometry.

%\par\medskip\noindent
%\ifd{(by TS) The following description is rather vague. How does the integration work? How does it relate to the previous approach?}
%Recently, Hmida et al. \cite{HCBN13} presented an integration of the \calc{9-Intersection} model with 
%ontologies and constructive solid geometry.
%Based on this they are able to define logic programs that define which topological relationships 
%have to be computed on which kind of objects, e.g., a ``Building'' that overlaps a ``Railway'' is a ``RailStation''.
%

\subsection{Qualitative Calculi and Situation Calculus}
The situation calculus is a popular framework for reasoning about action and change;
runtime systems such as \system{DTGolog} \cite{Ferrein:Fritz:Lakemeyer:2004:DTGolog}
and \system{ReadyLog} \cite{Ferrein:Lakemeyer:2008:ReadyLog}  are used in robotic applications.
Qualitative relations are relevant to world modeling and underlie high-level behavior specifications \cite{Schiffer:2011kr}.
%\cite{Schiffer:2006:QWM4SoccerRobots}. 
%Relations are  grounded on fuzzy set classification of sensor perceptions \cite{Ferrein:2008:ICIRA:Fuzzy,Ferrein:2009:FuzzyControllersInGolog,DyllaMoratz:2004:CnhSitCalc}.
%They incorporate qualitative calculi like \DCC, \calc{\TPCC}, and \calc{\RCCx{8}} (cf.~Section \ref{sec:existing_representations}) in order to partition the spatial environment of the domain into segments.
%By doing so, they are able to prune the search space for planning significantly.
%The approach is implemented in the domains of service robotics (RoboCup@Home) \cite{Schiffer:Ferrein:Lakemeyer:JINT2011:FuzzyAtHome,Schiffer:Ferrein:Lakemeyer:JISR:2012:Caesar}
%and robotic soccer \cite{Schiffer:2006:QWM4SoccerRobots}.

%So far, reasoning with qualitative relations only plays a subordinate role in the applications considered.
%In \cite{DyllaMoratz:2004:CnhSitCalc} the authors present a first approach how properties of qualitative calculi like \calc{\DRAf}\ or \calc{\DRAfp}\ can be exploited in the situation calculus framework for behavior formalization or robot navigation. 
%Nevertheless, the approach is very domain dependent.
%In contrast, Bhatt et al. ... TO BE CONTINUED
\citeN{Bhatt:2006:topQSRinSitCalc} aim at general integration of QSTR into reasoning about action and change,
i.e., a general domain-independent theory,
in order to reason about dynamic and causal aspects of spatial change.
With a naive characterization of objects based on their physical properties they particularly investigate key aspects of a topological theory of space on the basis of \calc{\RCCx{8}} %, but generalization to other calculi is possible 
\cite{BhattLoke:2008_DSSinSitCalc}.

\section{Alternative Approaches}\label{sec:alternatives}

This section presents an overview of reasoning techniques that have also been used to address QSTR reasoning problems, but are not based on QSTR techniques. 
Since spatial reasoning connects to fields in mathematics related to geometry or topology, there are manifold possible connections to make.
In the following we only hint at fields that have already proven to provide impulses to QSTR research.

%present a top-down view on connections that zooms in to areas that have been exploited. 
%On the top-level, we distinguish four sub-fields: algebraic topology, combinatorial geometry, graph theory, logics,  optimization, and quantitative methods.

\subsection{Algebraic Topology}
%MathWorld defines algebraic topology as ``the study of intrinsic qualitative aspects of spatial objects'' \cite{algebraicTopology}, a wording strikingly similar to the self-understanding of qualitative reasoning.
%However, aspects and objects considered differ, whereas mathematics is mainly involved with complex object in complex structures, AI techniques aim at describing everyday relationships between everyday objects targeted by AI techniques.
%Nevertheless, fundamental concepts of algebraic topology give rise to methods that resemble expressivity of topological QSTR calculi such as \calc{RCC}.
%
%\paragraph{Euler Characteristics for Connection Patterns of Regions} 
%
Fundamental concepts of algebraic topology resemble expressivity of topological QSTR calculi such as \calc{\RCCx{8}}. 
For example, Euler's well-known polyhedron formula ``vertices - edges + faces = 2'' is a representative of
Euler characteristics that characterize topological invariants of a space or body.
The PLCA framework \cite{takahashi2012} exploits the Euler
%\begin{wrapfigure}[10]{r}{0.28\linewidth}
%  \vspace{-0.9ex}
%  \addtolength{\abovecaptionskip}{-1ex}
%  \centering
%  \subcaptionbox{\label{fig:plcaExampleA}}[0.4\linewidth]{
%      \includegraphics[width=0.4\linewidth]{}
%  }
%  \enspace
%  \subcaptionbox{\label{fig:plcaExampleB}}[0.4\linewidth]{
%    \includegraphics[width=0.4\linewidth]{}
%  }
%  \caption{Two connection patterns that can be distinguished by PLCA.}
%  \label{fig:plcaExample}
%\end{wrapfigure}
characteristics to reason about topological space by invariants.
%By assigning points to lines, lines to circuits and circuits to areas one can describe graph like structures with additional mereotopological information about the involved nodes and edges.
%This allows for example to distinguish the connection patterns depicted in \figs{fig:plcaExampleA} and \ref{fig:plcaExampleB} which would not be possible using \calc{\RCCx{8}}.

%The PLCA framework is involved with Points, Lines, Circuits, and Areas in the Euclidean plane.
%In essence, points correspond to nodes in a graph, lines correspond to edges that are not allowed to intersect, circuits are sets of lines and correspond to boundaries of areas in the plane, while areas are parts of the Euclidean plane that are bounded by circuits.
%Realizability of a PLCA expression resembles satisfiability of qualitative constraint networks.
%Testing realizability can be performed by deciding whether the connected graph underlying the expression is planar.
%A PLCA graph is planar iff the Euler invariant $\abs{P} - \abs{L} - \abs{C} + 2\abs{A} = 0$ holds. This can be checked in $\textLandau{n^2}$ time complexity where $n$ is the total number of components involved. 
%%

\subsection{Combinatorial Geometry}
%
%Combinatorial geometry is a branch of geometry mainly involved with the study of combinatorial properties of finite sets of basic objects such as points, lines, etc.
%
%Although combinatorial invariants are more general than relation algebras, they still can be viewed as constraints that apply to collections of basic objects and which exhibit an algebraic structure.
%Since the basic objects of concern are identical to those in some QSTR calculi, some evident connections exist.
%
%\paragraph{String Graphs}
A set of Jordan curves (i.e., sets that are homeomorphic to the interval $[0,1]$ in the plane) induce an \emph{intersection graph}. % $ (I, \{\{i,j\} \mid C_i \cap C_j \neq \emptyset\}) $.
The \emph{string graph problem} poses the question, whether a given graph can be an intersection graph of a set of curves in the plane.
%This problem occurs in many areas such as genetics and circuit layout.
While the problem itself already is of a spatial nature, \citeN{Schaefer03Decidable} reduced reasoning about topological relations in \calc{\RCCx{8}} about planar regions to the string graph problem and later proved the string graph problem to be NP-complete \cite{Schaefer03NP}, directly contributing to QSTR research.

%\paragraph{Oriented Matroids}
An alternative approach to reasoning with directional relations can be found in {\em oriented matroid} theory, which comprise several equivalent combinatorial structures such as directed graphs, point and vector configurations, pseudoline arrangements, arrangements of hyperplanes  \cite{bjorner_oriented_1999}.
Already \citeN{knuth:1992} points out the importance of oriented matroids for qualitative spatial reasoning.  
In the context of \calc{\LR}\ constraint networks, a connection to the oriented matroid axiomatization of so-called chirotopes lead to complexity results in QSTR \cite{WolterL:2010:Realization4Direction,lee_complexity_2014}. 

%
%
%oriented matroids can best be applied in their appearance as chirotopes~\cite{WL10_qualit}.
%The close relations between \calc{\LR}\ constraint problems and oriented matroids has immediate consequence on consistency checking.
%While every rank 3 chirotope with up to 8 points satisfying the combinatorial structure of oriented matroids is realizable, reasoning about more points in \calc{\LR}\ or more fine-grained calculi is necessarily  NP-hard \cite{WL10_qualit}, with no practical decision procedure known up to today.

\subsection{Graph Theoretical Approaches}
%
%\paragraph{Sequences of Rooted Labeled Trees}
%
\citeN{Worboys2013} describes topological configurations through their representation as labeled trees, called {\em map trees}.
Graph edit operations on map trees can be defined to correspond to spatial change of the topological configuration, providing an efficient approach to reason about spatial change.

Another way of representing qualitative spatial change can be achieved by describing the change on two levels of detail.
\citeN{stell13} describes a scene of regions through a bipartite graph $(U, V, E)$ in which the elements of $U$ represent regions that can be seen as connected at a coarse level of detail, while the elements of $V$ represent regions that are seen as connected when also accounting for finer details.
%For example, two regions which are only separated by a thin line would be considered connected on a coarse level and thus be represented by a single node in $U$ but by two distinct nodes in $V$.
%On the other hand, a region consisting of two distant main parts, which are connected by a long thin line, would be considered as two different regions on the coarse level but as one on the fine level.
%The edges in $E$ identify regions in $U$ and $V$. A change in the scene is now described by two bipartite graphs
%$ (U_i, V_i, E_i)_{i \in \{1,2\}} $
%and three relations
%$ R_U \subset U_1 \times U_2 $, $ R_V \subset V_1 \times V_2 $ and
%$ R_E \subset E_1 \times E_2 $,
%identifying nodes and edges from the first graph with those of the second graph.
This way it is possible to describe the splitting, connecting and change of distance of regions, as well as the creation, deletion and change of size of a (part of a) region.

\subsection{Logic Frameworks}
Viewing vectors in a vector space as abstract arrows, \citeN{aiello2007} introduce a so-called arrow logic as a hybrid modal logic that captures  mereotopological relations between sets of vectors. 
Based on the concepts of inversion and composition of arrows, morphological operators such as \emph{dilation}, \emph{erosion} and \emph{difference} can be defined.
A resolution calculus %implemented in Haskell 
allows for automated reasoning about topological relations as well as relative size.

%\subsection{Optimization}
%
%\paragraph{An Adaptive Grid Approach based on Evolutionary and Ant Colony Optimization}
%

%
\subsection{Quantitative Methods}

%
%Spatial constraints can often be posed as equations, representing basic objects like points, lines, etc. by real-valued variables.
%Whereas the field of geometric theorem proving \cite{arnon_geometric_1988} is involved with the computationally infeasible field of multi-variate polynomials to reason about geometric theorems, some recent work focusses on specific kinds of equations that can be solved efficiently, even if the amount of variables involved is considerable.
%
%\paragraph{Linear Programming}
Linear programming (LP) techniques have been used to decide  constraint problems posed as linear inequalities,
%LP is attractive for several reasons.
 allowing polyhedral regions, lines, and points to be represented.
LP can mix free-ranging variables with concrete values (\eg, points at known positions) and, beyond consistency checking, determine a model in polynomial time.
By posing QCSP instances as LPs, constraints originating in distinct calculi can easily be mixed.
While some QSTR problems can almost directly be posed as LPs \cite{Jonsson:1998,Ligozat11,lee-renz-wolter-IJCAI:13}, disjunctive LP formulae allow several QSTR calculi to be handled simultaneously \cite{KreutzmannW:2014:AndOrLP}.
In a similar fashion, \citeN{schockaert2011} combine qualitative and quantitative reasoning of relations about different spatial aspects by using genetic optimization. % and ant colony optimization.
%It is intended for finding realizations of constraint networks over two dimensional areas and allows for reasoning over mereotopological relations including landmarks (coordinates in the plane) and lower and upper bounds.
%\paragraph{Equation Satisfiability}
Techniques for deciding satisfiability of equations yield advancements on the inherent problem of consistency checking for directional constraints such as those present in the \calc{\LR}\ calculus, as (disjunctions of) linear equations can capture relevant geometric invariances \cite{LuckeMossakowski_2010,vanDeldenMossakowski_2013}.

%Invariants of angular properties of triangles (\eg, the sum of all internal angles adding up to $\pi$) need for instance be satisfied, leading to an semi-decision procedure that can be realized by means of an SMT-solver \cite{LuckeMossakowski_2010}.
%It has been shown that equation satisfiability applied to invariants can significantly outperform constraint propagation by means of a-closure \cite{vanDeldenMossakowski_2013,vanDelden_2014} and thus offers new means to improve QSTR reasoning performance.

\section{Conclusion and Future Research Directions}\label{sec:conclusion}    

Qualitative spatial and temporal reasoning explores potentially interesting domain conceptualizations and their computational effects.
As a consequence, QSTR is connected to various research areas in and around artificial intelligence, such as knowledge representation, linguistics and spatial cognition.
Thus QSTR plays the role of a hub for connecting symbolic techniques to real-world applications.
The notion of a {\em qualitative calculus} attests to this role by representing  knowledge about spatial and temporal domains as an abstract algebra that provides the semantics to knowledge representation languages.
Reasoning with qualitative representations occurs in several forms, with deductive forms of inference, such as deciding consistency, being in a central position. %\todo{What about the undecidability results discussed in Gabelaia et al. 2005?} 
This is captured in the {\em qualitative constraint satisfaction problem}, which is decidable for all qualitative calculi \new{(in the strict sense of Definition~\ref{def:qualitative_calculus})}, ranging from low-order polynomial time complexity to within PSPACE (cf.~Table~\ref{tab:calculi_reasoning}).
With this survey we present the first comprehensive overview of the known computational properties of all qualitative calculi proposed so far.

\subsection{Beneficiaries of This Survey}

This survey addresses a broad range of researchers and engineers 
from different research communities and application areas.
%who work with spatio-temporal knowledge. 
We expect three groups of beneficiaries.

The first group comprises researchers and engineers who apply QSTR
and build systems for their applications.
Our survey provides them with a comprehensive and concise overview of the formalisms available, 
allowing objective design choices.

The second group consists of researchers contributing to QSTR to whom we
provide revised definitions  that are general enough to address
all formalisms proposed so far. 
The overview of domain conceptualizations studied so far fosters identification of
interesting new conceptualizations to be studied.
Moreover, the summary of algebraic and computational properties of existing formalisms reveals open research questions:
for calculi not listed in Table \ref{tab:calculi_reasoning_legend} reasoning properties have still to be analyzed.

Last, but not least, the third group benefiting from this presentation consists of developers of reasoning tools.
In order to accrete the position of QSTR as hub, sophisticated tools are necessary that disseminate formalisms and algorithms,
linking basic research to application development.
On the one hand, we provide pointers to all formalisms proposed and the decision methods necessary to perform reasoning.
This also reveals commonalities between formalisms, hopefully gearing tools towards becoming universal in the sense that they allow many variants of representations to be handled.
On the other hand -- and related to the discrepancy  between the amount of formalisms proposed and those fully analyzed discussed before --
the most efficient algorithms to decide QCSP instances have often not yet been identified and solid algorithm engineering can likely yield a great leap ahead for QSTR.

\subsection{Open Problem Areas in QSTR}
\label{sec:open_problem_areas}

\paragraph{Combining qualitative abstractions}
Despite the work reported in Section \ref{sec:combination_of_calculi},
\emph{generally applicable} methods for combining existing abstractions for different spatial and temporal aspects are missing -- a potential threat to the applicability of qualitative methods.
It is clearly not feasible to identify all potentially useful combinations individually: there are infinitely many abstractions that give rise to a qualitative calculus.
%A first approach to combinations on a symbolic level
%is proposed by \citeN{woelfl-westphal-IJCAI:09}, treating combinations as weakly linked but independent representations.
%However, reasoning in weakly linked representations may no longer be sound.

%
%
%It considers tuples of relations whose components refer to separate qualitative abstractions.
%Combinations of abstractions are addressed by constraints on possible co-occurrences of relations in one tuple.
%While this approach easily integrates with symbolic reasoning methods of qualitative calculi,
%it remains unclear under which conditions sound and complete reasoning is possible in a combined calculus,
%or how algebraic properties of the component calculi transfer to their combination.
%
%The search for a generic way to combine qualitative formalisms
%has thus just begun.

\paragraph{Integration with other symbolic methods} 
In addition to the above observation that an application may need to handle more than one calculus at the same time, 
 expressivity provided by domain-independent knowledge representation techniques may be important too.
% 
%Thus methods are highly desired that provide general approaches to combining different calculi and integrating them with general knowledge representation techniques, retaining the efficiency of the dedicated qualitative reasoning methods. 
There are first contributions (e.g., combining description logic with QSTR),
% \todo{Relation between connection calculi and logics has been studied. Could you be more precise about the aspects that deserve more attention in the future? TM: done}
but these are limited to specific combinations using specific methods.
\begin{New}
A promising approach is the integration of a variety of QSTR
formalisms into a first-order framework
\cite{DBLP:conf/cosit/BhattLS11}---the challenge being the development
of efficient reasoning methods. We expect that this will result in a
combination of first-order methods, constraint-solving methods,
relation-algebraic methods and specialised methods for the existential
theory over the reals, see \cite{vanDeldenMossakowski_2013} for some
first steps.
%Laying foundations of integration of QSTR formalisms with logics is subject to future work.
\end{New}

\paragraph{Integration with quantitative approaches} 
Qualitative approaches link metric data and symbolic reasoning, but consistent interpretation of sensor data considering its inevitable uncertainty is a recurring and challenging task. 
An algorithmic understanding of this problem has to the best of our knowledge not been developed yet.
Conversely, it can also be helpful to link qualitative inference with quantitative or other kinds of constraints.
As \citeN{liu-li-ECAI:12} recently discovered, constraint-based qualitative reasoning with  information partially grounded in data can differ significantly from classic qualitative reasoning and thus calls for further exploration.
%It is thus promising to further explore the link between quantitative and qualitative reasoning.

\paragraph{Algebras for higher-arity qualitative calculi} %As argued in Section \ref{sec:relation_algebras},
%algebraic properties of calculi help to measure how well they preserve information
Abstract algebras provide the foundations for symbolic knowledge manipulation and enable  optimizations 
to reasoning procedures. 
Our study gives an extensive account of algebraic properties of existing binary calculi,
but we have also seen that it is highly non-trivial to extend this study to ternary calculi.
The main problem is a missing notion of relation algebra already for ternary relations
that is general enough to encompass the variety of existing calculi.
%all variants of the standard operations on relations, in
%a way that encompasses existing calculi.
%This task is non-trivial because
%composition and converse split up into several versions with differing
%algebraic properties, and existing calculi do not commit on one
%single variant. Hence, a unifying notion of a relation algebra would need to take
%all these variants into account.

\paragraph{Practical reasoning algorithms} % that keep up with growing data}
Few of the various methods required in qualitative reasoning (see Table \ref{tab:calculi_reasoning_legend}) have been studied rigorously in a practical context. 
In the light of continuously growing data bases, identifying best-practice algorithms, evaluating the scaling behavior, and potentially developing heuristic approximations  will be crucial to foster the relevance of QSTR methods.

%, so far only symbolic constraint propagation has beed studied rigorously. 
%Integrations of qualitative formalisms discussed above will further increase to the repertoire of algorithms for QSTR. 
%In order to tackle practical problems in light of growing data bases, research of highly efficient techniques---potentially being of heuristic %nature--- will be crucial.

By completing the picture of computational complexity and identifying practical solutions to reasoning with all individual calculi, either individually or in combination with one another or even other KR techniques, it will be possible to realize truly universal QSTR tools. 
These tools will foster the position of QSTR as a hub, not only conceptually, but implemented in almost all knowledge-based systems.
% Ziel: Weltherrschaft

% \addcontentsline{toc}{section}{Acknowledgements}
% \section*{Acknowledgements}
% We thank Immo Colonius and Arne Kreutzmann for inspiring discussions during the
% ``spatial reasoning tea time''.
% This work has been supported by the DFG-funded SFB/TR~8 ``Spatial Cognition'',
% projects R3-[QShape] and R4-[LogoSpace].
% Special thanks go to Erwin R. Catesbeiana for the provision of his
% sitting area.

% Appendix
%\appendix
%\section*{APPENDIX}
%\setcounter{section}{1}
%
%
\appendixhead{DYLLA}
% TS: AFAIK, surnames beginning with "van" are sorted under "v"

% Acknowledgments
\begin{acks}
We thank Immo Colonius and Arne Kreutzmann for inspiring discussions during the ``Spatial Reasoning Teatime''. 
Furthermore, we thank Jan-Oliver Wallgr\"un for discussions regarding the taxonomy of QSTR.
% This work has been supported by the DFG-funded SFB/TR~8 ``Spatial Cognition'', projects R3-[QShape] and R4-[LogoSpace].
\new{We thank the anonymous reviewers for their profound and constructive comments.}
Special thanks go to Erwin R.\ Catesbeiana for the provision of his sitting area.
\end{acks}

% \ieditorial[when we're finished with contents]{TS}{Complement Tables \ref{tab:calculi_new_1}--\ref{tab:calculi_new_3} by a selection of figures that illustrates the relations of certain calculi}

% Bibliography
\bibliographystyle{ACM-abbrv} %ACM-Reference-Format-Journals}
%\bibliography{QSR-survey_strings,QSR-survey,applications,LigozatApplications,taxonomy}
\bibliography{NEW-QSR-survey-macros,NEW-QSR-survey}

%%% -*-BibTeX-*-
%%% Do NOT edit. File created by BibTeX with style
%%% ACM-Reference-Format-Journals [18-Jan-2012].

\begin{thebibliography}{00}

%%% ====================================================================
%%% NOTE TO THE USER: you can override these defaults by providing
%%% customized versions of any of these macros before the \bibliography
%%% command.  Each of them MUST provide its own final punctuation,
%%% except for \shownote{}, \showDOI{}, and \showURL{}.  The latter two
%%% do not use final punctuation, in order to avoid confusing it with
%%% the Web address.
%%%
%%% To suppress output of a particular field, define its macro to expand
%%% to an empty string, or better, \unskip, like this:
%%%
%%% \newcommand{\showDOI}[1]{\unskip}   % LaTeX syntax
%%%
%%% \def \showDOI #1{\unskip}           % plain TeX syntax
%%%
%%% ====================================================================

\ifx \showCODEN    \undefined \def \showCODEN     #1{\unskip}     \fi
\ifx \showDOI      \undefined \def \showDOI       #1{{\tt DOI:}\penalty0{#1}\ }
  \fi
\ifx \showISBNx    \undefined \def \showISBNx     #1{\unskip}     \fi
\ifx \showISBNxiii \undefined \def \showISBNxiii  #1{\unskip}     \fi
\ifx \showISSN     \undefined \def \showISSN      #1{\unskip}     \fi
\ifx \showLCCN     \undefined \def \showLCCN      #1{\unskip}     \fi
\ifx \shownote     \undefined \def \shownote      #1{#1}          \fi
\ifx \showarticletitle \undefined \def \showarticletitle #1{#1}   \fi
\ifx \showURL      \undefined \def \showURL       #1{#1}          \fi

\bibitem[\protect\citeauthoryear{??}{DBL}{1999}]%
        {DBLP:conf/ijcai/99}
 1999.
\newblock {\em Proc.\ of IJCAI 1999}. Morgan Kaufmann.
\newblock


\bibitem[\protect\citeauthoryear{??}{DBL}{2000}]%
        {DBLP:conf/ecai/2000}
 2000.
\newblock {\em Proc.\ of ECAI 2000}. IOS Press.
\newblock


\bibitem[\protect\citeauthoryear{??}{DBL}{2001}]%
        {DBLP:conf/ijcai/2001}
 2001.
\newblock {\em Proc.\ of IJCAI 2001}. Morgan Kaufmann.
\newblock


\bibitem[\protect\citeauthoryear{??}{DBL}{2004}]%
        {DBLP:conf/pricai/2004}
 2004.
\newblock {\em Proc.\ of PRICAI 2004}. LNCS, Vol. 3157. Springer.
\newblock


\bibitem[\protect\citeauthoryear{??}{DBL}{2007}]%
        {DBLP:conf/ijcai/2007}
 2007.
\newblock {\em Proc.\ of IJCAI 2007}. Morgan Kaufmann.
\newblock


\bibitem[\protect\citeauthoryear{??}{DBL}{2009}]%
        {DBLP:conf/ijcai/2009}
 2009.
\newblock {\em Proc.\ of IJCAI 2009}. AAAI Press.
\newblock


\bibitem[\protect\citeauthoryear{??}{DBL}{2010}]%
        {DBLP:conf/gis/2010}
 2010.
\newblock {\em Proc.\ of ACM-GIS 2010}. ACM Press.
\newblock


\bibitem[\protect\citeauthoryear{??}{DBL}{2012}]%
        {DBLP:conf/ecai/2012}
 2012.
\newblock {\em Proc.\ of ECAI 2012}. FAIA, Vol. 242. IOS Press.
\newblock


\bibitem[\protect\citeauthoryear{??}{DBL}{2013a}]%
        {DBLP:conf/cosit/2013}
 2013a.
\newblock {\em COSIT 2013}. LNCS, Vol. 8116. Springer.
\newblock


\bibitem[\protect\citeauthoryear{??}{DBL}{2013b}]%
        {DBLP:conf/ijcai/2013}
 2013b.
\newblock {\em Proc.\ of IJCAI 2013}. AAAI Press.
\newblock


\bibitem[\protect\citeauthoryear{??}{DBL}{2014a}]%
        {DBLP:conf/ecai/2014}
 2014a.
\newblock {\em Proc.\ of ECAI 2014}. FAIA, Vol. 263. IOS Press.
\newblock


\bibitem[\protect\citeauthoryear{??}{DBL}{2014b}]%
        {DBLP:conf/kr/2014}
 2014b.
\newblock {\em Proc.\ of KR-14}. AAAI Press.
\newblock


\bibitem[\protect\citeauthoryear{Aiello and Ottens}{Aiello and Ottens}{2007}]%
        {aiello2007}
{M. Aiello} {and} {B. Ottens}. 2007.
\newblock \showarticletitle{The Mathematical Morpho-Logical View on Reasoning
  about Space}, See \citeN{DBLP:conf/ijcai/2007}, 205--211.
\newblock


\bibitem[\protect\citeauthoryear{Aiello, Pratt-Hartmann, and {van
  Benthem}}{Aiello et~al\mbox{.}}{2007}]%
        {Aiello_Pratt-Hartmann_Benthem_07_Handbook}
{M. Aiello}, {I.~E. Pratt-Hartmann}, {and} {J.~F. {van Benthem}} (Eds.). 2007.
\newblock {\em Handbook of Spatial Logics}.
\newblock Springer.
\newblock


\bibitem[\protect\citeauthoryear{Allen}{Allen}{1983}]%
        {allen:83}
{J.~F. Allen}. 1983.
\newblock \showarticletitle{Maintaining knowledge about temporal intervals}.
\newblock {\em Commun. ACM\/} {26}, 11 (1983), 832--843.
\newblock


\bibitem[\protect\citeauthoryear{Amaneddine and Condotta}{Amaneddine and
  Condotta}{2013}]%
        {amaneddine-condotta-FLAIRS:13}
{N. Amaneddine} {and} {J. Condotta}. 2013.
\newblock \showarticletitle{On the Minimal Labeling Problem of Temporal and
  Spatial Qualitative Constraints}. In {\em FLAIRS Conference 2013}. AAAI
  Press, 16--21.
\newblock


\bibitem[\protect\citeauthoryear{Astesiano, Bidoit, Kirchner,
  Krieg-Br{\"u}ckner, Mosses, Sannella, and Tarlecki}{Astesiano
  et~al\mbox{.}}{2002}]%
        {astesiano_casl:_2002}
{E. Astesiano}, {M. Bidoit}, {H. Kirchner}, {B. Krieg-Br{\"u}ckner}, {P.~D.
  Mosses}, {D. Sannella}, {and} {A. Tarlecki}. 2002.
\newblock \showarticletitle{{CASL}: the {Common} {Algebraic} {Specification}
  {Language}}.
\newblock {\em Theor. Comput. Sci.\/} {286}, 2 (2002), 153--196.
\newblock


\bibitem[\protect\citeauthoryear{Baader, Calvanese, McGuinness, Nardi, and
  Patel{-}Schneider}{Baader et~al\mbox{.}}{2007}]%
        {DLHB-2007}
{F. Baader}, {D. Calvanese}, {D.~L. McGuinness}, {D. Nardi}, {and} {P.~F.
  Patel{-}Schneider} (Eds.). 2007.
\newblock {\em The Description Logic Handbook: Theory, Implementation, and
  Applications\/} (2nd ed.).
\newblock Cambridge University Press.
\newblock


\bibitem[\protect\citeauthoryear{Balbiani, Condotta, and del Cerro}{Balbiani
  et~al\mbox{.}}{2002}]%
        {BalbianiCC02}
{P. Balbiani}, {J. Condotta}, {and} {L.~F. del Cerro}. 2002.
\newblock \showarticletitle{Tractability Results in the Block Algebra}.
\newblock {\em J. Log. Comput.\/} {12}, 5 (2002), 885--909.
\newblock


\bibitem[\protect\citeauthoryear{Balbiani, Condotta, and Ligozat}{Balbiani
  et~al\mbox{.}}{2006}]%
        {BCL06}
{P. Balbiani}, {J. Condotta}, {and} {G. Ligozat}. 2006.
\newblock \showarticletitle{On the consistency problem for the {{\it INDU}}
  calculus}.
\newblock {\em J.\ Applied Logic\/} {4}, 2 (2006), 119--140.
\newblock


\bibitem[\protect\citeauthoryear{Balbiani and Condotta}{Balbiani and
  Condotta}{2002}]%
        {BC02}
{P. Balbiani} {and} {J.-F. Condotta}. 2002.
\newblock \showarticletitle{Spatial reasoning about points in a
  multidimensional setting}.
\newblock {\em Appl.\ Intell.\/} {17}, 3 (2002), 221--238.
\newblock


\bibitem[\protect\citeauthoryear{Balbiani, Condotta, and {Fari{\~n}as del
  Cerro}}{Balbiani et~al\mbox{.}}{1998}]%
        {BCF98}
{P. Balbiani}, {J.-F. Condotta}, {and} {L. {Fari{\~n}as del Cerro}}. 1998.
\newblock \showarticletitle{A model for reasoning about bidimensional temporal
  relations}. In {\em Proc.\ of KR-98}. Morgan Kaufmann, 124--130.
\newblock


\bibitem[\protect\citeauthoryear{Balbiani, Condotta, and {Fari{\~n}as del
  Cerro}}{Balbiani et~al\mbox{.}}{1999}]%
        {BCF99_block_alg}
{P. Balbiani}, {J.-F. Condotta}, {and} {L. {Fari{\~n}as del Cerro}}. 1999.
\newblock \showarticletitle{A tractable subclass of the block algebra:
  constraint propagation and preconvex relations}. In {\em Proc.\ of EPIA 1999}
  {\em (LNCS)}, Vol. 1695. Springer, 75--89.
\newblock


\bibitem[\protect\citeauthoryear{Balbiani and Osmani}{Balbiani and
  Osmani}{2000}]%
        {BO00}
{P. Balbiani} {and} {A. Osmani}. 2000.
\newblock \showarticletitle{A model for reasoning about topologic relations
  between cyclic intervals}. In {\em Proc.\ of KR-00}. Morgan Kaufmann,
  378--385.
\newblock


\bibitem[\protect\citeauthoryear{Basu, Pollack, and Roy}{Basu
  et~al\mbox{.}}{2006}]%
        {basu_algorithms_2006}
{S. Basu}, {R. Pollack}, {and} {M.-F. Roy}. 2006.
\newblock {\em Algorithms in {Real} {Algebraic} {Geometry}}.
\newblock Springer.
\newblock


\bibitem[\protect\citeauthoryear{Batsakis and Petrakis}{Batsakis and
  Petrakis}{2011}]%
        {BP11}
{S. Batsakis} {and} {E.~G.~M. Petrakis}. 2011.
\newblock \showarticletitle{{SOWL:} A framework for handling spatio-temporal
  information in {OWL} 2.0}. In {\em Proc.\ of RuleML 2011} {\em (LNCS)}, Vol.
  6826. Springer, 242--249.
\newblock


\bibitem[\protect\citeauthoryear{Ben~Hmida, Boochs, Cruz, and
  Nicolle}{Ben~Hmida et~al\mbox{.}}{2012}]%
        {HCBN12}
{H. Ben~Hmida}, {F. Boochs}, {C. Cruz}, {and} {C. Nicolle}. 2012.
\newblock \showarticletitle{From quantitative spatial operator to qualitative
  spatial relation using Constructive Solid Geometry, logic rules and optimized
  9-IM model: A semantic based approach}. In {\em Proc.\ of IEEE CSAE 2012},
  Vol.~3. {IEEE}, 453--458.
\newblock


\bibitem[\protect\citeauthoryear{Bennett}{Bennett}{1997}]%
        {Bennett:PhD97}
{B. Bennett}. 1997.
\newblock {\em Logical Representations for automated reasoning about spatial
  relationships}.
\newblock Ph.D. Dissertation. The University of Leeds, School of Computer
  Studies, UK.
\newblock


\bibitem[\protect\citeauthoryear{Bennett, Cohn, Wolter, and
  Zakharyaschev}{Bennett et~al\mbox{.}}{2002}]%
        {Bennett02_PTL_S4u}
{B. Bennett}, {A.~G. Cohn}, {F. Wolter}, {and} {M. Zakharyaschev}. 2002.
\newblock \showarticletitle{Multi-Dimensional Modal Logic as a Framework for
  Spatio-Temporal Reasoning}.
\newblock {\em Appl.\ Intell.\/} {17}, 3 (2002), 239--251.
\newblock


\bibitem[\protect\citeauthoryear{Bhatt, Lee, and Schultz}{Bhatt
  et~al\mbox{.}}{2011}]%
        {DBLP:conf/cosit/BhattLS11}
{M. Bhatt}, {J.~H. Lee}, {and} {C.~P.~L. Schultz}. 2011.
\newblock \showarticletitle{{CLP(QS):} {A} Declarative Spatial Reasoning
  Framework}. In {\em COSIT 2011} {\em (LNCS)}, {M.~J. Egenhofer}, {N.~A.
  Giudice}, {R.~Moratz}, {and} {M.~F. Worboys} (Eds.), Vol. 6899. Springer,
  210--230.
\newblock


\bibitem[\protect\citeauthoryear{Bhatt and Loke}{Bhatt and Loke}{2008}]%
        {BhattLoke:2008_DSSinSitCalc}
{M. Bhatt} {and} {S.~W. Loke}. 2008.
\newblock \showarticletitle{Modelling Dynamic Spatial Systems in the Situation
  Calculus}.
\newblock {\em Spatial Cognition \& Computation\/} {8}, 1-2 (2008), 86--130.
\newblock


\bibitem[\protect\citeauthoryear{Bhatt, Rahayu, and Sterling}{Bhatt
  et~al\mbox{.}}{2006}]%
        {Bhatt:2006:topQSRinSitCalc}
{M. Bhatt}, {J.~W. Rahayu}, {and} {G. Sterling}. 2006.
\newblock \showarticletitle{Qualitative Spatial Reasoning with Topological
  Relations in the Situation Calculus}. In {\em FLAIRS Conference 2006}. AAAI
  Press, 713--718.
\newblock


\bibitem[\protect\citeauthoryear{Bj{\"o}rner, Vergnas, Sturmfels, White, and
  Ziegler}{Bj{\"o}rner et~al\mbox{.}}{1999}]%
        {bjorner_oriented_1999}
{A. Bj{\"o}rner}, {M.~L. Vergnas}, {B. Sturmfels}, {N. White}, {and} {G.~M.
  Ziegler}. 1999.
\newblock {\em Oriented Matroids}.
\newblock Cambridge University Press.
\newblock


\bibitem[\protect\citeauthoryear{Bredeweg and Struss}{Bredeweg and
  Struss}{2004}]%
        {Bredeweg:2003vc}
{B. Bredeweg} {and} {P. Struss}. 2004.
\newblock \showarticletitle{Current Topics in Qualitative Reasoning}.
\newblock {\em AI Magazine\/} {24}, 4 (2004), 13--16.
\newblock


\bibitem[\protect\citeauthoryear{Chen, Cohn, Liu, Wang, Ouyang, and Yu}{Chen
  et~al\mbox{.}}{2013}]%
        {chen_survey_2013}
{J. Chen}, {A.~G. Cohn}, {D. Liu}, {S. Wang}, {J. Ouyang}, {and} {Q. Yu}. 2013.
\newblock \showarticletitle{A survey of qualitative spatial representations}.
\newblock {\em Knowledge Eng. Review\/} {30}, 1 (2013), 106--136.
\newblock


\bibitem[\protect\citeauthoryear{Clementini and Billen}{Clementini and
  Billen}{2006}]%
        {CB06}
{E. Clementini} {and} {R. Billen}. 2006.
\newblock \showarticletitle{Modeling and Computing Ternary Projective Relations
  between Regions}.
\newblock {\em IEEE TKDE\/} {18}, 6 (2006), 799--814.
\newblock


\bibitem[\protect\citeauthoryear{Clementini and Cohn}{Clementini and
  Cohn}{2014}]%
        {CC14}
{E. Clementini} {and} {A.~G. Cohn}. 2014.
\newblock \showarticletitle{{RCC*-9} and {CBM}}. In {\em Proc.\ of GIScience
  2014} {\em (LNCS)}, Vol. 8728. Springer, 349--365.
\newblock


\bibitem[\protect\citeauthoryear{Clementini, {Di Felice}, and {van
  Oosterom}}{Clementini et~al\mbox{.}}{1993}]%
        {CDv93}
{E. Clementini}, {P. {Di Felice}}, {and} {P. {van Oosterom}}. 1993.
\newblock \showarticletitle{A Small Set of Formal Topological Relationships
  Suitable for End-User Interaction}. In {\em Proc.\ of SSD-93} {\em (LNCS)},
  Vol. 692. Springer, 277--295.
\newblock


\bibitem[\protect\citeauthoryear{Clementini, Skiadopoulos, Billen, and
  Tarquini}{Clementini et~al\mbox{.}}{2010}]%
        {CSBT10}
{E. Clementini}, {S. Skiadopoulos}, {R. Billen}, {and} {F. Tarquini}. 2010.
\newblock \showarticletitle{A Reasoning System of Ternary Projective
  Relations}.
\newblock {\em IEEE TKDE\/} {22}, 2 (2010), 161--178.
\newblock


\bibitem[\protect\citeauthoryear{Cohn and Hazarika}{Cohn and Hazarika}{2001}]%
        {cohn-hazarika:01}
{A. Cohn} {and} {S. Hazarika}. 2001.
\newblock \showarticletitle{Qualitative spatial representation and reasoning:
  an overview}.
\newblock {\em Fundamenta Informaticae\/}  {46} (2001), 1--29.
\newblock


\bibitem[\protect\citeauthoryear{Cohn, Bennett, Gooday, and Gotts}{Cohn
  et~al\mbox{.}}{1997}]%
        {cohn-GeoI:97}
{A.~G. Cohn}, {B. Bennett}, {J. Gooday}, {and} {N.~M. Gotts}. 1997.
\newblock \showarticletitle{Qualitative Spatial Representation and Reasoning
  with the Region Connection Calculus}.
\newblock {\em GeoInformatica\/} {1}, 3 (1997), 275--316.
\newblock


\bibitem[\protect\citeauthoryear{Cohn, Li, Liu, and Renz}{Cohn
  et~al\mbox{.}}{2014}]%
        {Cohn14_TopCardDirRel}
{A.~G. Cohn}, {S. Li}, {W. Liu}, {and} {J. Renz}. 2014.
\newblock \showarticletitle{Reasoning about Topological and Cardinal Direction
  Relations Between 2-Dimensional Spatial Objects}.
\newblock {\em J. Artif. Intell. Res. (JAIR)\/}  {51} (2014), 493--532.
\newblock


\bibitem[\protect\citeauthoryear{Cohn and Renz}{Cohn and Renz}{2008}]%
        {Cohn:2008vn}
{A.~G. Cohn} {and} {J. Renz}. 2008.
\newblock \showarticletitle{Qualitative Spatial Representation and Reasoning}.
\newblock {\em Foundations of Artificial Intelligence\/} {3}, Handbook of
  Knowledge Representation and Reasoning (2008), 551--596.
\newblock


\bibitem[\protect\citeauthoryear{Condotta}{Condotta}{2000}]%
        {condotta-ECAI:00}
{J.-F. Condotta}. 2000.
\newblock \showarticletitle{Tractable Sets of the Generalized Interval
  Algebra}, See \citeN{DBLP:conf/ecai/2000}, 78--82.
\newblock


\bibitem[\protect\citeauthoryear{Condotta, Ligozat, and Saade}{Condotta
  et~al\mbox{.}}{2006}]%
        {Condotta2006}
{J.-F. Condotta}, {G. Ligozat}, {and} {M. Saade}. 2006.
\newblock \showarticletitle{A Generic Toolkit for {$n$}-ary Qualitative
  Temporal and Spatial Calculi}. In {\em Proc.\ of TIME 2006}. {IEEE} Computer
  Society, 78--86.
\newblock


\bibitem[\protect\citeauthoryear{Cristani}{Cristani}{1999}]%
        {Cristani99}
{M. Cristani}. 1999.
\newblock \showarticletitle{The Complexity of Reasoning about Spatial
  Congruence}.
\newblock {\em J. Artif. Intell. Res. (JAIR)\/}  {11} (1999), 361--390.
\newblock


\bibitem[\protect\citeauthoryear{Davis}{Davis}{1990}]%
        {Davis:1990:RepOfCSK}
{E. Davis}. 1990.
\newblock {\em Representations of Commonsense Knowledge}.
\newblock Morgan Kaufmann.
\newblock
\showLCCN{90041289}


\bibitem[\protect\citeauthoryear{Dechter}{Dechter}{2003}]%
        {dechter}
{R. Dechter}. 2003.
\newblock {\em Constraint processing}.
\newblock Morgan Kaufmann.
\newblock


\bibitem[\protect\citeauthoryear{Delafontaine, Bogaert, Cohn, Witlox, Maeyer,
  and {Van de Weghe}}{Delafontaine et~al\mbox{.}}{2011}]%
        {DBC+11}
{M. Delafontaine}, {P. Bogaert}, {A. Cohn}, {F. Witlox}, {P.~D. Maeyer}, {and}
  {N. {Van de Weghe}}. 2011.
\newblock \showarticletitle{Inferring additional knowledge from {QTC${}_N$}
  relations}.
\newblock {\em Inf. Sci.\/} {181}, 9 (2011), 1573--1590.
\newblock


\bibitem[\protect\citeauthoryear{Duckham, Li, Liu, and Long}{Duckham
  et~al\mbox{.}}{2014}]%
        {matt_duckham_redundant_2014}
{M. Duckham}, {S. Li}, {W. Liu}, {and} {Z. Long}. 2014.
\newblock \showarticletitle{On Redundant Topological Constraints}, See
  \citeN{DBLP:conf/kr/2014}, 618 -- 621.
\newblock


\bibitem[\protect\citeauthoryear{D{\"{u}}ntsch}{D{\"{u}}ntsch}{2005}]%
        {Due05}
{I. D{\"{u}}ntsch}. 2005.
\newblock \showarticletitle{Relation Algebras and their Application in Temporal
  and Spatial Reasoning}.
\newblock {\em Artif. Intell. Rev.\/} {23}, 4 (2005), 315--357.
\newblock


\bibitem[\protect\citeauthoryear{Dylla and Lee}{Dylla and Lee}{2010}]%
        {DyL10}
{F. Dylla} {and} {J.~H. Lee}. 2010.
\newblock \showarticletitle{A Combined Calculus on Orientation with Composition
  Based on Geometric Properties}. In {\em Proc.\ of ECAI 2010} {\em (FAIA)},
  Vol. 215. IOS Press, 1087--1088.
\newblock


\bibitem[\protect\citeauthoryear{Dylla, Mossakowski, Schneider, and
  Wolter}{Dylla et~al\mbox{.}}{2013}]%
        {DMSW13}
{F. Dylla}, {T. Mossakowski}, {T. Schneider}, {and} {D. Wolter}. 2013.
\newblock \showarticletitle{Algebraic Properties of Qualitative Spatio-temporal
  Calculi}, See \citeN{DBLP:conf/cosit/2013}, 516--536.
\newblock


\bibitem[\protect\citeauthoryear{Dylla and Wallgr{\"u}n}{Dylla and
  Wallgr{\"u}n}{2007}]%
        {Dylla_Wallgruen_07_Qualitative}
{F. Dylla} {and} {J.~O. Wallgr{\"u}n}. 2007.
\newblock \showarticletitle{Qualitative Spatial Reasoning with Conceptual
  Neighborhoods for Agent Control}.
\newblock {\em Journal of Intelligent \& Robotic Systems\/} {48}, 1 (2007),
  55--78.
\newblock


\bibitem[\protect\citeauthoryear{Egenhofer}{Egenhofer}{1991}]%
        {DBLP:conf/ssd/Egenhofer91}
{M. Egenhofer}. 1991.
\newblock \showarticletitle{Reasoning about Binary Topological Relations}. In
  {\em Proc.\ of SSD-91} {\em (LNCS)}, Vol. 525. Springer, 143--160.
\newblock


\bibitem[\protect\citeauthoryear{Egenhofer and Sharma}{Egenhofer and
  Sharma}{1993}]%
        {ES93}
{M. Egenhofer} {and} {J. Sharma}. 1993.
\newblock \showarticletitle{Assessing the Consistency of Complete and
  Incomplete Topological Information}.
\newblock {\em Geographical Systems\/} {1}, 1 (1993), 47--68.
\newblock


\bibitem[\protect\citeauthoryear{Eschenbach}{Eschenbach}{2001}]%
        {eschenbach-FOIS:01}
{C. Eschenbach}. 2001.
\newblock \showarticletitle{Viewing composition tables as axiomatic systems}.
  In {\em FOIS 2001}. ACM Press, 93--104.
\newblock


\bibitem[\protect\citeauthoryear{Eschenbach and Kulik}{Eschenbach and
  Kulik}{1997}]%
        {eschenbach-kulik-KI:97}
{C. Eschenbach} {and} {L. Kulik}. 1997.
\newblock \showarticletitle{An Axiomatic Approach to the Spatial Relations
  Underlying Left-Right and in Front of-Behind}. In {\em Proc.\ of KI 1997}
  {\em (LNCS)}, Vol. 1303. Springer, 207--218.
\newblock
\showISBNx{3-540-63493-2}


\bibitem[\protect\citeauthoryear{Ferrein, Fritz, and Lakemeyer}{Ferrein
  et~al\mbox{.}}{2004}]%
        {Ferrein:Fritz:Lakemeyer:2004:DTGolog}
{A. Ferrein}, {C. Fritz}, {and} {G. Lakemeyer}. 2004.
\newblock \showarticletitle{On-Line Decision-Theoretic Golog for Unpredictable
  Domains}. In {\em Proc.\ of KI 2004} {\em (LNCS)}, Vol. 3238. Springer,
  322--336.
\newblock


\bibitem[\protect\citeauthoryear{Ferrein and Lakemeyer}{Ferrein and
  Lakemeyer}{2008}]%
        {Ferrein:Lakemeyer:2008:ReadyLog}
{A. Ferrein} {and} {G. Lakemeyer}. 2008.
\newblock \showarticletitle{Logic-based Robot Control in Highly Dynamic
  Domains}.
\newblock {\em Robotics and Autonomous Systems\/} {56}, 11 (2008), 980--991.
\newblock


\bibitem[\protect\citeauthoryear{Forbus, Usher, and Chapman}{Forbus
  et~al\mbox{.}}{2004}]%
        {Forbus2004}
{K.~D. Forbus}, {J.~M. Usher}, {and} {V. Chapman}. 2004.
\newblock \showarticletitle{Qualitative spatial reasoning about sketch maps}.
\newblock {\em AI Magazine\/} {25}, 3 (2004), 61--72.
\newblock


\bibitem[\protect\citeauthoryear{Frank}{Frank}{1991}]%
        {Frank91}
{A.~U. Frank}. 1991.
\newblock \showarticletitle{Qualitative Spatial Reasoning with Cardinal
  Directions}. In {\em Proc.\ of {\"O}GAI 1991} {\em (Informatik
  Fachberichte)}, Vol. 287. Springer, 157--167.
\newblock


\bibitem[\protect\citeauthoryear{Frank}{Frank}{1992}]%
        {Frank:1992:QSR}
{A.~U. Frank}. 1992.
\newblock \showarticletitle{Qualitative spatial reasoning about distances and
  directions in geographic space}.
\newblock {\em J. Vis. Lang. Comput.\/} {3}, 4 (1992), 343--371.
\newblock


\bibitem[\protect\citeauthoryear{Freksa}{Freksa}{1992a}]%
        {Freksa92b}
{C. Freksa}. 1992a.
\newblock \showarticletitle{Temporal Reasoning Based on Semi-Intervals}.
\newblock {\em Artif.\ Intell.\/} {54}, 1 (1992), 199--227.
\newblock


\bibitem[\protect\citeauthoryear{Freksa}{Freksa}{1992b}]%
        {Freksa92}
{C. Freksa}. 1992b.
\newblock \showarticletitle{Using Orientation Information for Qualitative
  Spatial Reasoning}. In {\em Spatio-Temporal Reasoning} {\em (LNCS)}, Vol.
  639. Springer, 162--178.
\newblock


\bibitem[\protect\citeauthoryear{Freksa and Zimmermann}{Freksa and
  Zimmermann}{1992}]%
        {FZ92utilization}
{C. Freksa} {and} {K. Zimmermann}. 1992.
\newblock \showarticletitle{On the utilization of spatial structures for
  cognitively plausible and efficient reasoning}. In {\em Proc.\ of ICSMC
  1992}. {IEEE}, 261--266.
\newblock


\bibitem[\protect\citeauthoryear{Gabelaia, Kontchakov, Kurucz, Wolter, and
  Zakharyaschev}{Gabelaia et~al\mbox{.}}{2005}]%
        {GKK+05}
{D. Gabelaia}, {R. Kontchakov}, {A. Kurucz}, {F. Wolter}, {and} {M.
  Zakharyaschev}. 2005.
\newblock \showarticletitle{Combining Spatial and Temporal Logics:
  Expressiveness vs. Complexity}.
\newblock {\em J. Artif. Intell. Res. (JAIR)\/}  {23} (2005), 167--243.
\newblock


\bibitem[\protect\citeauthoryear{Galton}{Galton}{1994}]%
        {Gal94}
{A.~P. Galton}. 1994.
\newblock \showarticletitle{Lines of Sight}. In {\em Proc.\ of AI and Cognitive
  Science '94}. 103--113.
\newblock


\bibitem[\protect\citeauthoryear{Gerevini and Nebel}{Gerevini and
  Nebel}{2002}]%
        {Gerevini02_RCC_IA}
{A. Gerevini} {and} {B. Nebel}. 2002.
\newblock \showarticletitle{Qualitative Spatio-Temporal Reasoning with {RCC-8}
  and Allen's Interval Calculus: Computational Complexity}. In {\em Proceedings
  of the 15th Eureopean Conference on Artificial Intelligence, ECAI'2002, Lyon,
  France, July 2002}, {F.~van Harmelen} (Ed.). {IOS} Press, 312--316.
\newblock


\bibitem[\protect\citeauthoryear{Gerevini and Renz}{Gerevini and Renz}{2002}]%
        {GR02}
{A. Gerevini} {and} {J. Renz}. 2002.
\newblock \showarticletitle{Combining topological and size information for
  spatial reasoning}.
\newblock {\em Artif.\ Intell.\/} {137}, 1-2 (2002), 1--42.
\newblock


\bibitem[\protect\citeauthoryear{{Glez-Cabrera}, {\'Alvarez-Bravo}, and
  D{\'{\i}}az}{{Glez-Cabrera} et~al\mbox{.}}{2013}]%
        {GAD13}
{F.~J. {Glez-Cabrera}}, {J.~V. {\'Alvarez-Bravo}}, {and} {F. D{\'{\i}}az}.
  2013.
\newblock \showarticletitle{{QRPC:} A new qualitative model for representing
  motion patterns}.
\newblock {\em Expert Syst. Appl.\/} {40}, 11 (2013), 4547--4561.
\newblock


\bibitem[\protect\citeauthoryear{Gottfried}{Gottfried}{2004}]%
        {Got04}
{B. Gottfried}. 2004.
\newblock \showarticletitle{Reasoning about intervals in two dimensions}. In
  {\em Proc.\ of ICSMC 2004}. {IEEE}, 5324--5332.
\newblock


\bibitem[\protect\citeauthoryear{Gotts}{Gotts}{1996}]%
        {Got96}
{N.~M. Gotts}. 1996.
\newblock \showarticletitle{Formalizing Commonsense Topology: The {INCH}
  Calculus}. In {\em Proceedings of the 4th International Symposium on
  Artificial Intelligence and Mathematics}. 72--75.
\newblock


\bibitem[\protect\citeauthoryear{Grigni, Papadias, and Papadimitriou}{Grigni
  et~al\mbox{.}}{1995}]%
        {GPP95}
{M. Grigni}, {D. Papadias}, {and} {C.~H. Papadimitriou}. 1995.
\newblock \showarticletitle{Topological Inference}. In {\em Proc.\ of IJCAI
  1995}. Morgan Kaufmann, 901--907.
\newblock


\bibitem[\protect\citeauthoryear{Haarslev, Hidde, M{\"o}ller, and
  Wessel}{Haarslev et~al\mbox{.}}{2012}]%
        {racer}
{V. Haarslev}, {K. Hidde}, {R. M{\"o}ller}, {and} {M. Wessel}. 2012.
\newblock \showarticletitle{The RacerPro knowledge representation and reasoning
  system}.
\newblock {\em J.\ Web Sem.\/} {3}, 3 (2012), 267--277.
\newblock


\bibitem[\protect\citeauthoryear{Hahmann and Gr{\"{u}}ninger}{Hahmann and
  Gr{\"{u}}ninger}{2011}]%
        {HG11}
{T. Hahmann} {and} {M. Gr{\"{u}}ninger}. 2011.
\newblock \showarticletitle{Multidimensional Mereotopology with Betweenness}.
  In {\em Proc.\ of IJCAI 2011}. AAAI Press, 906--911.
\newblock


\bibitem[\protect\citeauthoryear{Hern{\'{a}}ndez}{Hern{\'{a}}ndez}{1994}]%
        {Hernandez94:qualitative}
{D. Hern{\'{a}}ndez}. 1994.
\newblock {\em Qualitative representation of spatial knowledge}. LNCS, Vol.
  804.
\newblock Springer, Berlin.
\newblock


\bibitem[\protect\citeauthoryear{Hirsch and Hodkinson}{Hirsch and
  Hodkinson}{2002}]%
        {HH02}
{R. Hirsch} {and} {I. Hodkinson}. 2002.
\newblock {\em Relation algebras by games}. Studies in logic and the
  foundations of mathematics, Vol. 147.
\newblock Elsevier.
\newblock


\bibitem[\protect\citeauthoryear{Inants and Euzenat}{Inants and
  Euzenat}{2015}]%
        {IE15}
{A. Inants} {and} {J. Euzenat}. 2015.
\newblock \showarticletitle{An Algebra of Qualitative Taxonomical Relations for
  Ontology Alignments}. In {\em Proc.\ of ISWC 2015 (I)} {\em (LNCS)}, Vol.
  9366. Springer, 253--268.
\newblock


\bibitem[\protect\citeauthoryear{Isli and Cohn}{Isli and Cohn}{2000}]%
        {DBLP:journals/ai/IsliC00}
{A. Isli} {and} {A.~G. Cohn}. 2000.
\newblock \showarticletitle{A new approach to cyclic ordering of {2D}
  orientations using ternary relation algebras}.
\newblock {\em Artif.\ Intell.\/} {122}, 1-2 (2000), 137--187.
\newblock


\bibitem[\protect\citeauthoryear{Jonsson and B{\"{a}}ckstr{\"{o}}m}{Jonsson and
  B{\"{a}}ckstr{\"{o}}m}{1998}]%
        {Jonsson:1998}
{P. Jonsson} {and} {C. B{\"{a}}ckstr{\"{o}}m}. 1998.
\newblock \showarticletitle{A unifying approach to temporal constraint
  reasoning}.
\newblock {\em Artif.\ Intell.\/} {102}, 1 (1998), 143--155.
\newblock


\bibitem[\protect\citeauthoryear{Jonsson and Drakengren}{Jonsson and
  Drakengren}{1997}]%
        {JD97}
{P. Jonsson} {and} {T. Drakengren}. 1997.
\newblock \showarticletitle{A Complete Classification of Tractability in
  {RCC-5}}.
\newblock {\em J. Artif. Intell. Res. (JAIR)\/}  {6} (1997), 211--221.
\newblock


\bibitem[\protect\citeauthoryear{Knauff, Strube, Jola, Rauh, and
  Schlieder}{Knauff et~al\mbox{.}}{2004}]%
        {KnauffEtAl:2004:PsychValidityIAI}
{M. Knauff}, {G. Strube}, {C. Jola}, {R. Rauh}, {and} {C. Schlieder}. 2004.
\newblock \showarticletitle{The Psychological Validity of Qualitative Spatial
  Reasoning in One Dimension}.
\newblock {\em Spatial Cognition \& Computation\/} {4}, 2 (2004), 167--188.
\newblock


\bibitem[\protect\citeauthoryear{Knuth}{Knuth}{1992}]%
        {knuth:1992}
{D.~E. Knuth}. 1992.
\newblock {\em Axioms and Hulls}. LNCS, Vol. 606.
\newblock Springer.
\newblock


\bibitem[\protect\citeauthoryear{K\"ohler}{K\"ohler}{2002}]%
        {Koe02}
{C. K\"ohler}. 2002.
\newblock \showarticletitle{The Occlusion Calculus}. In {\em Proc.\ of Workshop
  on Cognitive Vision}. Zurich, Switzerland.
\newblock


\bibitem[\protect\citeauthoryear{Kontchakov, Kurucz, Wolter, and
  Zakharyaschev}{Kontchakov et~al\mbox{.}}{2007}]%
        {SpatialLogic+TemporalLogic}
{R. Kontchakov}, {A. Kurucz}, {F. Wolter}, {and} {M. Zakharyaschev}. 2007.
\newblock \showarticletitle{Spatial Logic + Temporal Logic = ?}
\newblock In {\em Handbook of Spatial Logics}. Springer, 497--564.
\newblock


\bibitem[\protect\citeauthoryear{Kontchakov, Pratt{-}Hartmann, Wolter, and
  Zakharyaschev}{Kontchakov et~al\mbox{.}}{2010}]%
        {KPWZ10}
{R. Kontchakov}, {I. Pratt{-}Hartmann}, {F. Wolter}, {and} {M. Zakharyaschev}.
  2010.
\newblock \showarticletitle{Spatial logics with connectedness predicates}.
\newblock {\em Log.\ Meth.\ Comp.\ Sci.\/} {6}, 3, Article 7 (2010), 43 pages.
\newblock


\bibitem[\protect\citeauthoryear{Kreutzmann and Wolter}{Kreutzmann and
  Wolter}{2014}]%
        {KreutzmannW:2014:AndOrLP}
{A. Kreutzmann} {and} {D. Wolter}. 2014.
\newblock \showarticletitle{Qualitative Spatial and Temporal Reasoning with
  {AND/OR} Linear Programming}, See \citeN{DBLP:conf/ecai/2014}, 495--500.
\newblock


\bibitem[\protect\citeauthoryear{Krokhin, Jeavons, and Jonsson}{Krokhin
  et~al\mbox{.}}{2003}]%
        {krokhin-etal-ACM:03}
{A. Krokhin}, {P. Jeavons}, {and} {P. Jonsson}. 2003.
\newblock \showarticletitle{Reasoning about temporal relations: The tractable
  subalgebras of Allen's interval algebra}.
\newblock {\em Journal of the {ACM}\/} {50}, 5 (2003), 591--640.
\newblock


\bibitem[\protect\citeauthoryear{Kuipers}{Kuipers}{1978}]%
        {Kuipers:1978:SpatialKnowledge}
{B. Kuipers}. 1978.
\newblock \showarticletitle{Modeling Spatial Knowledge}.
\newblock {\em Cognitive Science\/} {2}, 2 (1978), 129--153.
\newblock


\bibitem[\protect\citeauthoryear{Kurata}{Kurata}{2010}]%
        {Kur10}
{Y. Kurata}. 2010.
\newblock \showarticletitle{9{\({}^{\mbox{+}}\)}-intersection calculi for
  spatial reasoning on the topological relations between heterogeneous
  objects}, See \citeN{DBLP:conf/gis/2010}, 390--393.
\newblock


\bibitem[\protect\citeauthoryear{Kurata and Shi}{Kurata and Shi}{2008}]%
        {KS08}
{Y. Kurata} {and} {H. Shi}. 2008.
\newblock \showarticletitle{Interpreting Motion Expressions in Route
  Instructions Using Two Projection-Based Spatial Models}. In {\em Proc.\ of KI
  2008} {\em (LNCS)}, Vol. 5243. Springer, 258--266.
\newblock


\bibitem[\protect\citeauthoryear{Lee}{Lee}{2014}]%
        {lee_complexity_2014}
{J.~H. Lee}. 2014.
\newblock \showarticletitle{The Complexity of Reasoning with Relative
  Directions}, See \citeN{DBLP:conf/ecai/2014}, 507--512.
\newblock


\bibitem[\protect\citeauthoryear{Lee, Renz, and Wolter}{Lee
  et~al\mbox{.}}{2013}]%
        {lee-renz-wolter-IJCAI:13}
{J.~H. Lee}, {J. Renz}, {and} {D. Wolter}. 2013.
\newblock \showarticletitle{{StarVars}---Effective Reasoning about Relative
  Directions}, See \citeN{DBLP:conf/ijcai/2013}, 976--982.
\newblock


\bibitem[\protect\citeauthoryear{Levinson}{Levinson}{2003}]%
        {Levinson03:space}
{S.~C. Levinson}. 2003.
\newblock {\em {S}pace in {L}anguage and {C}ognition: {E}xplorations in
  {C}ognitive {D}iversity}.
\newblock Cambridge University Press, Cambridge.
\newblock
\showISBNx{0-5210-1196-5}


\bibitem[\protect\citeauthoryear{Li, Liu, and Wang}{Li et~al\mbox{.}}{2013}]%
        {LLW13}
{S. Li}, {W. Liu}, {and} {S. Wang}. 2013.
\newblock \showarticletitle{Qualitative constraint satisfaction problems: An
  extended framework with landmarks}.
\newblock {\em Artif.\ Intell.\/}  {201} (2013), 32--58.
\newblock


\bibitem[\protect\citeauthoryear{Ligozat}{Ligozat}{1993}]%
        {DBLP:conf/cosit/Ligozat93}
{G. Ligozat}. 1993.
\newblock \showarticletitle{Qualitative Triangulation for Spatial Reasoning}.
  In {\em COSIT 1993}. Springer, 54--68.
\newblock


\bibitem[\protect\citeauthoryear{Ligozat}{Ligozat}{1998}]%
        {ligozat-JVLC:98}
{G. Ligozat}. 1998.
\newblock \showarticletitle{Reasoning about Cardinal Directions}.
\newblock {\em J. Vis. Lang. Comput.\/} {9}, 1 (1998), 23--44.
\newblock


\bibitem[\protect\citeauthoryear{Ligozat}{Ligozat}{2005}]%
        {Lig05}
{G. Ligozat}. 2005.
\newblock \showarticletitle{Categorical Methods in Qualitative Reasoning: The
  Case for Weak Representations}. In {\em COSIT 2005} {\em (LNCS)}, Vol. 3693.
  Springer, 265--282.
\newblock


\bibitem[\protect\citeauthoryear{Ligozat}{Ligozat}{2011}]%
        {Ligozat11}
{G. Ligozat}. 2011.
\newblock {\em Qualitative Spatial and Temporal Reasoning}.
\newblock John Wiley \& Sons.
\newblock


\bibitem[\protect\citeauthoryear{Ligozat and Renz}{Ligozat and Renz}{2004}]%
        {LigozatR04}
{G. Ligozat} {and} {J. Renz}. 2004.
\newblock \showarticletitle{What Is a Qualitative Calculus? {A} General
  Framework}, See \citeN{DBLP:conf/pricai/2004}, 53--64.
\newblock


\bibitem[\protect\citeauthoryear{Liu and Li}{Liu and Li}{2011}]%
        {Liu-Li-AIJ:11}
{W. Liu} {and} {S. Li}. 2011.
\newblock \showarticletitle{Reasoning about cardinal directions between
  extended objects: The NP-hardness result}.
\newblock {\em Artif.\ Intell.\/}  {175} (2011), 2155--2169.
\newblock


\bibitem[\protect\citeauthoryear{Liu and Li}{Liu and Li}{2012}]%
        {liu-li-ECAI:12}
{W. Liu} {and} {S. Li}. 2012.
\newblock \showarticletitle{Here, There, but Not Everywhere: An Extended
  Framework for Qualitative Constraint Satisfaction}, See
  \citeN{DBLP:conf/ecai/2012}, 552--557.
\newblock


\bibitem[\protect\citeauthoryear{Liu, Li, and Renz}{Liu et~al\mbox{.}}{2009}]%
        {liu-etal-IJCAI:09}
{W. Liu}, {S. Li}, {and} {J. Renz}. 2009.
\newblock \showarticletitle{Combining {RCC-8} with Qualitative Direction
  Calculi: Algorithms and Complexity}, See \citeN{DBLP:conf/ijcai/2009},
  854--859.
\newblock


\bibitem[\protect\citeauthoryear{Liu, Zhang, Li, and Ying}{Liu
  et~al\mbox{.}}{2010}]%
        {Zhang-etal-AIJ:10}
{W. Liu}, {X. Zhang}, {S. Li}, {and} {M. Ying}. 2010.
\newblock \showarticletitle{Reasoning about Cardinal Directions between
  Extended Objects}.
\newblock {\em Artif.\ Intell.\/} {174}, 12--13 (2010), 951--983.
\newblock


\bibitem[\protect\citeauthoryear{L{\"{u}}cke and Mossakowski}{L{\"{u}}cke and
  Mossakowski}{2010}]%
        {LuckeMossakowski_2010}
{D. L{\"{u}}cke} {and} {T. Mossakowski}. 2010.
\newblock \showarticletitle{A much better polynomial time approximation of
  consistency in the {LR} calculus}. In {\em STAIRS 2010} {\em (FAIA)}, Vol.
  222. IOS Press, 175--185.
\newblock


\bibitem[\protect\citeauthoryear{Lutz and {\mbox{Mili\'ci\v{c}}}}{Lutz and
  {\mbox{Mili\'ci\v{c}}}}{2007}]%
        {LM07}
{C. Lutz} {and} {M. {\mbox{Mili\'ci\v{c}}}}. 2007.
\newblock \showarticletitle{A Tableau Algorithm for Description Logics with
  Concrete Domains and General {TBoxes}}.
\newblock {\em J. Autom. Reasoning\/} {38}, 1-3 (2007), 227--259.
\newblock


\bibitem[\protect\citeauthoryear{Mackworth}{Mackworth}{1977}]%
        {mackworth-AI:77}
{A.~K. Mackworth}. 1977.
\newblock \showarticletitle{Consistency in networks of relations}.
\newblock {\em Artif.\ Intell.\/}  {8} (1977), 99--118.
\newblock


\bibitem[\protect\citeauthoryear{Maddux}{Maddux}{2006}]%
        {Mad06}
{R.~D. Maddux}. 2006.
\newblock {\em Relation algebras}. Stud.\ Logic Found.\ Math., Vol. 150.
\newblock Elsevier.
\newblock


\bibitem[\protect\citeauthoryear{Moratz}{Moratz}{2006}]%
        {Moratz06_ECAI}
{R. Moratz}. 2006.
\newblock \showarticletitle{Representing Relative Direction as a Binary
  Relation of Oriented Points}. In {\em Proc.\ of ECAI 2006} {\em (FAIA)}, Vol.
  141. IOS Press, 407--411.
\newblock


\bibitem[\protect\citeauthoryear{Moratz, L{\"{u}}cke, and Mossakowski}{Moratz
  et~al\mbox{.}}{2011}]%
        {MoratzEtAl2011}
{R. Moratz}, {D. L{\"{u}}cke}, {and} {T. Mossakowski}. 2011.
\newblock \showarticletitle{A Condensed Semantics for Qualitative Spatial
  Reasoning About Oriented Straight Line Segments}.
\newblock {\em Artif.\ Intell.\/}  {175} (2011), 2099--2127.
\newblock


\bibitem[\protect\citeauthoryear{Moratz and Ragni}{Moratz and Ragni}{2008}]%
        {MoratzR08}
{R. Moratz} {and} {M. Ragni}. 2008.
\newblock \showarticletitle{Qualitative spatial reasoning about relative point
  position}.
\newblock {\em J. Vis. Lang. Comput.\/} {19}, 1 (2008), 75--98.
\newblock


\bibitem[\protect\citeauthoryear{Moratz, Renz, and Wolter}{Moratz
  et~al\mbox{.}}{2000}]%
        {moratz-renz-wolter-ECAI:00}
{R. Moratz}, {J. Renz}, {and} {D. Wolter}. 2000.
\newblock \showarticletitle{Qualitative Spatial Reasoning about Line Segments},
  See \citeN{DBLP:conf/ecai/2000}, 234--238.
\newblock


\bibitem[\protect\citeauthoryear{Moratz and Wallgr{\"u}n}{Moratz and
  Wallgr{\"u}n}{2012}]%
        {MoW12}
{R. Moratz} {and} {J.~O. Wallgr{\"u}n}. 2012.
\newblock \showarticletitle{Spatial reasoning with augmented points: Extending
  cardinal directions with local distances}.
\newblock {\em J. Spatial Information Science\/} {5}, 1 (2012), 1--30.
\newblock


\bibitem[\protect\citeauthoryear{Mossakowski}{Mossakowski}{2007}]%
        {Mos07}
{F. Mossakowski}. 2007.
\newblock {\em {A}lgebraische {E}igenschaften qualitativer
  {C}on{\-}straint-{K}al{\-}k{\"u}le}.
\newblock Diploma thesis. Dept.\ of Comput.\ Science, University of Bremen.
\newblock
\newblock
\shownote{In German.}


\bibitem[\protect\citeauthoryear{Mossakowski, Maeder, and
  L{\"{u}}ttich}{Mossakowski et~al\mbox{.}}{2007}]%
        {MML07}
{T. Mossakowski}, {C. Maeder}, {and} {K. L{\"{u}}ttich}. 2007.
\newblock \showarticletitle{The Heterogeneous Tool Set, {Hets}}. In {\em Proc.\
  of TACAS 2007} {\em (LNCS)}, Vol. 4424. Springer, 519--522.
\newblock


\bibitem[\protect\citeauthoryear{Mossakowski and Moratz}{Mossakowski and
  Moratz}{2012}]%
        {MossakowskiMoratz2011}
{T. Mossakowski} {and} {R. Moratz}. 2012.
\newblock \showarticletitle{Qualitative Reasoning about Relative Direction of
  Oriented Points}.
\newblock {\em Artif.\ Intell.\/}  {180-181} (2012), 34--45.
\newblock


\bibitem[\protect\citeauthoryear{Mossakowski, Moratz, and
  L{\"u}cke}{Mossakowski et~al\mbox{.}}{2015}]%
        {MossakowskiMoratzLuecke}
{T. Mossakowski}, {R. Moratz}, {and} {D. L{\"u}cke}. 2015.
\newblock \showarticletitle{Relations Between Spatial Calculi About Directions
  and Orientations}.
\newblock {\em J. Artif. Intell. Res. (JAIR)\/} (2015).
\newblock
\newblock
\shownote{To appear.}


\bibitem[\protect\citeauthoryear{Navarrete, Morales, Sciavicco, and
  C{\'{a}}rdenas-Viedma}{Navarrete et~al\mbox{.}}{2013}]%
        {NavarreteEtAl13}
{I. Navarrete}, {A. Morales}, {G. Sciavicco}, {and} {M.~A.
  C{\'{a}}rdenas-Viedma}. 2013.
\newblock \showarticletitle{Spatial reasoning with rectangular cardinal
  relations -- The convex tractable subalgebra}.
\newblock {\em Ann.\ Math.\ Artif.\ Intell.\/} {67}, 1 (2013), 31--70.
\newblock


\bibitem[\protect\citeauthoryear{Nebel and B{\"u}rckert}{Nebel and
  B{\"u}rckert}{1995}]%
        {nebel-buerckert-ACM:95}
{B. Nebel} {and} {H.-J. B{\"u}rckert}. 1995.
\newblock \showarticletitle{Reasoning about temporal relations: A maximal
  tractable subclass of Allen's interval algebra}.
\newblock {\em Journal of the {ACM}\/} {42}, 1 (1995), 43--66.
\newblock


\bibitem[\protect\citeauthoryear{Nebel and Scivos}{Nebel and Scivos}{2002}]%
        {DBLP:journals/ki/NebelS02}
{B. Nebel} {and} {A. Scivos}. 2002.
\newblock \showarticletitle{Formal Properties of Constraint Calculi for
  Qualitative Spatial Reasoning}.
\newblock {\em KI\/} {16}, 4 (2002), 14--18.
\newblock


\bibitem[\protect\citeauthoryear{Nipkow, Paulson, and Wenzel}{Nipkow
  et~al\mbox{.}}{2002}]%
        {isabelle}
{T. Nipkow}, {L.~C. Paulson}, {and} {M. Wenzel}. 2002.
\newblock {\em Isabelle/{HOL}, A Proof Assistant for Higher-Order Logic}. LNCS,
  Vol. 2283.
\newblock Springer.
\newblock


\bibitem[\protect\citeauthoryear{OuYang, Fu, and Liu}{OuYang
  et~al\mbox{.}}{2007}]%
        {OFL07}
{J. OuYang}, {Q. Fu}, {and} {D. Liu}. 2007.
\newblock \showarticletitle{A Model for Representing Topological Relations
  Between Simple Concave Regions}. In {\em Proc.\ of ICCS 2007} {\em (LNCS)},
  Vol. 4487. Springer, 160--167.
\newblock


\bibitem[\protect\citeauthoryear{Pacheco, Escrig, and Toledo}{Pacheco
  et~al\mbox{.}}{2001}]%
        {PET01}
{J. Pacheco}, {M.~T. Escrig}, {and} {F. Toledo}. 2001.
\newblock \showarticletitle{Representing and Reasoning on Three-Dimensional
  Qualitative Orientation Point Objects}. In {\em Proc.\ of EPIA 2001} {\em
  (LNCS)}, Vol. 2258. Springer, 298--305.
\newblock


\bibitem[\protect\citeauthoryear{Pujari, Kumari, and Sattar}{Pujari
  et~al\mbox{.}}{1999}]%
        {PKS99}
{A. Pujari}, {G. Kumari}, {and} {A. Sattar}. 1999.
\newblock \showarticletitle{{INDU}: An Interval and Duration Network}. In {\em
  Proc.\ of AI 1999} {\em (LNCS)}, Vol. 1747. Springer, 291--303.
\newblock


\bibitem[\protect\citeauthoryear{Pujari and Sattar}{Pujari and Sattar}{1999}]%
        {pujari-sattar-IJCAI:99}
{A.~K. Pujari} {and} {A. Sattar}. 1999.
\newblock \showarticletitle{A new framework for reasoning about points,
  intervals and durations}, See \citeN{DBLP:conf/ijcai/99}, 1259--1267.
\newblock


\bibitem[\protect\citeauthoryear{Ragni and Scivos}{Ragni and Scivos}{2005}]%
        {ragni-scivos-KI:05}
{M. Ragni} {and} {A. Scivos}. 2005.
\newblock \showarticletitle{Dependency calculus: Reasoning in a general point
  relation algebra}. In {\em Proc.\ of KI 2005} {\em (LNCS)}, Vol. 3698.
  Springer, 49--63.
\newblock


\bibitem[\protect\citeauthoryear{Randell, Cui, and Cohn}{Randell
  et~al\mbox{.}}{1992}]%
        {randell-cui-cohn-KR:92}
{D.~A. Randell}, {Z. Cui}, {and} {A.~G. Cohn}. 1992.
\newblock \showarticletitle{A Spatial Logic based on Regions and
  {``Connection''}}. In {\em Proc.\ of KR 1992}. Morgan Kaufmann, 165--176.
\newblock


\bibitem[\protect\citeauthoryear{Randell, Witkowski, and Shanahan}{Randell
  et~al\mbox{.}}{2001}]%
        {Randell:2001ww}
{D.~A. Randell}, {M. Witkowski}, {and} {M. Shanahan}. 2001.
\newblock \showarticletitle{From Images to Bodies: Modelling and Exploiting
  Spatial Occlusion and Motion Parallax}, See \citeN{DBLP:conf/ijcai/2001},
  57--66.
\newblock


\bibitem[\protect\citeauthoryear{Renz}{Renz}{1999}]%
        {renz-IJCAI:99}
{J. Renz}. 1999.
\newblock \showarticletitle{Maximal Tractable Fragments of the Region
  Connection Calculus: A Complete Analysis}, See \citeN{DBLP:conf/ijcai/99},
  448--455.
\newblock


\bibitem[\protect\citeauthoryear{Renz}{Renz}{2001}]%
        {Ren01}
{J. Renz}. 2001.
\newblock \showarticletitle{A Spatial Odyssey of the Interval Algebra: 1.
  Directed Intervals}, See \citeN{DBLP:conf/ijcai/2001}, 51--56.
\newblock


\bibitem[\protect\citeauthoryear{Renz}{Renz}{2002}]%
        {renz:02}
{J. Renz}. 2002.
\newblock {\em Qualitative Spatial Reasoning with Topological Information}.
  LNCS, Vol. 2293.
\newblock Springer, Berlin.
\newblock


\bibitem[\protect\citeauthoryear{Renz}{Renz}{2007}]%
        {renz-efficient}
{J. Renz}. 2007.
\newblock \showarticletitle{Qualitative spatial and temporal reasoning:
  Efficient algorithms for everyone}, See \citeN{DBLP:conf/ijcai/2007},
  526--531.
\newblock


\bibitem[\protect\citeauthoryear{Renz and Mitra}{Renz and Mitra}{2004}]%
        {renz-mitra-PRICAI:04}
{J. Renz} {and} {D. Mitra}. 2004.
\newblock \showarticletitle{Qualitative Direction Calculi with Arbitrary
  Granularity}, See \citeN{DBLP:conf/pricai/2004}, 65--74.
\newblock


\bibitem[\protect\citeauthoryear{Renz and Nebel}{Renz and Nebel}{2007}]%
        {Renz_Nebel_2007_Qualitative}
{J. Renz} {and} {B. Nebel}. 2007.
\newblock \showarticletitle{Qualitative spatial reasoning using constraint
  calculi}.
\newblock See \citeN{Aiello_Pratt-Hartmann_Benthem_07_Handbook}, 161--215.
\newblock


\bibitem[\protect\citeauthoryear{Russell and Norvig}{Russell and
  Norvig}{2009}]%
        {russell-norvig}
{S. Russell} {and} {P. Norvig}. 2009.
\newblock {\em Artificial Intelligence: A Modern Approach\/} (3 ed.).
\newblock Prentice Hall.
\newblock


\bibitem[\protect\citeauthoryear{Sabharwal and Leopold}{Sabharwal and
  Leopold}{2014}]%
        {SL14}
{C.~L. Sabharwal} {and} {J.~L. Leopold}. 2014.
\newblock \showarticletitle{Evolution of Region Connection Calculus to
  {VRCC-3D+}}.
\newblock {\em New Mathematics and Natural Computation\/}  {10} (2014), 1--39.
\newblock
Issue 20.


\bibitem[\protect\citeauthoryear{Schaefer, Sedgwick, and
  \v{S}tefankovi\v{c}}{Schaefer et~al\mbox{.}}{2003}]%
        {Schaefer03NP}
{M. Schaefer}, {E. Sedgwick}, {and} {D. \v{S}tefankovi\v{c}}. 2003.
\newblock \showarticletitle{Recognizing String Graphs in NP}.
\newblock {\em J. Comput. Syst. Sci.\/} {67}, 2 (2003), 365--380.
\newblock
\newblock
\shownote{STOC 2002 special issue.}


\bibitem[\protect\citeauthoryear{Schaefer and \v{S}tefankovi\v{c}}{Schaefer and
  \v{S}tefankovi\v{c}}{2004}]%
        {Schaefer03Decidable}
{M. Schaefer} {and} {D. \v{S}tefankovi\v{c}}. 2004.
\newblock \showarticletitle{Decidability of String Graphs}.
\newblock {\em J. Comput. Syst. Sci.\/} {68}, 2 (2004), 319--334.
\newblock
\newblock
\shownote{STOC 2001 special issue.}


\bibitem[\protect\citeauthoryear{Schiffer, Ferrein, and Lakemeyer}{Schiffer
  et~al\mbox{.}}{2012}]%
        {Schiffer:2011kr}
{S. Schiffer}, {A. Ferrein}, {and} {G. Lakemeyer}. 2012.
\newblock \showarticletitle{Reasoning with Qualitative Positional Information
  for Domestic Domains in the Situation Calculus}.
\newblock {\em Journal of Intelligent and Robotic Systems. {S}pecial Issue on
  Domestic Service Robots in the Real World\/} {66}, 1-2 (2012), 273--300.
\newblock


\bibitem[\protect\citeauthoryear{Schockaert and Li}{Schockaert and Li}{2015}]%
        {SL15}
{S. Schockaert} {and} {S. Li}. 2015.
\newblock \showarticletitle{Realizing {RCC8} networks using convex regions}.
\newblock {\em Artif.\ Intell.\/}  {218} (2015), 74--105.
\newblock


\bibitem[\protect\citeauthoryear{Schockaert, Smart, and Twaroch}{Schockaert
  et~al\mbox{.}}{2011}]%
        {schockaert2011}
{S. Schockaert}, {P.~D. Smart}, {and} {F.~A. Twaroch}. 2011.
\newblock \showarticletitle{Generating approximate region boundaries from
  heterogeneous spatial information: An evolutionary approach}.
\newblock {\em Inf. Sci.\/} {181}, 2 (2011), 257--283.
\newblock


\bibitem[\protect\citeauthoryear{Schultz and Bhatt}{Schultz and Bhatt}{2012}]%
        {DBLP:conf/ecai/SchultzB12}
{C. Schultz} {and} {M. Bhatt}. 2012.
\newblock \showarticletitle{Towards a Declarative Spatial Reasoning System},
  See \citeN{DBLP:conf/ecai/2012}, 925--926.
\newblock


\bibitem[\protect\citeauthoryear{Scivos and Nebel}{Scivos and Nebel}{2005}]%
        {SN05}
{A. Scivos} {and} {B. Nebel}. 2005.
\newblock \showarticletitle{The Finest of its Class: The Natural Point-Based
  Ternary Calculus for Qualitative Spatial Reasoning}. In {\em Spatial
  Cognition 2004} {\em (LNCS)}, Vol. 3343. Springer, 283--303.
\newblock


\bibitem[\protect\citeauthoryear{Sioutis and Condotta}{Sioutis and
  Condotta}{2014}]%
        {sioutis-condotta:SETN}
{M. Sioutis} {and} {J.-F. Condotta}. 2014.
\newblock \showarticletitle{Tackling Large Qualitative Spatial Network of
  Scale-Free-Like Structure}. In {\em Proc.\ of SETN 2014} {\em (LNCS)}, Vol.
  8445. Springer, 178--191.
\newblock


\bibitem[\protect\citeauthoryear{Sirin, Parsia, {Cuenca Grau}, Kalyanpur, and
  Katz}{Sirin et~al\mbox{.}}{2007}]%
        {SPC07}
{E. Sirin}, {B. Parsia}, {B. {Cuenca Grau}}, {A. Kalyanpur}, {and} {Y. Katz}.
  2007.
\newblock \showarticletitle{Pellet: A practical {OWL-DL} reasoner}.
\newblock {\em J.\ Web Sem.\/} {5}, 2 (2007), 51--53.
\newblock


\bibitem[\protect\citeauthoryear{Skiadopoulos and Koubarakis}{Skiadopoulos and
  Koubarakis}{2004}]%
        {SkiadoK04}
{S. Skiadopoulos} {and} {M. Koubarakis}. 2004.
\newblock \showarticletitle{Composing cardinal direction relations}.
\newblock {\em Artif.\ Intell.\/} {152}, 2 (2004), 143--171.
\newblock


\bibitem[\protect\citeauthoryear{Skiadopoulos and Koubarakis}{Skiadopoulos and
  Koubarakis}{2005}]%
        {SkiadoK05}
{S. Skiadopoulos} {and} {M. Koubarakis}. 2005.
\newblock \showarticletitle{On the consistency of cardinal direction
  constraints}.
\newblock {\em Artif.\ Intell.\/} {163}, 1 (2005), 91--135.
\newblock


\bibitem[\protect\citeauthoryear{Stell}{Stell}{2013}]%
        {stell13}
{J.~G. Stell}. 2013.
\newblock \showarticletitle{Granular Description of Qualitative Change}, See
  \citeN{DBLP:conf/ijcai/2013}, 1111--1117.
\newblock


\bibitem[\protect\citeauthoryear{Stocker and Sirin}{Stocker and Sirin}{2009}]%
        {pelletSp}
{M. Stocker} {and} {E. Sirin}. 2009.
\newblock \showarticletitle{PelletSpatial: A Hybrid RCC-8 and {RDF/OWL}
  Reasoning and Query Engine}. In {\em Proc.\ of OWLED 2009} {\em ({CEUR}
  Workshop Proceedings)}, Vol. 529. CEUR-WS.org, 2--31.
\newblock


\bibitem[\protect\citeauthoryear{Takahashi}{Takahashi}{2012}]%
        {takahashi2012}
{K. Takahashi}. 2012.
\newblock \showarticletitle{{PLCA}: A Framework for Qualitative Spatial
  Reasoning Based on Connection Patterns of Regions}.
\newblock In {\em Qualitative Spatio-Temporal Representation and Reasoning:
  Trends and Future Directions}. IGI Global, Chapter~2, 63--96.
\newblock


\bibitem[\protect\citeauthoryear{Tarquini, {De Felice}, Fogliaroni, and
  Clementini}{Tarquini et~al\mbox{.}}{2007}]%
        {TDFC07}
{F. Tarquini}, {G. {De Felice}}, {P. Fogliaroni}, {and} {E. Clementini}. 2007.
\newblock \showarticletitle{A Qualitative Model for Visibility Relations}. In
  {\em Proc.\ of KI 2007} {\em (LNCS)}, Vol. 4667. Springer, 510--513.
\newblock


\bibitem[\protect\citeauthoryear{{van Beek}}{{van Beek}}{1991}]%
        {Beek1991}
{P. {van Beek}}. 1991.
\newblock \showarticletitle{Temporal query processing with indefinite
  information}.
\newblock {\em Artificial Intelligence in Medicine\/} {3}, 6 (1991), 325--339.
\newblock


\bibitem[\protect\citeauthoryear{{van Beek}}{{van Beek}}{1992}]%
        {vanBeek-AI:92}
{P. {van Beek}}. 1992.
\newblock \showarticletitle{Reasoning about qualitative temporal information}.
\newblock {\em Artif.\ Intell.\/}  {58} (1992), 297--326.
\newblock


\bibitem[\protect\citeauthoryear{{Van de Weghe}, Kuijpers, Bogaert, and {De
  Maeyer}}{{Van de Weghe} et~al\mbox{.}}{2005}]%
        {DBLP:conf/geos/WegheKBM05}
{N. {Van de Weghe}}, {B. Kuijpers}, {P. Bogaert}, {and} {P. {De Maeyer}}. 2005.
\newblock \showarticletitle{A Qualitative Trajectory Calculus and the
  Composition of Its Relations}. In {\em Proc.\ of GeoS 2005} {\em (LNCS)},
  Vol. 3799. Springer, 60--76.
\newblock


\bibitem[\protect\citeauthoryear{van Delden and Mossakowski}{van Delden and
  Mossakowski}{2013}]%
        {vanDeldenMossakowski_2013}
{A. van Delden} {and} {T. Mossakowski}. 2013.
\newblock \showarticletitle{Mastering Left and Right -- Different Approaches to
  a Problem That Is Not Straight Forward}. In {\em Proc.\ of KI 2013} {\em
  (LNCS)}, Vol. 8077. Springer, 248--259.
\newblock


\bibitem[\protect\citeauthoryear{Vilain and Kautz}{Vilain and Kautz}{1986}]%
        {vilain-kautz-aaai:86}
{M.~B. Vilain} {and} {H.~A. Kautz}. 1986.
\newblock \showarticletitle{Constraint propagation algorithms for temporal
  reasoning}. In {\em Proc.\ of AAAI 1986}. Morgan Kaufmann, 377---382.
\newblock


\bibitem[\protect\citeauthoryear{Vilain, Kautz, and {van Beek}}{Vilain
  et~al\mbox{.}}{1990}]%
        {Vilain:1989qv}
{M.~B. Vilain}, {H.~A. Kautz}, {and} {P. {van Beek}}. 1990.
\newblock \showarticletitle{Constraint Propagation Algorithms for Temporal
  Reasoning: A Revised Report}.
\newblock In {\em Readings in Qualitative Reasoning About Physical Systems}.
  Morgan Kaufmann, 373--381.
\newblock


\bibitem[\protect\citeauthoryear{Wallgr{\"u}n}{Wallgr{\"u}n}{2012}]%
        {Wallgruen_12_Exploiting}
{J.~O. Wallgr{\"u}n}. 2012.
\newblock \showarticletitle{Exploiting Qualitative Spatial Reasoning for
  Topological Adjustment of Spatial Data}. In {\em Proc.\ of ACM-GIS 2012}. ACM
  Press, 229--238.
\newblock


\bibitem[\protect\citeauthoryear{Wallgr\"un, Dylla, Klippel, and
  Yang}{Wallgr\"un et~al\mbox{.}}{2013}]%
        {Wallgruen:2013:QR:Taxonomy}
{J.~O. Wallgr\"un}, {F. Dylla}, {A. Klippel}, {and} {J. Yang}. 2013.
\newblock \showarticletitle{Understanding Human Spatial Conceptualizations to
  Improve Applications of Qualitative Spatial Calculi}. In {\em Proc.\ of QR
  2013, Etelsen, Germany}. SFB/TR8 Spatial Cognition, 131--137.
\newblock


\bibitem[\protect\citeauthoryear{Wallgr{\"u}n, Wolter, and
  Richter}{Wallgr{\"u}n et~al\mbox{.}}{2010}]%
        {Wallgruen_Wolter_Richter_10_Qualita}
{J.~O. Wallgr{\"u}n}, {D. Wolter}, {and} {K.-F. Richter}. 2010.
\newblock \showarticletitle{Qualitative Matching of Spatial Information}, See
  \citeN{DBLP:conf/gis/2010}, 300--309.
\newblock


\bibitem[\protect\citeauthoryear{Weidenbach, Brahm, Hillenbrand, Keen,
  Theobald, and Topi{\'c}}{Weidenbach et~al\mbox{.}}{2002}]%
        {spass}
{C. Weidenbach}, {U. Brahm}, {T. Hillenbrand}, {E. Keen}, {C. Theobald}, {and}
  {D. Topi{\'c}}. 2002.
\newblock \showarticletitle{{SPASS} Version 2.0}. In {\em Automated
  Deduction---{CADE}-18} {\em (LNCS)}, Vol. 2392. Springer, 275---279.
\newblock


\bibitem[\protect\citeauthoryear{Westphal, Hu{\'{e}}, and W{\"{o}}lfl}{Westphal
  et~al\mbox{.}}{2014}]%
        {WHW14}
{M. Westphal}, {J. Hu{\'{e}}}, {and} {S. W{\"{o}}lfl}. 2014.
\newblock \showarticletitle{On the Scope of Qualitative Constraint Calculi}. In
  {\em Proc.\ of KI 2014} {\em (LNCS)}, Vol. 8736. Springer, 207--218.
\newblock


\bibitem[\protect\citeauthoryear{Westphal and W{\"o}lfl}{Westphal and
  W{\"o}lfl}{2008}]%
        {westphal-woelfl-ECAI:08}
{M. Westphal} {and} {S. W{\"o}lfl}. 2008.
\newblock \showarticletitle{Bipath Consistency Revisited}. In {\em Proc.\ of
  ECAI 2008: Workshop on Spatial and Temporal Reasoning}. IOS Press, Amsterdam,
  36--40.
\newblock


\bibitem[\protect\citeauthoryear{Westphal, W{\"{o}}lfl, and Gantner}{Westphal
  et~al\mbox{.}}{2009}]%
        {gqr}
{M. Westphal}, {S. W{\"{o}}lfl}, {and} {Z. Gantner}. 2009.
\newblock \showarticletitle{{GQR:} {A} Fast Solver for Binary Qualitative
  Constraint Networks}. In {\em AAAI Spring Symposium on Benchmarking of
  Qualitative Spatial and Temporal Reasoning Systems} {\em (TR SS-09-02)}. AAAI
  Press, 51--52.
\newblock


\bibitem[\protect\citeauthoryear{Williams and de~Kleer}{Williams and
  de~Kleer}{1991}]%
        {Williams:1991wz}
{B.~C. Williams} {and} {J. de Kleer}. 1991.
\newblock \showarticletitle{Qualitative Reasoning About Physical Systems: a
  Return to Roots}.
\newblock {\em Artif.\ Intell.\/} {51}, 1-3 (1991), 1--9.
\newblock


\bibitem[\protect\citeauthoryear{W{\"o}lfl, Mossakowski, and
  Schr{\"o}der}{W{\"o}lfl et~al\mbox{.}}{2007}]%
        {Woelfl06}
{S. W{\"o}lfl}, {T. Mossakowski}, {and} {L. Schr{\"o}der}. 2007.
\newblock \showarticletitle{Qualitative Constraint Calculi: Heterogeneous
  Verification of Composition Tables}. In {\em FLAIRS Conference 2007}. AAAI
  Press, 665--670.
\newblock


\bibitem[\protect\citeauthoryear{W{\"{o}}lfl and Westphal}{W{\"{o}}lfl and
  Westphal}{2009}]%
        {WW09}
{S. W{\"{o}}lfl} {and} {M. Westphal}. 2009.
\newblock \showarticletitle{On Combinations of Binary Qualitative Constraint
  Calculi}, See \citeN{DBLP:conf/ijcai/2009}, 967--973.
\newblock


\bibitem[\protect\citeauthoryear{Wolter and Lee}{Wolter and Lee}{2010}]%
        {WolterL:2010:Realization4Direction}
{D. Wolter} {and} {J.~H. Lee}. 2010.
\newblock \showarticletitle{Qualitative reasoning with directional relations}.
\newblock {\em Artif.\ Intell.\/} {174}, 18 (2010), 1498--1507.
\newblock


\bibitem[\protect\citeauthoryear{Wolter and Wallgr{\"u}n}{Wolter and
  Wallgr{\"u}n}{2012}]%
        {wolter-wallgruen:10}
{D. Wolter} {and} {J.~O. Wallgr{\"u}n}. 2012.
\newblock \showarticletitle{Qualitative Spatial Reasoning for Applications: New
  Challenges and the {SparQ} Toolbox}.
\newblock In {\em Qualitative Spatio-Temporal Representation and Reasoning:
  Trends and Future Directions}. {IGI} Global, Chapter~11, 336--362.
\newblock


\bibitem[\protect\citeauthoryear{Wolter and Zakharyaschev}{Wolter and
  Zakharyaschev}{2000}]%
        {wolter-zakharyaschev:BRCC}
{F. Wolter} {and} {M. Zakharyaschev}. 2000.
\newblock \showarticletitle{Spatial reasoning in {RCC--8} with Boolean region
  terms}, See \citeN{DBLP:conf/ecai/2000}, 244--248.
\newblock


\bibitem[\protect\citeauthoryear{Worboys}{Worboys}{2013}]%
        {Worboys2013}
{M. Worboys}. 2013.
\newblock \showarticletitle{Using Maptrees to Characterize Topological Change},
  See \citeN{DBLP:conf/cosit/2013}, 74--90.
\newblock


\bibitem[\protect\citeauthoryear{Worboys and Duckham}{Worboys and
  Duckham}{2004}]%
        {Worboys.2004}
{M. Worboys} {and} {M. Duckham}. 2004.
\newblock {\em {G}{I}{S}: {A} {C}omputing {P}erspective\/} (2 ed.).
\newblock CRC Press, Boca Raton FL.
\newblock


\bibitem[\protect\citeauthoryear{Zhang and Renz}{Zhang and Renz}{2014}]%
        {ZhangRenz_2014_AngryBirds}
{P. Zhang} {and} {J. Renz}. 2014.
\newblock \showarticletitle{Qualitative Spatial Representation and Reasoning in
  Angry Birds: The Extended Rectangle Algebra}, See \citeN{DBLP:conf/kr/2014}.
\newblock


\end{thebibliography}
%
% History dates
% \enlargethispage*{20mm}
\received{Month Year}{Month Year}{Month Year}

% Electronic Appendix
\elecappendix

\medskip

\appendix

% \expandafter\def\expandafter\nextref\expandafter{\ref{sec:requirements}}
% \def\nextref{\expandafter\ref{sec:requirements}} 
\section{Additional proofs: Section ``Requirements to Qualitative Representations''} % We cannot use the label in the \section command because the CSUR style screws it up
\label{app:proofs4Req}

\subsection{Proof of Fact \ref{fact:abstract_vs_weak_vs_strong}}
\label{app:abstract_vs_weak_vs_strong}

\emph{Fact} \ref{fact:abstract_vs_weak_vs_strong}.
Every strong permutation (composition) is weak,
and every weak permutation (composition) is abstract.%

\begin{proof}
  \emph{``Every strong permutation is weak.''}
  We assume that the permutation $\breve{~}$ associated with $\pi$ is strong, i.e., 
  for all $r \in \Rel$,
  \begin{equation}
    \varphi(r\breve{~}) = \varphi(r)^{\pi},
    \label{eq:strong_converse_appx}
  \end{equation}
  and show that $\breve{~}$ is weak, i.e.,
  for all $r \in \Rel$:
  \begin{equation}
    r\breve{~} = \bigcap\{S \subseteq \Rel \mid \varphi(S) \supseteq \varphi(r)^{\pi}\}
    \label{eq:weak_converse_appx}
  \end{equation}
  For ``$\subseteq$'', it suffices to show that,
  for every $S \subseteq \Rel$ with $\varphi(S) \supseteq \varphi(r)^{\pi}$,
  we have $r\breve{~} \subseteq S$.
  This follows from the inclusion ``$\subseteq$'' of \eqref{eq:strong_converse_appx}
  and the injectivity of $\varphi$.

  For ``$\supseteq$'', let $s \in \bigcap\{S \subseteq \Rel \mid \varphi(S) \supseteq \varphi(r)^{\pi}\}$,
  that is, $s \in S$ for every $S \subseteq \Rel$ with $\varphi(S) \supseteq \varphi(r)^{\pi}\}$.
  Since $r\breve{~}$ is such an $S$ due to the inclusion ``$\supseteq$'' of \eqref{eq:strong_converse_appx},
  we have $s \in r\breve{~}$.

  \par\medskip\noindent
  \emph{``Every weak permutation is abstract.''}
  Strictly speaking, the phrasing in Definition \ref{def:stronger_versions_of_comp+conv}
  implies this statement. However, it is easy to show the stronger statement
  that \eqref{eq:weak_converse_appx} implies
  \begin{equation*}
    \varphi(r\breve{~}) \supseteq \varphi(r)^{\pi}\,.
%     \label{eq:abstract_converse_appx}
  \end{equation*}
  Indeed, this is justified by the following chain of equalities and inclusions.
  \begin{align*}
    \varphi(r\breve{~}) & = \varphi\left(\bigcap\{S \subseteq \Rel \mid \varphi(S) \supseteq \varphi(r)^{\pi}\}\right) \\
                        & = \bigcap\{\varphi(S) \subseteq \Rel \mid \varphi(S) \supseteq \varphi(r)^{\pi}\}            \\
                        & \supseteq \varphi(r)^{\pi},
  \end{align*}
  where the first equality follows from \eqref{eq:weak_converse_appx},
  the second follows from the extension of $\varphi$ to composite relations as per Definition \ref{def:qualitative_calculus},
  and the final inclusion is an obvious property of sets.

  \par\medskip\noindent
  The respective statements about composition are proven analogously.
\end{proof}

\subsection{Proof of Fact \ref{fact:weak+strong_conv+comp_general}}
\label{app:weak+strong_conv+comp_general}

\emph{Fact} \ref{fact:weak+strong_conv+comp_general}.
Given a qualitative calculus $(\Rel,\Int,\breve{~}^1,\dots,\breve{~}^k,\diamond)$
based on the interpretation $\Int = (\Univ, \varphi, \cdot^{\pi_1},\dots,\cdot^{\pi_k}, \circ)$,
the following hold.
\par\smallskip\noindent
For all relations $R \subseteq \Rel$ and $i=1,\dots,k$:
\begin{equation}
  \varphi(R\breve{~}^i) \supseteq \varphi(R)^{\pi_i}
  \label{eq:abstract_converse_general_app}
\end{equation}
For all relations $R_1,\dots,R_m \subseteq \Rel$:
\begin{equation}
  \varphi(\diamond(R_1,\dots,R_m))  \supseteq \circ(\varphi(R_1),\dots,\varphi(R_m))
  \label{eq:abstract_composition_general_app}
\end{equation}
If $\breve{~}^i$ is a
weak permutation, then, for all $R \subseteq \Rel$:
\begin{equation}
  R\breve{~}^i = \bigcap\{S \subseteq \Rel \mid \varphi(S) \supseteq \varphi(R)^{\pi_i}\} 
  \label{eq:weak_converse_general_app}
\end{equation}
If $\breve{~}^i$ is a
strong permutation, then, for all $R \subseteq \Rel$:
\begin{equation}
  \varphi(R\breve{~}^i) = \varphi(R)^{\pi_i}
  \label{eq:strong_converse_general_app}
\end{equation}
If $\diamond$ is a
weak composition, then, for all $R_1,\dots,R_m \subseteq \Rel$:
\begin{equation}
  \diamond(R_1,\dots,R_m) = \bigcap\{S \subseteq \Rel \mid \varphi(S) \supseteq \circ(\varphi(R_1),\dots,\varphi(R_m)\}
  \label{eq:weak_composition_general_app}
\end{equation}
If $\diamond$ is a
strong composition, then, for all $R_1,\dots,R_m \subseteq \Rel$:
\begin{equation}
  \varphi(\diamond(R_1,\dots,R_m)) = \circ(\varphi(R_1),\dots,\varphi(R_m))
  \label{eq:strong_composition_general_app}
\end{equation}

\begin{proof}
  For \eqref{eq:abstract_converse_general_app}, consider
  \begin{xalignat*}{2}
    \varphi(R\breve{~}^i) & = \bigcup_{r \in R} \varphi(r\breve{~}^i)           & & \text{~definition of~}\varphi(R\breve{~}^i)   \\
                          & \supseteq \bigcup_{r \in R} \varphi(r)^{\pi_i}      & & \text{~property \eqref{eq:abstract_converse}} \\
                          & = \left(\bigcup_{r \in R} \varphi(r)\right)^{\pi_i} & & \text{~distributivity in set theory}          \\
                          & = \varphi(R)^{\pi_i}                                & & \text{~definition of~} \varphi(R).
%   \end{xalignat*}
  \intertext{For \eqref{eq:abstract_composition_general_app}, consider}
%   \begin{xalignat*}{2}
    \varphi(\diamond(R_1,\dots,R_m))
      & = \bigcup_{r_1\in R_1}\dots\bigcup_{r_m\in R_m} \varphi(\diamond(r_1,\dots,r_m))               & & \text{~definition of~}\varphi(\diamond(R_1,\dots,R_m)) \\
      & \supseteq \bigcup_{r_1\in R_1}\dots\bigcup_{r_m\in R_m} \circ(\varphi(r_1),\dots,\varphi(r_m)) & & \text{~property \eqref{eq:abstract_composition}}       \\
      & = \circ\left(\bigcup_{r_1\in R_1}\varphi(r_1),\dots,\bigcup_{r_m\in R_m}\varphi(r_m)\right)    & &  \text{~distributivity in set theory}                  \\
      & = \circ(\varphi(R_1),\dots,\varphi(R_m))                                                       & & \text{~definition of~} \varphi(R_i)
  \end{xalignat*}
  
  \par\medskip\noindent
  Properties \eqref{eq:strong_converse_general_app} and \eqref{eq:strong_composition_general_app}
  are proven using \eqref{eq:strong_converse} and \eqref{eq:strong_composition}
  in the same way as we have just proven \eqref{eq:abstract_converse_general_app} and \eqref{eq:abstract_composition_general_app}
  using \eqref{eq:abstract_converse} and \eqref{eq:abstract_composition}.
  
  \par\medskip\noindent
  For \eqref{eq:weak_converse_general_app},
  let $R = \{r_1, \dots, r_n\}$ for some $n \geqslant 1$ and $r_1, \dots, r_n \in \Rel$.
  Due to Definition \ref{def:stronger_versions_of_comp+conv} \eqref{eq:weak_converse},
  we have that
  \[
    r_j\breve{~}^i = \bigcap\{S \subseteq \Rel \mid \varphi(S) \supseteq \varphi(r_i)^{\pi_i}\} 
  \]
  for every $j=1,\dots,n$.
  Let $S_{j1}, \dots, S_{jm_j}$ be the $S$ over which the above intersection ranges, i.e.,
  \[
    r_j\breve{~}^i = \bigcap_{h=1}^{m_j}  S_{jh}\,.
  \]
  Due to Definition \ref{def:qualitative_calculus},
  we have that
  \[
    R\breve{~}^i ~~=~~ \bigcup_{j=1}^{n} r_j\breve{~}^i
                 ~~=~~ \bigcup_{j=1}^{n} \bigcap_{h=1}^{m_j}  S_{jh}
                 ~~=~~ \bigcap_{h_1=1}^{m_1} \dots \bigcap_{h_n=1}^{m_n} \bigcup_{j=1}^{n} S_{jh_j}\,,
  \]
  where the last equality is due to the distributivity of intersection over union.
  Now \eqref{eq:weak_converse_general_app} follows if we show
  that, for every $S \in \Rel$, the following are equivalent.
  \begin{enumerate}
    \item
      $\varphi(S) \supseteq \varphi(R)^{\pi_i}$
    \item
      there are $S_1,\dots,S_n$ with $S = S_1 \cup\dots\cup S_n$ and $\varphi(S_j) \supseteq \varphi(r_j)^{\pi_i}$ for every $j=1,\dots,n$.
  \end{enumerate}
  For ``$1 \Rightarrow 2$'', assume $\varphi(S) \supseteq \varphi(R)^{\pi_i}$,
  i.e., $\varphi(S) \supseteq \bigcup_{j=1}^{n}\varphi(r_j)^{\pi_i}$ (Definition \ref{def:qualitative_calculus}).
  If we further assume that $S = \{s_1, \dots, s_\ell\}$, which implies that 
  $\varphi(S) \supseteq \bigcup_{h=1}^{\ell}\varphi(s_h)$ (Definition \ref{def:qualitative_calculus}),
  then we can choose $S_j = \{s_h \mid \varphi(s_h) \cap \varphi(r_j)^{\pi_i} \neq \emptyset\}$
  for every $j=1,\dots,n$. Because $C$ is based on JEPD relations,
  we have that $\varphi(S_j) \supseteq \varphi(r_j)^{\pi_i}$.
  
  For ``$2\Rightarrow1$'', let $S = S_1 \cup\dots\cup S_n$ and $\varphi(S_j) \supseteq \varphi(r_j)^{\pi_i}$ for every $j=1,\dots,n$.
  Due to Definition \ref{def:qualitative_calculus} and because $C$ is based on JEPD relations,
  we have that $\varphi(S) = \bigcup_{j=1}^{n}\varphi(S_j)$.
  Hence, $\varphi(S) \supseteq \bigcup_{j=1}^{n} \varphi(r_j)^{\pi_i}$ via the assumption,
  and $\varphi(S) \supseteq \varphi(R)^{\pi_i}$ due to Definition \ref{def:qualitative_calculus}.

  \eqref{eq:weak_composition_general_app} is proven analogously.
%   \qed
\end{proof}

\section{Additional proofs: Section ``Relation algebras''} % We cannot use the label in the \section command because the CSUR style screws it up
\label{app:proofs4RA}

% \todo[inline]{Proof is not linked in the text. Also, $R_{5l}$ is not included in the table.}
% 
% \subsection[RA5 and RA5l from Table \ref{tab:relation_algebra_axioms} are equivalent given RA1, RA4 and RA6--RA9]{\RA5 and \RA{5l} from Table \ref{tab:relation_algebra_axioms} are equivalent given \RA1, \RA4 and \RA6--\RA9}
% We only show that \RA5 implies \RA{5l}; the converse direction is analogous.
% \begin{xalignat*}{2}
%   r \diamond (s \cup t) & = (r\breve{~})\breve{~} \diamond ((s \cup t)\breve{~})\breve{~}                     & & (\RA7) \\
%                         & = ((s \cup t)\breve{~} \diamond r\breve{~})\breve{~}                                & & (\RA9) \\
%                         & = ((t\breve{~} \cup s\breve{~}) \diamond r\breve{~})\breve{~}                       & & (\RA8) \\
%                         & = ((t\breve{~} \diamond r\breve{~}) \cup (s\breve{~} \diamond r\breve{~}))\breve{~} & & (\RA5) \\
%                         & = ((r \diamond t)\breve{~} \cup (r \diamond s)\breve{~})\breve{~}                   & & (\RA9) \\
%                         & = ((r \diamond t)\breve{~})\breve{~} \cup ((r \diamond s)\breve{~})\breve{~}        & & (\RA8) \\
%                         & = (r \diamond t) \cup (r \diamond s)                                                & & (\RA7) \\
%                         & = (r \diamond s) \cup (r \diamond t)                                                & & (\RA1)
% \end{xalignat*}
% \qed

\subsection[RA6 and RA6l from Table \ref{tab:relation_algebra_axioms} are equivalent given RA7 and RA9]{\RA6 and \RA{6l} from Table \ref{tab:relation_algebra_axioms} are equivalent given \RA7 and \RA9}
\label{app:R6_R6l_equiv}

We only show that \RA6 implies \RA{6l}; the converse direction is analogous.
We first establish that $\id\breve{~} = \id$.
\begin{xalignat*}{2}
  \id\breve{~} & = \id\breve{~} \diamond \id                     & & (\RA6) \\
               & = \id\breve{~} \diamond (\id\breve{~})\breve{~} & & (\RA7) \\
               & = (\id\breve{~} \diamond \id)\breve{~}          & & (\RA9) \\
               & = (\id\breve{~})\breve{~}                       & & (\RA6) \\
               & = \id                                           & & (\RA7)
\intertext{Now we use this lemma to establish \RA{6l}.}
  \id \diamond r  & = (\id\breve{~})\breve{~} \diamond (r\breve{~})\breve{~} & & (\RA7)  \\
                  & = (r\breve{~} \diamond \id\breve{~})\breve{~}            & & (\RA9)  \\
                  & = (r\breve{~} \diamond \id)\breve{~}                     & & (\text{Lemma}) \\
                  & = (r\breve{~})\breve{~}                                  & & (\RA6)  \\
                  & = r                                                      & & (\RA7)
\end{xalignat*}
\qed

\subsection{Proof of Fact \ref{fact:minimal_axiom_set_implied_by_calculus_def}}
\label{app:minimal_axiom_set_implied_by_calculus_def}

\emph{Fact} \ref{fact:minimal_axiom_set_implied_by_calculus_def}.
Every qualitative calculus (Def.\ \ref{def:qualitative_calculus}) satisfies
\RA1--\RA3, \RA5, \RA[sup]7, \RA8, \WA[sup], \SA[sup]
for all (atomic and composite) relations.
This axiom set is maximal: each of the remaining axioms
in Table \ref{tab:relation_algebra_axioms} is not satisfied by some
qualitative calculus.

\begin{proof}
  Axioms \RA1--\RA3 are always satisfied because they are a characterization of a Boolean algebra;
  and the set operations on the relations form a Boolean algebra because
  $\varphi$ maps base relations to a set of JEPD relations and complex relations
  to sets of interpretations of base relations.

  The definition of the converse and composition operations for non-base relations
  in Definition \ref{def:qualitative_calculus} ensures that Axioms \RA5 and \RA8 hold.

  Axiom \RA[sup]7 always holds due to JEPD and the converse being weak:
  For every $r \in \Rel$, we have that
  \[
    \varphi(r\breve{~}\breve{~}) \supseteq \varphi(r\breve{~})\breve{~} \supseteq \varphi(r)\breve{~}\breve{~} = \varphi(r),
  \]
  where the first inclusion is due to Fact \ref{fact:weak+strong_conv+comp_general} \eqref{eq:abstract_converse_general}
  with $R = r\breve{~}$,
  the second inclusion is due to Definition \ref{def:qualitative_calculus} \eqref{eq:abstract_converse} for $r$,
  and the equation is due to the properties of binary relations over the universe $\Univ$.
  Since the $\varphi(r)$ are a set of JEPD relations, $r\breve{~}\breve{~} \supseteq r$ follows.
  This reasoning carries over to arbitrary relations.

  Axioms \WA[sup] and \SA[sup] always hold due to JEPD and the composition being weak:
  For every $r \in \Rel$, we have that
  \[
    \varphi((r \diamond 1) \diamond 1) \supseteq \varphi(r \diamond 1) \circ \varphi(1)
                                      =         \varphi(r \diamond 1) \circ (\Univ \times \Univ)
                                      \supseteq \varphi(r \diamond 1),
  \]
  where the first inclusion is due to to Fact \ref{fact:weak+strong_conv+comp_general} \eqref{eq:abstract_composition_general}
  with $R = r \diamond 1$ and $S = 1$,
  and the last inclusion is due to the fact that $R \circ (\Univ \times \Univ) \supseteq R$ for any binary relation $R \subseteq \Univ \times \Univ$.
  Since the $\varphi(r)$ are a set of JEPD relations, $(r \diamond 1) \diamond 1 \supseteq r \diamond 1$  follows.
  Again, this reasoning carries over to arbitrary relations.

  Axioms \RA[sub]6, \RA[sub]{6l}, \RA[sub]7 %, \RA[sub]{10} 
  are violated by the following calculus.
  Let $\Rel = \{r_1,r_2\}$, $\Univ = \{0,1\}$, $\id = r_1$, $1 = \{r_1,r_2\}$ with:
  \begin{xalignat*}{3}
    \varphi(r_1)  & = \{(0,0),(0,1)\} & r_1\breve{~}  & = 1 & r_1 \diamond r_1 & = 1   \\
    \varphi(r_2)  & = \{(1,0),(1,1)\} & r_2\breve{~}  & = 1 & r_1 \diamond r_2 & = r_1 \\
                  &                   &               &     & r_2 \diamond r_1 & = 1   \\
                  &                   &               &     & r_2 \diamond r_2 & = r_2
  \end{xalignat*}
  This calculus satisfies the conditions in Definition \ref{def:qualitative_calculus}
  but violates Axioms \RA[sub]6, \RA[sub]{6l}, \RA[sub]7: %, \RA[sub]{10}:
  \begin{xalignat*}{2}
  %   r_1 \diamond \id = 1         & ~\nsubseteq~ r_1 & r_1\breve{~} \diamond \overline{r_1\diamond r_2} = 1 \diamond r_2 = 1 & ~\nsubseteq~ r_1 = \overline{r_2} \\
  %   \id \diamond r_1 = 1         & ~\nsubseteq~ r_1 &                                                              &                                   \\
  %   r_1\breve{~}\breve{~} = 1    & ~\nsubseteq~ r_1 &                                                              &  
    & \RA[sub]6    & r_1 \diamond \id = 1         & ~\nsubseteq~ r_1 \\
    & \RA[sub]{6l} & \id \diamond r_1 = 1         & ~\nsubseteq~ r_1 \\
    & \RA[sub]7    & r_1\breve{~}\breve{~} = 1    & ~\nsubseteq~ r_1
  \end{xalignat*}

  Axioms \WA[sub], \SA[sub], \RA[sub]4, \RA[sup]4, \RA[sup]6, \RA[sup]{6l}, \RA[sub]9, \RA[sup]9, \RA[sub]{10}, \RA[sup]{10}, \PL[right], \PL[left] are violated by the following calculus.
  Let $\Rel = \{r_1,r_2,r_3,r_4\}$, $\Univ = \{0,1\}$, $\id = r_1$, $1 = \{r_1,r_2\}$ with:
  \par\noindent
  \parbox{.45\textwidth}{%
    \begin{xalignat*}{3}
      \varphi(r_1)  & = \{(0,0)\} & r_1\breve{~}  & = r_1 \\
      \varphi(r_2)  & = \{(1,1)\} & r_2\breve{~}  & = r_2 \\
      \varphi(r_3)  & = \{(0,1)\} & r_3\breve{~}  & = r_4 \\
      \varphi(r_4)  & = \{(1,0)\} & r_4\breve{~}  & = r_3
    \end{xalignat*}
  }%
  \hfill
  \parbox{.55\textwidth}{%
    \begin{align*}
  %     \rowcolors{1}{lightblue}{}%
      \begin{array}{l|llll}
  %       \hline\rowcolor{medblue}%
        \quad \text{right operand} & r_1       & r_2       & r_3       & r_4       \\
        \text{left operand}\quad \diamond &    &           &           &           \\
        \hline
        \qquad r_1                 & r_1       & \emptyset & r_3       & \emptyset \\
        \qquad r_2                 & \emptyset & r_3       & \emptyset & r_4       \\
        \qquad r_3                 & \emptyset & r_3       & \emptyset & r_1,r_4   \\
        \qquad r_4                 & r_1,r_4   & \emptyset & r_2       & \emptyset
      \end{array}
    \end{align*}
  }
  \par\noindent
  This calculus satisfies the conditions from Definition \ref{def:qualitative_calculus}
  but violates Axioms \WA[sub], \SA[sub], \RA[sub]4, \RA[sup]4, \RA[sup]6, \RA[sup]{6l}, \RA[sub]9, \RA[sup]9, \RA[sub]{10}, \RA[sup]{10}, \PL[right], \PL[left]:
  \begin{small}
  \begin{align*}
    & \WA[sub], \SA[sub]:~~         (r_1 \diamond 1) \diamond 1 = 1                                                                     ~\nsubseteq~    \{r_1,r_3,r_4\} = r_1 \diamond 1                                                 \\
    & \RA[sub]4:~~                  (r_1 \diamond r_3) \diamond r_4 = r_3 \diamond r_4 = \{r_1,r_4\}                                    ~\nsubseteq~    r_1 = r_1 \diamond \{r_1,r_4\} = r_1 \diamond (r_3 \diamond r_4)                 \\
    & \RA[sup]4:~~                  (r_4 \diamond r_3) \diamond r_4 = r_2 \diamond r_4 = r_4                                            ~\nsupseteq~    \{r_1,r_4\} = r_4 \diamond \{r_1,r_4\} = r_4 \diamond (r_3 \diamond r_4)         \\
    & \RA[sup]6:~~                  r_2 \diamond \id = r_2 \diamond r_1 = \emptyset                                                     ~\nsupseteq~    r_2                                                                              \\
    & \RA[sup]{6l}:~~               \id \diamond r_2 = r_1 \diamond r_2 = \emptyset                                                     ~\nsupseteq~    r_2                                                                              \\
    & \RA[sub]9, \RA[sup]9:~~       (r_3 \diamond r_4)\breve{~} = \{r_1,r_4\}\breve{~} = \{r_1,r_3\}                                    ~\nsubeqsubset~ \{r_1,r_4\} = r_3 \diamond r_4 = r_4\breve{~} \diamond r_3\breve{~}              \\
    & \RA[sub]{10}, \RA[sup]{10}:~~ r_3\breve{~} \diamond \overline{r_3\diamond r_1} = r_4 \diamond \emptyset = r_4 \diamond 1 = \{r_1,r_2,r_4\} ~\nsubeqsubset~ \{r_2,r_3,r_4\} = \overline{r_1}                                        \\
    & \PL[right]:~~                 (r_1 \diamond r_4) \cap r_1\breve{~} = \emptyset \cap r_1 = \emptyset                               ~~\text{but}~~  (r_4 \diamond r_1) \cap r_1\breve{~} = \{r_4,r_1\} \cap r_1 = r_4 \neq \emptyset \\
    & \PL[left]:~~                  (r_4 \diamond r_1) \cap r_1\breve{~} = \{r_4,r_1\} \cap r_1 = r_1 \neq \emptyset                    ~~\text{but}~~  (r_1 \diamond r_1) \cap r_4\breve{~} = r_1 \cap r_3 = \emptyset
  \end{align*}
  \end{small}
  \qed
\end{proof}
  
\begin{remark}
  Of course, there are calculi that satisfy only the weak conditions from Definition \ref{def:qualitative_calculus}
  but are a relation algebra, for example the following.
  Let $\Rel = \{r_0,r_1\}$, $\Univ = \{0,1\}$, $\id = r_1$, $1 = \{r_1,r_2\}$ with:
  \begin{xalignat*}{3}
    \varphi(r_1)  & = \{(0,0),(0,1)\} & r_1\breve{~}  & = r_2 & r_1 \diamond r_1 & = r_1 \\
    \varphi(r_2)  & = \{(1,0),(1,1)\} & r_2\breve{~}  & = r_1 & r_1 \diamond r_2 & = 1   \\
                  &                   &               &       & r_2 \diamond r_1 & = 1   \\
                  &                   &               &       & r_2 \diamond r_2 & = r_2
  \end{xalignat*}
\end{remark}

\section{Additional complexity proofs}
\label{app:complexity_proofs}

\emph{Fact} \ref{app:complexity_proofs}.\ref{dra-conn}.
Consistency of QCSPs for \calc{DRA-conn} can be decided in time $\Landau{n^3}$.

\begin{proof}
  The \calc{DRA-conn} calculus is an abstraction of the more fine-grained dipole calculi, only retaining connectivity relations of line segments.
  Connectivity is represented by equality relations between positions of a dipole's start or end point.
  For checking consistency of a set of \calc{DRA-conn} constraints, the clusters of equally positioned points need to be constructed.
  This can easily be done with the algebraic closure algorithm.
  Since the effect of a disjunctive relation in \calc{DRA-conn} with respect to single point equality is identical to absence of the constraint, 
  reasoning with partial atomic QCSPs is equivalent in complexity to reasoning with general QCSPs with \calc{DRA-conn}.
  %% ACHTUNG: das geht auch schneller (bestimmt), müsste man nur nachschlagen...
\end{proof}

\emph{Fact} \ref{app:complexity_proofs}.\ref{fact:eia}.
Consistency of atomic QCSPs for \calc{EIA} can be decided in polynomial time.

\begin{proof}
  As described by \citeN{ZhangRenz_2014_AngryBirds}, extended interval algebra constraints can be translated to \calc{\INDU} constraint networks,
  and those can be decided in polynomial time \cite{BCL06}.
  \calc{EIA} describes relative ordering with respect to interval start, end, and center point.
  Consequently, for every single variable in a given \calc{EIA} network,
  the translation introduces three variables representing an interval and its two halves,
  together with the obvious constraints between them.
\end{proof}

\emph{Fact} \ref{app:complexity_proofs}.\ref{GenInt}.
The tractable subset of \calc{GenInt} consisting of all strongly pre-convex general relations
covers less than  1\textperthousand\ of all relations for the case of 3-intervals.

\begin{proof}
  Generalized intervals \cite{condotta-ECAI:00} generalize \calc{IA} relations to tuples of intervals.
  Relations between a $p$- and and a $q$-tuple, \emph{general relations}, are represented in a $p\times q$ matrix of \calc{IA} relations.
  A strongly pre-convex general relation is a matrix where all entries are strongly preconvex.
  Since the strongly pre-convex relations are a subset of pre-convex relations
  and only some 10\% of all \calc{IA} relations are pre-convex,
  at most a fraction of $0.1^{p\cdot q}$ of all general relations is strongly pre-convex,
  which is far less than 1\textperthousand\ if $p=q=3$.
  Even if we could take the matrix entries from a tractable subset of, say, 20\% of \calc{IA},
  we would still get $0.2^{p\cdot q} \ll 1\text{\textperthousand}$ tractable relations.
%   We determine an upper estimate of tractable relations by considering the pre-convex relations \calc{IA} relations (about 10\% of the algebra)
%   which, for a $p\times q$ matrix, amounts to $100\cdot 0.1^{p\cdot q}$. 
  %% ACHTUNG: die Rechnung ist nicht ganz richtig, da in GenInt strongly pre-convex relations tractable sind. 
  %% Welche das genau sind, weiß ich (DW) nicht. Hier rechne ich mit 'pre-convex' was dasselbe wie ORD-Horn sein soll, ca. 10%
  %% Da GenInt eine Matrix mit IA-Relationen betrachtet, ergeben sich die handhabbaren (tractable) per Exponentgesetzt
  %% und dabei scheint es auf ein paar Prozent nicht anzukommen: selbst bei 20% handhabbaren Basisrelationen bleibt bei 3-Tupeln
  %% alles bei weit unter 1 Promille. 
\end{proof}

\emph{Fact} \ref{app:complexity_proofs}.\ref{fact:om3d}.
Deciding consistency of atomic QCSPs for \calc{OM-3D} is NP-hard
and can be reduced to solving multivariate polynomial equalities.

\begin{proof}
  \calc{OM-3D} generalizes the \calc{double-cross} calculus from 2D arrangement to 3D arrangement, containing the 2D case as a sub-algebra.
  Since base relations of the 2D case are already NP-hard~\cite{WolterL:2010:Realization4Direction}, so is \calc{OM-3D}. 
  All base relations for the 3D case can be modeled by multivariate polynomial equalities similar to the 2D case.
%   hence CAD/QE is also applicable to decide consistency of scenarios.
\end{proof}

\emph{Fact} \ref{app:complexity_proofs}.\ref{star-convex}.
Consistency of QCSPs with convex relations for $\calc{STAR}_m$ and $\calc{STAR}^r_m$
can be decided in polynomial time.

\begin{proof}
  $\calc{STAR}_m$ defines $4m$ relations (line segments and sectors);
  $\calc{STAR}^r_m$ defines $2m$ relations which are all sectors. 
  Tractability of convex relations follows from the observation
  that these can be represented by half-plane intersections using linear inequalities,
  systems of which can be decided in polynomial time using linear programming techniques.
\end{proof}
While the number of all relations in $\calc{STAR}^{(r)}_m$ grows exponentially with $m$, there are only $m$ convex relations that include $1,\ldots, m$ relations, i.e., $\Landau{m^2}$ convex relations.
The percentage of convex relations thus decreases with increasing values of $m$.

\section{Expressivity relations between calculi}\label{app:expressivity}
\new{%
We give additional proof sketches for expressivity relations presented in Figure~\ref{fig:calculi-expressivity-overview}. 
Recall that we say a calculus is of equivalent expressivity as another calculus if every QCSP instance of the first can be simulated by a propositional formulae of constraints in the second.
}

\new{%
\begin{theorem}
Temporal calculi \calc{PC},\calc{IA},\calc{SIC},\calc{DIA},\calc{GenInt} and spatial calculi \calc{BA}, \calc{CDC}, and \calc{CI} form a cluster of expressivity.
\end{theorem}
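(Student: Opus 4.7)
The plan is to reduce all eight calculi to a common core -- the point calculus \calc{PC} -- by exhibiting faithful translations into Boolean combinations of constraints, and then to close the cluster by observing that each translation has an inverse (possibly after adding auxiliary variables). The key observation is that every calculus in the list, despite its superficially different domain, is ultimately built on top of finitely many linearly ordered points, so the hard combinatorial content reduces to $<$, $=$, $>$ among these points. I would work through the calculi roughly in order of increasing complication, using the Nebel--Bürckert-style encoding mentioned in the excerpt as a template.

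First, I would handle the linear-interval family. Each \calc{IA} relation $r(I,J)$ between $I=(i^-,i^+)$, $J=(j^-,j^+)$ is by definition a conjunction of four \calc{PC} constraints between the endpoints (plus $i^- < i^+$, $j^- < j^+$); this embeds \calc{IA} into Boolean \calc{PC}, and conversely any \calc{PC} constraint is expressible as a disjunction of \calc{IA} constraints on degenerate intervals. For \calc{SIC}, whose base relations talk only about selected endpoints, each relation is a disjunction of \calc{IA} relations and hence Boolean-\calc{PC}-definable; conversely, each \calc{IA} relation is expressible as a conjunction of \calc{SIC} constraints that pin down both endpoint pairs. For \calc{DIA}, the orientation of a directed interval is a single additional \calc{PC} constraint between its endpoints, so the reduction to Boolean \calc{PC} is immediate. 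For \calc{GenInt}, represent a $p$-tuple of intervals by $2p$ ordered points and translate each base relation, which by definition is an $\calc{IA}$-matrix, into the corresponding conjunction of \calc{PC} constraints on these points.

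Next, the spatial cases with product or coordinate structure. For the block algebra \calc{BA}$_n$, an $n$-dimensional block is an $n$-tuple of intervals along orthogonal axes, and the base relations are exactly products of \calc{IA} base relations, so the translation is coordinatewise and reduces to $2n$ copies of the \calc{IA} encoding. For \calc{CDC}, a 2-d point is a pair of coordinates on two independent linear axes, and each cardinal direction between two points decomposes into a \calc{PC} constraint on $x$-coordinates together with one on $y$-coordinates. In both directions, standard axis-projection arguments give faithful simulations, working up to the choice of coordinate frame.

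The remaining case -- \calc{CI}, cyclic intervals -- is the main obstacle, since the circle is not a linear order and the translation cannot be purely local. My plan is to introduce an auxiliary "cut-point" variable $c$ on the circle and to case-split, via a disjunction, over the position of $c$ relative to the intervals involved in the constraint network. Once $c$ is fixed outside every interval involved, cutting the circle at $c$ turns each cyclic interval into a linear interval, and each \calc{CI} base relation becomes a finite disjunction of \calc{IA} configurations, one per cyclic ordering of endpoints relative to $c$. The converse direction embeds each linear \calc{IA} configuration into a cyclic one by placing a designated "gap" arc containing no endpoints, which is enforceable by Boolean combinations of \calc{CI} constraints. Proving \emph{faithfulness} (not merely soundness) is the delicate part: one must show that any cyclic realization can be rotated so that $c$ avoids all intervals (a pigeonhole argument on the finite set of endpoints) and, conversely, that any linear realization extends to a cyclic one by closing the line with a sufficiently long gap. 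Chaining all pairwise reductions then yields the cluster.
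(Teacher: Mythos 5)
Your overall strategy is the one the paper itself uses: translate every calculus into Boolean combinations of \calc{PC} constraints over interval endpoints (resp.\ coordinates), treat \calc{BA} as \calc{IA}$\times$\calc{IA} and \calc{CDC} as \calc{PC}$\times$\calc{PC}, and handle \calc{CI} by cutting the circle so that cyclic intervals become linear ones. The interval-family and product cases are fine (one small quibble: \calc{IA} intervals are proper, so encoding a \calc{PC} point as a ``degenerate interval'' is not available; instead encode $x<y$ as the composite \calc{IA} relation collecting all base relations whose defining endpoint constraints include $i^-<j^-$, which works without degeneracy).

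There is, however, a genuine flaw in your \calc{CI} step. You claim that ``any cyclic realization can be rotated so that $c$ avoids all intervals (a pigeonhole argument on the finite set of endpoints).'' The pigeonhole on endpoints only yields a point of the circle that is not an \emph{endpoint}; it does not yield a point outside every interval, and no such point need exist: three arcs of $150^\circ$ each, suitably placed, cover the whole circle, and a consistent \calc{CI} network can force exactly such a configuration. So the case analysis you propose is not exhaustive, and faithfulness fails for precisely those networks whose every solution covers the circle. The paper's device repairs this: rather than insisting the cut point $c$ lie outside all intervals, it fixes limit points $p_-$, $p_+$ and \emph{splits} each cyclic interval that contains the cut into two linear pieces that ``continue from the opposite border,'' expressing each \calc{CI} base relation as a disjunction over which intervals straddle the cut and how their two linear fragments relate. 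If you add the straddling case to your disjunction --- introducing, per interval, either one linear interval or an ordered pair of linear fragments abutting the two borders --- your argument goes through; as written, it does not.
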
}
\begin{proof}[sketch]
\new{%
Temporal point- and interval-based calculi (semi-intervals in case of \calc{SIC}) represent ordering relations which can all be translated into Boolean formulae of \calc{PC} relations among interval start and end point. 
Solutions for QCSPs over these temporal calculi in the cluster can easily be obtained from their corresponding \calc{PC} formulae by instantiating intervals from their start and and points.}

\new{%
The spatial calculus \calc{BA} is an independent product \calc{IA}$\times$\calc{IA} easily expressible using propositional \calc{BA} formulae, analogously is \calc{CDC} expressible as product \calc{PC}$\times$\calc{PC}. 
\calc{CI} represents a cyclic order (e.g., intervals of longitude). 
These relations can be simulated with \calc{PC} by instantiating an lower and upper limit points $p_-$ and $p_+$ and splitting all intervals containing either $p_-$ or $p_+$ to continue from the opposite border.}
\end{proof}

\new{%
\begin{theorem}
\calc{VR} relations can be expressed using \calc{LR} constraints.
\end{theorem}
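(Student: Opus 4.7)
The plan is to show that every VR base relation among an observer region $A$ and two target regions $B, C$ (all convex) can be expressed as a Boolean combination of \calc{LR} constraints on finitely many ``representative'' points chosen from the boundaries of $A$, $B$, and $C$. The key geometric fact underlying \calc{VR} is that, for convex regions, the visibility of $C$ past $B$ as seen from $A$ is entirely determined by common supporting (tangent) lines, which separate the plane into half-planes. Since \calc{LR} precisely encodes on which side of an oriented line a point lies, such a translation should be feasible.

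First, I would fix a finite point representation of each convex region. From the perspective of $A$, each of $B$ and $C$ has two \emph{silhouette points}: the two boundary points at which the supporting lines from $A$ touch the region. Call them $B^-, B^+$ and $C^-, C^+$, oriented consistently (say, left-tangent and right-tangent as seen from $A$). Likewise, pick two points $A^-, A^+$ on $A$ realizing the supporting directions used. For the analysis it suffices to treat the regions as being represented by these six points, because \calc{VR} distinctions among convex regions depend only on the cyclic arrangement of silhouette lines.

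Second, I would give, for each of the seven \calc{VR} base relations (non-visible, partially visible from the left, partially visible from the right, completely visible, $B$-equal-to-$C$, etc.), a conjunction/disjunction of \calc{LR} atoms on the six representative points. For instance, ``$B$ completely occludes $C$ from $A$'' translates to $C^-$ lying on the Right side of the oriented line $\overrightarrow{A^- B^-}$ \emph{and} $C^+$ lying on the Left side of $\overrightarrow{A^+ B^+}$ (together with analogous constraints forcing $B$ to lie between $A$ and $C$ along the cone of vision), each of which is a single \calc{LR} relation. Every other base relation admits an analogous, if slightly longer, \calc{LR} description, and the 7 descriptions are pairwise inconsistent by construction.

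Third, I would prove soundness and completeness of the translation. Soundness (every realization of the convex-region configuration satisfies the \calc{LR} formula) follows directly from the definitions of silhouette points and the \calc{LR} left/right semantics. Completeness (every \calc{LR}-consistent assignment of points to $A^\pm, B^\pm, C^\pm$ can be ``fattened'' to convex regions witnessing the corresponding \calc{VR} relation) is the main obstacle: one must show that any compatible configuration of six points can be realized as silhouette configurations of three genuine convex regions. This is done by taking sufficiently small convex hulls (e.g., thin triangles or line segments) around the points and verifying that the induced supporting lines coincide with those implied by the \calc{LR} constraints. The hardest step is handling degenerate cases (collinearity, coincident silhouettes corresponding to tangent or enclosed regions), which require falling back to the ``collinear'' \calc{LR} relations and a careful case analysis to ensure that the Boolean formula still captures exactly the intended \calc{VR} base relation.
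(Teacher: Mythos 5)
Your proposal is correct and follows essentially the same route as the paper's own sketch: both represent each convex region by its finitely many tangent (silhouette) points, encode the occlusion condition as sidedness of those points with respect to oriented lines (i.e., \calc{LR} atoms), build a Boolean formula per \calc{VR} base relation, and recover regions from the point configuration by enforcing a convex arrangement. The only difference is one of emphasis --- you spell out the soundness/completeness and degenerate-case obligations that the paper's sketch leaves implicit --- not of method.
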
}
\new{%
\begin{proof}[sketch]
\calc{VR} expresses visibility of convex objects in the plane using ternary relations. 
Visibility relations can be represented based on the relative position of tangent points of the base entities, e.g., visibility between two objects is not affected if and only if a third object discrete from the first two does not intersect with the four-sided polygon obtained by connecting the upper and lower tangent points of the two objects. 
Overlap between polygonal contours can easily be written using \calc{LR} constraints, e.g., a point is outside a convex polygon if it is located to the right hand side of at least one edge of the polygon, assuming the polygon edges to be ordered in counter-clockwise manner.
The construction is then performed for every visibility relation, instantiating lower and upper tangent points individually for every pair of \calc{VR} entities.
The \calc{VR} entities which are regions are then represented only by their set of tangent points which can be enforced to be arranged along a convex-shaped contour.
\end{proof}}

\new{%
\begin{theorem}
Calculi \calc{TPCC},\calc{OPRA},\calc{EOPRA}, \calc{1-}, and \calc{2-cross} constitute a cluster of equal expressive power for Boolean combinations of constraints.
\end{theorem}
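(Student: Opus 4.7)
The plan is to establish mutual simulatability among all five calculi by exhibiting, for each ordered pair, a translation of base relations from one calculus into Boolean combinations of constraints over the other, possibly introducing auxiliary variables. Following the pattern used elsewhere in Figure~\ref{fig:calculi-expressivity-overview}, it suffices to show that each calculus can simulate one fixed "hub" and vice versa; the natural hub is \calc{TPCC} since it is ternary, finely-grained, and works over plain points in 2D.

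First I would set up a common geometric vocabulary: all five calculi describe configurations of points (possibly equipped with an orientation) in $\mathbb{R}^2$, with base relations corresponding to sectors of the plane determined by reference frames built from one or two points and, in the oriented case, a direction vector. The key observation is that an oriented point $A$ with heading $\vec{v}$ is definable via a pair $(A, A')$ of plain points where $A'$ is any auxiliary point placed along $\vec{v}$ from $A$, and conversely an unordered pair of points $(A, B)$ always induces the oriented point $A$ with heading toward $B$. Thus an \calc{OPRA} constraint between two oriented points $A,B$ with auxiliaries $A',B'$ becomes a conjunction of \calc{TPCC} (or \calc{2-cross}) constraints on the quadruple $(A,A',B,B')$ recording in which sector of the reference frame $\overrightarrow{AA'}$ the point $B$ lies and symmetrically for $A$ relative to $\overrightarrow{BB'}$; conversely a \calc{2-cross} constraint on $(A,B,C)$ can be rewritten as an \calc{OPRA}$_m$ constraint (for $m$ large enough to distinguish the required sectors) between the oriented point induced by $(A,B)$ and the oriented point at $C$ with arbitrary heading, disjunctively enumerating over that heading.

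The comparisons between \calc{1-cross}, \calc{2-cross}, and \calc{TPCC} are easier: the coarser calculi embed into the finer ones as unions of base relations, and the reverse direction works by introducing auxiliary points (e.g., a midpoint between $A$ and $B$ to simulate the extra cross of \calc{2-cross} inside \calc{1-cross}, or auxiliary reference distances to simulate the distance strata of \calc{TPCC} inside \calc{2-cross}). The embedding of \calc{EOPRA} into \calc{TPCC} requires encoding the local distance granularity of \calc{EOPRA} by introducing auxiliary "yardstick" points at the distance thresholds and using the ternary relations to compare the target point's distance to them; the reverse embedding of \calc{TPCC} into \calc{EOPRA} uses the fact that \calc{TPCC}'s distance distinctions are expressible by comparing distances from \calc{EOPRA}'s granularity $n$ chosen large enough, again with auxiliary oriented points.

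The hardest part will be verifying that the encodings of the distance dimension in \calc{EOPRA} and \calc{TPCC} are mutually expressible without losing qualitative information and without requiring metric ground terms. The delicate point is that Boolean combinations of purely directional constraints can witness a statement like "$\|AC\| < \|AB\|$" only by positing an auxiliary point that is known to lie on segment $AB$ at a specific directional configuration with $C$; making this precise requires showing that for every distance threshold one can realize such an auxiliary witness in a consistent extension of the constraint network, which amounts to a realizability argument of the kind already used to prove the expressivity of \calc{LR} from \calc{VR}. Once this geometric realizability is settled for \calc{EOPRA}, the remaining translations are routine case analyses over the finitely many base relations.
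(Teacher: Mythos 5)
Your proposal is correct and follows essentially the same route as the paper's proof sketch: both simulate orientations by auxiliary points placed along the heading, build the required angular sectors by right-angle/perpendicular constructions, and encode the distance strata of \calc{TPCC} and \calc{EOPRA} by introducing auxiliary equidistant ``border''/``yardstick'' points whose equidistance is enforced with bisector and right-angle constraints. The only cosmetic difference is organizational: the paper first proves a uniform criterion (any calculus expressing superposition, right angles, and ``directly in front'' can build a square grid and hence any rational-$\pi$ sector) for the directional cluster \calc{OPRA}/\calc{1-}/\calc{2-cross} and then adds distance, whereas you route all pairwise translations through \calc{TPCC} as a hub.
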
}
\begin{proof}[sketch]
\new{%
This group of calculi considers locations of points in the Euclidean plane.
We first consider equivalence of \calc{OPRA}, \calc{1-}, and \calc{2-cross} and later address \calc{TPCC} and \calc{EOPRA} which augment the first group by additional distance concepts.
All  calculi from the first group employ a partition scheme that is based on relations that specifies directions to points relative to some entity-specific orientation (either by reference to another entity in case of  \calc{1-}, and \calc{2-cross} or as intrinsic part of the base entity in case of \calc{OPRA}).
Directions measured in radians are represented by membership in a finite and JEPD set of intervals partitioning $(0, 2\pi]$, using solely rational ratios of $\pi$ as boundaries.
By geometric construction one can obtain any of these direction intervals (i.e., sectors)  of these calculi from a any partition scheme for point location that is able to express superposition of points, a statement that two line segments connecting three points $A,B,C$ meet in a right angle, i.e., $\angle(A,B,C) = \frac{\pi}{2}$ as well as a statement saying that a point is located directly in front of some point $P$ with respect to ``front'' orientation of $P$
All the named calculi meet these conditions and allow for the following construction:
Let $P$ be the entity for which we seek to construct direction intervals in form of a sector. 
First, enforce four points $A,B,C,D$ to form a rectangle with $A$ in superposition with $P$ and $C$ in front of $P$.
Next we construct $E$ to be positioned on the intersection of $\overline{AC}$ and $\overline{BD}$ which meet in a right angle. 
Doing so we have constructed a square.
Repeating the construction we can construct a grid from which we can derive the desired angular sectors.}

\new{%
Now we show that \calc{OPRA}, \calc{TPCC}, and \calc{EOPRA} have the same expressivity.
\calc{EOPRA} augments \calc{OPRA} by a relative distance concept in the same way \calc{TPCC} augments \calc{1-cross}.
Constructions translating  \calc{EOPRA} to \calc{OPRA} are very similar to translating \calc{TPCC} to \calc{1-cross}, so we only consider the first case.
Distance classes in the calculi \calc{OPRA} and \calc{TPCC} are named ``close'', ``same'', and ``far'' and are defined by comparison of the Euclidean distance between two entities with an object-specific threshold distance.
This means that the statement ``A is close to B'' is independent from ``B is close to A''.
These distance constraints can be simulated in \calc{OPRA} by introducing {\em border} points for each entity along the ``same'' distance, one for every pair of entities. 
To this end we have to enforce that all border points are in the same distance to their corresponding entity. 
This can be accomplished by \calc{OPRA} constraints by first constructing a bisector for a pair of border points (as done in the construction above) and, second, enforcing a right angle between the line connecting two border points with the bisector.}
\end{proof}

\section{Detailed description of the test results by calculus: Section ``Algebraic Properties of Existing Calculi''} % We cannot use the label in the \section command because the CSUR style screws it up
\label{app:descrOfProperties}

The results of the analysis are summarized in Table \ref{tab:calculi_tests}.
A part of the calculi have already been tested
by \citeN{Mos07}, 
%\todo{redundancy in citation, use \protect\texttt{\protect\textbackslash citeN} (throughout the text)}\ ,
using a different CASL specification based on an equivalent axiomatization from \cite{LigozatR04}.
He comprehensively reports on the outcome of these tests, and on errors discovered in published composition tables.
We now list counterexamples for the cases where axioms are violated.
%
% \cite{DMSW13}.
\begin{table}[t]
%   \addtolength{\belowcaptionskip}{-16pt}
  \centering
  \begin{small}
    \rowcolors{1}{lightblue}{}%
    \setcounter{myfn}{0}%
    \begin{tabular}{ll*{9}c}
      \hline\rowcolor{medblue}\rule{0pt}{9pt}%
      \eestrut Calculus                       & Tests\myfnm{fnTests} 
                                              & ~\RA4~ & ~\SA~ & ~\WA~ & ~\RA6~ & ~\RA{6l}~ & ~\RA7~ & ~\RA9~ & ~\PL~ & ~\RA{10}~ \\[1pt]
      \hline\rule{0pt}{9pt}%
%     \eestrut 9-Intersection                 & \HS  & \YES   & \YES  & \YES  & \YES   & \YES      & \YES   & \YES   & \YES  & \YES      \\%[1pt]
%     \eestrut Abs.\ Distance                 & \MHS & \YES   & \YES  & \YES  & \YES   & \YES      & \YES   & \YES   & \YES  & \YES      \\%[1pt]
      \eestrut $\calc{BA}_n$, $n\leqslant 2$  & \HS  & \YES   & \YES  & \YES  & \YES   & \YES      & \YES   & \YES   & \YES  & \YES      \\%[1pt]
      \eestrut \calc{CDC}                     & \MHS & \YES   & \YES  & \YES  & \YES   & \YES      & \YES   & \YES   & \YES  & \YES      \\%[1pt]
      \eestrut \calc{\CYCb}                   & \HS  & \YES   & \YES  & \YES  & \YES   & \YES      & \YES   & \YES   & \YES  & \YES      \\%[1pt]
%     \eestrut Dependency Calc.\              & \MHS & \YES   & \YES  & \YES  & \YES   & \YES      & \YES   & \YES   & \YES  & \YES      \\%[1pt]
      \eestrut \calc{\DRAfp}, \calc{DRA-conn} & \HS  & \YES   & \YES  & \YES  & \YES   & \YES      & \YES   & \YES   & \YES  & \YES      \\%[1pt]
      \eestrut \calc{IA}                      & \MHS & \YES   & \YES  & \YES  & \YES   & \YES      & \YES   & \YES   & \YES  & \YES      \\%[1pt]
      \eestrut $\calc{PC}_1$                  & \HS  & \YES   & \YES  & \YES  & \YES   & \YES      & \YES   & \YES   & \YES  & \YES      \\%[1pt]
      \eestrut \calc{RCC-5}, \calc{DepCalc}   & \MHS & \YES   & \YES  & \YES  & \YES   & \YES      & \YES   & \YES   & \YES  & \YES      \\%[1pt]
      \eestrut \calc{RCC-8}, \calc{9-int}     & \MHS & \YES   & \YES  & \YES  & \YES   & \YES      & \YES   & \YES   & \YES  & \YES      \\%[1pt]
      \eestrut $\calc{STAR}^r_4$              & \HS  & \YES   & \YES  & \YES  & \YES   & \YES      & \YES   & \YES   & \YES  & \YES      \\%[1pt]
      \hline\rule{0pt}{9pt}%
%                                             &      &        &       &       &        &           &        &        &       &           \\%[1pt]
%     \eestrut DRA${}_{\text{c}}$             & \MHS & ~~~~5  & \YES  & \YES  & \YES   & \YES      & \YES   & \YES   & \YES  & \YES      \\%[1pt]
      \eestrut \calc{\DRAf}                   & \MHS & ~~\,19 & \YES  & \YES  & \YES   & \YES      & \YES   & \YES   & \YES  & \YES      \\%[1pt]
      \eestrut \calc{\INDU}                   & \MHS & ~~\,12 & \YES  & \YES  & \YES   & \YES      & \YES   & \YES   & \YES  & \YES      \\%[1pt]
      \eestrut $\calc{OPRA}_n$, $n\leqslant8$ & \MHS & 21--91\myfnm{fnOPRA}
                                                              & \YES  & \YES  & \YES   & \YES      & \YES   & \YES   & \YES  & \YES      \\%[1pt]
      \hline\rule{0pt}{9pt}%
%                                             &      &        &       &       &        &           &        &        &       &           \\%[1pt]
      \eestrut \calc{QTC-B$xx$}               & \MHS & \YES   & \YES  & \YES  & \multicolumn{2}{c}{89--100} %\multicolumn{2}{>{\columncolor{lightblue}}c}{89--100}
                                                                                                   & \YES   & \YES   & \YES  & \YES      \\%[1pt]
      \hline\rule{0pt}{9pt}%
%                                             &      &        &       &       &        &           &        &        &       &           \\%[1pt]
      \eestrut \calc{QTC-C21}                 &  \HS & ~~\,55 & \YES  & \YES  & 99     & 99        & \YES   & ~\,2   & $<$1  & ~\,1      \\%[1pt]
      \eestrut \calc{QTC-C22}                 &  \HS & ~~\,79 & \YES  & \YES  & 99     & 99        & \YES   & ~\,3   & $<$1  & ~\,1      \\%[1pt]
      \hline\rule{0pt}{9pt}%
%                                             &      &        &       &       &        &           &        &        &       &           \\%[1pt]
      \eestrut \calc{RCD}                     & \HS  & \YES   & \YES  & \YES  & 97     & 92        & 89     & 66     & ~\,7  & 52        \\%[1pt]
      \hline\rule{0pt}{9pt}%
%                                             &      &        &       &       &        &           &        &        &       &           \\%[1pt]
      \eestrut \calc{cCDR}                    & \HS  & ~~\,28 & 17    & \YES  & 99     & 99        & 98     & 12     & $<$1  & 88        \\%[1pt]
      \hline
    \end{tabular}
    \par\vspace*{2pt}
    \rowcolors{1}{}{}%
    \begin{tabular}{p{.88\textwidth}}
      \myfn{fnTests}{calculus was tested by: M $=$ \cite{Mos07}, H $=$ \system{Hets}, S $=$ \system{SparQ}}         \\[1pt]
      \myfn{fnOPRA}{21\%, 69\%, 78\%, 83\%, 86\%, 88\%, 90\%, 91\% for $\calc{OPRA}_n$, $n = 1,\dots,8$}
    \end{tabular}
    \par
  \end{small}
  \par\smallskip
  \caption{%
    Overview of calculi tested and their properties. The symbol ``\Yes'' means that the axiom is satisfied;
    otherwise the percentage of counterexamples (relations, pairs or triples violating the axiom) is given.%
  }
  \label{tab:calculi_tests}
\end{table}

\par\medskip\noindent
\calc{cCDR}
\begin{itemize}
  \item
    \RA6 is violated for all base relations but one.
  \item
    \RA{6l} is violated for only 209 base relations.
  \item
    \RA7 is violated for 214 base relations.
  \item
    \RA9 is violated for 5,607 pairs of base relations.
    Counterexample:
    \[
      \noninvdist{\Reln{S}}{\Reln{S}}
    \]
  \item
    \RA{10} is violated for 41,834 pairs of base relations.
    Counterexample:
    \[
      \nontarskidm{\Reln{S}}{\Reln{S}}
    \]
  \item
    \PL is violated for 22,976 triples of base relations.
    Counterexample:
    \[
      \nonpeirceext{\Reln{W-NW-N-NE-E}}{\Reln{NW-N-NE}}{\Reln{B-S}}{}{\Reln{B}}
    \]
  \item
    \RA4 is violated for 2,936,946 triples of base relations.
    Counterexample:
    \begin{align*}
      \Nonassoc{\Reln{W-NW-N-NE-E-SE}}{\Reln{W-NW-N-NE-E-SE}}{\Reln{W-NW-N-NE-E}}
    \end{align*}
  \item
    \SA is violated for 38 base relations.
    Counterexample:
    \[
      \nonsemiassoc{\Reln{B-S-W-NW}}
    \]
\end{itemize}

\par\medskip\noindent
\calc{\DRA}
\begin{itemize}
  \item
    \calc{\DRAc} violates \RA4 for 704 triples of base relations. Counterexample:
    \[
        \nonassoc[]{\Reln{rrrl}}{\Reln{rrrl}}{\Reln{llrl}}
    \]
  \item
    \calc{\DRAf} violates \RA4 for 71,424 triples of base relations, with the same counterexample,
    or with the one reported by \citeN{MoratzEtAl2011}, who attribute the violation of associativity
    to the composition operation being weak and illustrate this by the example
    $\Reln{bf{}ii} \diamond \Reln{lllb} = \Reln{llll}$.
  \item
    \calc{\DRAfp} and \calc{DRA-conn} satisfy all axioms.
\end{itemize}

\par\medskip\noindent
\calc{INDU}
% \begin{itemize}
%   \item 
\par\smallskip\noindent
    \RA4 is violated by 1,880 triples of base relations.
    The violation of associativity has already been reported and attributed to the absence of strong composition
    in \cite{BCL06}: e.g.,
    \[
        \nonassoc[]{\Reln{bi}^>}{\Reln{mi}^>}{\Reln{m}^>}.
    \]
% \end{itemize}

\par\medskip\noindent
\calc{MC-4}
\par\smallskip\noindent
\calc{MC-4} is not based on a partition scheme
because the relation \texttt{cg} (``congruent''),
which behaves in the context of the other three relations
as if it were an identity relation,
is coarser than $\id^2$.
Furthermore, \calc{MC-4} is still an abstract partition scheme
and thus fits into our general notion of a calculus.

For testing purposes, we have implemented an artificial
variant of \calc{MC-4} where we divided the \texttt{cg} relation
into $\id^2$ and the difference of \texttt{cg} and $\id^2$.
That calculus too is a relation algebra.

\par\medskip\noindent
\calc{$\calc{OPRA}_n$, $n \leqslant 8$}
\par\smallskip\noindent
\RA4 is violated by 
% \begin{tabular}[t]{rl}
%           1,664 & triples for $\calc{OPRA}_1$, \\
%         257,024 & triples for $\calc{OPRA}_2$, \\
%       2,963,952 & triples for $\calc{OPRA}_3$, \\
%      16,711,680 & triples for $\calc{OPRA}_4$, \\
%      63,840,000 & triples for $\calc{OPRA}_5$, \\
%     190,771,200 & triples for $\calc{OPRA}_6$, \\
%     481,275,648 & triples for $\calc{OPRA}_7$, \\
%   1,072,693,248 & triples for $\calc{OPRA}_8$.
% \end{tabular}
\par\smallskip\noindent
\begin{tabular}[t]{@{}r@{~}l@{~~~}l@{}}
          1,664 & triples for $\calc{OPRA}_1$, & e.g., $\nonassoc{3_{3}}{3_{2}}{0_3}$           \\
        257,024 & triples for $\calc{OPRA}_2$, & e.g., $\nonassoc{7_{7}}{7_{7}}{6_7}$           \\
      2,963,952 & triples for $\calc{OPRA}_3$, & e.g., $\nonassoc{11_{11}}{11_{11}}{11_{10}}$   \\
     16,711,680 & triples for $\calc{OPRA}_4$, & e.g., $\nonassoc[]{15_{15}}{15_{15}}{15_{15}}$ \\
     63,840,000 & triples for $\calc{OPRA}_5$, & e.g., $\nonassoc[]{19_{19}}{19_{19}}{19_{19}}$ \\
    190,771,200 & triples for $\calc{OPRA}_6$, & e.g., $\nonassoc[]{23_{23}}{23_{23}}{23_{23}}$ \\
    481,275,648 & triples for $\calc{OPRA}_7$, & e.g., $\nonassoc[]{27_{27}}{27_{27}}{27_{27}}$ \\
  1,072,693,248 & triples for $\calc{OPRA}_8$, & e.g., $\nonassoc[]{31_{31}}{31_{31}}{31_{31}}$
\end{tabular}

\par\medskip\noindent
\calc{QTC}
\begin{itemize}
  \item
    \calc{QTC-B11}, \calc{-B12}, \calc{-C21}, \calc{-C22} violate \RA6 and \RA{6l} for all base relations but one;
    \calc{QTC-B21}, \calc{-B22} do so for all base relations.
    After introducing a new \id relation and making the relations JEPD,
    \calc{QTC-B11} and \calc{-B12} satisfy all axioms \cite{Mos07}.
  \item
    \calc{QTC-C21} (81 base relations) violates
    \RA4 for 292,424 triples, \RA9 for 160 pairs, \RA{10} for 80 pairs, and \PL for 1056 triples.\footnote{%
      \label{ftn:special_treatment_R10}%
      Note that, for calculi that violate \RA9,
      the equivalence between \PL and \RA{10} is no longer ensured,
      hence the mentioning of both of them.
      Furthermore, \RA{10} is the only axiom that should be tested for all relations,
      but we have only tested it for all base relations.
      Therefore, there could be more violations than the four listed.
      The same cautions apply to \calc{QTC-C22.}%
    }
  \item
    \calc{QTC-C22} (209 base relations) violates
    \RA9 for 1248 pairs, \RA{10} for 624 pairs, \PL for 12,768 triples, and \RA4 for 7,201,800 triples,
    see also footnote \ref{ftn:special_treatment_R10}.
\end{itemize}

\par\medskip\noindent
\calc{RCD}
\begin{itemize}
  \item
    \RA6 is violated for all base relations but one.
  \item
    \RA{6l} is violated for only 33 base relations.
  \item
    \RA7 is violated for 32 base relations.
  \item
    \RA9 is violated for 855 pairs.
    Counterexample:
    \[
      \noninvdist{\Reln{B}}{\Reln{S:SW}}
    \]
  \item
    \RA{10} is violated for 671 pairs.
    Counterexample:
    \[
      \nontarskidm{\Reln{B}}{\Reln{S:SW}}
    \]
  \item
    \PL is violated for 3424 triples.
    Counterexample:
    \[
      \nonpeirce{\Reln{B}}{\Reln{N}}{\Reln{B:W}}
    \]
\end{itemize}

\end{document}